\def\eqref#1{equation~\ref{#1}}
\def\1{\bm{1}}
\DeclareMathAlphabet{\mathsfit}{\encodingdefault}{\sfdefault}{m}{sl}
\SetMathAlphabet{\mathsfit}{bold}{\encodingdefault}{\sfdefault}{bx}{n}
\newcommand{\R}{\mathbb{R}}
\newtheorem{theorem}{Theorem}
\newcommand*{\img}[1]{%
    \raisebox{-.1\baselineskip}{%
        \includegraphics[
        height=\baselineskip,
        width=\baselineskip,
        keepaspectratio,
        ]{#1}%
    }%
}
\renewcommand{\phi}{\varphi}
\renewcommand{\O}{\mathcal{O}}
\newcommand{\C}{\mathbb{C}}
\newcommand{\N}{\mathbb{N}}
\title{STORK:
Faster Diffusion and Flow Matching Sampling by Resolving both Stiffness and Structure-Dependence 
}
\author{
Zheng Tan\textsuperscript{1} \quad
Weizhen Wang\textsuperscript{2} \quad
Andrea L. Bertozzi\textsuperscript{1} \quad
Ernest K. Ryu\textsuperscript{1} \\
\textsuperscript{1}Department of Mathematics, University of California, Los Angeles\\
\textsuperscript{2}Department of Computer Science, University of California, Los Angeles\\
\texttt{\{zhengtan, bertozzi, eryu\}@math.ucla.edu} \\
\texttt{wzwang1210@cs.ucla.edu}
}
\begin{document}

\maketitle

\begin{abstract}
Diffusion models (DMs) and flow-matching models have demonstrated remarkable performance in image and video generation. However, such models require a significant number of function evaluations (NFEs) during sampling, leading to costly inference. Consequently, quality-preserving fast sampling methods that require fewer NFEs have been an active area of research. However, prior training-free sampling methods fail to simultaneously address two key challenges: the stiffness of the ODE (i.e., the non-straightness of the velocity field) and dependence on the semi-linear structure of the DM ODE (which limits their direct applicability to flow-matching models). In this work, we introduce the Stabilized Taylor Orthogonal Runge--Kutta (STORK) method, addressing both design concerns. We demonstrate that STORK consistently improves the quality of diffusion and flow-matching sampling for image and video generation. Code is available at~\url{https://github.com/ZT220501/STORK}.
\end{abstract}

\section{Introduction}


Diffusion models (DMs)~\citep{sohl2015deep, DDPM, ScoreBasedSDE} have demonstrated remarkable achievements during the past several years on various tasks, including image generation~\citep{DiffusionBeatGAN, SDEdit}, text-to-image generation~\citep{ramesh2022hierarchicaltextconditionalimagegeneration, StableDiffusion, VQDiffusion, SDXL, mo2023freecontroltrainingfreespatialcontrol, ControlNet}, video generation~\citep{VideoDiffusion,kong2024hunyuanvideo,pyramidalflowmatching,luminavideo}, and diffusion policy~\citep{Diffusion-Policy}. 
As shown in~\cite{ScoreBasedSDE}, the sampling of DMs can be treated as solving an SDE or an equivalent ODE backward in time. However, DMs require a significant number of function evaluations (NFEs) of the learned model when performing sampling, leading to costly inference. 

Consequently, quality-preserving fast sampling methods have become an active area of research, and many methods that achieve effective sampling with fewer NFEs have been proposed~\citep{DDIM, PNDM, DEIS, DPM-solver, DPM-solver++, Sana, UniPC}. As noted in~\citep{InstaFlow}, the ``straightness'' of the velocity field is a key consideration in such fast sampling methods, and this observation corresponds to the notion of \emph{stiffness} in classical numerical analysis~\citep{NATextbook}.  Building on this connection, exponential integrator techniques for stiff ODEs~\citep{ExponentialIntegratior} have been successfully adapted into the DPM-Solver~\citep{DPM-solver} and DEIS~\citep{DEIS} samplers. However, these approaches rely on the semi-linear structure of the ODE formulation in noise-based diffusion models, and thus cannot be directly applied to flow-matching-based models. In such cases, additional approximations, such as the data prediction step, are required~\citep{DPM-solver++, Sana}.

\vspace{0.05in}

\paragraph{Contribution.}
In this work, we introduce a fast training-free sampler, the \textit{\textbf{S}tabilized \textbf{T}aylor \textbf{O}rthogonal \textbf{R}unge--\textbf{K}utta (STORK)} method, built upon stabilized Runge--Kutta (SRK) methods~\citep{ROCK4, RKL, SKARAS2021109879}. As emphasized in Figure~\ref{fig:venn}, STORK is a stiff solver that does not rely on the semi-linear structure of noise-based diffusion model's ODE formulation, making it applicable to both noise-based and flow-based models. Our experiments show that STORK consistently improves the quality of diffusion and flow-matching sampling for image and video generation.

\vspace{-0.2em}


\begin{figure}[hb!]
\vspace{-0.5em}
\centering
\begin{tabularx}{0.245\textwidth}{c}
\quad\quad Flow-Euler
\end{tabularx}
\begin{tabularx}{0.245\textwidth}{c}
Flow-DPM-Solver++
\end{tabularx}
\begin{tabularx}{0.245\textwidth}{c}
\quad\, Flow-UniPC
\end{tabularx}
\begin{tabularx}{0.245\textwidth}{c}
\quad \textbf{STORK (Ours)}
\end{tabularx}
\newline
\vspace{-0.1in}
\begin{subfigure}{1\linewidth}
\captionsetup{justification=centering}
\includegraphics[width=0.24\textwidth]{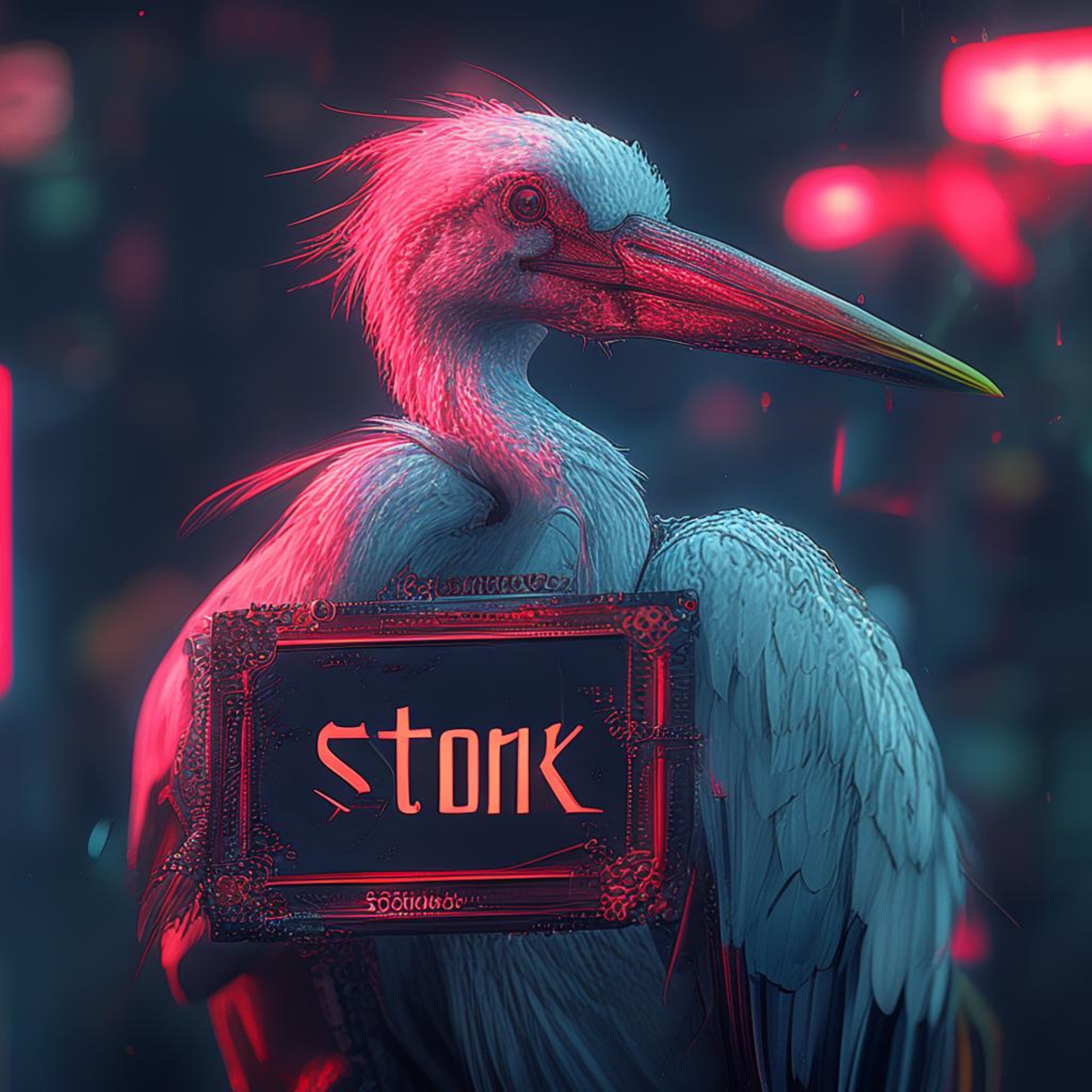}
\includegraphics[width=0.24\textwidth]{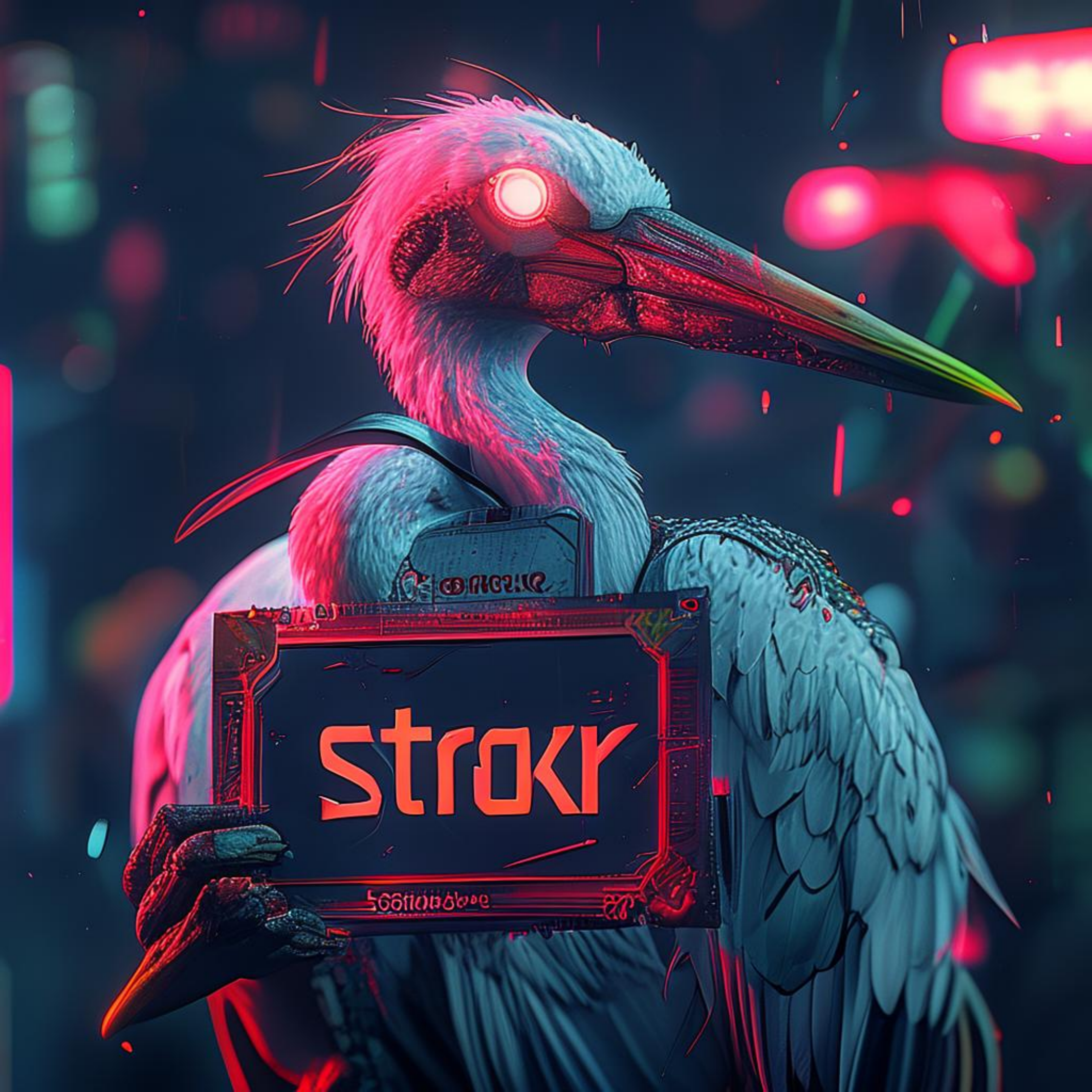}
\includegraphics[width=0.24\textwidth]{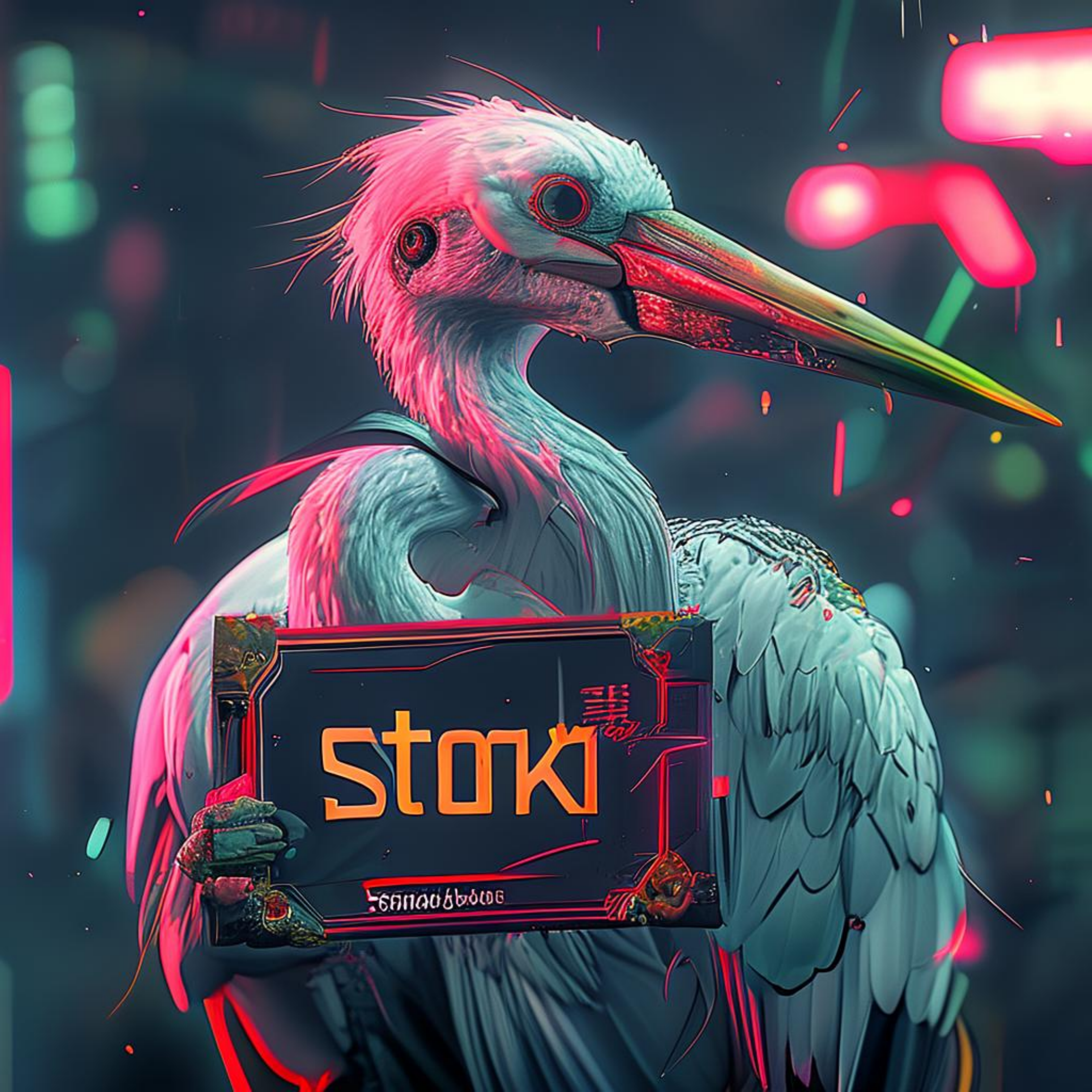}
\includegraphics[width=0.24\textwidth]{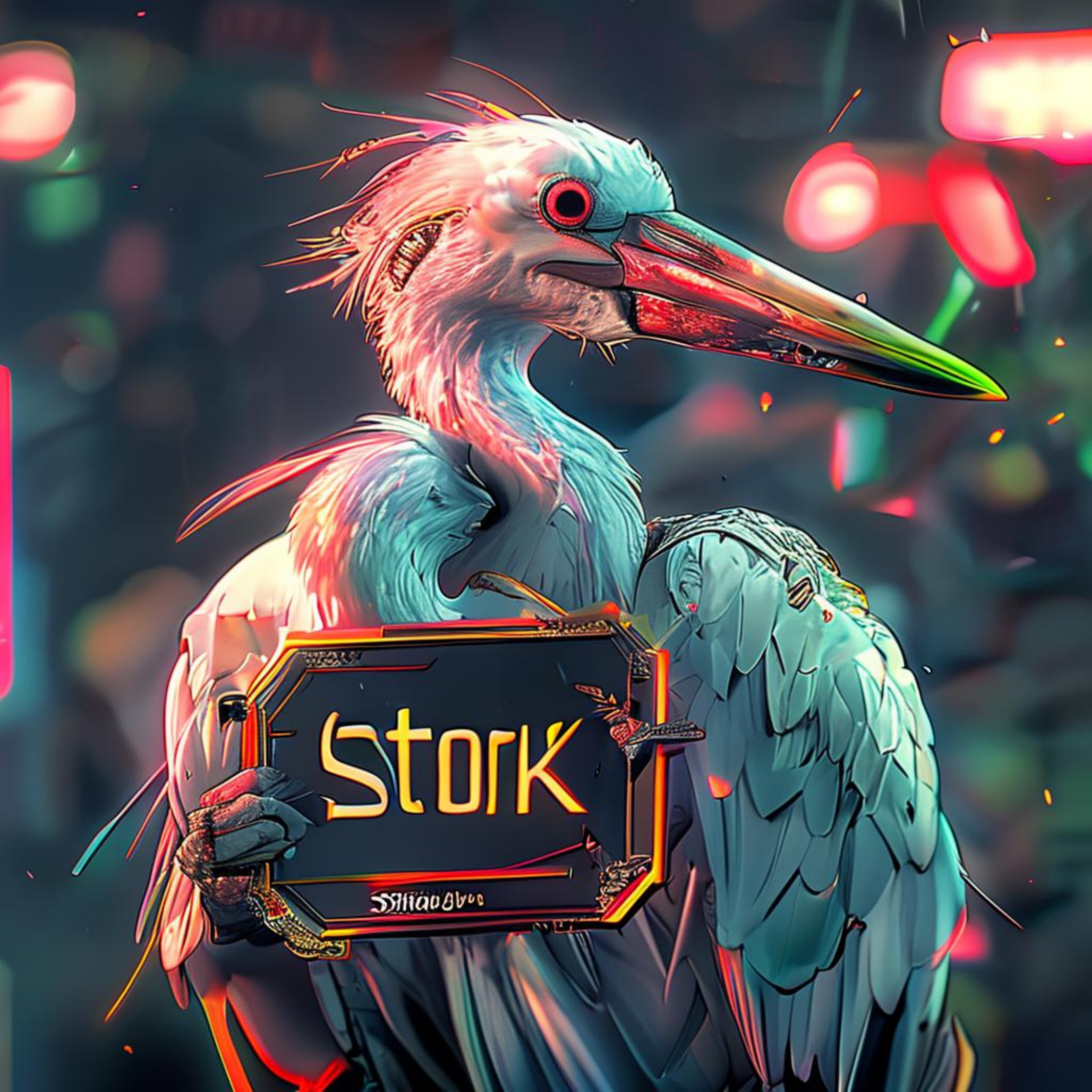}
\caption*{\texttt{``A STORK holding a plaque with the text "STORK" on it. Cyberpunk, 8K resolution, highly detailed, RTX-On, super resolution''} \\
We have the correct word with better details.}
\label{subfig:stork1}
\end{subfigure}

\begin{subfigure}{1\linewidth}
\captionsetup{justification=centering}
\includegraphics[width=0.24\textwidth]{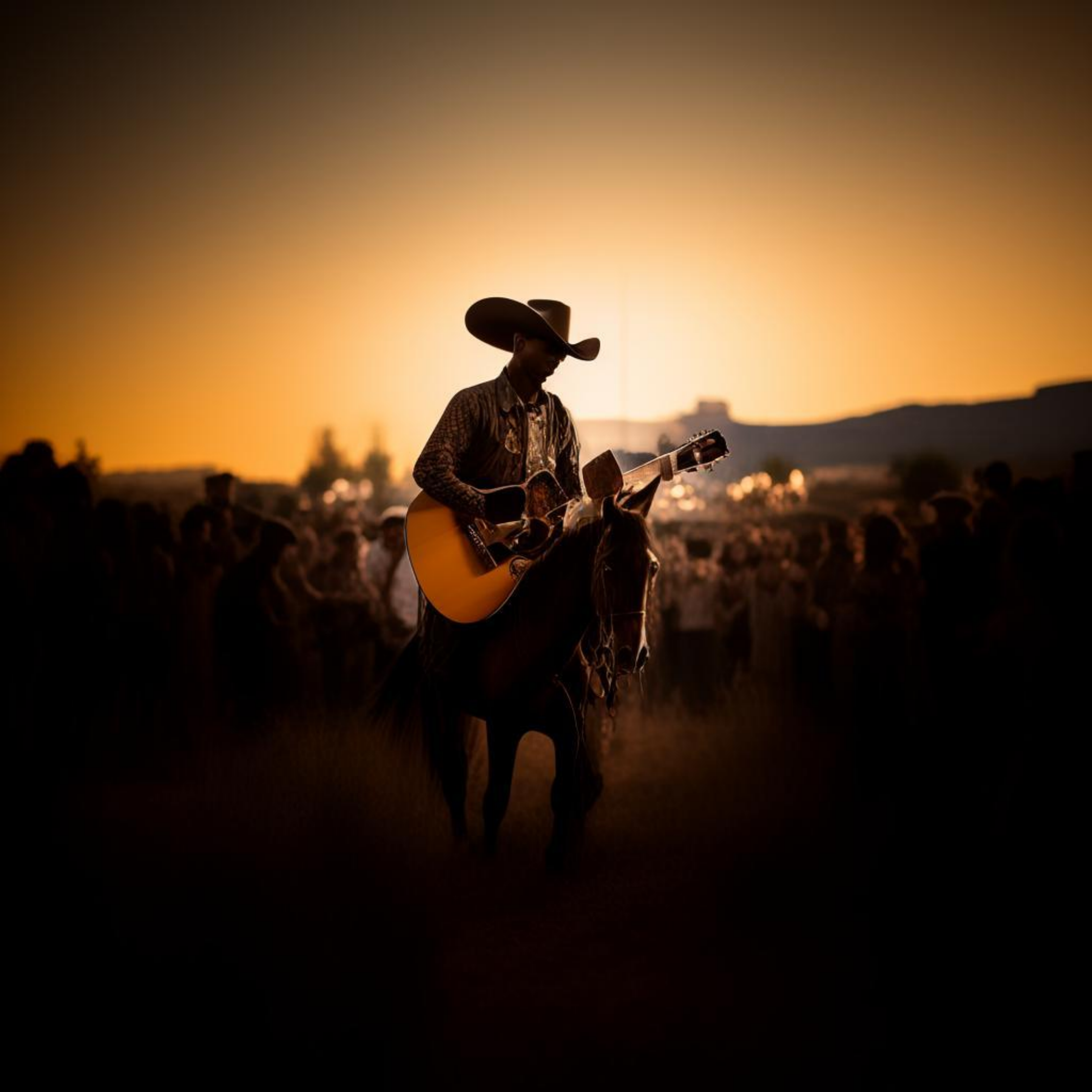}
\includegraphics[width=0.24\textwidth]{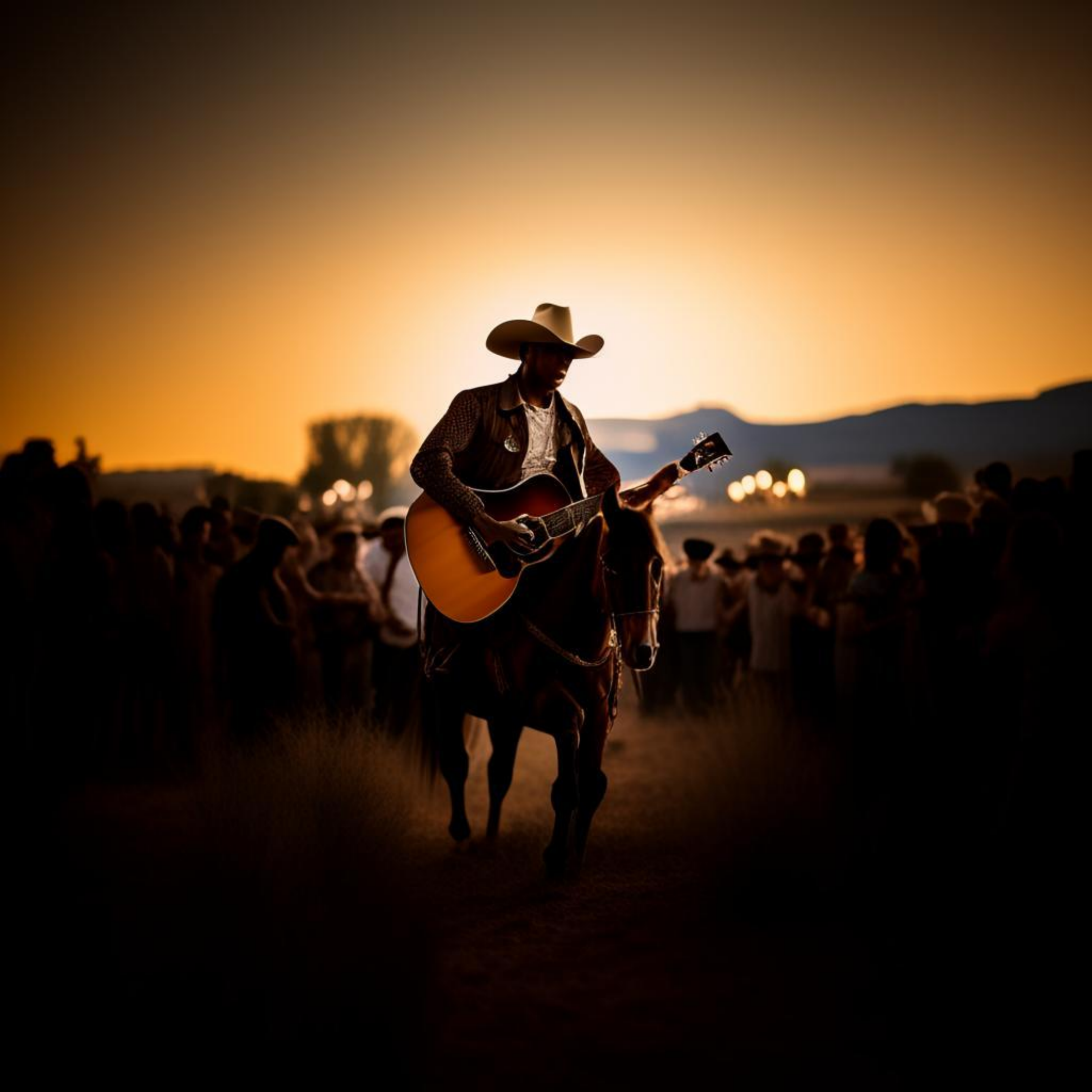}
\includegraphics[width=0.24\textwidth]{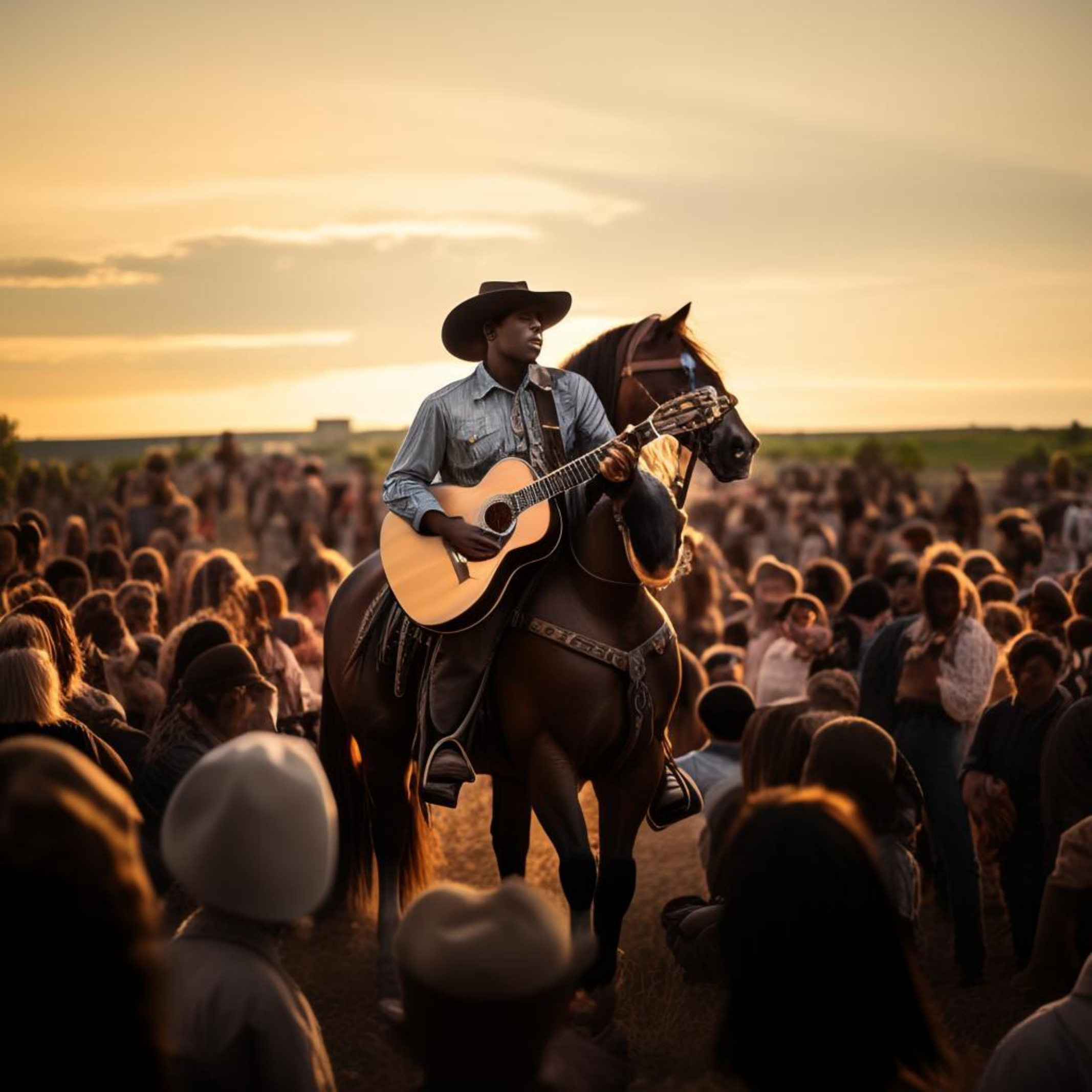}
\includegraphics[width=0.24\textwidth]{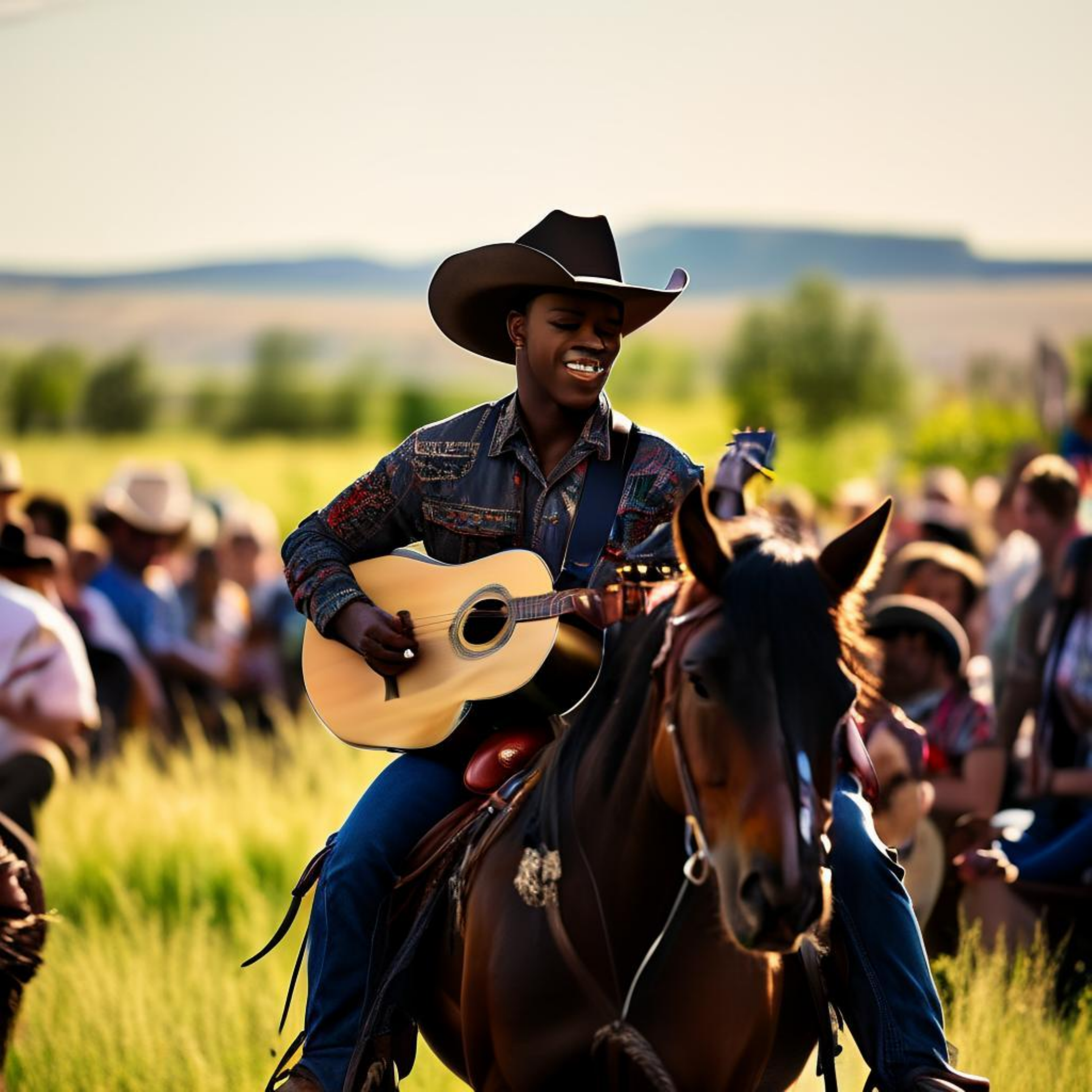}
\caption*{\texttt{``A young black cowboy country music singer performing with a guitar on his horse in a field outside the city to a crowd of fans, 8k, full HD''}\\We have a more vivid background and better realism.}
\label{subfig:cowboy}
\end{subfigure}


\begin{subfigure}{1\linewidth}
\captionsetup{justification=centering}
\includegraphics[width=0.24\textwidth]{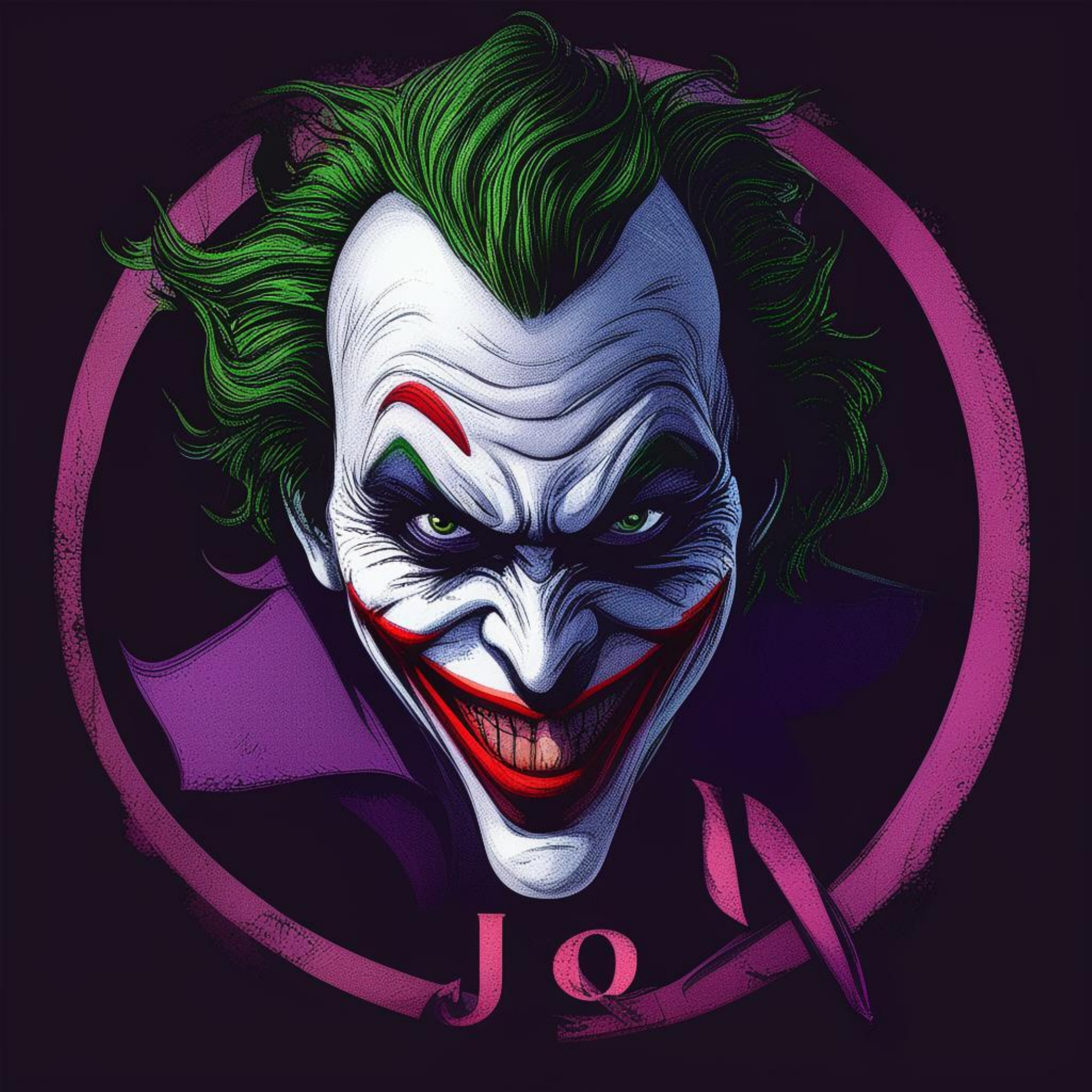}
\includegraphics[width=0.24\textwidth]{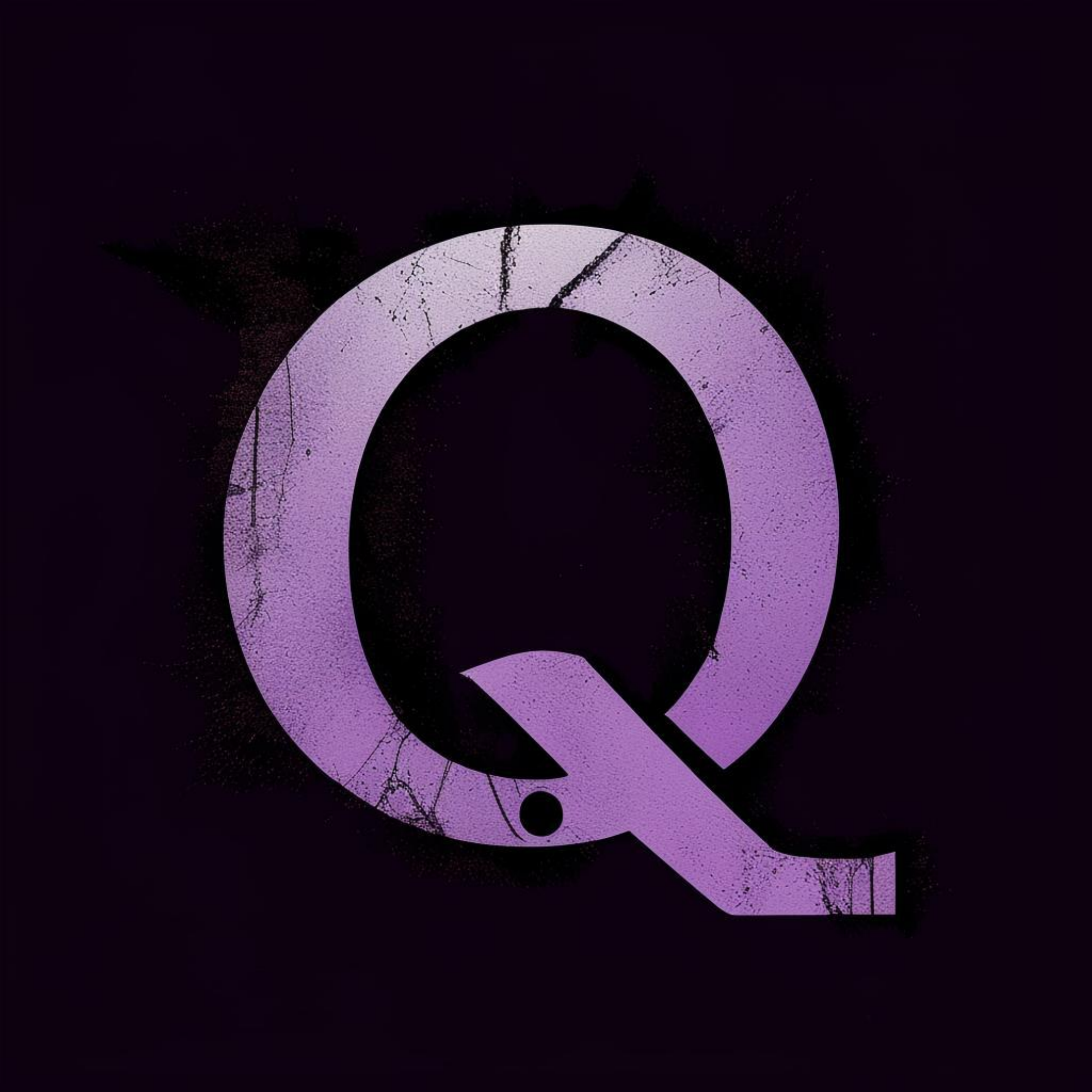}
\includegraphics[width=0.24\textwidth]{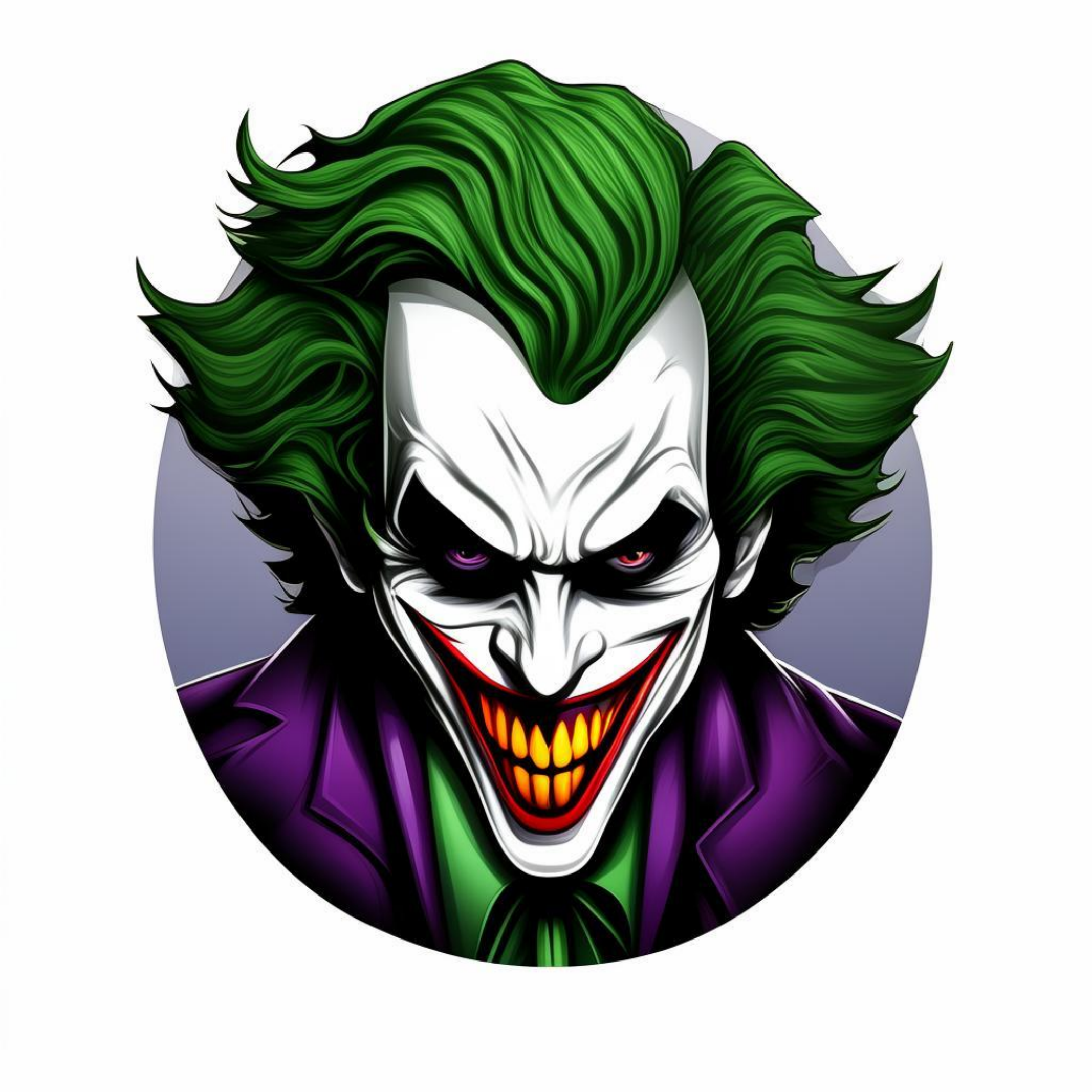}
\includegraphics[width=0.24\textwidth]{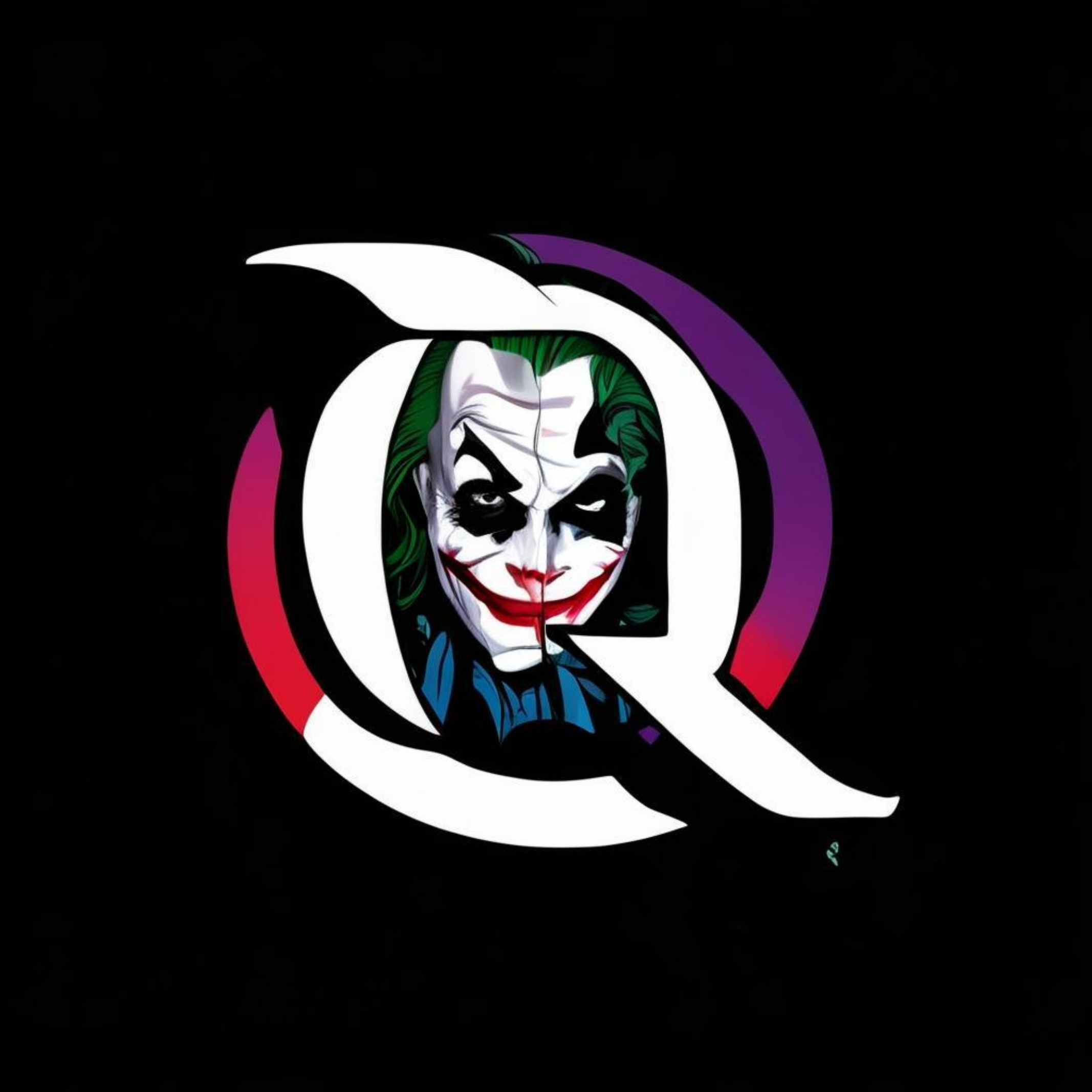}
\caption*{\texttt{``Joker logo with the initials of the letter Q''}\\
Only our method has both the Joker logo and the letter Q.}
\label{subfig:joker}
\end{subfigure}

\begin{subfigure}{1\linewidth}
\captionsetup{justification=centering}
\includegraphics[width=0.24\textwidth]{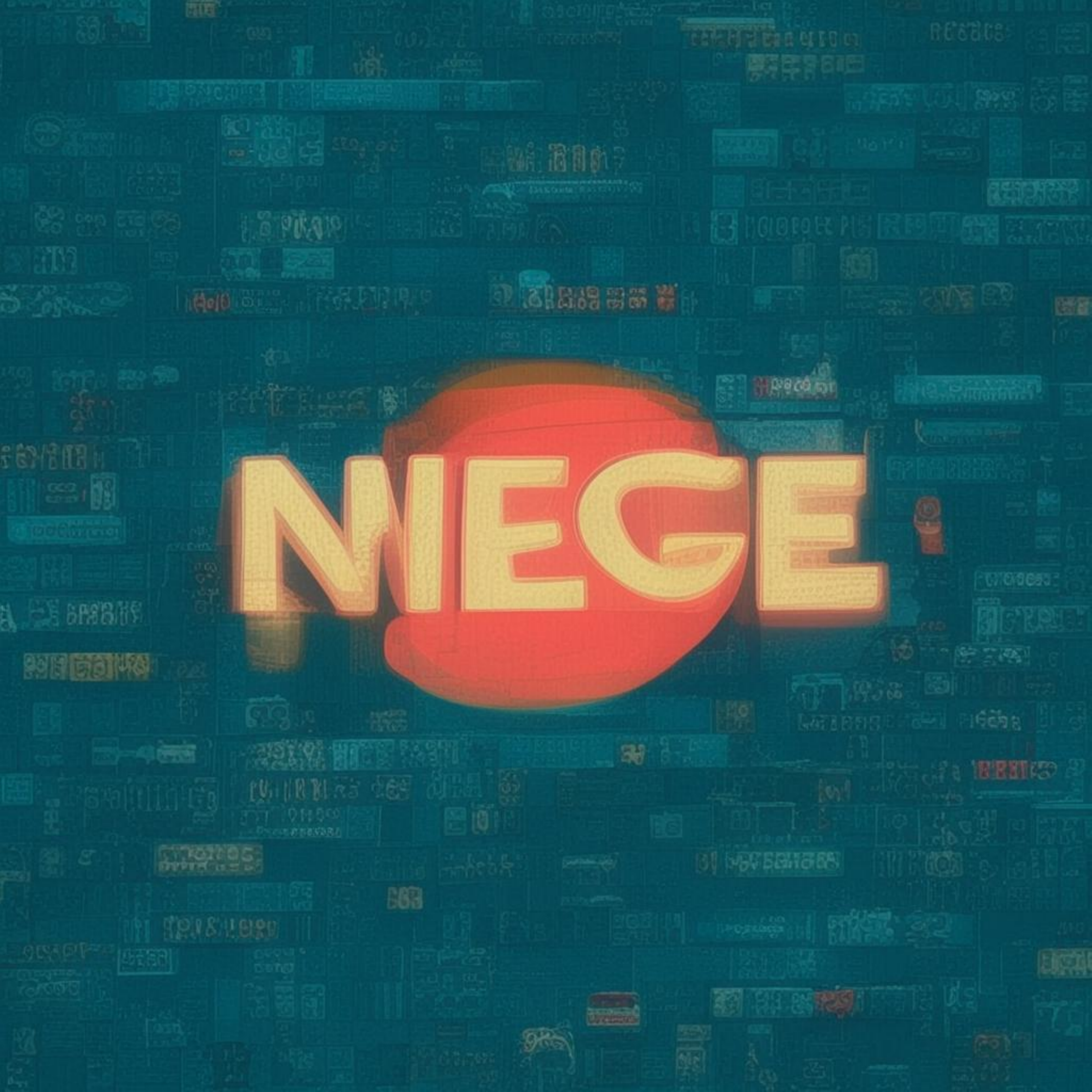}
\includegraphics[width=0.24\textwidth]{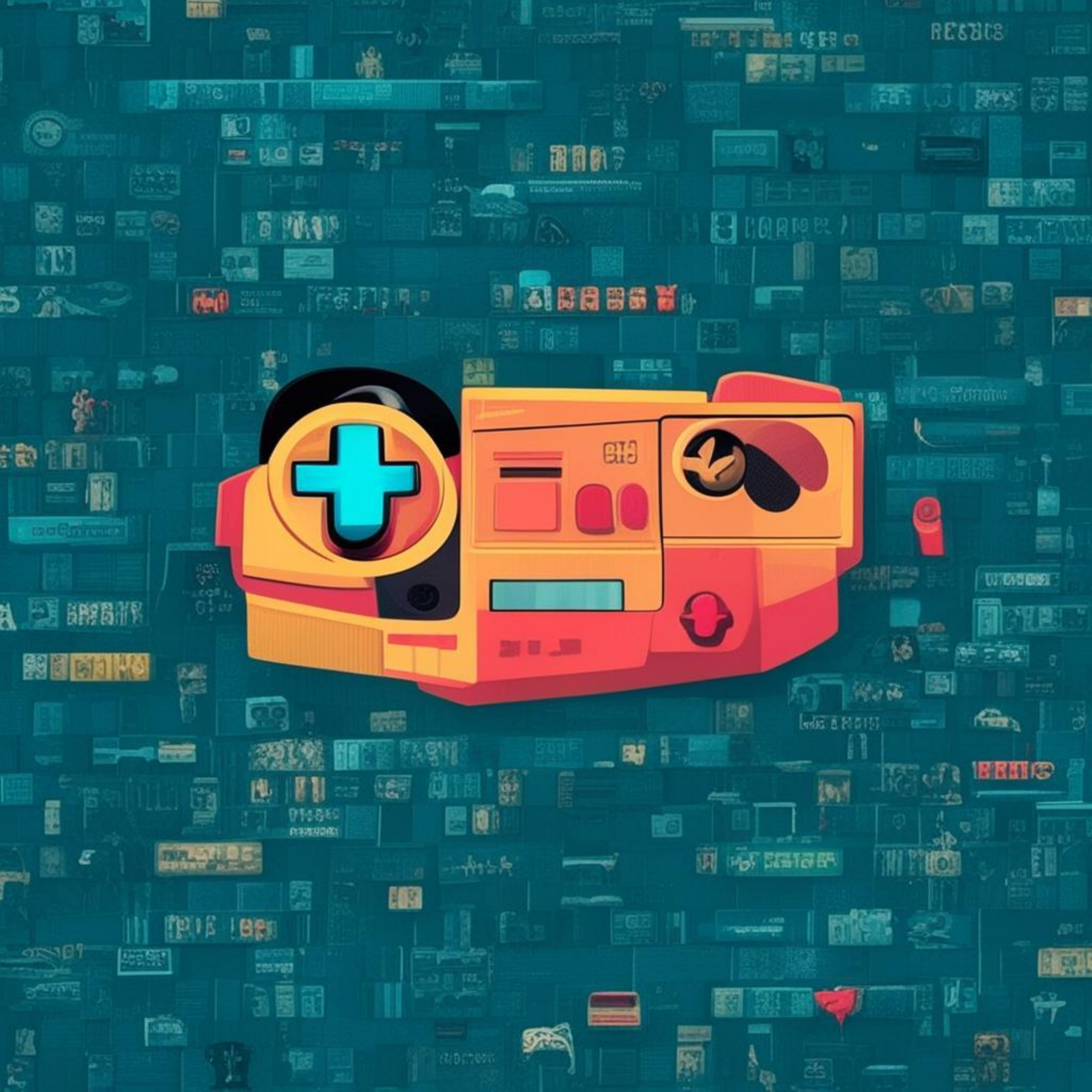}
\includegraphics[width=0.24\textwidth]{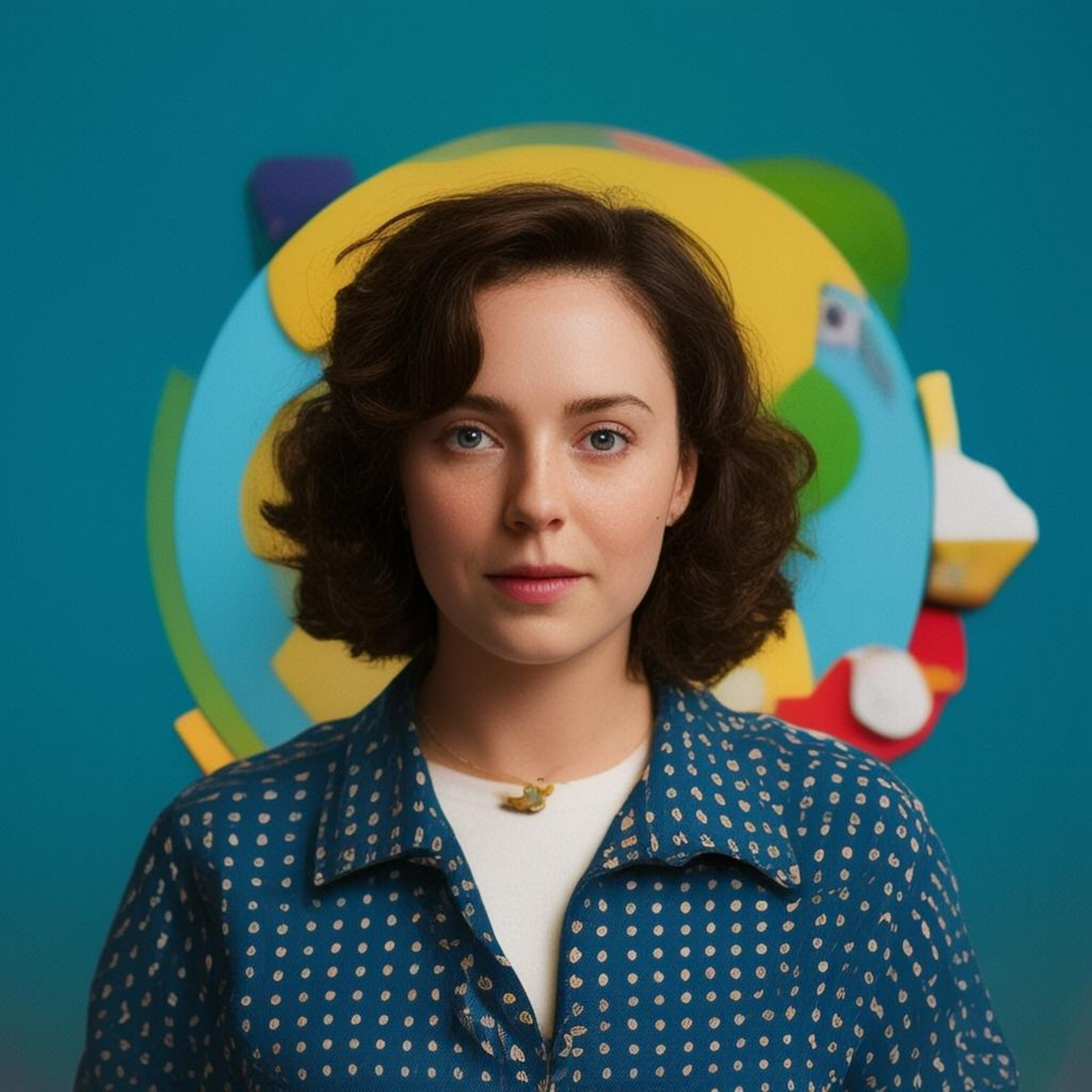}
\includegraphics[width=0.24\textwidth]{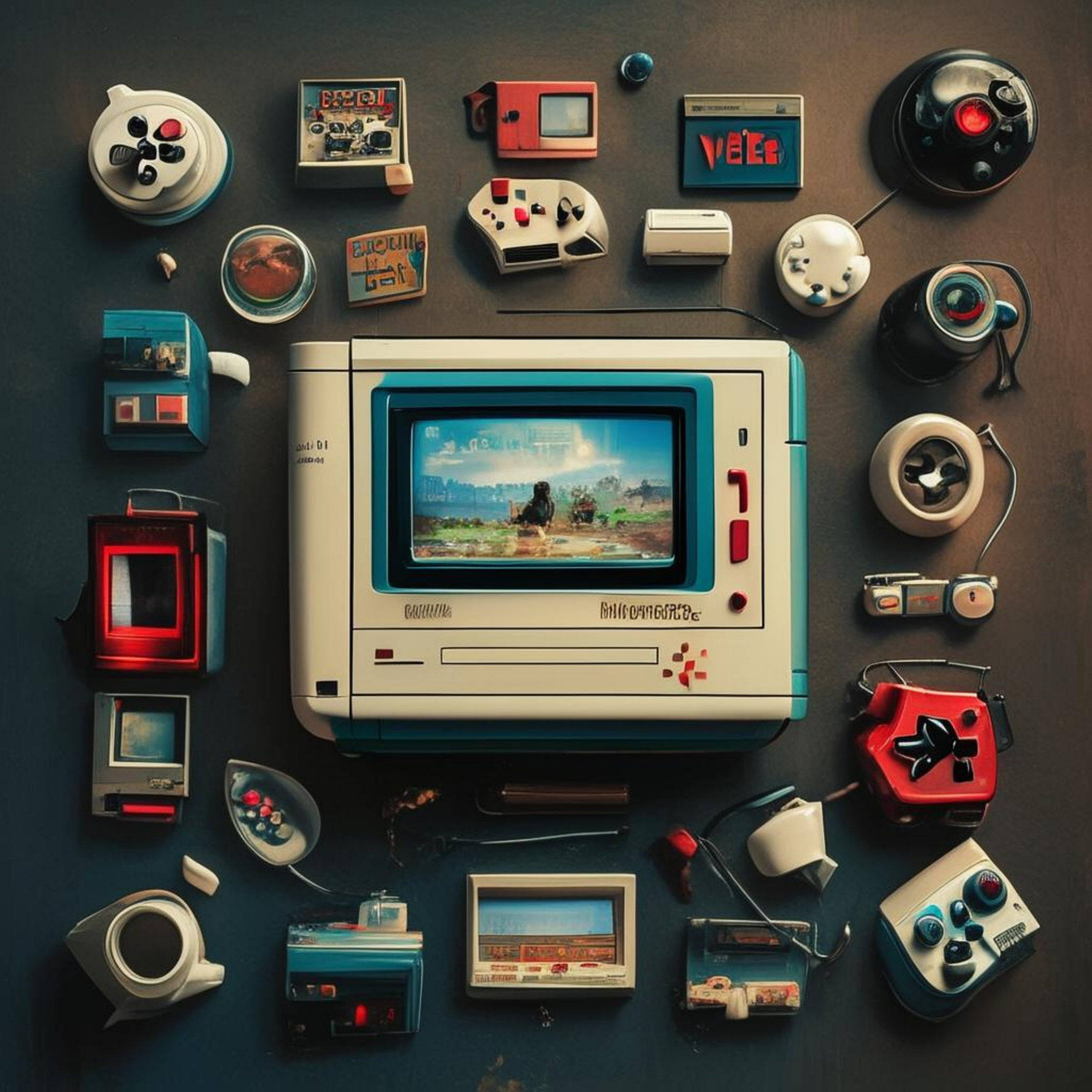}
\caption*{\texttt{``Create an image that evokes nostalgia, using imagery of video games, movies, and other objects that remind us of our past''}\\
Our image is further detailed and has more diverse objects matching the prompt.}
\label{subfig:games}
\end{subfigure}


\caption{Comparison between the Flow-Euler, Flow-DPM-Solver++~\citep{DPM-solver++,Sana}, Flow-UniPC~\citep{UniPC}, and STORK. All images are generated using the SANA 1.6B model~\citep{Sana} at $1024\times 1024$ resolution with only 8 NFEs. Prompts are displayed beneath each image pair, accompanied by our commentary explaining why STORK's generations are superior. STORK achieves much better visual fidelity at the extremely low NFE case, showing its effectiveness as a fast sampling method. Zoom in for better visual details.}
\vspace{-2em}
\label{fig:teaser_figure}
\end{figure}
\clearpage

\begin{figure}[hb!]
\vspace{-0.5em}
\centering

\begin{subfigure}{1\linewidth}
\captionsetup{justification=centering}
\includegraphics[width=0.24\textwidth]{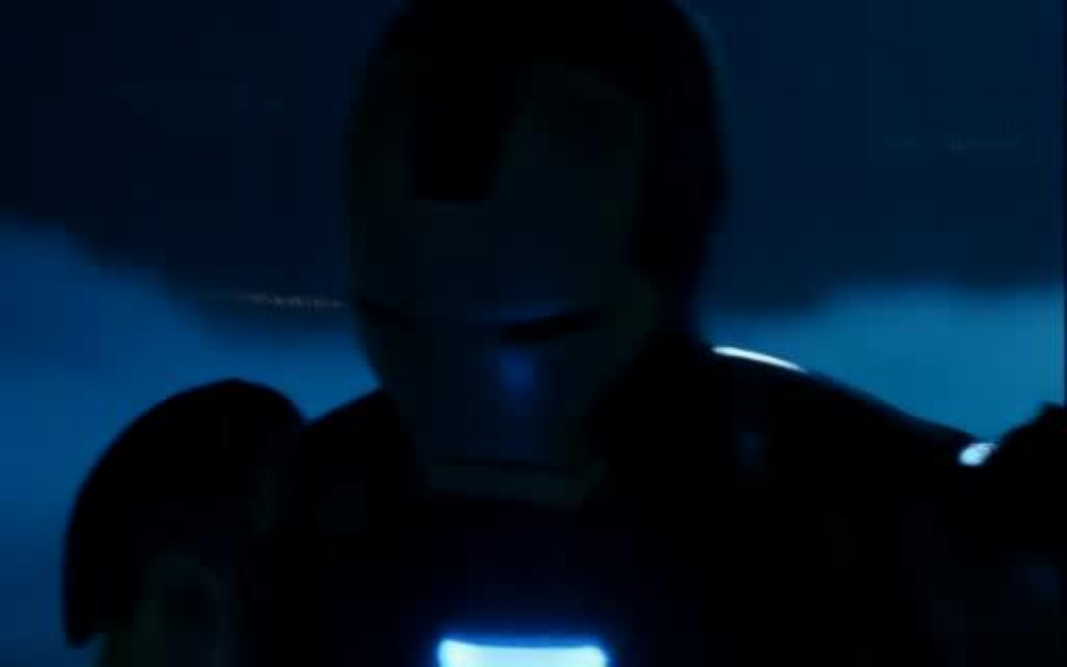}
\includegraphics[width=0.24\textwidth]{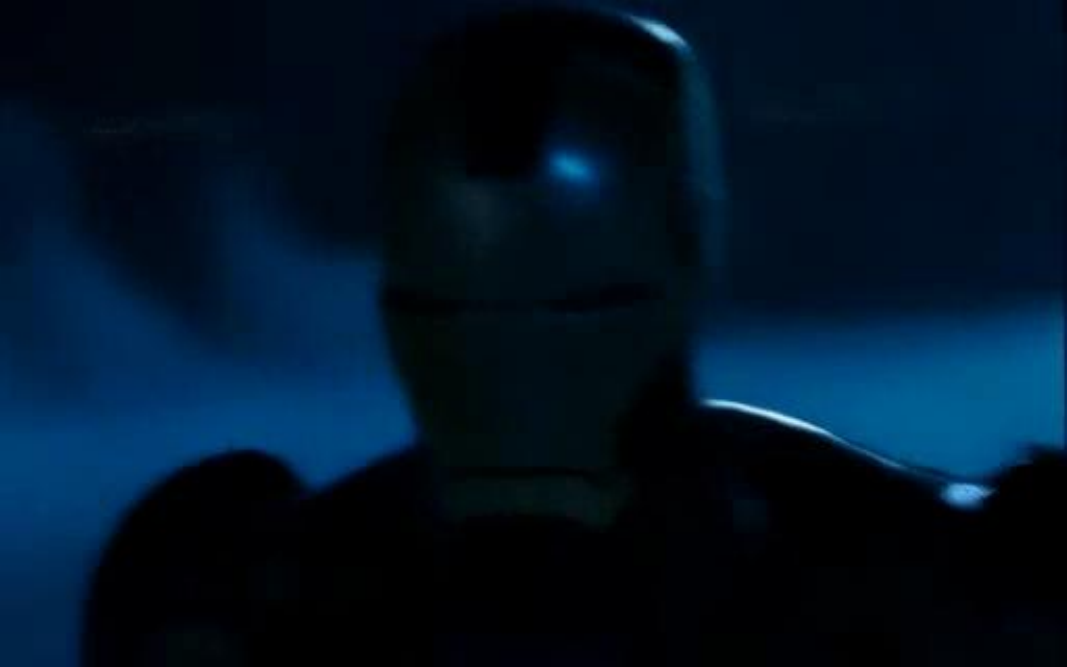}
\includegraphics[width=0.24\textwidth]{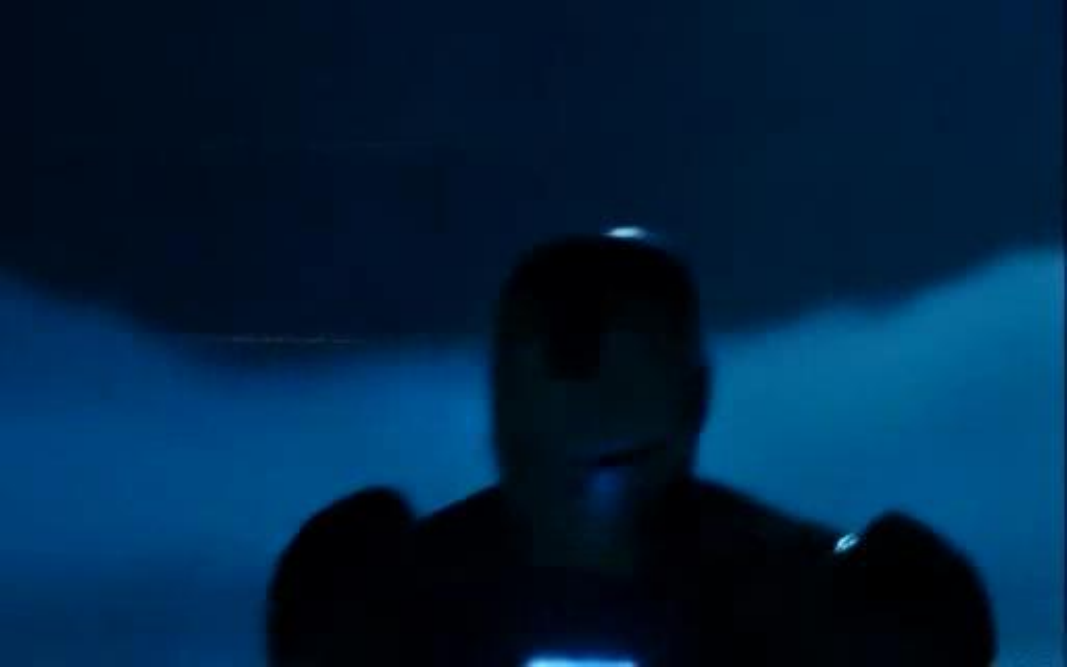}
\includegraphics[width=0.24\textwidth]{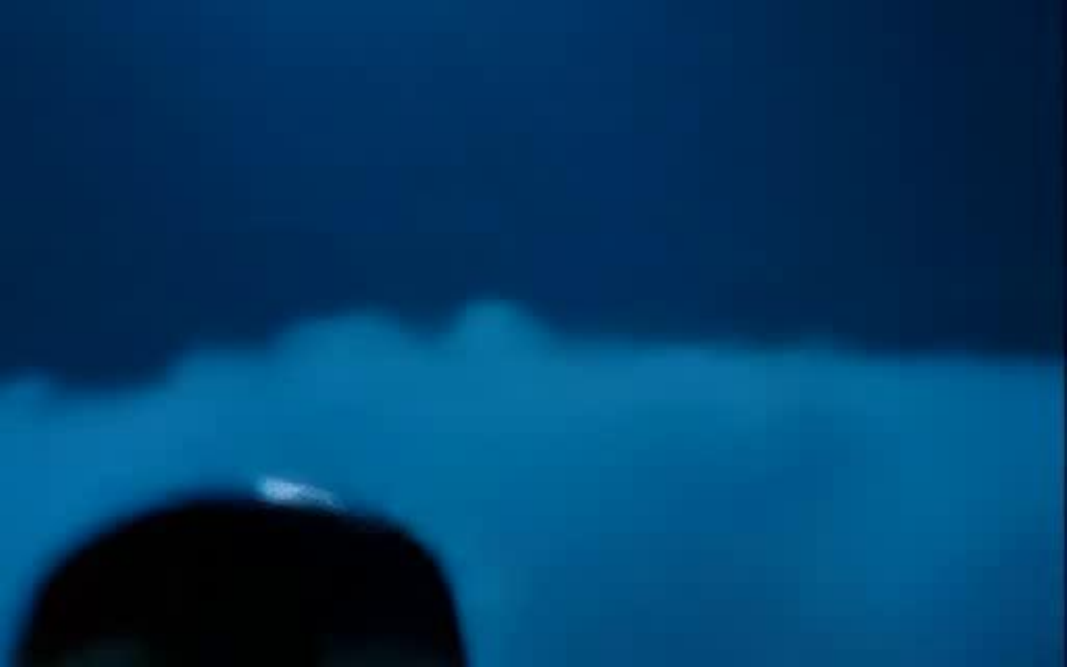}
\caption{Flow-UniPC video generation, 8 NFEs}
\end{subfigure}

\begin{subfigure}{1\linewidth}
\includegraphics[width=0.24\textwidth]{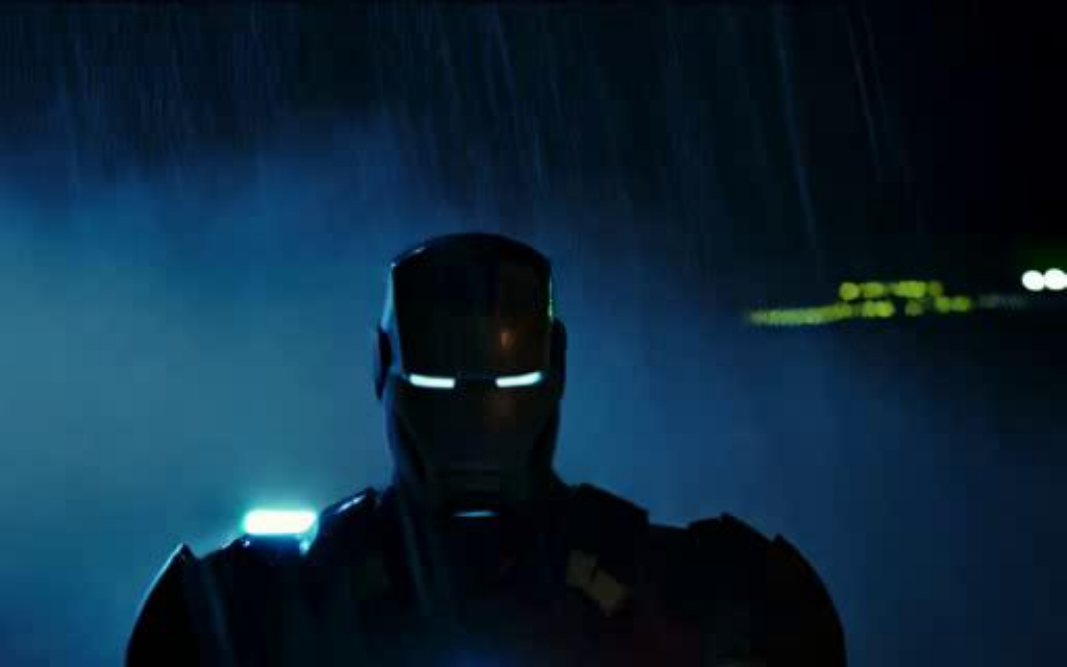}
\includegraphics[width=0.24\textwidth]{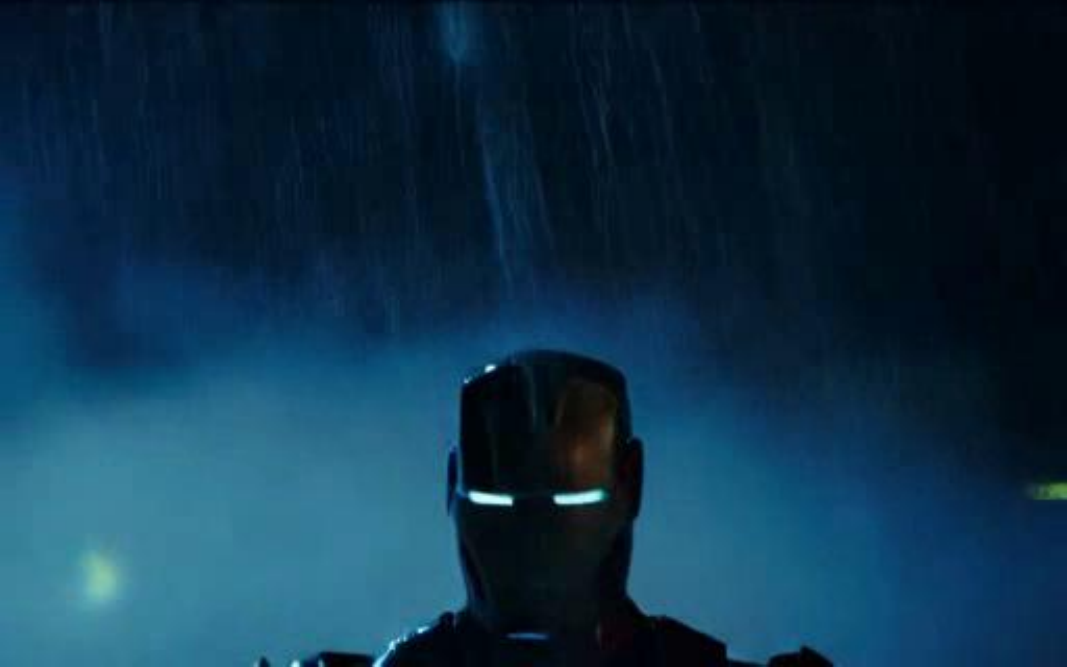}
\includegraphics[width=0.24\textwidth]{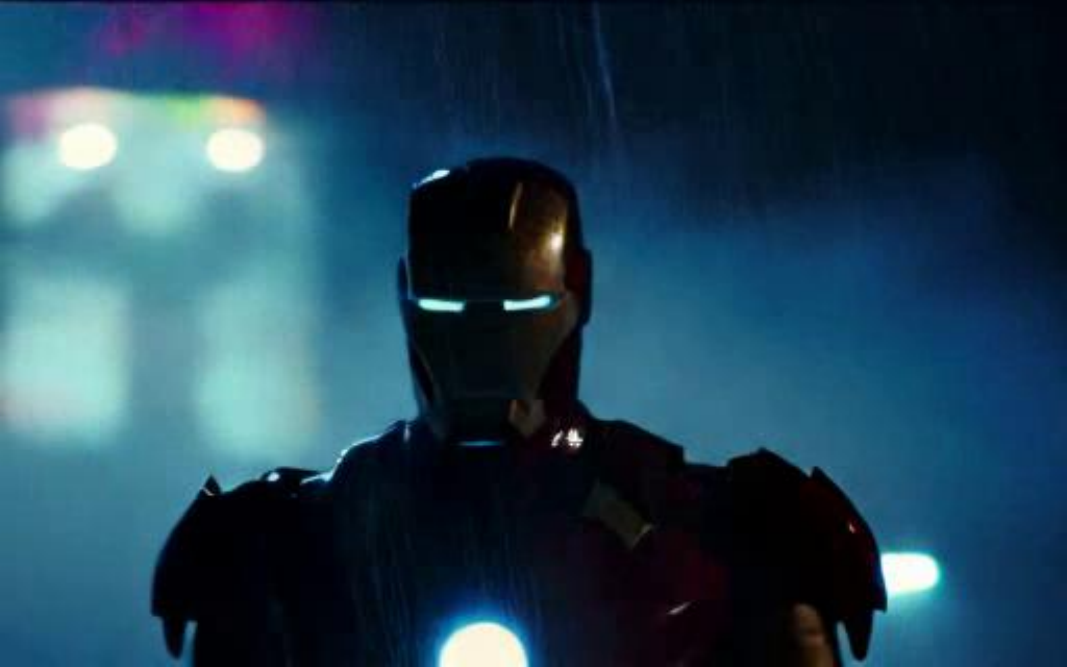}
\includegraphics[width=0.24\textwidth]{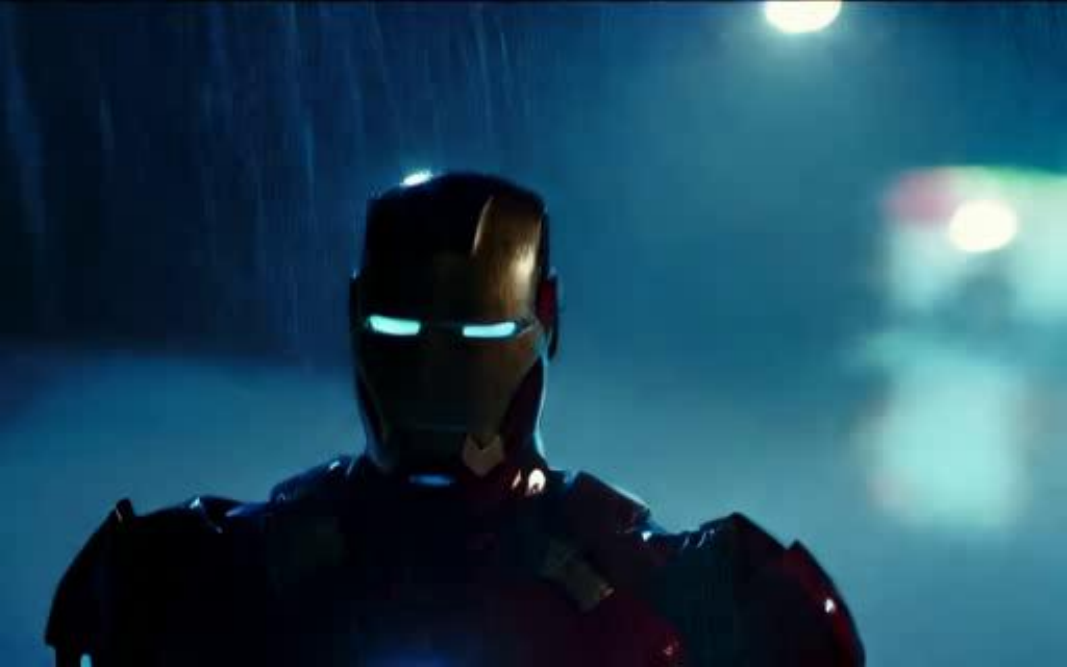}
\caption{STORK video generation, 8 NFEs}
\end{subfigure}

\captionsetup{justification=centering}
\caption{Video generation on Hunyuan model~\citep{kong2024hunyuanvideo} with prompt:\\ \texttt{``Iron Man is walking towards the camera in the rain at night, with a lot of fog behind him. Science fiction movie, close-up''}.\\
Our video portrays Iron Man more clearly and has rain in the background.
}
\vspace{-2em}
\label{fig:video_teaser}
\end{figure}

\section{Background and related works}
\paragraph{Diffusion models.}
Diffusion models (DMs) generate samples by numerically solving the reverse-time ODE 
\begin{align}
\label{eqn:backward_ODE}
\frac{d\bm{x}}{dt}=f(t)\bm{x}(t)+\frac{g(t)^2}{2\sigma_t}\bm{\epsilon}_\theta(\bm{x}(t), t),\quad t\in[0, T],
\end{align}
where the ``initial'' condition $\bm{x}_T$ is sampled as a Gaussian, $f(t)$, $g(t)$, $\sigma_t$ are known functions, and $\bm{\epsilon}_\theta(\bm{x}(t), t)$ is the noise prediction model, a trained neural network.
For further information on how diffusion models are trained and how they are able to generate high-quality images, we refer the readers to standard references~\citep{sohl2015deep, DDPM, DDIM, ScoreBasedSDE}.

Notice that~\eqref{eqn:backward_ODE} exhibits a \emph{semi-linear} structure with a linear term $f(t)\bm{x}(t)$ and a non-linear term involving $\bm{\epsilon}_\theta$. The DPM-Solver~\citep{DPM-solver} is a numerical ODE solver that exploits the semi-linear structure of~\eqref{eqn:backward_ODE}, and it is one of the most widely used fast sampling methods for diffusion models.


\paragraph{Flow matching objectives.}
Flow matching is a class of generative models closely related to diffusion models and has recently attracted significant attention. Unlike diffusion models, which learn a noise model (score function), flow matching models directly learn a vector field $\bm{v}_\theta$ and generate samples by numerically solving the forward-time ODE
\begin{align}
\label{eqn:flow_matching_ODE}
\frac{d\bm{x}}{dt}=\bm{v}(\bm{x}(t), t),\quad t\in[0, T],
\end{align}
where the initial condition $\bm{x}_0$ is sampled as a Gaussian.
For further information on how flow matching models are trained and how they are able to generate high-quality images, we refer the readers to standard references~\citep{FlowMatching, SDXL, Sana}.

Because the flow matching ODE~\eqref{eqn:flow_matching_ODE} lacks a semi-linear structure, the DPM-Solver~\citep{DPM-solver} cannot be directly applied. Although follow-up works~\citep{DPM-solver++, Sana, DPM-solver-v3} circumvent this limitation using a so-called data prediction step, this approach introduces errors at each step.

\paragraph{Classifier-free guidance for conditional sampling.}
In conditional sampling, a generative model generates samples relevant to a conditional variable $c$, and classifier-free guidance is the most widely used technique for conditional sampling with diffusion models~\citep{DiffusionBeatGAN, ClassifierFree}. Specifically, given a parametrized model $\bm{\epsilon}(\bm{x}_t, t, c)$, the conditional noise is chosen to be
\[
\Tilde{\bm{\epsilon}}(\bm{x}_t, t, c)=s\bm{\epsilon}(\bm{x}_t, t, c)+(1-s)\bm{\epsilon}(\bm{x}_t, t, \emptyset),
\]
where $s\ge0$ is the classifier-free guidance-scale, $\bm{\epsilon}(\bm{x}_t, t, c)$ is the noise model conditioned on $c$, and $\bm{\epsilon}(\bm{x}_t, t, \emptyset)$ represents the unconditional noise with $\emptyset$ serving as a place holder.
Then, one simply replaces the noise $\bm{\epsilon}(\bm{x}_t, t)$ in~\eqref{eqn:backward_ODE} with $\bm{\epsilon}(\bm{x}_t, t, \emptyset)$. 

For flow matching models such as Stable-Diffusion-3.5~\citep{StableDiffusion3}, FLUX.1-dev~\citep{FLUX}, and SANA~\citep{Sana}, a similar approach is used, replacing the velocity model $\bm{v}(\bm{x}(t), t)$ of~\eqref{eqn:flow_matching_ODE} with a similarly defined $\Tilde{\bm{v}}(\bm{x}(t), t,c)$.



\paragraph{Current fast sampling methods.}
Existing fast sampling methods generally fall into two categories: one needs additional training, such as knowledge distillation~\citep{KnowledgeDistillation, starodubcev2025scalewisedistillationdiffusionmodels}, consistency models~\citep{ConsistencyModel, lu2025simplifyingstabilizingscalingcontinuoustime, SANA-Sprint}, and one-step diffusion models~\citep{InstaFlow, OneStepDiffusion, SANA-Sprint}; the other is training-free, which uses various numerical solvers for Stochastic Differential Equations (SDEs) and Ordinary Differential Equations (ODEs)~\citep{DDIM, EDM, GottaGoFast, PNDM, DEIS, DPM-solver, DPM-solver-v3, UniPC, DC-Solver}. 
It was first shown in~\cite{ScoreBasedSDE} that the sampling in DMs is equivalent to a backward SDE or ODE. Gotta Go Fast~\citep{GottaGoFast} tried to accelerate the backward SDE sampling. EDM~\citep{EDM} then tried to solve the issue that the trajectory between image and noise distribution is not straight enough with better training of DMs, and used Heun's method for solving the ODE. PNDM~\citep{PNDM} demonstrated that direct utilization of classical Runge--Kutta (RK) methods cannot result in fast sampling, therefore proposed a pseudo-numerical method on the noise manifold, using 4-step Adams-Bashforth and 4-step RK as initial steps. DEIS~\citep{DEIS} and DPM-Solver~\citep{DPM-solver} were concurrent works that tried to utilize the semi-linear structure in the ODE in noise-based DMs, and used the exponential integrator~\citep{ExponentialIntegratior} technique to solve it. Following-up works of the DPM-Solver~\citep{DPM-solver++, DPM-solver-v3, Sana} used data prediction to adopt the DPM-Solver into flow-based models, and introduced multi-step version of DPM-Solver. UniPC and DC-Solver~\citep{UniPC, DC-Solver} proposed a predictor-corrector formulation that can be plugged into any fast sampling method in order to boost the convergence order for sampling with extremely small NFEs.


\section{Stabilized Taylor Orthogonal Runge--Kutta (STORK)}
\label{sec:stork}
We now present our main method \emph{Stabilized Taylor Orthogonal Runge--Kutta} (STORK), which is based on stabilized Runge--Kutta (SRK) methods~\citep{RKC, ROCK2, ROCK4, RKL, OSULLIVAN2019209, SKARAS2021109879,tan2025explicit}. SRK methods in classical numerical analysis use orthogonal polynomials such as the Chebyshev polynomial~\citep{RKC, ROCK2, ROCK4} to mitigate the effect of stiffness~\citep{NATextbook} and to allow for larger timesteps. However, standard SRK methods require an excessive number of function evaluations (NFEs), so we approximate them and obtain our main methods STORK-2 and STORK-4, described fully as Algorithms~\ref{alg:SRK_sampling_2} and~\ref{alg:SRK_sampling_4} in Appendix~\ref{app:alg}.


\begin{figure}
    \centering
    \includegraphics[width=\linewidth]{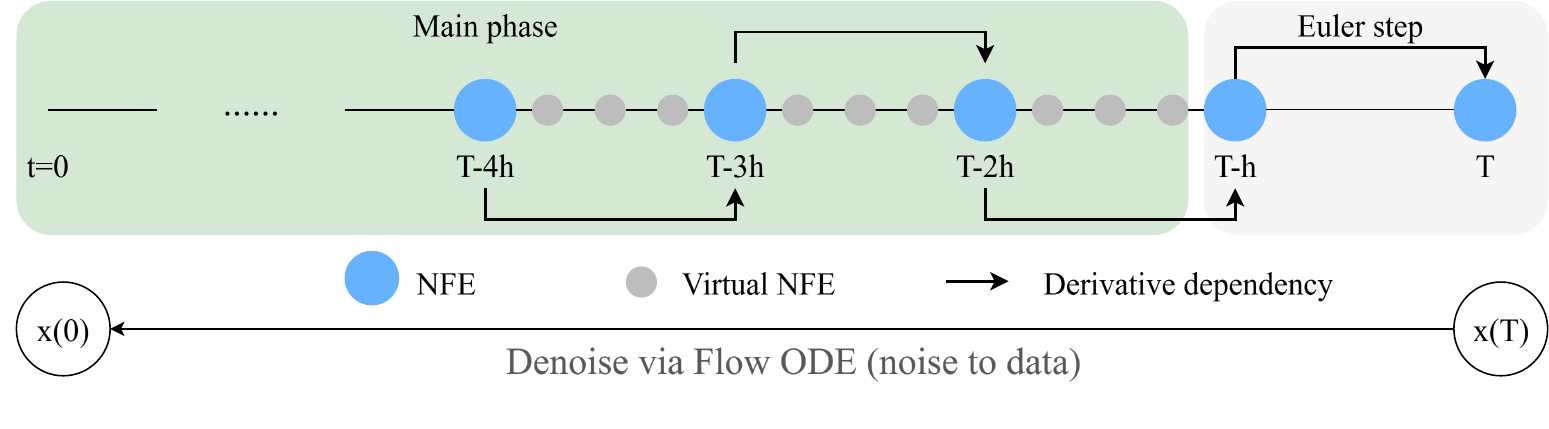}
    \vspace{-1.5em}
    \caption{Illustration of NFE evaluations for STORK-4 with $s=4$ and $1$st order Taylor approximation. For presentation clarity, we use uniform timesteps with size $h$.
``NFE'' denotes actual NFEs, while the ``virtual NFE'' denotes NFEs approximated with the Taylor expansion. 
The arrows indicate that the previously computed velocity is used for first derivative approximations. Euler's method is used for the first step since there is no previous velocity. 
}
\label{fig:stork_pipeline}
\vspace{-2em}
\end{figure}

\subsection{Stabilized Runge--Kutta (SRK) methods}
\label{subsec: srk}

Stabilized Runge--Kutta (SRK) methods is a class of novel explicit solvers in numerical analysis. To motivates the necessity of applying SRK to noise-based and flow-matching-based diffusion models, two important aspects named stiffness and structure-dependency need to be considered.

\paragraph{Stiff ODEs and their solvers.}
Prior works~\citep{splittingmethod, DPM-solver, EDM} have shown that the trajectory between the target distribution and the noise distribution may be ``not straight enough'' and that this causes challenges for fast sampling and high-quality image generation. This, in fact, corresponds to the classical notion of \emph{stiffness} in numerical analysis~\citep{NATextbook}. Loosely speaking, an ODE is stiff if the local change in the solution slope is excessively fast so that large timesteps produce inaccurate or even unstable solutions that blow up, causing failure of typical explicit numerical methods. One modern technique is the exponential integrator~\citep{ExponentialIntegratior}, which is the motivation of the DPM-Solver and similar methods~\citep{DEIS,DPM-solver,DPM-solver++,Sana}. More detailed discussion of the stiffness concept can be found in Appendix~\ref{app:stiffness}.

\paragraph{Structure-dependency.}
Although the methods derived from the exponential integrator technique resolves the stiffness issue, they largely depend on the semi-linear structure of the ODE of the noise-based diffusion model. More precisely, this class of methods are designed for solving stiff ODEs of the form $\frac{dx}{dt}=Lx+N(x(t))$ where $L$ is a non-zero linear operator and $N$ is the non-linear part, and therefore called \textit{structure-dependent solvers}. Methods without requiring any special form for the ODE are, therefore, called \textit{structure-independent solvers}. Note that the ODEs for flow-matching models do not have the semi-linear structure. Therefore, structure-dependent solvers~\citep{DPM-solver,DPM-solver++,Sana} cannot be directly applied, and additional techniques such as data prediction need to be conducted. 

\begin{wrapfigure}{r}{0.5\textwidth}  
\vspace{-1em}
    \begin{tikzpicture}[thick]
    \def\r{2}   
    \def\sep{2.5} 
    \draw[name path=circleA] (0,0) circle(\r);
    \draw[name path=circleB] (\sep,0) circle(\r);
    \path[name intersections={of=circleA and circleB, by={i1,i2}}];
    \node at (-0.3, 2.25) {\textsf{\textbf{Stiff solvers}}};  
    \node at (\sep+0.5 
    ,2.2) {\textsf{\textbf{Structure independent}}}; 
    \node at (\sep/2,0.9) {\textbf{SRK}};  
    \draw[->] (\sep/2,0.7) -- (\sep/2,-0.3);
    \node at (\sep/2,-0.5) {\textbf{STORK}};  
    \node at (-0.3, 1) {DPM-Solver};
    \node at (-0.7, 0) {DPM-Solver++};
    \node at (-0.3, -1) {DEIS};
    \node at (\sep+0.4, 1.2) {Euler};
    \node at (\sep+0.6, 0.6) {Heun};
    \node at (\sep+0.8, 0) {Runge--Kutta};
    \node at (\sep+0.6, -0.6) {PNDM};
    \node at (\sep+0.4, -1.2) {UniPC};
    \node[rotate=90] at (\sep/2,0.5) {\footnotesize\parbox{1.5cm}{virtual\\\\ \text{    NFE}}};
    \end{tikzpicture}
\caption{Derived from the SRK method, STORK is both a stiff problem solver and a structure-independent solver.
}
\label{fig:venn}    
\end{wrapfigure}

\paragraph{SRK methods.} 
We now introduce the SRK method in numerical analysis under the flow matching setting; a similar derivation holds by replacing $\bm{v}$ by $\bm{\epsilon}_\theta$ for noise-based DMs. Consider the ODE
$$\frac{d\bm{x}}{dt}=\bm{v}(\bm{x}(t), t),\quad t\in[0, T],$$
where $\bm{x}\in\R^d$. The SRK methods is a class of single-step, multi-stage methods that are primarily designed to allow one to numerically solve stiff ODEs with larger timesteps. In practice, the most widely used SRK methods are the second-order and fourth-order SRK methods~\citep{ROCK2,ROCK4,RKC,RKL,OSULLIVAN2019209,SKARAS2021109879}.

Let $h$ denote the timesteps used for solving the flow matching ODE one step from $x(t_0)$ to $x(t_0-h)$; in general, $h$ can be non-uniform across different timesteps. An example of the second-order SRK (SRK2) method is called the Runge--Kutta--Gegenbauer (RKG2) method proposed by~\cite{SKARAS2021109879}, which takes the form of
\begin{align}
\begin{split}
\bm{Y}_0&=\bm{x}(t_0),\bm{Y}_1=\bm{Y}_0-h\Tilde{\mu}_1\bm{v}(\bm{Y_0}, t_0),\\
\bm{Y}_j&=\mu_j\bm{Y}_{j-1}+\nu_j\bm{Y}_{j-2}+(1-\mu_j-\nu_j)\bm{Y}_{0}\\
&-\Tilde{\mu}_jh\bm{v}(\bm{Y}_{j-1}, t_{j-1})-\Tilde{\gamma}_jh\bm{v}(\bm{Y}_0, t_0),\quad j=2,...,s,\\
\bm{x}(t_0-h)&=\bm{Y}_{s}.   
\end{split}
\label{eqn:rkg2}
\end{align}
Here, the SRK2 coefficients satisfy
$$\mu_j=\frac{2j+1}{j}\frac{b_j}{b_{j-1}},\Tilde{\mu}_j=\mu_jw_1, \nu_j=-\frac{j+1}{j}\frac{b_j}{b_{j-2}}, \Tilde{\gamma}_j=-\Tilde{\mu}_ja_{j-1};$$
where
$$w_1=\frac{6}{(s+1)(s-1)}, b_j=\frac{4(j-1)(j+4)}{3j(j+1)(j+2)(j+3)}, a_j=1-\frac{(j+1)(j+2)}{2}b_j.$$
A fourth-order SRK method (SRK4) is the orthogonal Runge--Kutta-Chebyshev method developed by~\cite{ROCK4}. The $s$-stage SRK4 method for solving this ODE from $x(t_0)$ to $x(t_0-h)$ is
\begin{align}
\begin{split}
\bm{Y}_0&=\bm{x}(t_0),\bm{Y}_1=\bm{Y}_0-h\mu_1\bm{v}(\bm{Y_0}, t_0),\\
\bm{Y}_j&=-h\mu_j\bm{v}(\bm{Y}_{j-1}, t_{j-1})-\nu_j\bm{Y}_{j-1}-\kappa_j\bm{Y}_{j-2},\quad j=2,...,s-4,\\
\bm{Y}_{j}&=\bm{Y}_{s-4}-h\mu_{j}\bm{v}(\bm{Y}_{j-1}, t_{j-1}),\quad j=s-3,...,s\\
\bm{x}(t_0-h)&=\bm{Y}_{s}.   
\end{split}
\label{eqn:rock4}
\end{align}
All parameters $\mu_j, \nu_j, \kappa_j$ are ODE-independent, pre-computed constants; the intermediate times $t_j\in[t_0-h, t_0]$ are only dependent on $t_0$ and $t_0-h$. The exact values of the parameters and more details of the derivation can be found in~\cite{ROCK4} and Appendix~\ref{app:srk_derivation}. The SRK4 method allows one to choose $h\sim\O(s^2)$ maintaining stability~\citep{ROCK4}, therefore handles the stiffness. We call an update from $\bm{Y}_{j-1}$ to $\bm{Y}_j$ as one \textit{sub-step}, and an update from $\bm{x}(t_0)$ to $\bm{x}(t_0-h)$ as one \textit{super-step}. Notice that in all the derivations above, we never assume any special structures of $\bm{v}(\bm{x}(t), t)$. With the stiffness analysis in Appendix~\ref{app:srk_derivation}, SRK methods uniquely belongs to both stiff solvers and structure-independent solvers, as shown in Figure~\ref{fig:venn}.

\paragraph{SRK vs.\ RK methods.}
Classical Runge--Kutta (RK) methods and SRK methods crucially differ in their ability to handle stiffness: RK methods are not stiff equation solvers, while the SRK methods are designed primarily for stiff equations. With more sub-steps $s$, RK methods converge with higher order, while SRK methods handle stiffness better. Naturally, SRK methods can tune $s$ to arbitrarily large number with a general formula, while RK methods cannot. Detailed comparisons can be found in Appendix~\ref{app:comparison}.

\subsection{STORK: SRK with virtual NFE}
\label{subsec:taylor_srk}
Despite the success of SRK methods in classical numerical analysis, a fatal issue when applying them on diffusion and flow-matching model sampling is that an $s$-stage SRK method requires $s$ NFEs for updating $1$ super-step. In practice, $s$ usually needs to be chosen to be around $10$ to $50$, leading to an inordinate NFE count. As shown in Table~\ref{tab:ablation_rk_srk}, naive application of SRK4 to the CIFAR-10~\citep{krizhevsky2009learning} dataset results in very poor sampling results, especially in the range of small NFEs. Therefore, a mechanism to reduce the NFE count must be applied for the SRK methods to become practical.

A natural choice to reduce NFEs in each super-step is to approximate $\bm{v}(\bm{Y}_j(t_j), t_j)$ using the Taylor expansion in the time variable $t$ at $\bm{v}(\bm{Y_0}, t_0)$. By treating the velocity as a purely $t$-dependent function, Taylor expansion yields
\begin{align*}
\bm{v}(\bm{Y}_j(t_j), t_j)&=\bm{v}(\bm{Y_0}, t_0)+(t_j-t_0)\bm{v}'(\bm{Y_0}, t_0)+\frac{(t_j-t_0)^2}{2}\bm{v}''(\bm{Y_0}, t_0)\\
&+\frac{(t_j-t_0)^3}{6}\bm{v}'''(\bm{Y_0}, t_0)+\mathcal{O}((t_j-t_0)^4).
\end{align*}
Since the exact evaluation of the higher-order derivatives also incurs costs no less than an NFE, we further approximate the derivatives of the velocity field $\bm{v}(\bm{Y_0}, t_0)$ using a finite-difference approximation. Except for the several initial super-steps (depending on the order of Taylor expansion used), we store the previously computed velocities and use the forward finite-difference method to get an approximation for the derivatives of the noise or velocity up to the desired order. Regarding the initial steps, we use one Euler's method step followed by 2-step Adams-Bashforth method steps until there are enough points for Taylor expansion approximation.\
The intermediate $\bm{v}_{\text{approx}}(\bm{Y}_{j}, t_{j})$ are called \textit{virtual} NFEs, since they are approximated using the Taylor expansion so that no additional NFEs are used for those points as opposed to the actual NFEs. 

Plugging in the Taylor expansion and the velocity derivative approximations above into SRK4 described by~\eqref{eqn:rock4} with suitable modification, we get the fourth-order \textit{\textbf{S}tabilized \textbf{T}aylor \textbf{O}rthogonal \textbf{R}unge-\textbf{K}utta (\textbf{STORK})} method, denoted by STORK-4. The overall STORK method pipeline is illustrated in Figure~\ref{fig:stork_pipeline}. Similarly, we denote the first and second order STORK methods as STORK-1 and STORK-2. A similar derivation works for noise-based DMs by using Taylor expansion on the noise. Due to the length of the algorithms, we specify the details of the algorithms in the Appendix~\ref{app:alg}.

We note that only first, second, and fourth order SRK methods exist for reasons related to the roots of the so-called stability polynomial~\citep{ROCK4}. Therefore, third and higher-order SRK methods do not exist, and STORK-$k$ can exist only for $k=1,2,4$. We find that STORK-4 consistently outperforms STORK-1 and STORK-2, so we conduct all the experiments using the STORK-4 method. The Taylor expansion order $n$ is empirically chosen to be $n=2$ with both unconditional and conditional noise-based generation, and $n=1$ with conditional latent space flow-matching generation. Even though the choice of $s$ does not affect the NFEs in the STORK method, the number of sub-steps $s$ also requires tuning to maximize image quality. Intuitively, excessively large $s$ causes errors of the Taylor expansion to accumulate; excessively small $s$ does not provide a large enough region of absolute stability~\citep{NATextbook}. More studies on order and $s$ are in Appendix~\ref{app:ablation}.


Finally, via classical numerical analysis arguments in~\cite{ROCK4, RKL, SKARAS2021109879} and the error term in the Taylor expansion, the order of convergence can be summarized as shown in the following theorem and proved in Appendix~\ref{app:proof}.

\begin{theorem}
Assume $\bm{\epsilon}_\theta(\bm{x}_t, t)$ or $\bm{v}(\bm{x}_t, t)$ satisfies the assumptions in Appendix. Let $\{\Tilde{\bm{x}}_{t_i}\}_{i=0}^M$ be the sequence computed by STORK-$k$ with timesteps $\{t_i\}_{i=0}^M$. For $k=2, 4$, if Taylor expansion is not used for the virtual NFEs, the STORK-$k$ solver converges to the expected solution with order $k$, i.e. $\bm{\Tilde{x}}_{t_0}-\bm{x}_0\sim\O(h^k)$ where $h:=\underset{1\le i\le M}{\max}(t_i-t_{i-1})$. The Taylor expansion STORK-$k$ solver in Algorithm~\ref{alg:SRK_sampling_2} and~\ref{alg:SRK_sampling_4} converges to the non-Taylor STORK-$k$ solver with order $\O(h^2)$.
\label{thm:STORK_convergence}
\end{theorem}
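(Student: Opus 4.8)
My plan is to treat the two claims separately, since the first is a classical consistency-plus-stability argument while the second is a perturbation estimate relating the two discrete schemes.

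For the first claim (order-$k$ convergence of the non-Taylor scheme), I would reduce it to the standard convergence theory for one-step methods. The non-Taylor STORK-$k$ is exactly the SRK method (RKG2 for $k=2$, ROCK4 for $k=4$) applied to the ODE, so I would (i) invoke the order conditions already verified in the cited SRK references to conclude that the local truncation error of one super-step is $\O(h^{k+1})$; (ii) use the Lipschitz and smoothness assumptions on $\bm{v}$ (or $\bm{\epsilon}$) from the appendix, together with the fact that the SRK coefficients are fixed bounded constants and the stage count $s$ is fixed, to show the super-step increment map is Lipschitz in the state uniformly in $h$; and (iii) combine consistency and stability through the usual discrete Gr\"onwall argument to obtain the global error $\O(h^k)$. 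Reversing the time direction for the backward ODE changes nothing in this argument.

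For the second claim I would view the Taylor scheme as a perturbation of the non-Taylor scheme sharing the same super-step structure, bound the one-super-step discrepancy, and then accumulate. The key quantity is the per-sub-step velocity error $\bm{\delta}_j := \bm{v}_{\text{approx}}(\bm{Y}_{j-1}, t_{j-1}) - \bm{v}(\bm{Y}_{j-1}, t_{j-1})$. I would decompose it into (a) the Taylor truncation remainder, which is $\O((t_{j-1}-t_0)^{n+1}) = \O(h^{n+1})$ with $n\in\{2,3\}$; (b) the finite-difference errors in $\bm{v}'_{\text{approx}}, \bm{v}''_{\text{approx}}, \bm{v}'''_{\text{approx}}$, which are $\O(h^2), \O(h), \O(h)$ but enter multiplied by $(t_{j-1}-t_0), (t_{j-1}-t_0)^2, (t_{j-1}-t_0)^3$ and so each contribute $\O(h^3)$; and (c) the mismatch between the stage value and the exact trajectory, $\bm{v}(\bm{x}(t_{j-1}), t_{j-1}) - \bm{v}(\bm{Y}_{j-1}, t_{j-1})$, controlled by the stage order of the method via $\|\bm{Y}_{j-1} - \bm{x}(t_{j-1})\| = \O(h^2)$. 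Term (c) dominates, giving $\bm{\delta}_j = \O(h^2)$ independently of $n$, which is exactly what caps the order at two. Since each $\bm{\delta}_j$ enters a sub-step update with a factor $h$ and $s=\O(1)$, stability of the super-step recursion yields a one-super-step discrepancy of $\O(h^3)$.

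Finally I would accumulate the local $\O(h^3)$ discrepancies over the $M \sim 1/h$ super-steps using the Lipschitz stability of the super-step map (discrete Gr\"onwall), producing the global $\O(h^2)$, and separately account for the start-up phase, whose Euler half-step carries a one-time local error $\O(h^2)$ that propagates stably and is therefore also $\O(h^2)$. The hard part will be the finite-difference stencils: they are evaluated on the stored velocities of the \emph{numerical} trajectory rather than the exact one, so the $\O(h^2)$ nodal deviation divided by the stencil spacing reappears inside $\bm{\delta}_j$ at the very $\O(h^2)$ order we are trying to bound. Closing this requires a self-consistent Gr\"onwall estimate that carries the accumulated Taylor-versus-non-Taylor deviation through the derivative approximations, together with careful use of the method's bounded region of absolute stability to keep the $s$ internal stages from amplifying $\bm{\delta}_j$; verifying the stage-order bound in (c) for the stabilized Chebyshev and Gegenbauer recurrences is the other delicate ingredient.
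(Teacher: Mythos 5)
Your proposal is correct and shares the paper's two-part skeleton (cite the classical SRK order theory for the non-Taylor scheme; then treat the Taylor variant as a perturbation of it), but your second part is a genuinely different and more complete argument than the one the paper gives. The paper's proof of the $\mathcal{O}(h^2)$ claim is purely local: it bounds the virtual-NFE error by the finite-difference accuracy of the derivative stencils (quoted there as $\mathcal{O}(h)$), multiplies by the factor $h$ in front of each virtual NFE, and stops --- it never compares the Taylor-scheme stages against the non-Taylor stages, never accumulates the per-step discrepancies over the $M\sim 1/h$ super-steps, and does not discuss the start-up phase or the fact that the stencils are built from velocities on the numerical trajectory. Your decomposition instead locates the dominant error in the stage-value mismatch (your term (c), $\|\bm{Y}_{j-1}-\bm{x}(t_{j-1})\|=\mathcal{O}(h^2)$ from the low stage order of the Chebyshev/Gegenbauer recurrences), which also explains cleanly why the Taylor order $n\in\{2,3\}$ cannot raise the order beyond two; you then get a per-super-step discrepancy of $\mathcal{O}(h^3)$ and recover the global $\mathcal{O}(h^2)$ by a discrete Gr\"onwall argument. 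This accumulation step is not cosmetic: a per-sub-step injection of size $\mathcal{O}(h^2)$, which is all the paper's local reasoning establishes, would only yield $\mathcal{O}(h)$ globally, so your route is the one that actually substantiates the theorem as a statement about the computed sequences. The two difficulties you flag at the end --- the self-referential appearance of the accumulated deviation inside the finite-difference stencils, and verifying the stage-order bound for RKG2/ROCK4 --- are real obligations that the paper's proof silently assumes away; discharging them would make your argument strictly stronger than the published one.
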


\begin{table}[!t]
\centering
\parbox{\textwidth}{
    \caption{Ablation study comparison to the fourth-order Runge--Kutta method and vanilla fourth-order stabilized Runge--Kutta, on CIFAR-10~\cite{krizhevsky2009learning} dataset. The NFEs in parentheses indicate the NFEs used for the RK4 method, since they must be multiples of $4$ for RK4.
    }
    \label{tab:ablation_rk_srk}
}
\resizebox{0.6\textwidth}{!}{%
    \begin{tabular}{lccccc}
        \toprule
        \makecell{Method \textbackslash\ NFE} & 10(12) & 20 & 30(32) & 40 & 50(52)\\
        \midrule
        RK4 & 121.411 & 33.662 &  4.504 & 5.059 & 5.091\\
        SRK4 & 443.812 & 40.828 & 6.225 & 6.324 & 6.167\\
        STORK-4 (Ours) & \textbf{5.497} & \textbf{4.167} & \textbf{3.888} & \textbf{3.809} & \textbf{3.789}\\
        \bottomrule
    \end{tabular}%
}
\end{table}


\section{Experiments}
\label{sec:experiments}
We compare the image and video generation quality of STORK against the most state-of-the-art (SOTA) training-free sampling methods, DPM-Solver++ \citep{DPM-solver++} and UniPC~\citep{UniPC}.
For image generation, we consider unconditional generation and conditional generation using both pixel and latent-space noise-predicting models~\citep{DDIM, pixart_a} and latent-space flow-matching models~\citep{Sana, StableDiffusion3, FLUX}. To demonstrate the broad applicability of STORK, we benchmark on video generation using the Hunyuan Video~\citep{kong2024hunyuanvideo} model. 

Following from Section~\ref{sec:stork}, unless explicitly mentioned, all the noise-based generations use STORK-4 in Algorithm~\cref{alg:SRK_sampling_4} with second-order Taylor expansion, and flow-matching-based generations use STORK-4 with first-order Taylor expansion. Experiment details, additional metrics, and supplementary visualizations are provided in Appendices~\ref {app:experiments} and ~\ref{app:visualizations}.

As shown in Figure~\ref{fig:diffusion_fids},~\ref{fig:flow_fids}, and Table~\ref{tab:video_partial}, STORK consistently outperforms SOTA methods \textbf{across tasks, model scales, and generation scales}. We believe empirical evidence strongly favors STORK as a better training-free sampling method. 




\subsection{Image generation: Unconditional and conditional noise-predicting models}
\paragraph{Experimental setup.}
We conduct both unconditional and conditional generations using both pixel-space and latent-space models. For unconditional generation, we experiment on CIFAR-10 (32 $\times$ 32)~\citep{krizhevsky2009learning} and LSUN-Bedroom (256 $\times$ 256)~\citep{yu2016lsunconstructionlargescaleimage} using the pre-trained DDIM~\citep{DDPM} checkpoints provided by PNDM~\citep{PNDM}. For each examined method, we generate 50K images to calculate Fréchet Inception Distance (FID)~\citep{heusel2018ganstrainedtimescaleupdate} with respect to the Inception activation statistics provided by the PNDM codebase. For conditional generation, we use the Pixart-$\alpha$ on the MJHQ-30K~\citep{MJHQ} dataset with 30,000 samples to calculate FID, with classifier-free guidance (CFG) scale 4.5. By empirical evidence, $s=14$ is used on the CIFAR-10 dataset, and $s=24$ is used for LSUN-Bedroom and MJHQ-30K. To avoid the singularity at $t=0$, we denoise up to a small value $\epsilon>0$.

\begin{figure}[!t]
    
    \centering
    \begin{subfigure}{0.32\linewidth}
        \captionsetup{justification=centering, margin=0.5cm}
        \includegraphics[width=\textwidth]{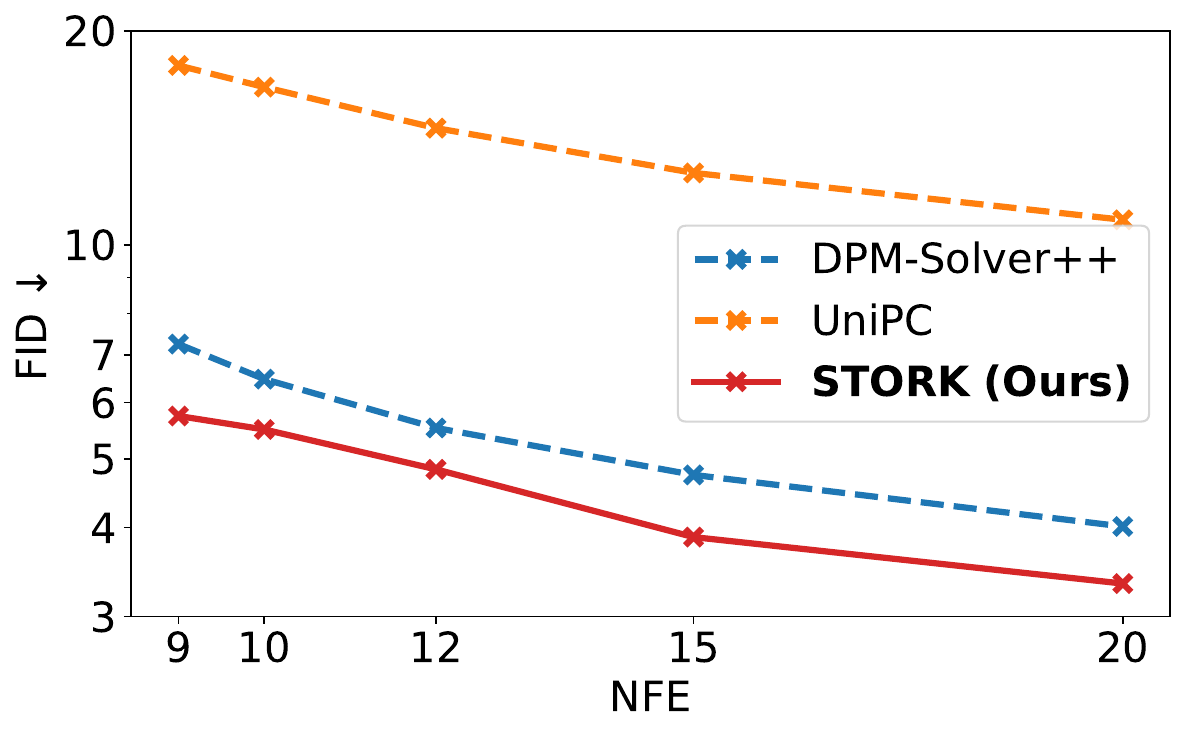}
        \caption{CIFAR-10 (32px, DDIM, Uncond)}
        \label{subfig:cifar}
    \end{subfigure}
    \begin{subfigure}{0.32\linewidth}
        \captionsetup{justification=centering, margin=0.5cm}
        \includegraphics[width=\textwidth]{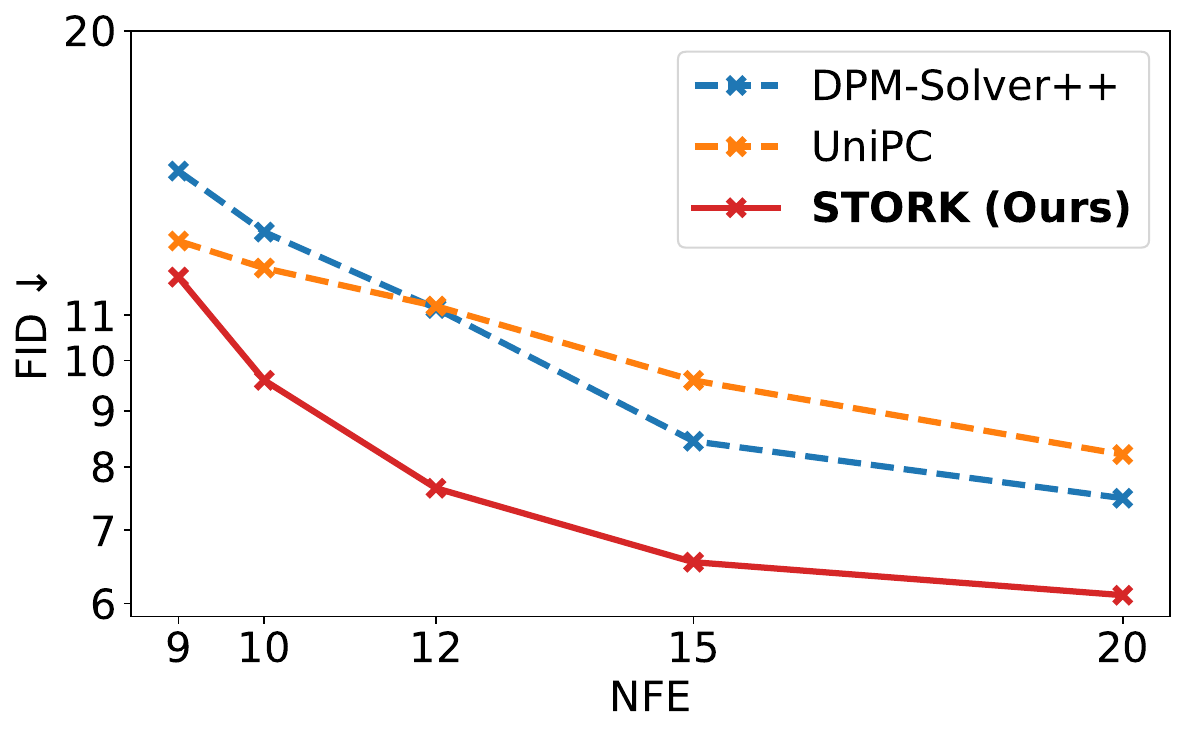}
        \caption{LSUN-Bedroom (256px, DDIM, Uncond)}
        \label{subfig:bedroom}
    \end{subfigure}
    \begin{subfigure}{0.32\linewidth}
        \captionsetup{justification=centering, margin=0.5cm}
        \includegraphics[width=\textwidth]{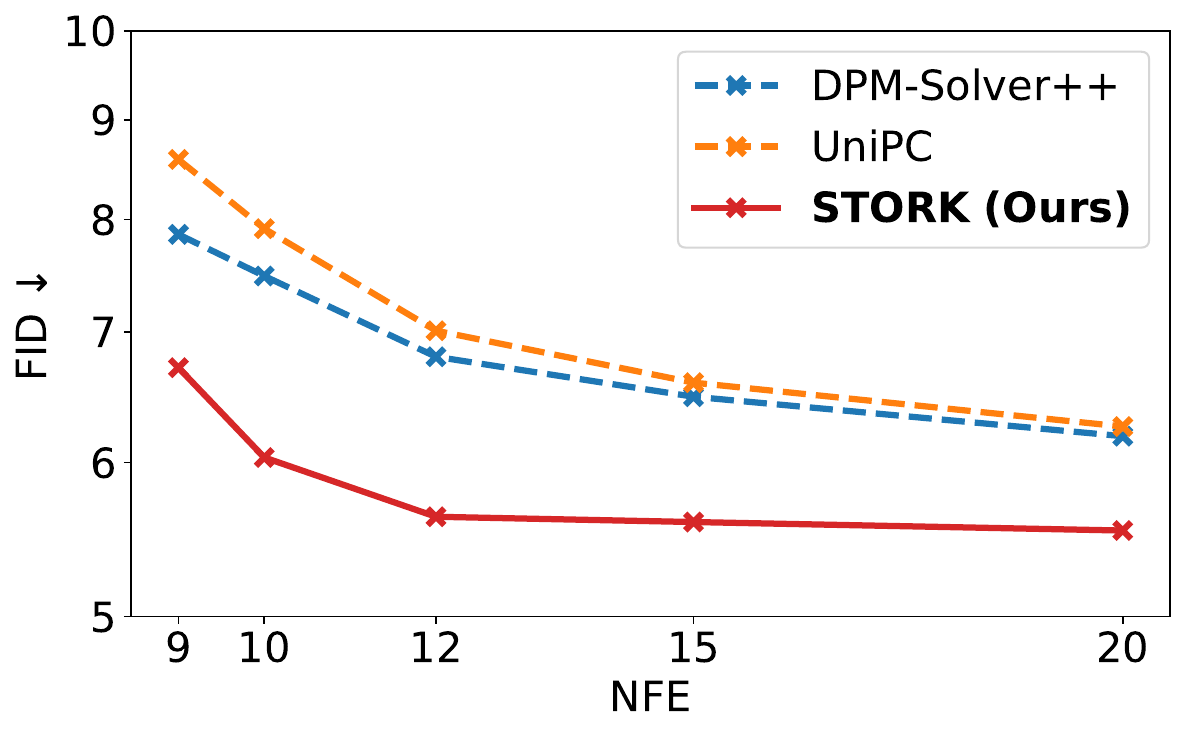}
        \caption{MJHQ-30K (512px, Pixart-$\alpha$, CFG=4.5)}
        \label{subfig:pixart}
    \end{subfigure}
    \caption{Sample quality measured by FID~$\downarrow$ for unconditional (Uncond) and classifier-free-guided (CFG) generation with noise-prediction models. As shown, STORK constantly outperforms other methods across datasets and image scales.}
    \label{fig:diffusion_fids}
\end{figure}

\paragraph{Results.}
As shown in Figure~\ref{subfig:cifar},~\ref{subfig:bedroom}, and~\ref{subfig:pixart}, the FID curve for STORK consistently remains at the bottom until all methods converge to a similar value. 
These results show the robustness of STORK across datasets and NFEs.

\begin{figure}[!t]
    \centering
    \begin{subfigure}{0.41\linewidth}
        \captionsetup{justification=centering, margin=0cm}
        \includegraphics[width=\textwidth]{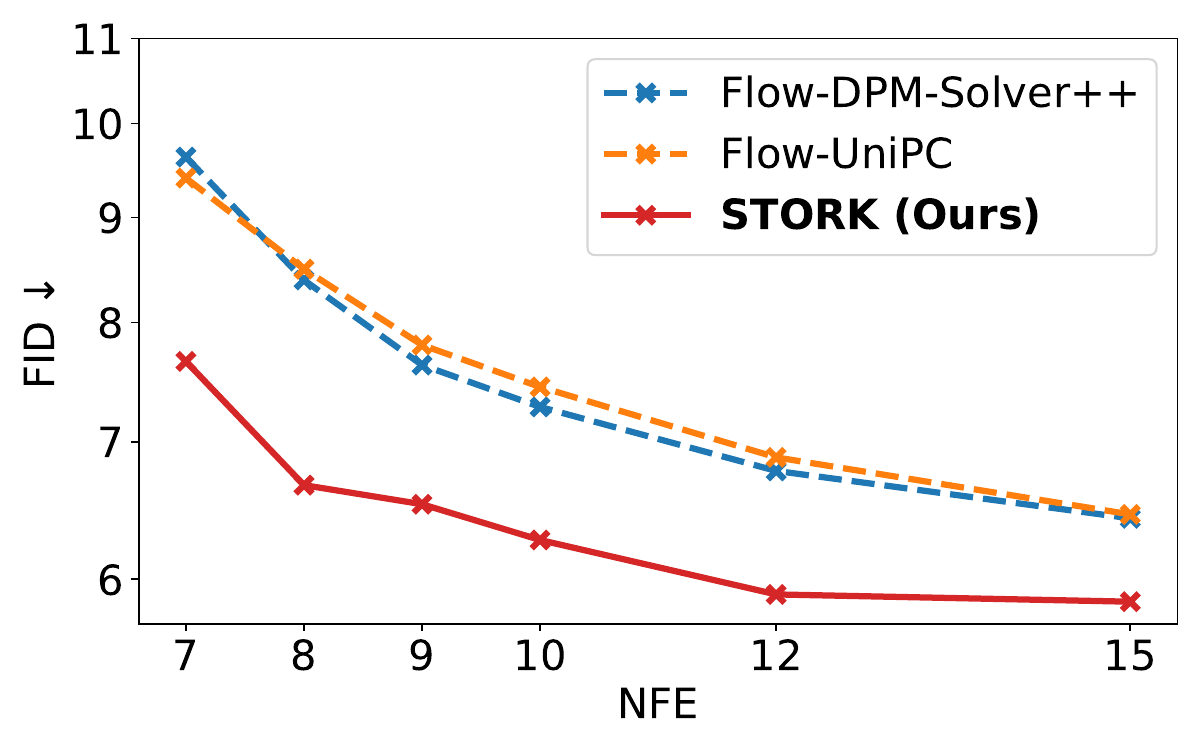}
        \caption{MJHQ-30K (512px, SANA-0.6B, CFG=4.5)}
        \label{subfig:sana_0.6_4.5}
    \end{subfigure}
    \begin{subfigure}{0.4\linewidth}
        \captionsetup{justification=centering, margin=0cm}
        \includegraphics[width=\textwidth]{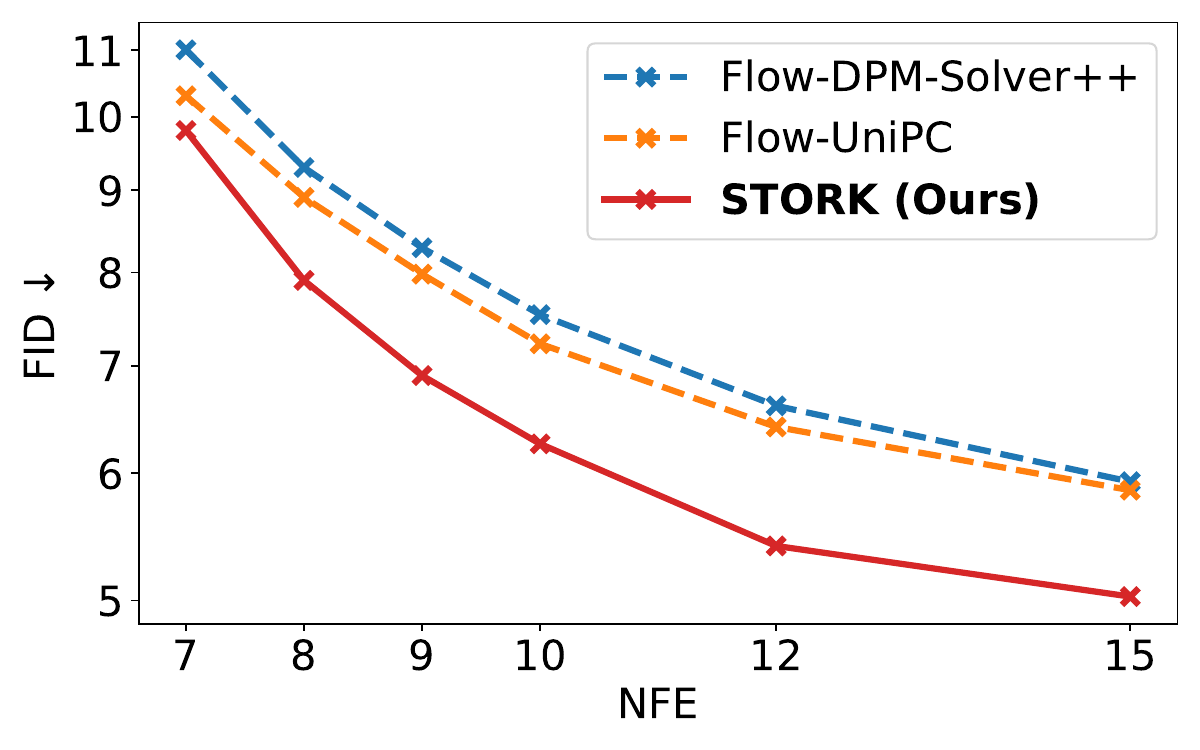}
        \caption{MJHQ-30K (1024px, SANA-1.6B, CFG=4.5)}
        \label{subfig:sana_1.6_4.5}
    \end{subfigure}
    \begin{subfigure}{0.40\linewidth}
        \captionsetup{justification=centering, margin=0cm}
        \includegraphics[width=\textwidth]{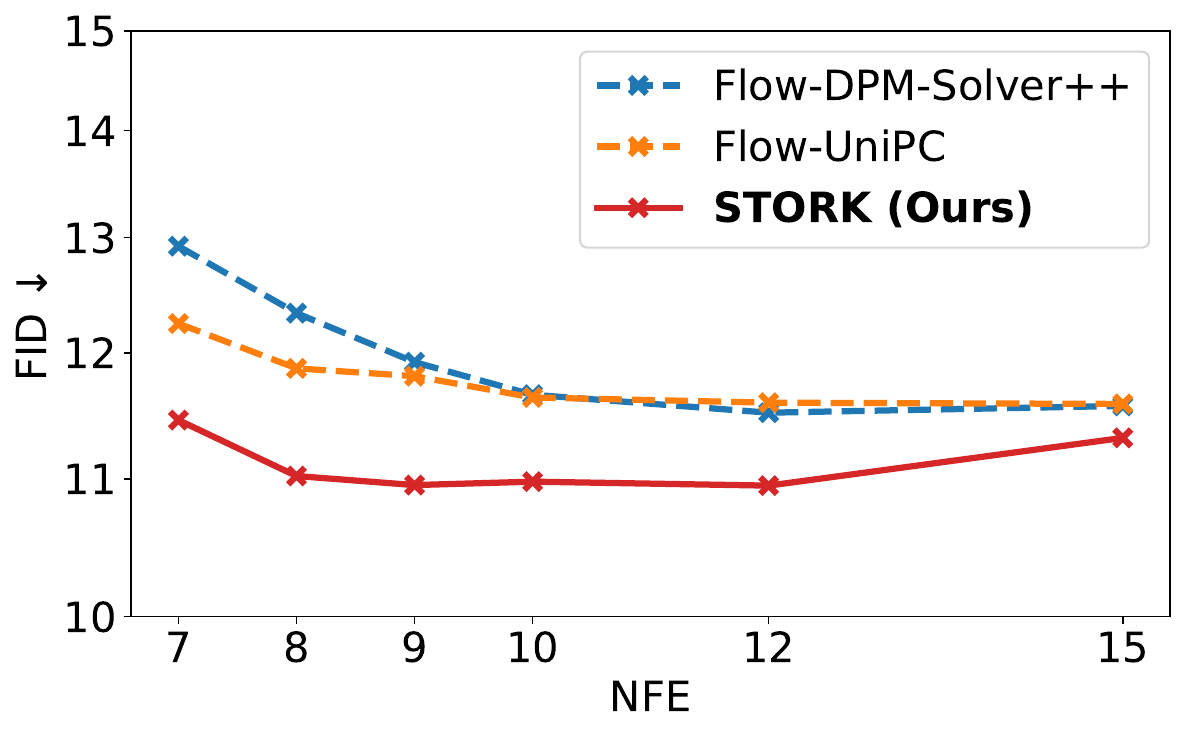}
        \caption{MJHQ-30K (512px, FLUX.1-dev, CFG=3.5)}
        \label{subfig:flux_3.5}
    \end{subfigure}
    \begin{subfigure}{0.4\linewidth}
        \captionsetup{justification=centering, margin=0cm}
        \includegraphics[width=\textwidth]{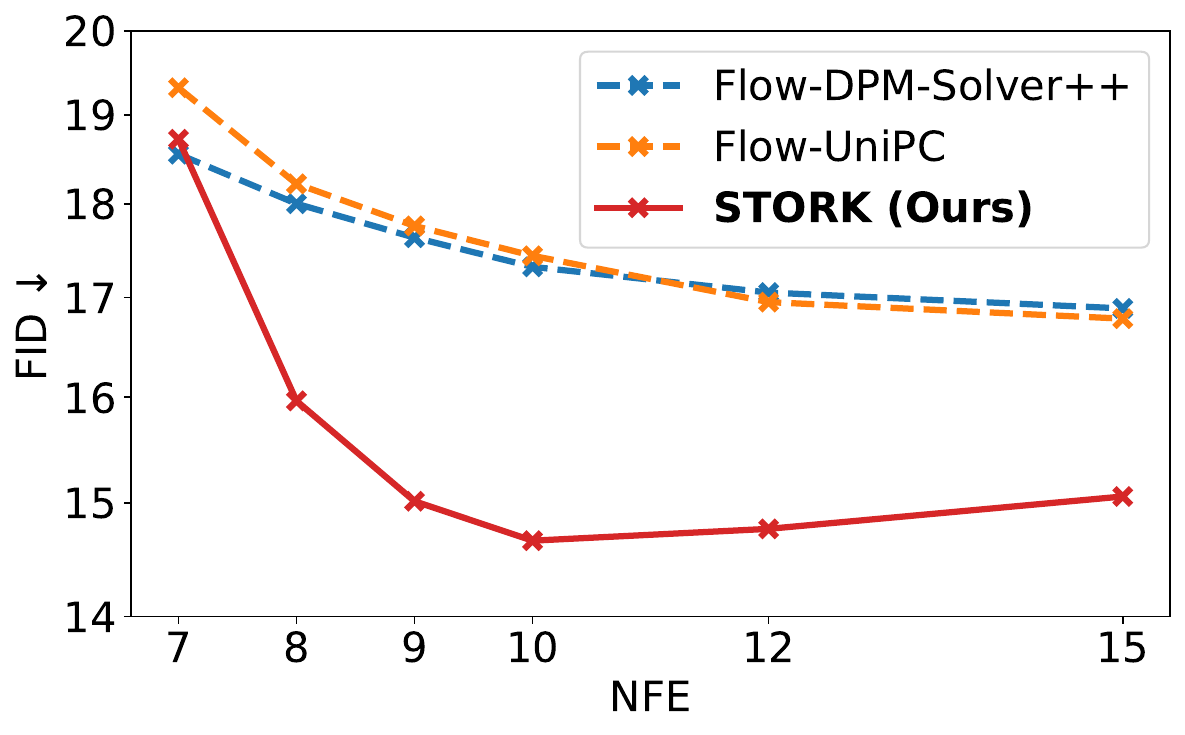}
        \caption{MS-COCO (512px, SD-3.5-Large, CFG=3.5)}
        \label{subfig:sd_3.5}
    \end{subfigure}
    \caption{Sample quality measured by FID~$\downarrow$ for conditional generation with latent space flow-matching models. As shown, STORK constantly outperforms other methods across various text-to-image flow-matching based generative models by a large margin.}
    \label{fig:flow_fids}
\end{figure}

\subsection{Image generation: Conditional Flow-Matching Models}

\paragraph{Experimental setup.} We benchmark STORK on classifier-free text-to-image generation using SANA~\citep{Sana}, FLUX.1-dev~\citep{FLUX}, and Stable-Diffusion-3.5-Large (SD-3.5-L)~\citep{StableDiffusion3} as our latent-space flow-matching models. 
To demonstrate the scalability of STORK in terms of model size and generation scale, we use SANA-0.6B to benchmark at 512px resolution and SANA-1.6B to benchmark at 1024px resolution, using prompts from the MJHQ-30K~\cite{MJHQ}
datasets. To further demonstrate STORK's synergy with more popular and larger-scale models, we also benchmark generation at 512px using FLUX.1-dev and SD-3.5-L. Moreover, we prompt SD-3.5-L on the validation split of MS-COCO~\cite{lin2015microsoftcococommonobjects} dataset to validate the robustness of STORK in terms of data distribution. 
All reported FIDs are calculated using 30k samples, with reference images resized to the corresponding generation resolution for Inception statistics calculation. Following ~\citep{StableDiffusion3}, we randomly sampled 30k validation image-prompt pairs from MS-COCO as a reference for FID calculations. Finally, the CFG scales for each benchmark are set to the models' defaults, which are 3.5 for SD-3.5-L and FLUX.1-dev and 4.5 for SANA.

\paragraph{Results.} 
As shown in Figure~\ref{subfig:sana_0.6_4.5} to Figure~\ref{subfig:sd_3.5}, STORK demonstrates superior FIDs when using 7-15 NFEs. The non-trivial performance gap from STORK to other sampling methods adds quantitative evidence for the superiority of STORK, besides the qualitative samples from Figure~\ref{fig:teaser_figure}.

\begin{table}[!t]
\centering
\parbox{\textwidth}{
    \caption{\img{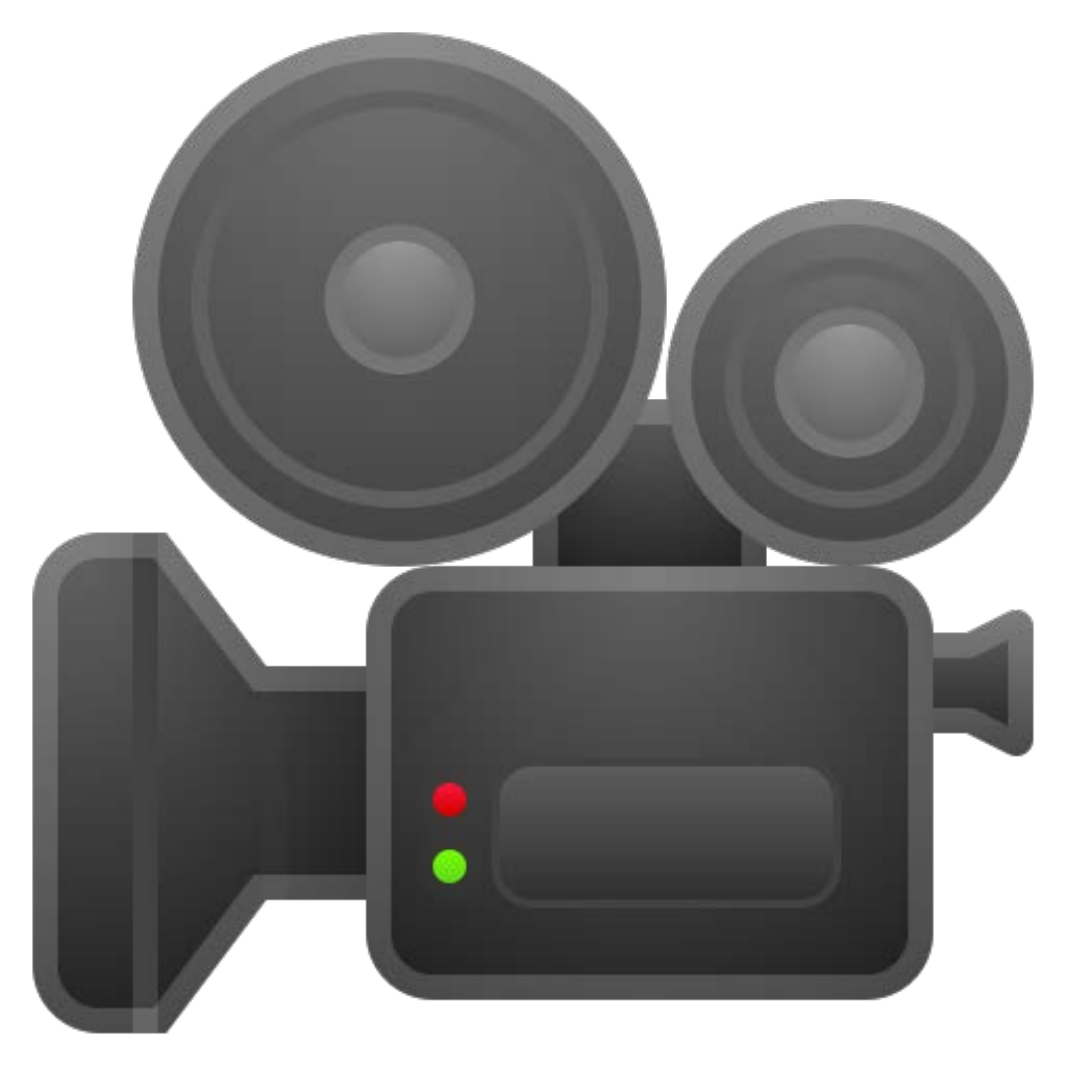} EvalCrafter $\uparrow$ evaluation of Hunyuan video diffusion model, with different sampling methods. Four sub-metrics and the final score are recorded. As shown in the table, STORK method consistently outperforms the Flow-DPM-Solver++ and the Flow-UniPC methods in the final score.}
    \label{tab:video_partial}
}
\resizebox{0.9\textwidth}{!}{%
    \begin{tabular}{clccccc}
        \toprule
        \multicolumn{2}{c}{Method \textbackslash\ NFE} & 4 & 5 & 6 & 7 & 8\\
        \midrule
        \multirow{3}{*}{Visual Quality}
        & Flow-DPM-Solver++ & 45.02 & 46.42 & 48.03 & 49.51 & 50.74\\
        & Flow-UniPC & 45.51 & 47.46 & 49.23 & 50.72 & 51.88\\
        & STORK (Ours) & \underline{50.00} & \underline{51.91} & \underline{52.11} & \underline{52.72} & \underline{52.68}\\
        \midrule
        \multirow{3}{*}{Text-Video Alignment} 
        & Flow-DPM-Solver++ & 40.90 & 46.10 & \underline{47.54} & 48.78 & \underline{47.83}\\
        & Flow-UniPC & 41.38 & \underline{46.35} & 47.43 & 48.60 & 46.59\\
        & STORK (Ours) & \underline{43.18} & 46.11 & 46.64 & \underline{50.09} & 46.92\\
        \midrule
        \multirow{3}{*}{Motion Quality} 
        & Flow-DPM-Solver++ & \underline{55.35} & 54.49 & 54.26 & 53.74 & 53.89\\
        & Flow-UniPC & 55.24 & \underline{54.56} & 54.06 & 54.12 & 53.37\\
        & STORK (Ours) & 55.02 & 54.50 & \underline{54.26} & \underline{54.18} & \underline{54.06}\\
        \midrule
        \multirow{3}{*}{Temporal Consistency}
        & Flow-DPM-Solver++ & \underline{64.17} & \underline{63.80} & \underline{63.48} & 63.25 & \underline{63.07}\\
        & Flow-UniPC & 63.94 & 63.43 & 63.20 & 62.92 & 62.76\\
        & STORK (Ours) & 61.41 & 61.67 & 62.08 & \underline{62.14} & 62.26\\
        \midrule
        \multirow{3}{*}{Final Score}
        & Flow-DPM-Solver++ & 205 & 211 & 213 & 215 & 216\\
        & Flow-UniPC & 206 & 212 & 214 & 216 & 215\\
        & STORK (Ours) & \textbf{210} & \textbf{214} & \textbf{215} & \textbf{219} & \textbf{216}\\
        \bottomrule
    \end{tabular}%
}

\end{table}

\subsection{Video generation: Conditional latent flow matching}

\paragraph{Experimental setup.} We finally test our STORK method on the text-to-video generation task. To the best of our knowledge, \textbf{we are the first training-free fast sampling work with experiments on video generation}. We use the Hunyuan video diffusion model~\citep{kong2024hunyuanvideo}, with each frame generated at 512$\times$320 resolution. For each video, 129 frames are generated at  15 frames per second (fps). Classifier-free guidance scale is set to Hunyuan's default value of 6. We benchmark on the EvalCrafter~\citep{liu2024evalcrafter} evaluation suite, which consists of 700 prompts. Four different sub-metrics are calculated and aggregated to a final score. 

\paragraph{Results.} Table~\ref{tab:video_partial} showcases the text-to-video generation results. As shown in the table, STORK constantly outperforms the flow-DPM-Solver++~\citep{DPM-solver++,Sana} and the flow-UniPC~\citep{UniPC} methods in terms of the final score. For the sub-metrics, STORK constantly outperforms the other methods in terms of the visual quality, especially when NFEs are extremely small, and achieves comparable results in other metrics. This better video generation further support the general applicability of STORK to various generation tasks.

\newpage



\bibliography{iclr2026_conference}
\bibliographystyle{iclr2026_conference}

\newpage

\appendix
\section*{Appendix}
\section{Large Language Models (LLMs) Usage}
We did not use LLMs for the writing of this paper, and LLMs did not help to the extent that they could be regarded as a contributor during the research process.

\section{Brief illustration of stiffness}
\label{app:stiffness}
In this section, we briefly illustrate the classical notion of \textit{stiffness} in numerical analysis. Consider the ODE,
\begin{align}
\frac{dx}{dt}=-20x,\quad x(0)=1,\quad t\in[0, 1].
\label{eqn:stiff_ODE}
\end{align}
It can be easily shown that $x(t)=e^{-20t}$. However, when the ODE above is solved numerically using various methods, some methods exhibit spurious oscillations, as illustrated in Figure~\ref{fig:app_stiff_demo}. While the numerical solution converges to the exact solution in the limit as the step size $h\rightarrow 0$, significant errors can arise for moderate values of $h$, particularly when the analytical solution trajectory is ``not straight enough.''

\begin{figure}[h]
\centering
\includegraphics[width=0.8\linewidth]{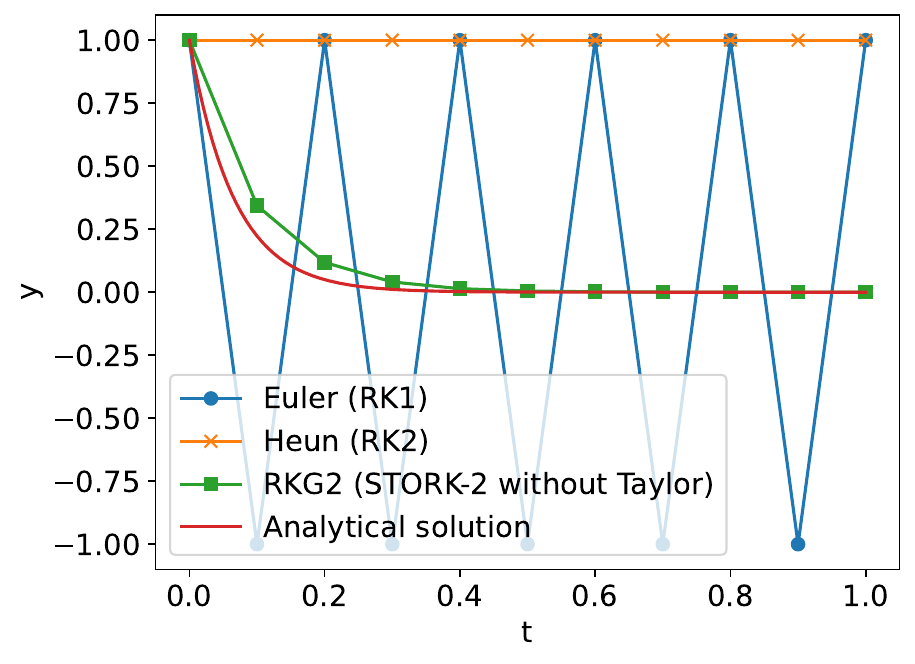}
\caption{Example of a stiff ODE. The plot shows the analytical and numerical solutions of the ODE~\eqref{eqn:stiff_ODE}, using Euler, Heun, and the second-order Runge--Kutta--Gegenbauer (RKG2) method with 4 sub-steps. Euler's method and Heun's method are two of the most popular flow matching sampling methods, implemented in diffuser~\citep{Diffuser}. The RKG2 method is our STORK-2 method with exact sub-steps. Ten timesteps are used for each of the numerical solutions. As shown in the plot, Euler and Heun's method has significant errors while RKG2 is close to the exact solution.}
\label{fig:app_stiff_demo}
\end{figure}

The \textit{region of absolute stability} characterizes the notion of stability for numerical methods. Consider the test problem
\begin{align}
\frac{dx}{dt}=\lambda x,
\label{eqn:test_problem}
\end{align}
where $\lambda\in\C$ can be an arbitrary complex number. The region of absolute stability of a numerical method is defined as the set of $h\lambda$ values such that the numerical method with step size $h$ applied to \eqref{eqn:test_problem} yields a solution that remains bounded. Since the theoretical exact solution to \eqref{eqn:test_problem} remains bounded for all $\lambda$ in the left half of the complex plane, a numerical method is considered more stable if its region of absolute stability encompasses a larger portion of that half-plane.


As an example, when applying Euler's method to the test problem~\eqref{eqn:test_problem}, the numerical method becomes
$$x(t+h)=x(t)+h\lambda x(t)=(1+h\lambda)x(t).$$
Therefore, the stability polynomial is $R(z)=1+z$; the region of absolute stability is therefore $|1+z|<1$, which corresponds to a circle centered around $1$ with radius $1$ in the complex plane. 

The region of absolute stability for Heun's method, fourth-order Runge--Kutta (RK4) method, and the second-order Runge--Kutta--Gegenbauer (RKG2) with $4$ sub-steps are plotted in Figure~\ref{fig:region_of_abs}. The RKG2 method, for which the STORK-2 method is derived based on, has a region of absolute stability that contains a much larger portion of the left half complex plane than the Euler, Heun, and RK4 methods. Analytical derivation in Appendix~\ref{app:srk_derivation} further confirms that an $s$-stage RKG2 method has a large region of absolute stability that grows with order $\O(s^2)$.

\begin{figure}[t]
    \centering
    \includegraphics[width=0.8\linewidth]{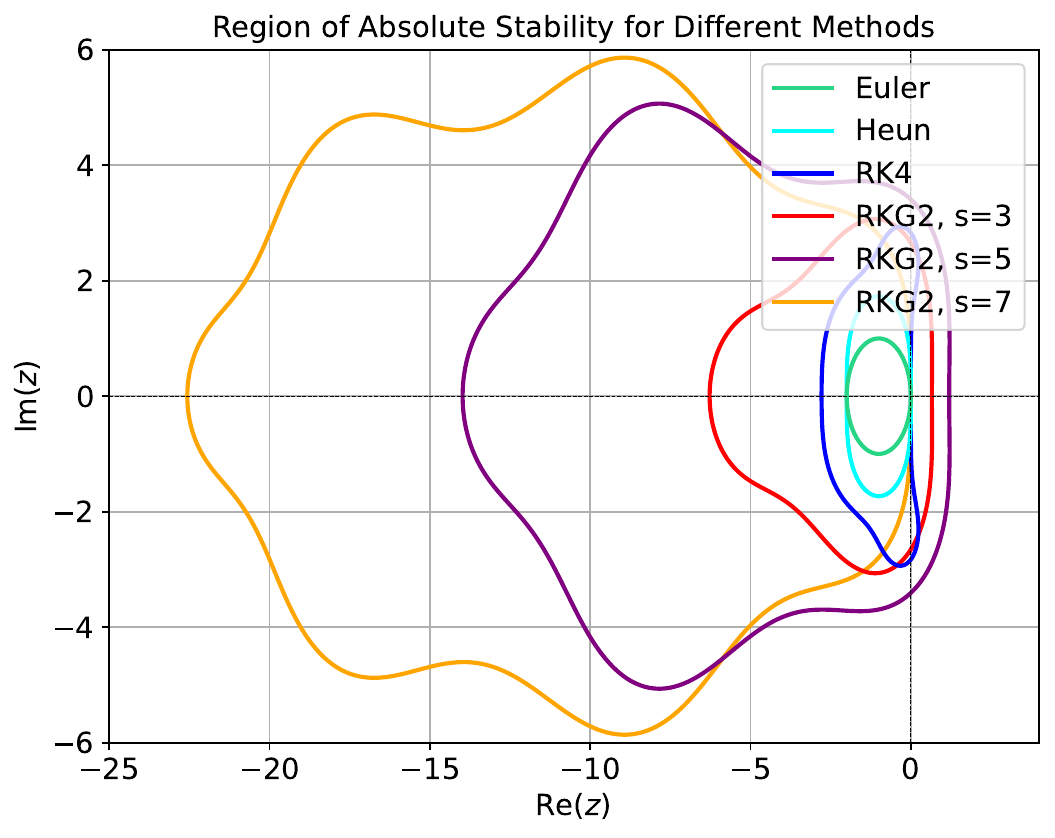}
    \caption{Region of absolute stability for different numerical methods. Euler and Heun's methods are widely used in the diffusion model sampling, especially in the flow matching setting. The RK4 method is the most widely used explicit Runge--Kutta method in numerical analysis. The RKG2 method is the basis for our STORK-2 method; in this plot, 3, 5, and 7 sub-steps are used. As shown in the plot, the RKG2 method has a much larger region of absolute stability than the other methods, so that it is much more stable, therefore more suitable for solving stiff problems. Moreover, the region of absolute stability becomes significantly larger as $s$ increases.
    }
    \label{fig:region_of_abs}
\end{figure}

\section{Stabilized Runge--Kutta methods}
\label{app:srk_derivation}

We present the derivation of the second-order Runge--Kutta--Gegenbauer (RKG2) method~\citep{SKARAS2021109879} and the fourth order orthogonal Runge--Kutta--Chebyshev (ROCK4) method~\citep{ROCK4}. The derivations closely follow the original derivations in~\cite{SKARAS2021109879} and~\cite{ROCK4}. Notice RKG2 corresponds to the SRK2 method and ROCK4 corresponds to the SRK4 method in the main content.

Consider the ODE
$$\frac{d\bm{x}}{dt}=M\bm{x}, \bm{x}(0)=\bm{x}_0;\quad M\in\R^{n\times n},\bm{x}\in\R^n,n\in\N_+.$$
The analytical solution for the above linear equation is defined by
\begin{align}
\bm{x}(t)=e^{tM}\bm{x}(0)=\sum_{m=0}^\infty\frac{(tM)^m}{m!}\bm{x}_0.
\label{eqn:exact_solution}
\end{align}
Since we would like to derive a single-step method, the method takes the form of
\begin{align}
\bm{x}(t+h)=R(hM)\bm{x}(t),
\label{eqn: stability_polynomial}
\end{align}
where $h$ is the timestep size, and $R(hM)$ is called the \textit{stability polynomial}. In order to enlarge the size of the region of absolute stability, the shifted Gegenbauer polynomial with parameter $\alpha=\frac{3}{2}$ is chosen to be the stability polynomial; this leads to the Runge--Kutta--Gegenbauer (RKG) method~\citep{SKARAS2021109879, OSULLIVAN2019209}. More precisely, the stability polynomial of an $s$-stage RKG method $R_s(hM)$ satisfies
$$R_s(z)=a_s+b_sC^{(3/2)}_s(1+w_1z),$$
where $a_s, b_s, w_1$ are parameters to be chosen, and $C_s^{(3/2)}(x)$ is the $s$-degree Gegenbauer polynomial with parameter $\frac{3}{2}$~\citep{SKARAS2021109879}. In order for the method to be convergent, we need to equate~\eqref{eqn:exact_solution} and~\eqref{eqn: stability_polynomial} to the highest order possible. Since there are three unknowns, the best possible order is to equate the first three terms and get second-order $\O(h^2)$ convergence. Therefore, the three equations that are needed to be satisfied are
$$R_s(0)=R_s'(0)=R_s''(0)=1.$$
Equating and solving the differential equation, we get the result for RKG2 method:
$$w_1=\frac{6}{(s+4)(s-1)}, \quad b_j=\frac{4(j-1)(j+4)}{3j(j+1)(j+2)(j+3)},\quad a_j=1-\frac{(j+1)(j+2)}{2}b_j.$$

To ensure numerical stability in Appendix~\ref{app:stiffness}, one only needs to ensure that $1+w_1hM$ has eigenvalue within $[-1, 1]$, so that by the boundedness of the Gegenbauer polynomial we get numerical stability for free~\citep{SKARAS2021109879}. This requires that $h=h_{\text{explicit}}\frac{(s+4)(s-1)}{6}\sim\O(s^2)$, where $h_{\text{explicit}}$ is the maximum timesteps that ensures Euler's method to be numerically stable.

Finally, we would like to turn the RKG2 method into the Runge--Kutta form as shown in~\eqref{eqn:rkg2}, so that it can be implemented easily in practice. Using the well-known Gegenbauer polynomial inductive relationship
$$C_s^{(\alpha)}(z)=\frac{1}{s}\Big[2z(s+\alpha-1)C_{s-1}^{(\alpha)}(z)-(s+2\alpha-2)C_{s-2}^{(\alpha)}(z)\Big],$$
it can be derived that the stability polynomial satisfies the relationship
\begin{align*}
a_j+b_jC^{(3/2)}_j(1+w_1z)&=\mu_j(a_{j-1}+b_{j-1}C^{(3/2)}_{j-1}(1+w_1z))+\nu_j(a_{j-2}+b_{j-2}C^{(3/2)}_{j-2}(1+w_1z))\\
&+\Tilde{\mu}_j(a_{j-1}+b_{j-1}C^{(3/2)}_{j-1}(1+w_1z))+(1-\mu_j-\nu_j)+\Tilde{\gamma}_j,
\end{align*}
where
$$\mu_j=\frac{2j+1}{j}\frac{b_j}{b_{j-1}},\quad \Tilde{\mu}_j=\mu_jw_1,\quad \nu_j=-\frac{j+1}{j}\frac{b_j}{b_{j-2}},\quad\Tilde{\gamma}_j=-\Tilde{\mu}_ja_{j-1}.$$
Therefore, the RKG2 method in the Runge--Kutta formulation is
\begin{align}
\begin{split}
\bm{Y}_0&=\bm{x}(t),\\
\bm{Y}_1&=\bm{Y}_0-\Tilde{\mu}_1hM\bm{Y}_0,\\
\bm{Y}_1&=\mu_j\bm{Y}_{j-1}+\nu_j\bm{Y}_{j-2}+(1-\mu_j-\nu_j)\bm{Y}_0-\Tilde{\mu}_jhM\bm{Y}_{j-1}-\Tilde{\gamma}_jhM\bm{Y}_0,\quad 2\le j\le s,\\
\bm{x}(t-h)&=\bm{Y}_s.
\end{split}
\label{eqn:RKG2_original}
\end{align}
Replace each $M\bm{Y}_j$ in~\eqref{eqn:RKG2_original} by $\bm{v}(\bm{Y}_j, t_j)$, we exactly recovers~\eqref{eqn:rkg2}. Then the derivation of our STORK-2 method follows as in Section~\ref{sec:stork}.

The fourth-order orthogonal Runge--Kutta--Chebyshev (ROCK4) method can be derived similarly, with an extra composition method term. The stability polynomial is designed to be
$$R_s(z)=w_4(z)P_{s-4}(z),$$
which satisfies $|R_s(z)|\le1$ for $z\in[-l_s, 0]$, with $l_s$ as large as possible. The first polynomial 
$$w_4(z)=\Big(1-\frac{1}{z_1}\Big)\Big(1-\frac{1}{\overline{z}_1}\Big)\Big(1-\frac{1}{z_2}\Big)\Big(1-\frac{1}{\overline{z}_2}\Big)$$
is a fourth order polynomial that serves as a composition method~\citep{NATextbook} to boost the entire method to fourth order, and the second polynomial $P_{s-4}(z)$ is an orthogonal polynomial that is orthogonal with respect to the weight function $\frac{w_4(z)^2}{\sqrt{1-z^2}}$ and normalized such that $P_{s-4}(0)=1$. 

Detailed derivation of the polynomial $P_{s-4}(z)$ is similar to the RKG2 method and can be found in~\cite{ROCK4}. For the expression of $w_4(z)$ that serves as a composition method, it evokes the concept of \textit{Butcher's Tableau}~\citep{NATextbook} in classical numerical analysis, and the coefficients are derived by solving a system of $8$ equations with $10$ unknown variables, so that two degrees of freedom exist and the original ROCK4 paper~\cite{ROCK4} chose the optimal parameters by experiments. The coefficients are precomputed and provided along with the supplementary material. Both our implementation and the coefficients in the STORK-4 method are heavily based on the implementation in~\cite{ROCK_implementation}.

\section{Comparison with similar methods}
\label{app:comparison}
In this section, we compare STORK with existing sampling methods and highlight their relationships. The following discussion motivates the effectiveness of STORK, and the ablation study in Table~\ref{tab:ablation_rk_srk} shows the necessity of modifications from stabilized Runge--Kutta to STORK methods.

\subsection{Comparison between STORK and classical Runge--Kutta methods}
When $s=1$, STORK-2 and STORK-4 both reduce to Euler's Method, which is a $1$-step Runge--Kutta (RK) method.
However, for $s>1$, STORK-2 and STORK-4 are not classical RK methods, and they exhibit some fundamental qualitative differences. The classical RK methods, such as Heun's method or RK4, are designed for higher-order accuracy as the number of sub-steps increases~\citep{NATextbook}, while the STORK methods (or SRK methods) are designed to address the stiffness in ODEs and PDEs~\citep{RKC, ROCK4} with regard to the step size. Regardless of the choice of sub-step number $s$, the STORK-$k$ method in Algorithm~\ref{alg:SRK_sampling_2} and \ref{alg:SRK_sampling_4} with a fixed $k$ always has the same order as shown in Theorem~\ref{thm:STORK_convergence}.

Another essential difference is that the number of sub-steps $s$ in STORK is an easily tunable hyperparameter. For stiffer problems, one can straightforwardly increase $s$. With $s$-step RK methods, the parameter $s$ is usually not considered a hyperparameter, since changing $s$ completely changes the method. By tuning $s$ and using many sub-steps between the super-step from $t_i$ to $t_{i-1}$, STORK can effectively address stiffness, while direct applications of traditional RK methods have not lead to significant improvements in the sampling quality \citep{PNDM, DPM-solver} due to the stepsize constraint that stiffness imposes on RK methods.

\subsection{Comparison between STORK and vanilla Stabilized Runge--Kutta methods}
Although the SRK method is a single-step method, \textit{i.e.}, one only needs the value of $\bm{x}(t)$ in order to get the value of $\bm{x}(t-h)$, the Taylor expansion makes STORK a multi-step method~\citep{NATextbook}. However, STORK is not a \emph{linear} multi-step method. A linear multi-step method, such as the 2-step Adams--Bashforth method
\[
\bm{x}(t-h)=\bm{x}(t)-\frac{h}{2}[f(\bm{x}(t), t)+f(\bm{x}(t+h), t+h)]
\]
reuses each previous function evaluation once per step. Other pseudo multi-step numerical methods, such as PNDM~\citep{PNDM}, also have the same mechanism. STORK, on the other hand, repeatedly uses previous function evaluations in the Taylor expansion approximation. Intuitively speaking, more information about the local velocity fields in between $t$ and $t-h$ is extracted from those evaluations, so that the solution would follow the learned velocity field more closely, leading to a better approximation. Experiments in Section~\ref{sec:experiments} demonstrate the effectiveness of this mechanism.

\begin{algorithm}[h]
\caption{STORK-2 (Second order STORK)}\label{alg:SRK_sampling_2}
\begin{algorithmic}
\STATE \textbf{Require:} initial value $\bm{x}_T$, timesteps $\{\Tilde{t}_i\}_{i=1}^M$, velocity network $\bm{v}(\cdot, \cdot)$, intermediate step number $s$, \text{Taylor order} $n=1,2,3$.
\STATE $h_{i}=\Tilde{t}_{i-1}-\Tilde{t}_{i}$,
\STATE $\bm{x}(\Tilde{t}_{i-1})=x(\Tilde{t}_i)+h_i\bm{v}(x(\Tilde{t}_i), \Tilde{t}_i)$
\FOR{$i=M-1$ to $M-n$}
\STATE $\bm{x}(\Tilde{t}_{i-1})=x(\Tilde{t}_i)+1.5h_i\bm{v}(x(\Tilde{t}_i), \Tilde{t}_i)-0.5h_{i-1}\bm{v}(x(\Tilde{t}_{i-1}), \Tilde{t}_{i-1})$
\ENDFOR
\FOR{$i=M-n-1$ to $0$}
    \STATE $\bm{Y}_0=\bm{x}(\Tilde{t}_i)$, $\bm{Y}_1=\bm{x}(\Tilde{t}_i)+h_{i}\mu_1\bm{v}(\bm{x}(\Tilde{t}_i), \Tilde{t}_i)$,
    \FOR{$j=2$ to $s$}
    \STATE $\bm{v}_{\text{approx}}(\bm{Y}_{j-1}, t_{j-1})=\text{TaylorExpansion}(n, Y_{j-1}, t_{j-1}, \bm{Y_0}, \Tilde{t}_i)$,
    \STATE $\bm{Y}_j=\mu_j\bm{Y}_{j-1}+\nu_j\bm{Y}_{j-2}+(1-\mu_j-\nu_j)\bm{Y}_{0}+\Tilde{\mu}_jh_i\bm{v}_{\text{approx}}(\bm{Y}_{j-1}, t_{j-1})+\Tilde{\gamma}_jh_i\bm{v}(\bm{Y}_0, \Tilde{t}_i)$,
    \ENDFOR
    \STATE $\bm{x}(\Tilde{t}_{i-1})=\bm{Y}_{s}$.
\ENDFOR
\RETURN $x(\Tilde{t}_0)$
\end{algorithmic}
\end{algorithm}

\begin{algorithm}[h]
\caption{STORK-4 (Fourth order STORK)}\label{alg:SRK_sampling_4}
\begin{algorithmic}
\STATE \textbf{Require:} initial value $\bm{x}_T$, timesteps $\{\Tilde{t}_i\}_{i=1}^M$, velocity network $\bm{v}(\cdot, \cdot)$, intermediate step number $s$, \text{Taylor order} $n=1,2,3$.
\STATE $h_{i}=\Tilde{t}_{i-1}-\Tilde{t}_{i}$,
\STATE $\bm{x}(\Tilde{t}_{i-1})=x(\Tilde{t}_i)+h_i\bm{v}(x(\Tilde{t}_i), \Tilde{t}_i)$
\FOR{$i=M-1$ to $M-n$}
\STATE $\bm{x}(\Tilde{t}_{i-1})=x(\Tilde{t}_i)+1.5h_i\bm{v}(x(\Tilde{t}_i), \Tilde{t}_i)-0.5h_{i-1}\bm{v}(x(\Tilde{t}_{i-1}), \Tilde{t}_{i-1})$
\ENDFOR
\FOR{$i=M-n-1$ to $0$}
    \STATE $\bm{Y}_0=\bm{x}(\Tilde{t_i})$, $\bm{Y}_1=\bm{x}(\Tilde{t_i})+h_{i}\mu_1\bm{v}(\bm{x}(\Tilde{t_i}), \Tilde{t}_i)$,
    \FOR{$j=2$ to $s$}
    \STATE $\bm{v}_{\text{approx}}(\bm{Y}_{j-1}, t_{j-1})=\text{TaylorExpansion}(n, Y_{j-1}, t_{j-1}, x(\Tilde{t}_i), \Tilde{t}_i)$,
    \IF{$j\le s-4$}
    \STATE $\bm{Y}_j=h_i\mu_j\bm{v}_{\text{approx}}(\bm{Y}_{j-1}, t_{j-1})-\nu_j\bm{Y}_{j-1}-\kappa_j\bm{Y}_{j-2}$,
    \ELSE
    \STATE $\bm{Y}_{j}=\bm{Y}_{s-4}+h_i\mu_{j}\bm{v}_{\text{approx}}(\bm{Y}_{j-1}, t_{j-1})$,
    \ENDIF
    \ENDFOR
    \STATE $\bm{x}(\Tilde{t}_{i-1})=\bm{Y}_{s}$.
\ENDFOR
\end{algorithmic}
\end{algorithm}

\section{STORK-2 and STORK-4 algorithms}
\label{app:alg}
We now present the STORK-2 and STORK-4 algorithms described in Section~\ref{sec:stork}. We abbreviate the Taylor expansion approximation as
\begin{align*}
&\text{TaylorExpansion}(\text{order}, \bm{Y}_j(t_j), t_j, \bm{Y}_0, t_0)\\
&
\qquad\qquad=
\begin{cases}
\bm{v}(\bm{Y}_0, t_0)+(t_j-t_0)\bm{v}'_{\text{approx}}(\bm{Y}_0, t_0),&\text{order}=1.\\
\bm{v}(\bm{Y}_0, t_0)+(t_j-t_0)\bm{v}'_{\text{approx}}(\bm{Y}_0, t_0)+\frac{(t_j-t_0)^2}{2}\bm{v}''_{\text{approx}}(\bm{Y}_0, t_0),&\text{order}=2.\\
\bm{v}(\bm{Y}_0, t_0)+(t_j-t_0)\bm{v}'_{\text{approx}}(\bm{Y}_0, t_0)+\frac{(t_j-t_0)^2}{2}\bm{v}''_{\text{approx}}(\bm{Y}_0, t_0)\\
\qquad\qquad
+\frac{(t_j-t_0)^3}{6}\bm{v}'''_{\text{approx}}(\bm{Y}_0, t_0), & \text{order}=3.
\end{cases}    
\end{align*}
One Euler's step is used at the beginning. If Taylor expansion order is 1, then one more step of 2-step Adams-Bashforth method is used, and if Taylor expansion order is 2, two more step of 2-step Adams-Bashforth method is used. The later steps follow the derivation of SRK2 and SRK4 methods in Section~\ref{sec:stork}. Details of the algorithms are shown in Algorithm~\ref{alg:SRK_sampling_2} and Algorithm~\ref{alg:SRK_sampling_4} for flow-based models. Similar algorithms can be written down using exactly the same procedure as in Section~\ref{sec:stork} on noise-based models, by Taylor expansion on the noise.

\section{Convergence proof for Theorem~\ref{thm:STORK_convergence}}
\label{app:proof}

In this section, we present the proof of Theorem~\ref{thm:STORK_convergence}. 
\begin{proof}[Proof of Theorem~\ref{thm:STORK_convergence}]
By derivation in~\ref{app:srk_derivation}, the RKG2 method converges with order $\O(h^2)$ and the ROCK4 method converges with order $\O(h^4)$ as expected.

Now we'll show that the STORK-2 method and STORK-4 method converge to RKG2 and ROCK4, respectively, with order $\O(h^2)$ for both cases. By Taylor expansion, we know that
$$\bm{v}(\bm{Y}_j(t_j), t_j)=\bm{v}(\bm{Y_0}, t_0)+(t_j-t_0)\bm{v}'(\bm{Y_0}, t_0)+\mathcal{O}((t_j-t_0)^2).$$
Since the three point forward velocity approximation has order $\O(h)$ by classical numerical analysis, and $t_j-t_0\le h$ in both STORK-2 and STORK-4, we know that 
$$\bm{v}(\bm{Y}_j(t_j), t_j)=\bm{v}(\bm{Y_0}, t_0)+(t_j-t_0)\bm{v}_{\text{approx}}'(\bm{Y_0}, t_0)+\frac{(t_j-t_0)^2}{2}\bm{v}_{\text{approx}}''(\bm{Y_0}, t_0)+\mathcal{O}(h).$$
Plugging into Algorithm~\ref{alg:SRK_sampling_2} and Algorithm~\ref{alg:SRK_sampling_4}, we know that the STORK-2 and STORK-4 converge to RKG2~\eqref{eqn:rkg2} and ROCK4~\eqref{eqn:rock4} respectively, with order $\O(h^2)$ since in both STORK-2 and STORK-4, there exists an additional $h$ in front of each virtual NFE $\bm{v}(\bm{Y}_j(t_j), t_j)$.
\end{proof}

\section{Experiments}
In this section, we discuss our experiment setup in greater detail. We present all available data in tables for clearer demonstration. For UniPC ~\citep{UniPC}, DPM-Solver++ ~\citep{DPM-solver++}, DEIS~\citep{DEIS}, Flow-Euler, and DDIM~\citep{DDIM} schedulers, we use the implementation provided by the \texttt{Diffusers}~\citep{Diffuser} package version \texttt{0.35.0.dev0}. 

Note that in order for the compatibility of FLUX.1-dev and Hunyuan Video~\citep{kong2024hunyuanvideo} pipelines with DPM-Solver++ and UniPC, we made two lines of modification on the original Diffusers source code. Essentially, we enforce ~\texttt{sigmas = None} if the scheduler's config has ~\texttt{use\_flow\_sigmas = True}. We can provide the updated implementation upon request for reproducibility. 
\label{app:experiments}

\subsection{Studies on the parameters}
\label{app:ablation}
In this subsection, we conduct ablation studies on the number of sub-steps $s$ used, the order of the STORK method, and the order of the Taylor expansion.

\subsubsection{Effect of sub-steps}
\label{app:s_ablation}
We investigate the effect of sub-steps on the generation fidelity of diffusion models. For these experiments, we use the SANA~\citep{Sana} 0.6B variant and generate images at 512 $\times$ 512 resolution for FID calculation using 30000 samples. As shown in Table~\ref{tab:s_ablation}, while all choices of $s$ lead to decent results, the choice of $s$ cannot be excessively large or small. However, larger $s$ enables better stiffness handling ability, overly large $s$ results in too much Taylor series approximation errors. We found that $s=9$ is optimal for the conditional flow-matching generation, and recommend users to experiment with choices of $s \leq 100$ for the downstream models and tasks.

\begin{table}[htbp]
\centering
\parbox{\textwidth}{
    \caption{Effects of sub-steps on sample fidelity. As shown, sub-step $s$ leads to decent performance, and we empirically find $s=9$ achieves the best performance with SANA-0.6B by~\citep{Sana} generating at 512 $\times$ 512 resolution and in general for flow-matching models.
    }
    \label{tab:s_ablation}
}
\resizebox{0.6\textwidth}{!}{
    \begin{tabular}{lccccc}
        \toprule
        \makecell[l]{$s$ \textbackslash\ NFE} & 7 & 8 & 9 & 10 \\
        \midrule
        $s$=5         & 8.000 &       7.999        &      8.537        &         9.168    \\
        $s$=9         & \textbf{7.659} & \textbf{6.667} &  \textbf{6.526} & \textbf{6.270}\\
        $s$=14        & 8.351 & 7.242  &   7.026    &  6.671\\
        $s$=24        & 8.686 & 7.526  &  7.286  &  6.884\\
        $s$=54      & 8.836 &  7.649 &  7.384  &  6.992\\
        $s$=104      & 8.866 & 7.669 &  7.412  &  7.016\\
        \bottomrule \\
    \end{tabular}
}
\vspace{-1em}
\end{table}

\subsubsection{Effect of solver order and Taylor expansion order}
We further conducted ablation studies for both Taylor expansion order and solver order. As mentioned in Section~\ref{sec:stork}, STORK-1, STORK-2, and STORK-4 are the only possible configurations for SRK methods. We now examine their corresponding performance using the SANA-0.6B model for 512$\times$512 resolution on the MJHQ-30K datasets, using $s=9$ for all trials. First, second, and third order Taylor expansions for each solver order case is tested. As shown in Table~\ref{tab:order_ablation}, the best combination is STORK-4 with first order Taylor expansion.

\begin{table}[!ht]
\centering
\parbox{\textwidth}{
   \caption{Effects of order. STORK-1, STORK-2, STORK-3 denote the order of SRK method as the base for STORK, and ``1st", ``2nd", and ``3rd" denote the Taylor expansion order. FID $\downarrow$ for MJHQ-30K using SANA 0.6B~\citep{Sana} is tested. The gray numbers denote the best result in the current order, and the bold numbers denote the absolute best numbers.
   }
   \label{tab:order_ablation}
}
\resizebox{0.6\textwidth}{!}{%
   \begin{tabular}{lccccc}
       \toprule
       \makecell{Method \textbackslash\ NFE} & 7 & 8 & 9 & 10 \\
       \midrule
       STORK-1-1st & 23.015 & 21.372 & 20.799 & 19.994\\
       STORK-1-2nd & 33.856 & 25.460 & 22.351 & 20.913 \\
       STORK-1-3rd & \textcolor{gray}{22.768} & \textcolor{gray}{21.066} & \textcolor{gray}{20.316} & \textcolor{gray}{19.593}\\
       \midrule
       STORK-2-1st & \textcolor{gray}{8.485} & \textcolor{gray}{7.585} & \textcolor{gray}{7.417} & \textcolor{gray}{7.052} \\
       STORK-2-2nd & 23.602 & 12.646 & 8.561 & 7.227 \\
       STORK-2-3rd & 14.420 & 11.790 & 10.479 & 9.442 \\
       \midrule
       STORK-4-1st & \textbf{7.659} & \textbf{6.667} & \textbf{6.526} & \textbf{6.270} \\
       STORK-4-2nd & 54.179 & 30.168 & 15.254 & 9.619 \\
       STORK-4-3rd & 20.484 & 14.976 & 12.556 & 10.712\\
       \bottomrule
   \end{tabular}%
}
\end{table}

\subsection{Image Generation Metrics}
In this section, we provide the detailed benchmarks on Fréchet Inception Distance (FID) as reported in the main paper. We further benchmark human preference of the generated samples using HPSv2~\citep{hpsv2}, and we report the averaged metrics across categories. Unless otherwise specified, the STORK's hyperparameter $s$ is set to $s=9$.

As shown in this section, STORK generally outperforms other fast sampling methods, from noise-predicting to flow-matching models across image and video generation tasks and datasets. Moreover, STORK demonstrates scalable performance in terms of model size and generation resolution. Therefore, experimental results strongly support the superiority of STORK as a fast sampling method.

\subsubsection{Fréchet Inception Distance}
\textbf{Unconditional generation}.
The benchmark on CIFAR-10~\citep{krizhevsky2009learning} is presented in ~\cref{tab:appendix_ddim_cifar_fid_32}. The benchmark on LSUN-Bedroom~\citep{yu2016lsunconstructionlargescaleimage} is presented in ~\cref{tab:appendix_ddim_bedroom_fid_256}. For these datasets, the FID is calculated using 50,000 samples, and we use the Inception statistics provided by PNDM~\citep{PNDM} as reference. 

\textbf{Conditional generation}. 
The benchmark for MJHQ-30K~\citep{MJHQ} dataset using Pixart-$\alpha$~\citep{pixart_a} is shown in ~\cref{tab:appendix_pixart_mjhq_fid_512}. The benchmark for MJHQ-30K using SANA-0.6B~\citep{Sana} is shown in ~\cref{tab:appendix_sana0.6_mjhq_fid_512}. The benchmark for MJHQ-30K at 1024 resolution using SANA-1.6B is shown in~\cref{tab:appendix_sana1.6_mjhq_fid_1024}. The benchmark for MJHQ-30K using FLUX.1-dev~\citep{FLUX} is shown in~\cref{tab:appendix_flux_mjhq_fid_512}. The benchmark for MS-COCO~\citep{lin2015microsoftcococommonobjects} using Stable-Diffusion-3.5-Large~\citep{StableDiffusion3} is presented in~\cref{tab:appendix_sd_mscoco_fid_512}. All FIDs are calculated using 30,000 samples to expedite experiments. For MS-COCO, we randomly subsampled 30,000 images from the entire validation split. Additionally, the weights for Stable-Diffusion-3.5-Large and FLUX.1-dev are loaded in ~\texttt{torch.bfloat16}.

\begin{table}[htbp]
\centering
\parbox{\textwidth}{
    \caption{Unconditional generation on CIFAR-10 dataset. (DDIM, 32px)} 
    \label{tab:appendix_ddim_cifar_fid_32}
}
\resizebox{\textwidth}{!}{
    \begin{tabular}{lccccccccc}
        \toprule
        \makecell[l]{Method \textbackslash\ NFE} & 8 & 9 & 10 & 12 & 15 & 20 & 30 & 50 & 100 \\
        \midrule
        DDIM~\citep{DDIM} & 23.260 & 20.390 & 18.500 & 15.477 & 12.848 & 10.900 & 8.657 & 6.990 & 5.520\\
        DPM-Solver++~\citep{DPM-solver++} & 8.669 & 7.250 & 6.471 & 5.525 & 4.745 & 4.015 & 3.947 & 3.859 & 3.851 \\
        UniPC~\citep{UniPC} & 19.732 & 17.879 & 16.666 & 14.593 & 12.623 & 10.843 & 8.828 & 7.085 & 5.685\\
        \midrule
        STORK-4, $\epsilon$=1e-2, $s$=14  & \textbf{6.753} & \textbf{5.743} & \textbf{5.497} & 4.964 & 4.592 & 4.168 & 3.888 & 3.789 & - \\   
        STORK-4, $\epsilon$=1e-3, $s$=14 & 7.816 & 7.505 & 6.077 & \textbf{4.831} & \textbf{3.879} & \textbf{3.337} & \textbf{3.204}  & \textbf{3.484}  &  -\\
        \bottomrule \\
    \end{tabular}
}
\end{table}

\begin{table}[htbp]
\centering
\parbox{\textwidth}{
    \caption{Unconditional generation on the LSUN-Bedroom dataset. (DDIM, 256px)}
    \label{tab:appendix_ddim_bedroom_fid_256}
}
\resizebox{\textwidth}{!}{
    \begin{tabular}{lccccccccc}
        \toprule
        \makecell[l]{Method \textbackslash\ NFE} & 8 & 9 & 10 & 12 & 15 & 20 & 30 & 40 & 50 \\
        \midrule
        DDIM & 22.164 & 18.730 & 16.355 & 13.228 & 10.566 & 8.324 & 6.768 & 6.088 &  5.880 \\
        DPM-Solver++ & 17.679 & 14.894 & 13.101 & 11.160 & 8.441 & 7.488 & 6.934 & 6.665 & 6.491 \\
        UniPC & \textbf{13.665} & 12.858 & 12.146 & 11.213 & 9.592 & 8.208 & 6.481 & \textbf{6.043} & \textbf{5.772} \\
        \midrule
        STORK-4, $\epsilon$=1e-2, $s$=24 & 16.353 & \textbf{11.918} & \textbf{9.594} & \textbf{7.647} & \textbf{6.546} & \textbf{6.107} & \textbf{6.202} & 6.457 & 6.658 \\
        \bottomrule \\
    \end{tabular}}
\end{table}

\begin{table}[!ht]
    \centering
    \parbox{\textwidth}{
            \caption{Conditional generation on MJHQ-30K. Our method more quickly converges to a plateau value around 5.5. (Pixart-$\alpha$, 512px, CFG=4.5)}
            \label{tab:appendix_pixart_mjhq_fid_512} 
    }
     \resizebox{\linewidth}{!}{
                \begin{tabular}{lccccccccc}
                \toprule
                Method & 8 & 9 & 10 & 12 & 15 & 20 & 30 & 40 & 50 \\
                \midrule
                DEIS~\citep{DEIS} & 9.046 & 8.162 & 7.645 & 6.835 & \textbf{5.535} & \textbf{5.528} & 5.525 & 5.540 & 5.544 \\
                DPM-Solver++  & \textbf{8.734} & 7.859 & 7.481 & 6.799 & 6.485 & 6.189 & 5.952 & 5.846 & 5.764 \\
                UniPC  & 9.361 & 8.588 & 7.912 &  7.012 &  6.594 & 6.260 & 5.923  & 5.813 & 5.738 \\
                \midrule
                STORK-4, $\epsilon$=1e-2, $s$=24  & 8.835 & \textbf{6.712} & \textbf{6.035} & \textbf{5.626} & 5.591 & 5.535 & \textbf{5.505} &  \textbf{5.495} & \textbf{5.508} \\
                \bottomrule
            \end{tabular}%
    }
\end{table}

\begin{table}[!ht]
    \centering
    \parbox{\textwidth}{
            \caption{Conditional generation on MJHQ-30K (SANA-0.6B, 512px, CFG=4.5)}
            \label{tab:appendix_sana0.6_mjhq_fid_512}
    }
     \resizebox{\linewidth}{!}{%
                \begin{tabular}{lcccccccccc}
                    \toprule
                    Method & 7 & 8 & 9 & 10 & 12 & 15 & 20 & 30 & 40 & 50 \\
                    \midrule
                    Flow-Euler & 15.856 & 13.320 & 11.851 & 10.782 & 9.434 & 8.430 & 7.551 & 6.932 & 6.637 & 6.494 \\
                    Flow-DPM-Solver++ & 9.629 & 8.390 & 7.628 & 7.278 & 6.774 & 6.424 & 6.282 & 6.095 & 6.086 & 6.097 \\
                    Flow-UniPC & 9.406 & 8.493 & 7.801 & 7.443 & 6.878 & 6.452 & 6.265 & 6.112 & 6.090 & 6.085 \\
                    \midrule
                    STORK-4 & \textbf{7.659} & \textbf{6.667} & \textbf{6.526} & \textbf{6.270} & \textbf{5.899} & \textbf{5.851} & \textbf{5.897} & \textbf{5.928} & \textbf{5.989} & \textbf{5.995} \\
                    \bottomrule
                \end{tabular}%
    }
\end{table}

\begin{table}[!ht]
    \centering
    \parbox{\textwidth}{
            \caption{Conditional generation on MJHQ-30K (SANA-1.6B, 1024px, CFG=4.5)}
            \label{tab:appendix_sana1.6_mjhq_fid_1024}
    }
     \resizebox{\linewidth}{!}{
                \begin{tabular}{lcccccccccc}
                \toprule
                Method & 7 & 8 & 9 & 10 & 12 & 15 & 20 \\ 
                \midrule
                Flow-Euler & 18.348 & 14.849 & 12.603 & 10.980 & 8.828 & 7.142 & 6.069 & \\ 
                Flow-DPM-Solver++ & 11.005 & 9.299 & 8.288 & 7.532 & 6.609 &  5.929 & 5.428 \\ 
                Flow-UniPC & 10.304 & 8.909 & 7.981 & 7.223 & 6.412 & 5.856 & 5.448 & \\ 
                \midrule
                STORK-4 & \textbf{9.804}  & \textbf{7.910} & \textbf{6.901} & \textbf{6.258} & \textbf{5.407} & \textbf{5.029} & \textbf{4.879} \\ 
                \bottomrule
            \end{tabular}%
    }
\end{table}

\begin{table}[!ht]
    \centering
    \parbox{\textwidth}{
            \caption{Conditional generation on MJHQ-30K (FLUX.1-dev, 512px, CFG=3.5)}
            \label{tab:appendix_flux_mjhq_fid_512}
    }
     \resizebox{\linewidth}{!}{
                \begin{tabular}{lcccccccccc}
                \toprule
                Method & 7 & 8 & 9 & 10 & 12 & 15 & 20 \\ 
                \midrule
                Flow-Euler & 16.046 & 14.609 & 13.968 & 13.493 & 12.804 & 12.588 & 12.467 \\
                Flow-DPM-Solver++ & 12.922 & 12.337 & 11.925 & 11.659 & 11.515 & 11.569 &  11.779\\
                Flow-UniPC & 12.248 & 11.874 & 11.810 & 11.637 & 11.595 & 11.585 & 12.044  \\
                \midrule
                STORK-4 & \textbf{11.456} & \textbf{11.021} & \textbf{10.952} & \textbf{10.978} & \textbf{10.948} & \textbf{11.316} & \textbf{11.705}\\
                \bottomrule
            \end{tabular}%
    }
\end{table}

\begin{table}[!ht]
    \centering
    \parbox{\textwidth}{
            \caption{Conditional generation on MS-COCO (SD-3.5-Large, 512px, CFG=3.5)}
            \label{tab:appendix_sd_mscoco_fid_512}
    }
     \resizebox{\linewidth}{!}{
                \begin{tabular}{lcccccccccc}
                \toprule
                Method & 7 & 8 & 9 & 10 & 12 & 15 & 20 \\ 
                \midrule
                Flow-Euler & 20.748 & 19.037 & 18.408 & 17.975 & 17.268 & 17.078 & 16.858 \\
                Flow-DPM-Solver++ & \textbf{18.553} & 18.001 & 17.633 & 17.322 &  17.053 & 16.889 & 16.633\\
                Flow-UniPC & 19.319 & 18.220 & 17.759 & 17.439 &  16.954 & 16.784 & 16.628\\
                \midrule
                STORK-4 & 18.722 & \textbf{15.964} & \textbf{15.017} & \textbf{14.661} & \textbf{14.767} & \textbf{15.061} &  \textbf{15.291}\\
                \bottomrule
            \end{tabular}
    }
\end{table}

\subsubsection{HPSV2}
We further evaluate STORK in terms of human preferences using HPSv2~\citep{hpsv2} benchmark, and we report the averaged score over categories in this section. As shown in ~\cref{tab:appendix_pixart_512_hps} and ~\cref{tab:appendix_flux_512_hps}, generated samples by STORK is consistently preferred over other sampling methods.

\begin{table}[!ht]
    \centering
    \parbox{\textwidth}{
            \caption{HPSv2 Benchmark. (Pixart-$\alpha$, 512px, CFG=4.5)}
            \label{tab:appendix_pixart_512_hps}
    }
     \resizebox{\linewidth}{!}{
                \begin{tabular}{lcccccccccc}
                \toprule
                Method \textbackslash\ NFE & 7 & 8 & 9 & 10 & 12 & 15 & 20 & 30 & 40 & 50 \\
                \midrule
                DEIS & 27.28 & 28.09 & 28.56  & 28.85 & 29.27 & 29.80 & 29.89 & 29.97 & 29.95 & 29.99\\
                DPM-Solver++ & \textbf{27.60} & \textbf{28.38} & \textbf{28.86}   & 29.12  & 29.48 & 29.75 & 29.90 & 29.98 & 29.99 &  30.02\\
                UniPC & 27.33 & 28.17 & 28.66 & 29.04  & 29.51  & 29.78 & 29.94 & 30.02 & 30.04 & 30.05\\
                \midrule
                STORK-4, $\epsilon$=1e-2, $s$=24 & 27.21 & 28.00  & 28.75 & \textbf{29.17} & \textbf{29.66} &  \textbf{29.92} & \textbf{30.03} & \textbf{30.07} & \textbf{30.09} & \textbf{30.08}\\
                \bottomrule
            \end{tabular}
    }
\end{table}

\begin{table}[!ht]
    \centering
    \parbox{\textwidth}{
            \caption{HPSv2 Benchmark. (FLUX.1-dev, 512px, CFG=3.5)}
            \label{tab:appendix_flux_512_hps}
    }
     \resizebox{\linewidth}{!}{
                \begin{tabular}{lccccccccc}
                \toprule
                Method \textbackslash\ NFE & 7 & 8 & 9 & 10 & 12 & 15 & 20 & 30\\
                \midrule
                Flow-Euler & 28.60 & 29.09  & 29.52 & 29.69 & 30.13 & 30.33 & 30.48 & 30.64 & \\
                Flow-DPM-Solver++ & 29.34 & 29.58  & 29.93  &  30.09  & 30.30 & 30.51 & 30.60 & 30.66\\
                Flow-UniPC & 29.61 & 29.77 & 30.07  & 30.21  & 30.39 & 30.56 & 30.60 & 30.67 \\
                \midrule
                STORK-4 & \textbf{30.27}  & \textbf{30.63} & \textbf{30.84} & \textbf{30.84} & \textbf{31.00} & \textbf{30.95} & \textbf{30.90} & \textbf{30.71}\\
                \bottomrule
            \end{tabular}
    }
\end{table}

\subsection{Video Generation Metrics~\img{Images/video_camera.pdf}}
In this section, we provide more comprehensive results of the text-to-video generation using Hunyuan model~\citep{kong2024hunyuanvideo}, up to NFE=30. As can be seen, the final score converges for all the methods after 10 NFEs, and STORK generates the best results in terms of the final score. For the sub-metrics, our STORK method wins by majority in visual quality and motion quality, and get comparable results in other metrics.

\begin{table}[!ht]
\centering
\parbox{\textwidth}{
    \caption{EvalCrafter evaluation of Hunyuan Model, with different sampling methods. Scores of the four sub-metrics and the total score are recorded. As shown in the table, STORK method consistently outperforms the flow-DPM-Solver++ and the flow-UniPC methods in terms of the final score for small NFEs, all get similar final score for larger NFEs.}
    \label{tab:video_appendix}
}
\resizebox{1\textwidth}{!}{%
    \begin{tabular}{clcccccccccc}
        \toprule
        \multicolumn{2}{c}{Method \textbackslash\ NFE} & 4 & 5 & 6 & 7 & 8 & 9 & 10 & 12 & 20 & 30 \\
        \midrule
        \multirow{3}{*}{Visual Quality}
        & Flow-DPM-Solver++ & 45.02 & 46.42 & 48.03 & 49.51 & 50.74 & 51.48 & 52.40 & 53.11 & 54.82 & 54.90 \\
        & Flow-UniPC & 45.51 & 47.46 & 49.23 & 50.72 & 51.88 & 52.66 & 53.32 & \underline{54.15} & \underline{55.08} & \underline{55.17} \\
        & STORK (Ours) & \underline{50.00} & \underline{51.91} & \underline{52.11} & \underline{52.72} & \underline{52.68} & \underline{53.17} & \underline{53.60} & 53.74 & 53.75 & 53.14 \\
        \midrule
        \multirow{3}{*}{Text-Video Alignment} 
        & Flow-DPM-Solver++ & 40.90 & 46.10 & \underline{47.54} & 48.78 & \underline{47.83} & 47.93 & \underline{50.13} & 48.97 & 50.07 & 49.85 \\
        & Flow-UniPC & 41.38 & \underline{46.35} & 47.43 & 48.60 & 46.59 & 48.76 & 49.95 & \underline{50.46} & \underline{51.16} & 50.37 \\
        & STORK (Ours) & \underline{43.18} & 46.11 & 46.64 & \underline{50.09} & 46.92 & \underline{48.94} & 48.01 & 49.37 & 50.40 & \underline{52.44} \\
        \midrule
        \multirow{3}{*}{Motion Quality} 
        & Flow-DPM-Solver++ & \underline{55.35} & 54.49 & 54.26 & 53.74 & 53.89 & \underline{53.55} & 53.50 & 53.54 & 53.22 & 53.62 \\
        & Flow-UniPC & 55.24 & \underline{54.56} & 54.06 & 54.12 & 53.37 & 53.51 & 53.40 & 53.43 & 53.32 & 53.63 \\
        & STORK (Ours) & 55.02 & 54.50 & \underline{54.26} & \underline{54.18} & \underline{54.06} & 53.47 & \underline{53.79} & \underline{53.81} & \underline{53.95} & \underline{53.80} \\
        \midrule
        \multirow{3}{*}{Temporal Consistency}
        & Flow-DPM-Solver++ & \underline{64.17} & \underline{63.80} & \underline{63.48} & 63.25 & \underline{63.07} & \underline{62.92} & \underline{62.78} & \underline{62.58} & \underline{62.19} & \underline{61.85} \\
        & Flow-UniPC & 63.94 & 63.43 & 63.20 & 62.92 & 62.76 & 62.61 & 62.47 & 62.30 & 61.97 & 61.65 \\
        & STORK (Ours) & 61.41 & 61.67 & 62.08 & \underline{62.14} & 62.26 & 62.35 & 62.08 & 62.12 & 61.66 & 61.25 \\
        \midrule
        \multirow{3}{*}{Final Score}
        & Flow-DPM-Solver++ & 205 & 211 & 213 & 215 & 216 & 216 & 219 & 218 & 220 & 220\\
        & Flow-UniPC & 206 & 212 & 214 & 216 & 215 & 218 & \textbf{219} & \textbf{220} & \textbf{222} & 221 \\
        & STORK (Ours) & \textbf{210} & \textbf{214} & \textbf{215} & \textbf{219} & \textbf{216} & \textbf{218} & 217 & 219 & 220 & \textbf{221}\\
        \bottomrule
    \end{tabular}%
}
\end{table}

\section{Additional visualizations}
\label{app:visualizations}
We provide additional visualizations. Generated samples for STORK on CIFAR-10~\citep{krizhevsky2009learning} are in Figure~\ref{fig:appendix_cifar}, and LSUN-Bedroom~\citep{yu2016lsunconstructionlargescaleimage} are in Figure~\ref{fig:appendix_bedroom}. Visualizations for MS-COCO-2014~\citep{lin2015microsoftcococommonobjects} are in Figure~\ref{fig:appendix_coco}. Visualizations for MJHQ-30K~\citep{MJHQ} are in Figure
~\ref{fig:appendix_mjhq_1} and Figure
~\ref{fig:appendix_mjhq_2}. Visualizations for Hunyuan~\citep{kong2024hunyuanvideo} model video generation are in Figure~\ref{fig:appendix_video1} and Figure~\ref{fig:appendix_video2}.

\begin{figure}[htbp]
    \centering
    \begin{minipage}{\linewidth}
        \hspace{0.05\linewidth}  
        \makebox[0.31\linewidth][c]{\textbf{NFE=10}}
        \makebox[0.31\linewidth][c]{\textbf{NFE=15}}
        \makebox[0.31\linewidth][c]{\textbf{NFE=20}}
    \end{minipage}

    \vspace{0.5ex}
    \begin{minipage}{\linewidth}
        \begin{minipage}[c]{0.05\linewidth}
            \adjustbox{valign=m}{\rotatebox{90}{\textbf{DDIM~\citep{DDIM}}}}
        \end{minipage}
        \begin{minipage}[c]{0.96\linewidth}
            \includegraphics[width=0.32\textwidth]{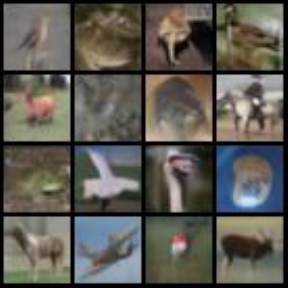}
            \includegraphics[width=0.32\textwidth]{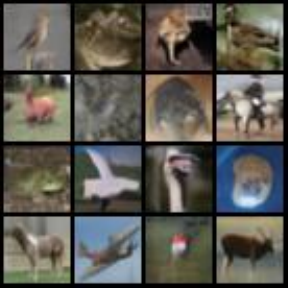}
            \includegraphics[width=0.32\textwidth]{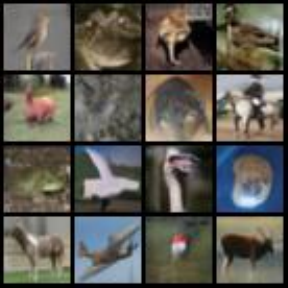}
        \end{minipage}
    \end{minipage}

    \vspace{1ex}

    \begin{minipage}{\linewidth}
        \begin{minipage}[c]{0.05\linewidth}
            \adjustbox{valign=m}{\rotatebox{90}{\textbf{DPM-Solver++~\citep{DPM-solver++}}}}
        \end{minipage}
        \begin{minipage}[c]{0.96\linewidth}
            \includegraphics[width=0.32\textwidth]{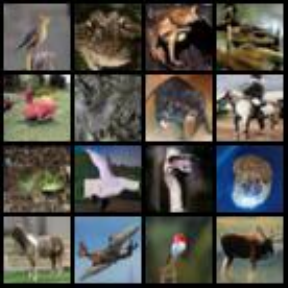}
            \includegraphics[width=0.32\textwidth]{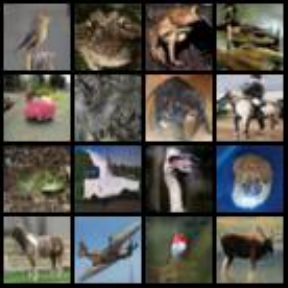}
            \includegraphics[width=0.32\textwidth]{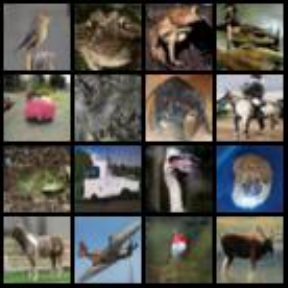}
        \end{minipage}
    \end{minipage}

    \vspace{1ex}

    \begin{minipage}{\linewidth}
        \begin{minipage}[c]{0.05\linewidth}
            \adjustbox{valign=m}{\rotatebox{90}{\textbf{UniPC++~\citep{UniPC}}}}
        \end{minipage}
        \begin{minipage}[c]{0.96\linewidth}
            \includegraphics[width=0.32\textwidth]{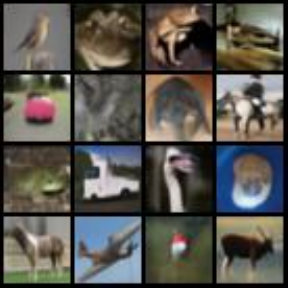}
            \includegraphics[width=0.32\textwidth]{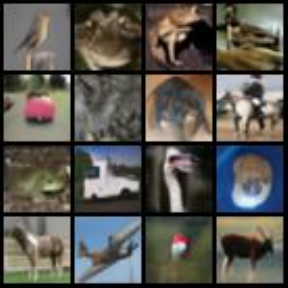}
            \includegraphics[width=0.32\textwidth]{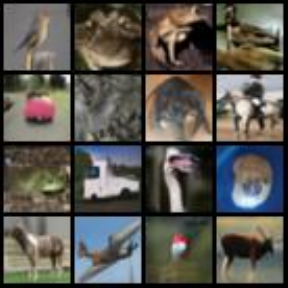}
        \end{minipage}
    \end{minipage}

    \vspace{1ex}

    \begin{minipage}{\linewidth}
        \begin{minipage}[c]{0.05\linewidth}
            \adjustbox{valign=m}{\rotatebox{90}{\textbf{STORK-4}}}
        \end{minipage}
        \begin{minipage}[c]{0.96\linewidth}
            \includegraphics[width=0.32\textwidth]{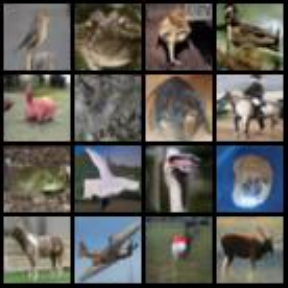}
            \includegraphics[width=0.32\textwidth]{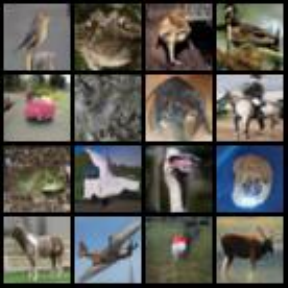}
            \includegraphics[width=0.32\textwidth]{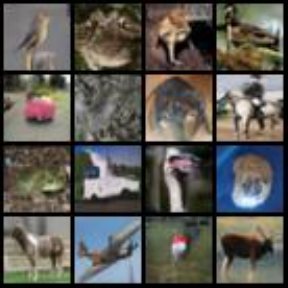}
        \end{minipage}
    \end{minipage}
    
    \caption{Unconditional generation on CIFAR-10~\citep{krizhevsky2009learning}. Generated using DDIM model.}
    \label{fig:appendix_cifar}
    
\end{figure}

\begin{figure}[htbp]
    \centering
    \begin{minipage}{\linewidth}
        \hspace{0.05\linewidth}  
        \makebox[0.31\linewidth][c]{\textbf{NFE=10}}
        \makebox[0.31\linewidth][c]{\textbf{NFE=15}}
        \makebox[0.31\linewidth][c]{\textbf{NFE=20}}
    \end{minipage}

    \vspace{0.5ex}
    \begin{minipage}{\linewidth}
        \begin{minipage}[c]{0.05\linewidth}
            \adjustbox{valign=m}{\rotatebox{90}{\textbf{DDIM~\citep{DDIM}}}}
        \end{minipage}
        \begin{minipage}[c]{0.96\linewidth}
            \includegraphics[width=0.32\textwidth]{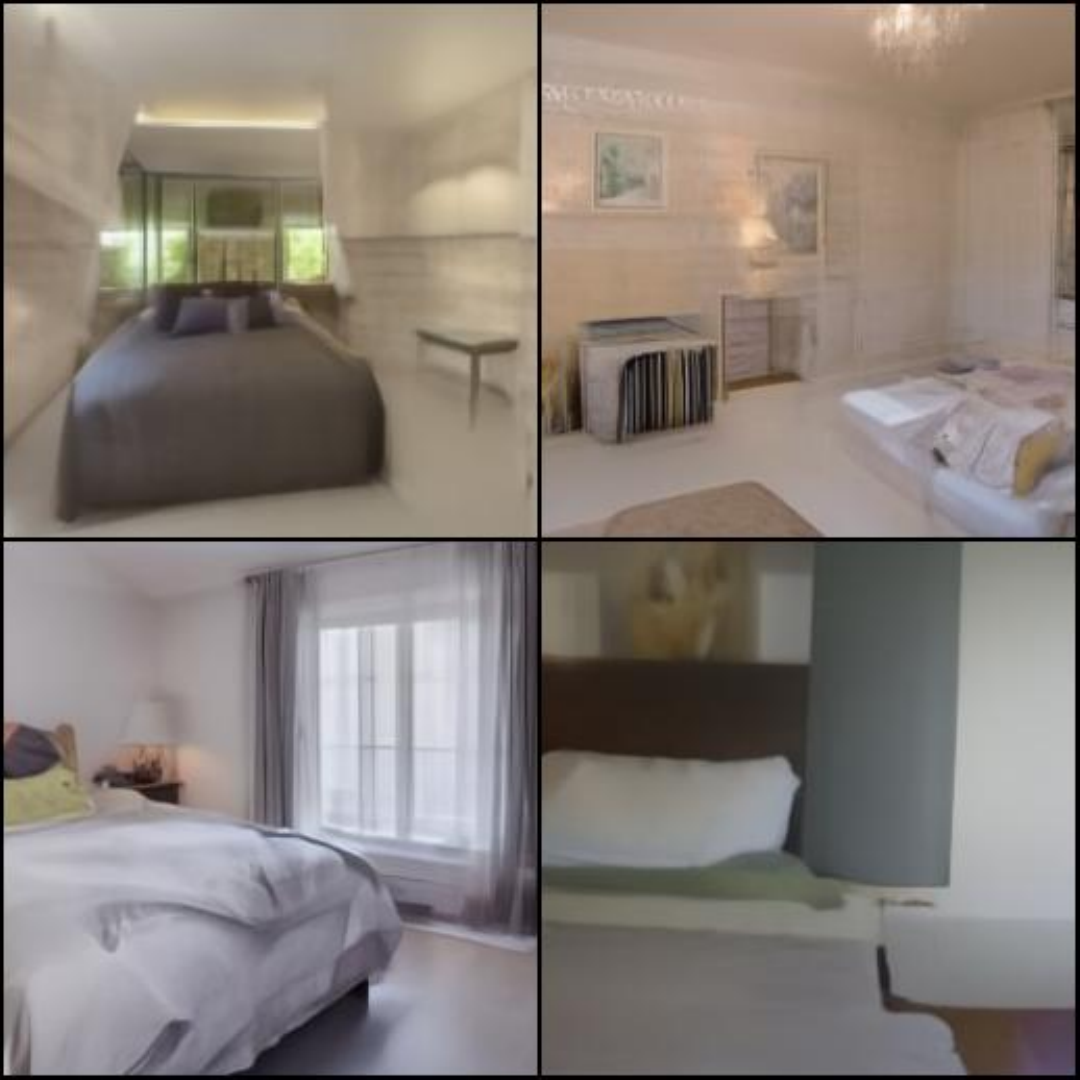}
            \includegraphics[width=0.32\textwidth]{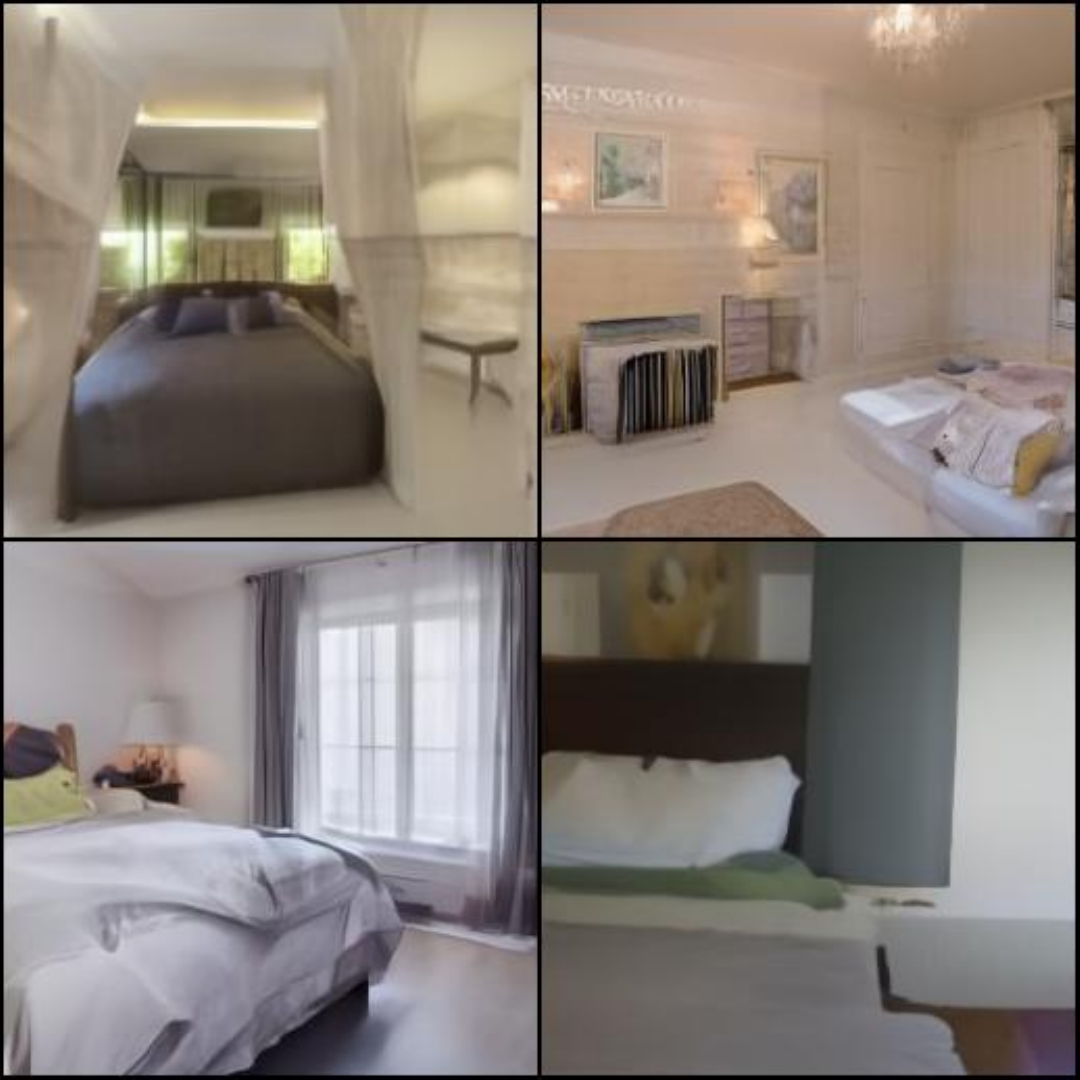}
            \includegraphics[width=0.32\textwidth]{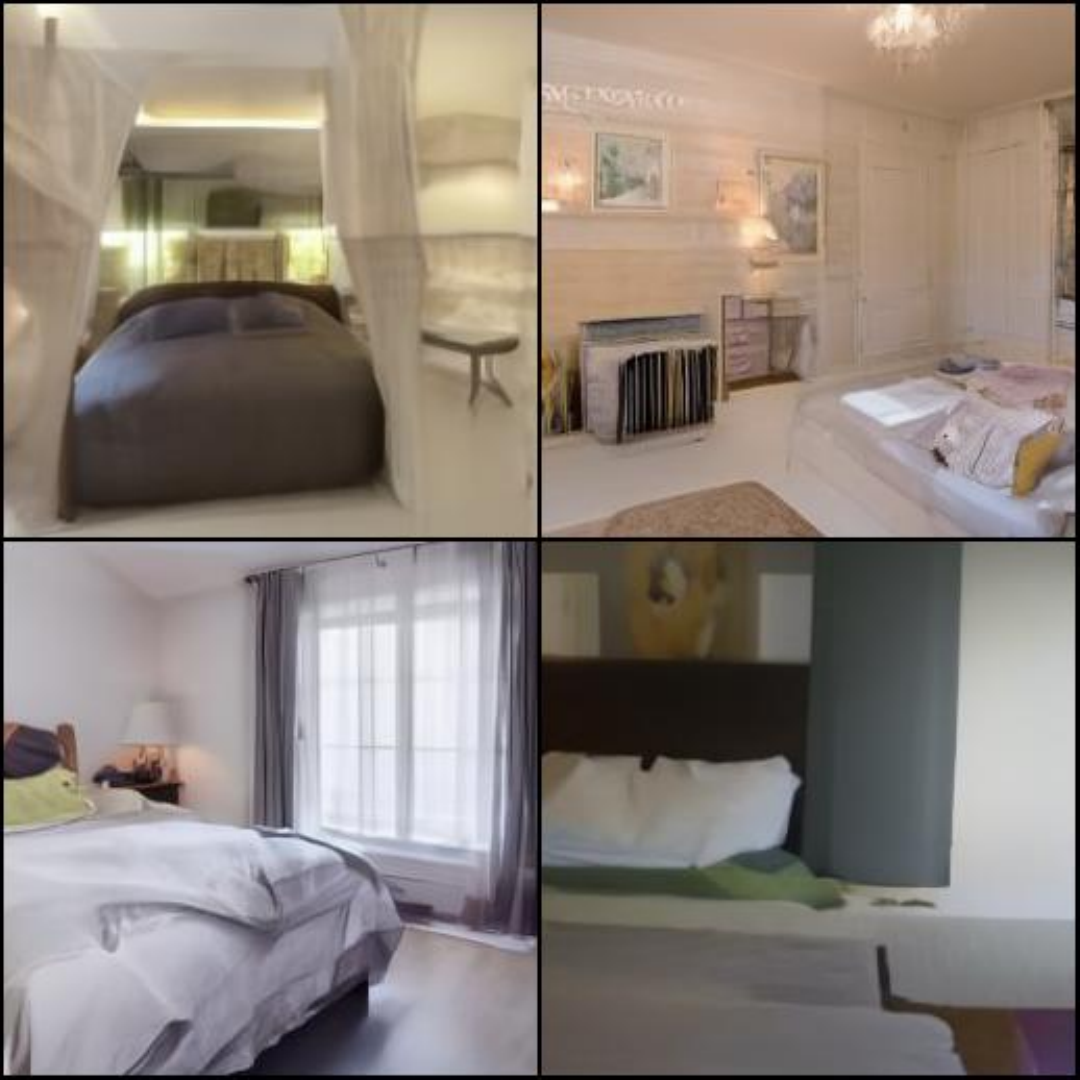}
        \end{minipage}
    \end{minipage}

    \vspace{1ex}

    \begin{minipage}{\linewidth}
        \begin{minipage}[c]{0.05\linewidth}
            \adjustbox{valign=m}{\rotatebox{90}{\textbf{DPM-Solver++~\citep{DPM-solver++}}}}
        \end{minipage}
        \begin{minipage}[c]{0.96\linewidth}
            \includegraphics[width=0.32\textwidth]{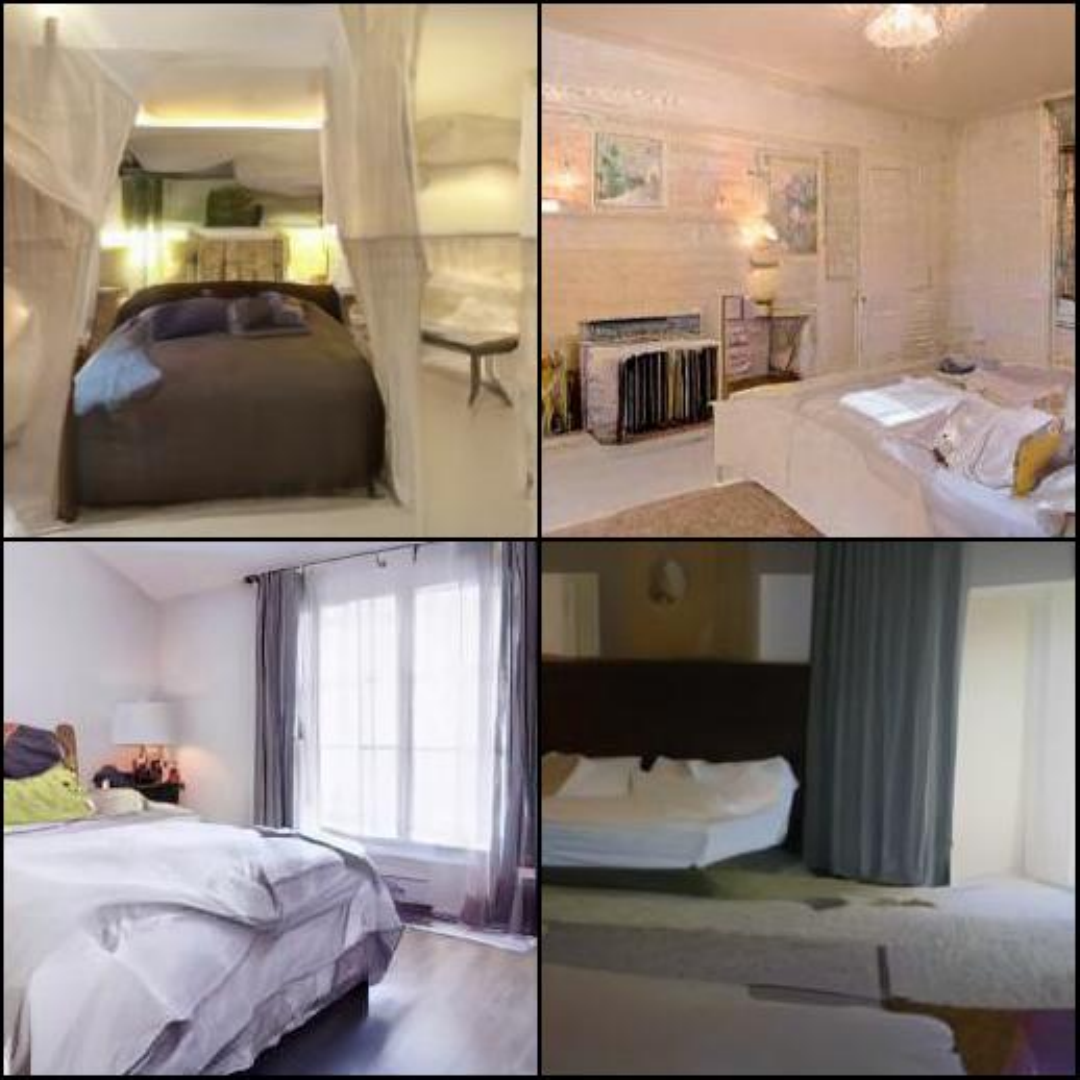}
            \includegraphics[width=0.32\textwidth]{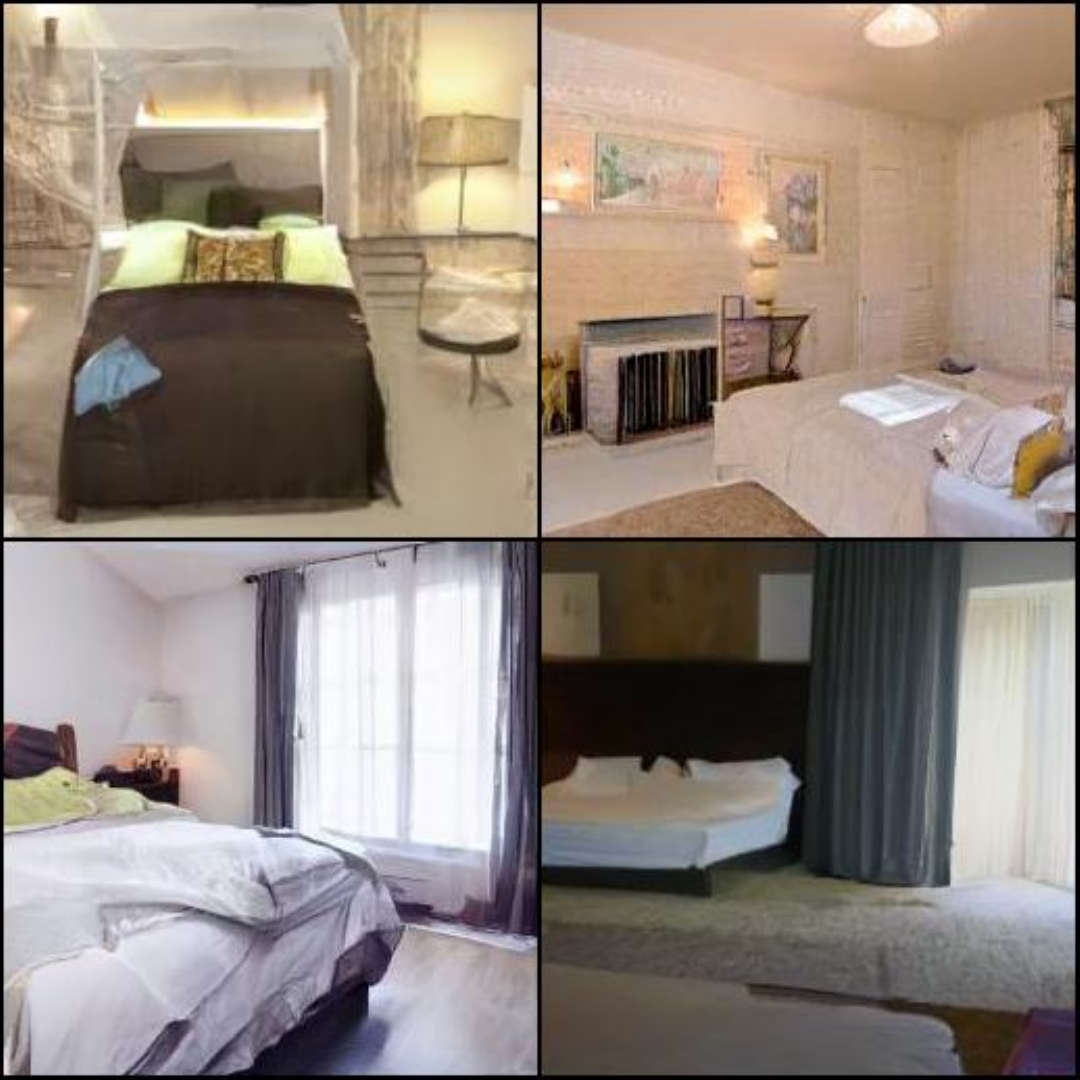}
            \includegraphics[width=0.32\textwidth]{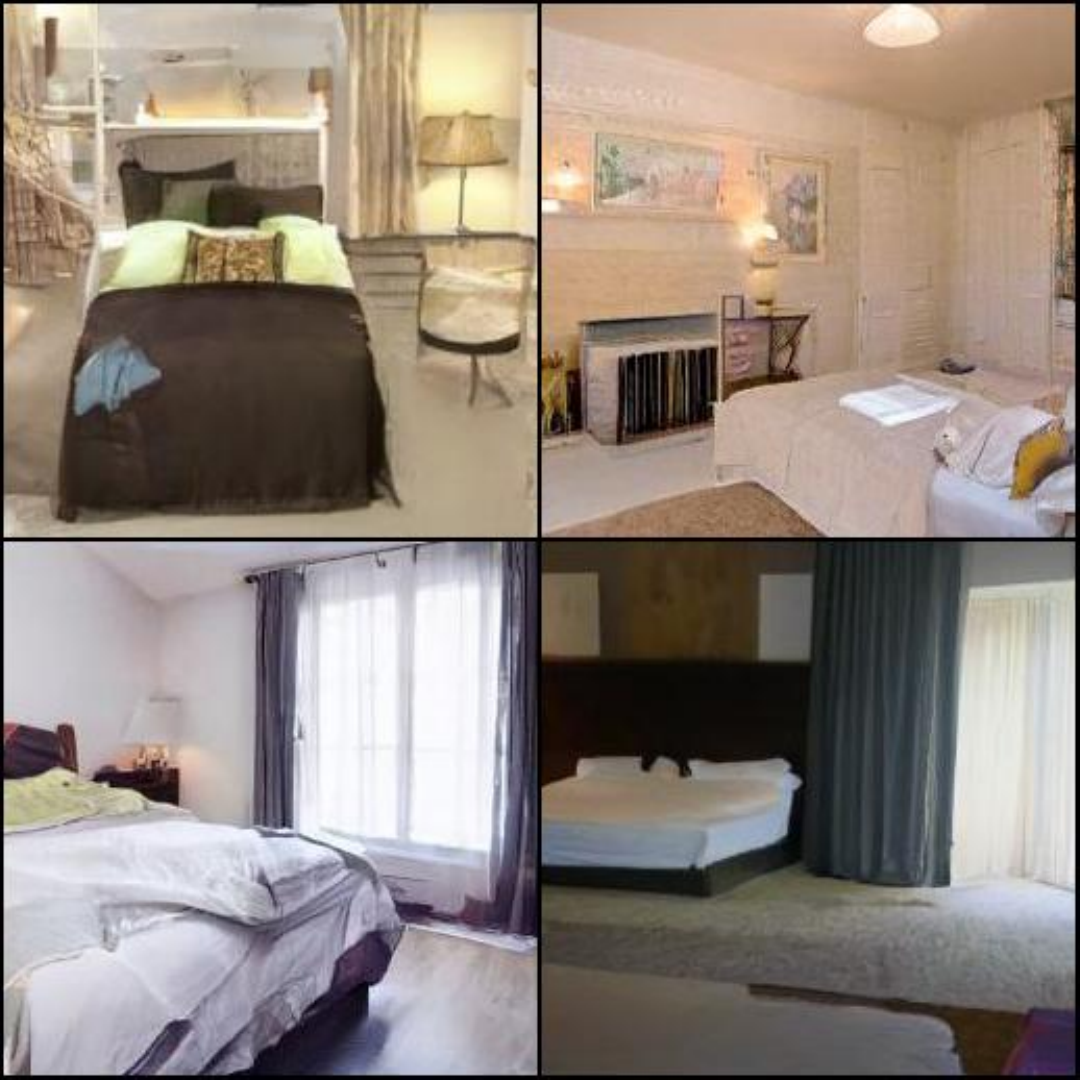}
        \end{minipage}
    \end{minipage}

    \vspace{1ex}

    \begin{minipage}{\linewidth}
        \begin{minipage}[c]{0.05\linewidth}
            \adjustbox{valign=m}{\rotatebox{90}{\textbf{UniPC++~\citep{UniPC}}}}
        \end{minipage}
        \begin{minipage}[c]{0.96\linewidth}
            \includegraphics[width=0.32\textwidth]{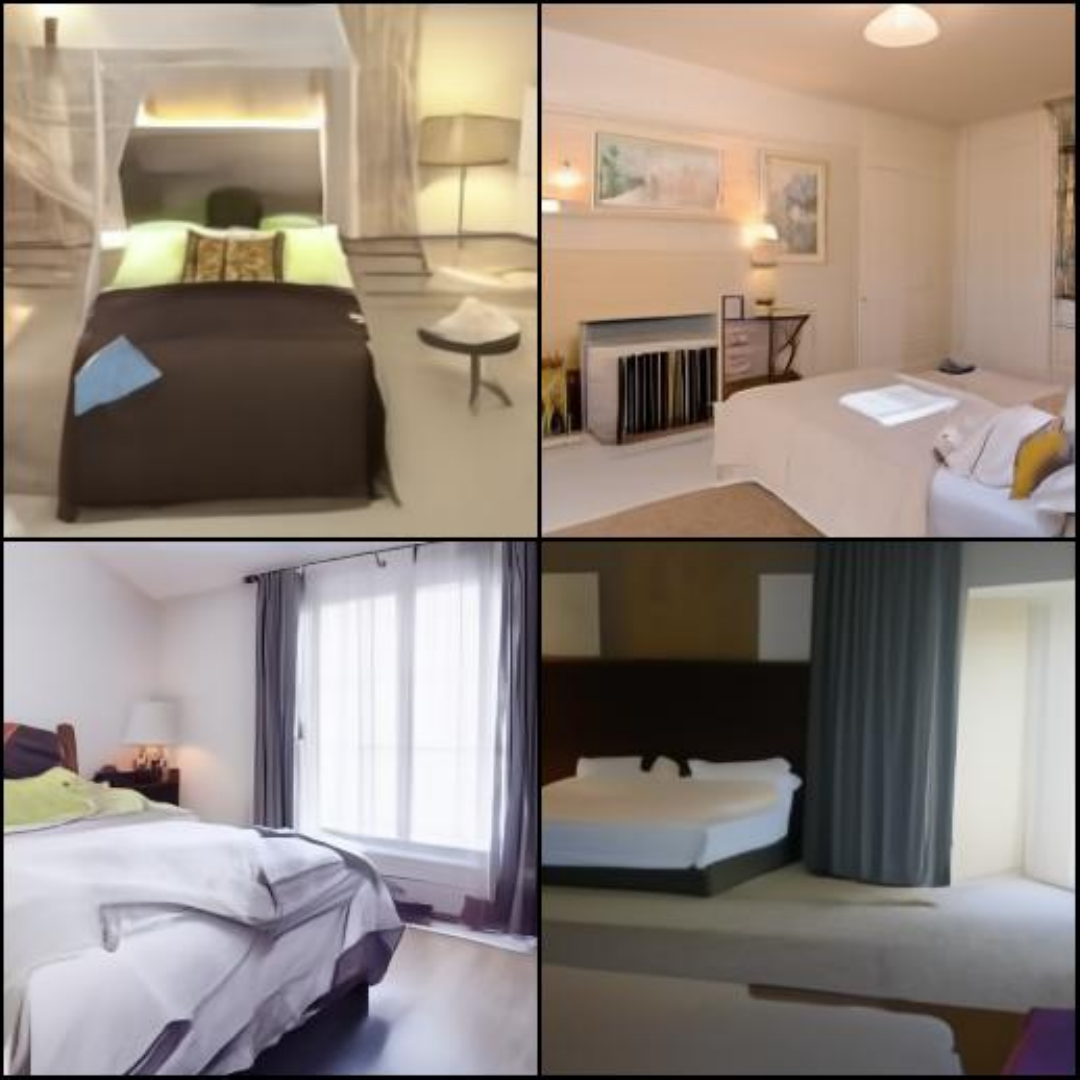}
            \includegraphics[width=0.32\textwidth]{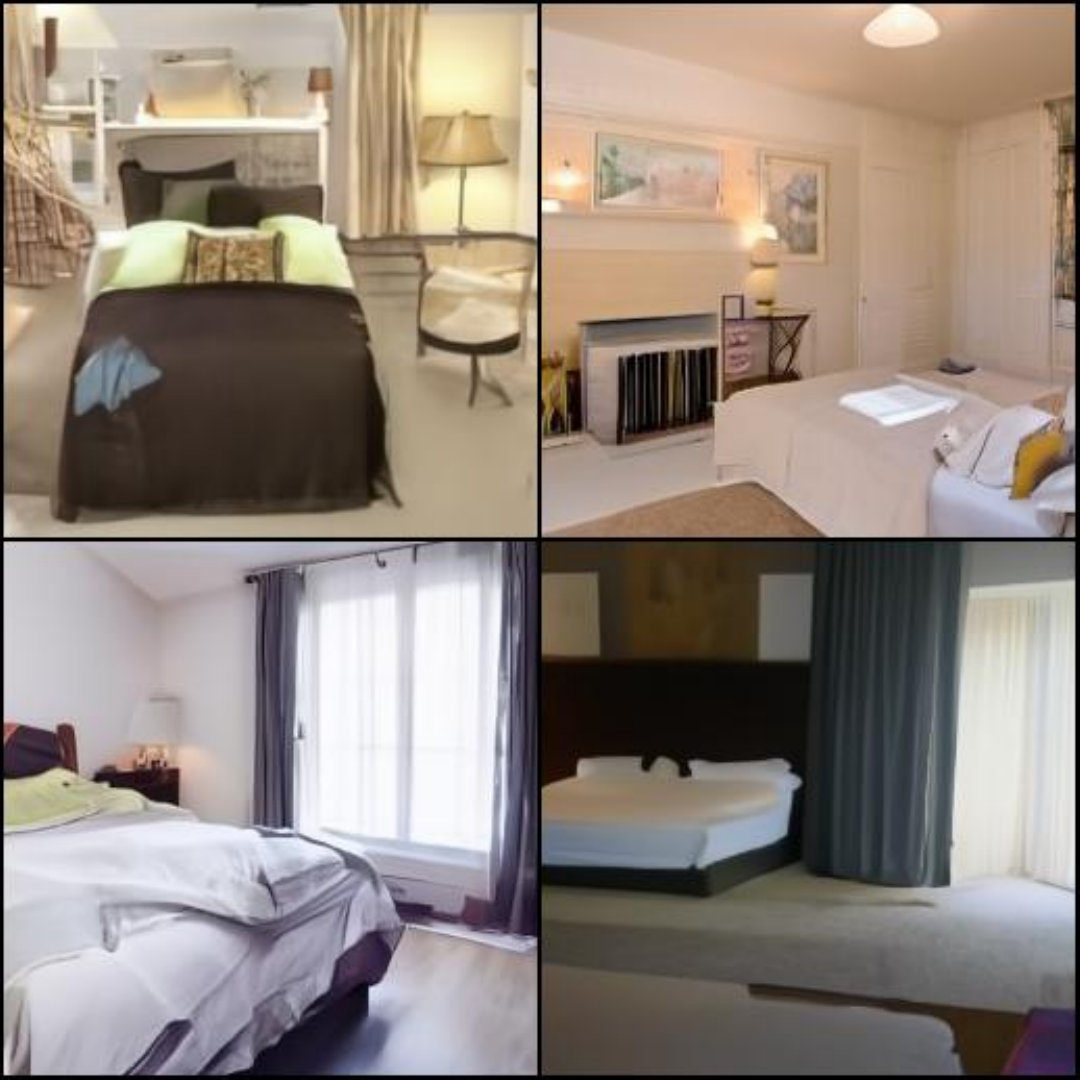}
            \includegraphics[width=0.32\textwidth]{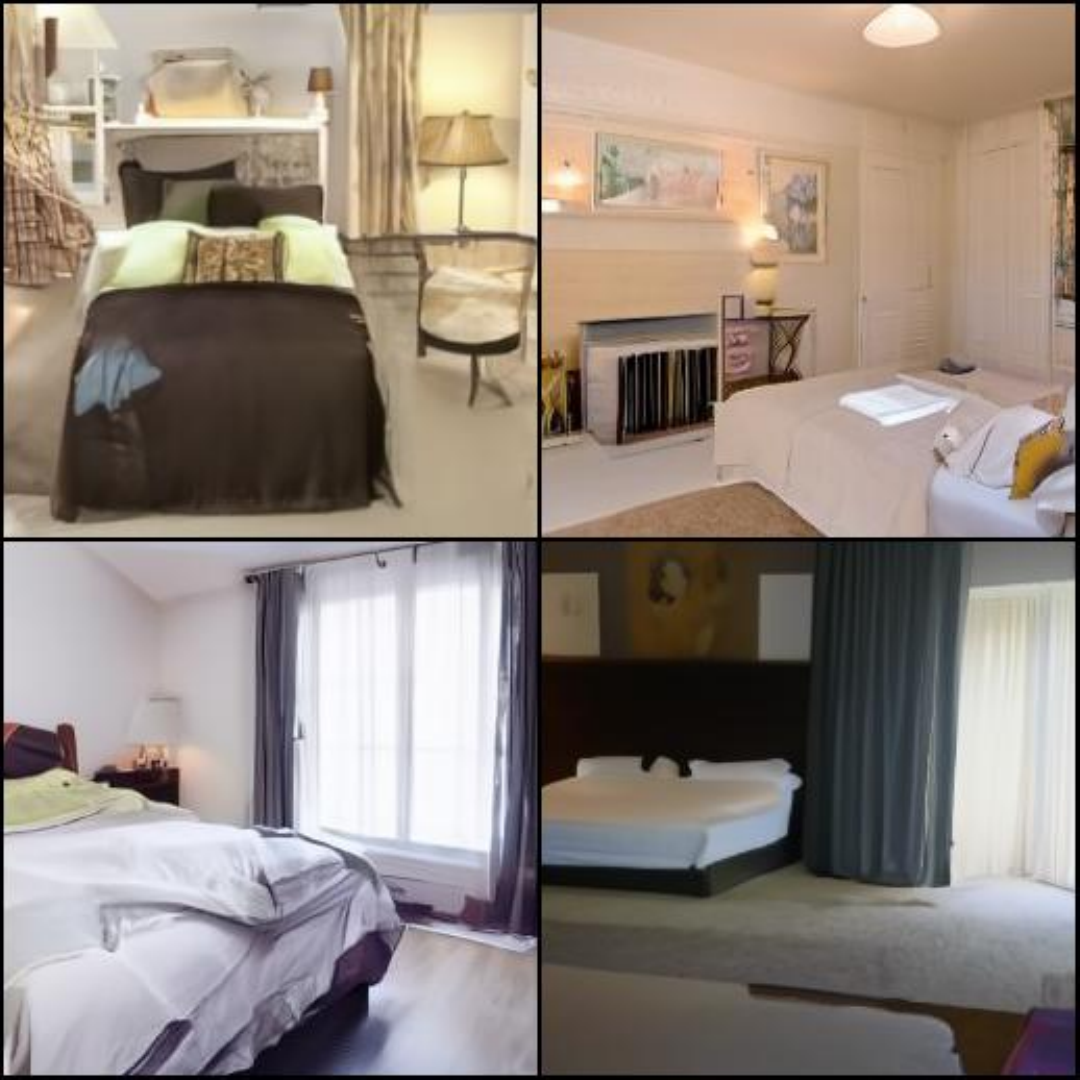}
        \end{minipage}
    \end{minipage}

    \vspace{1ex}

    \begin{minipage}{\linewidth}
        \begin{minipage}[c]{0.05\linewidth}
            \adjustbox{valign=m}{\rotatebox{90}{\textbf{STORK-4}}}
        \end{minipage}
        \begin{minipage}[c]{0.96\linewidth}
            \includegraphics[width=0.32\textwidth]{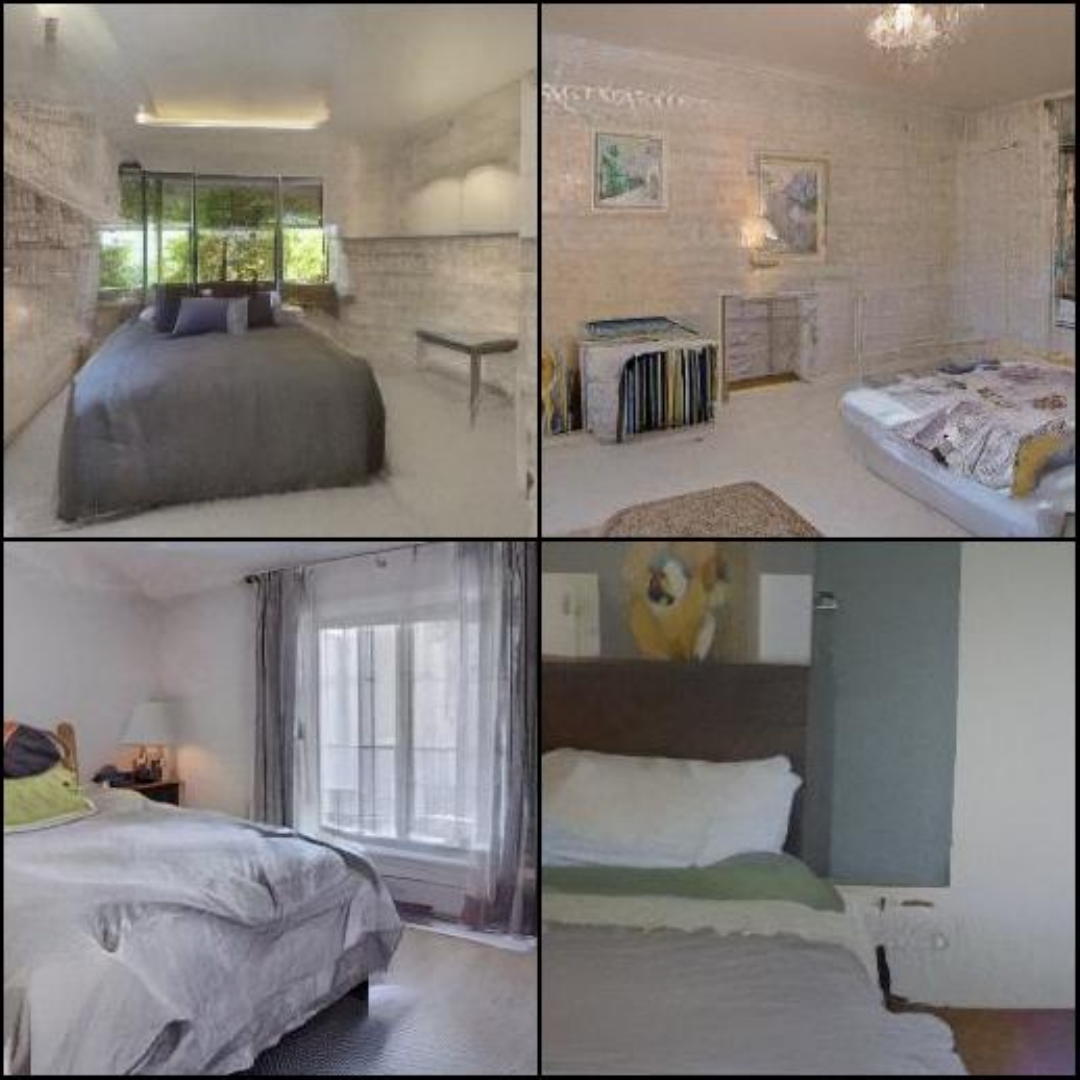}
            \includegraphics[width=0.32\textwidth]{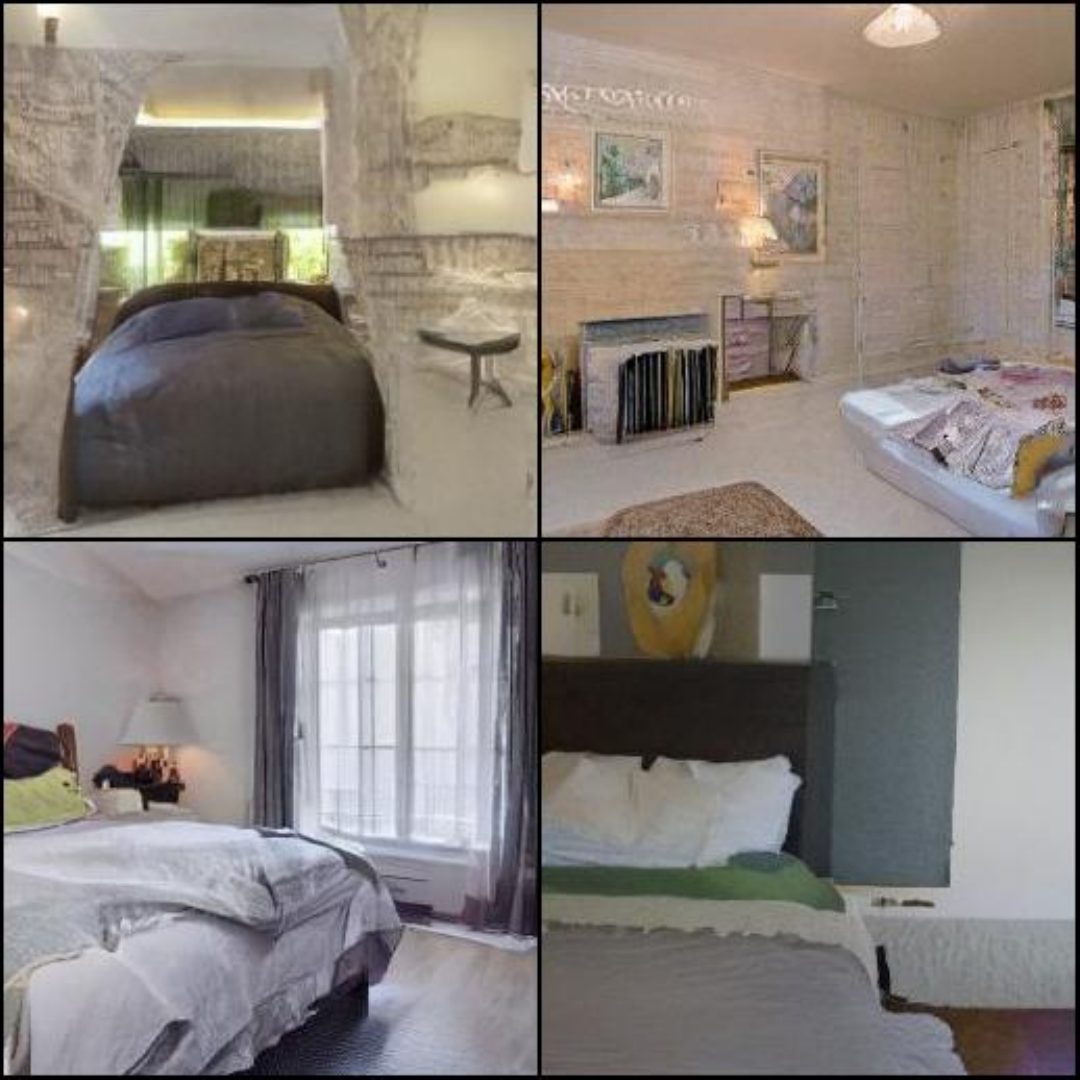}
            \includegraphics[width=0.32\textwidth]{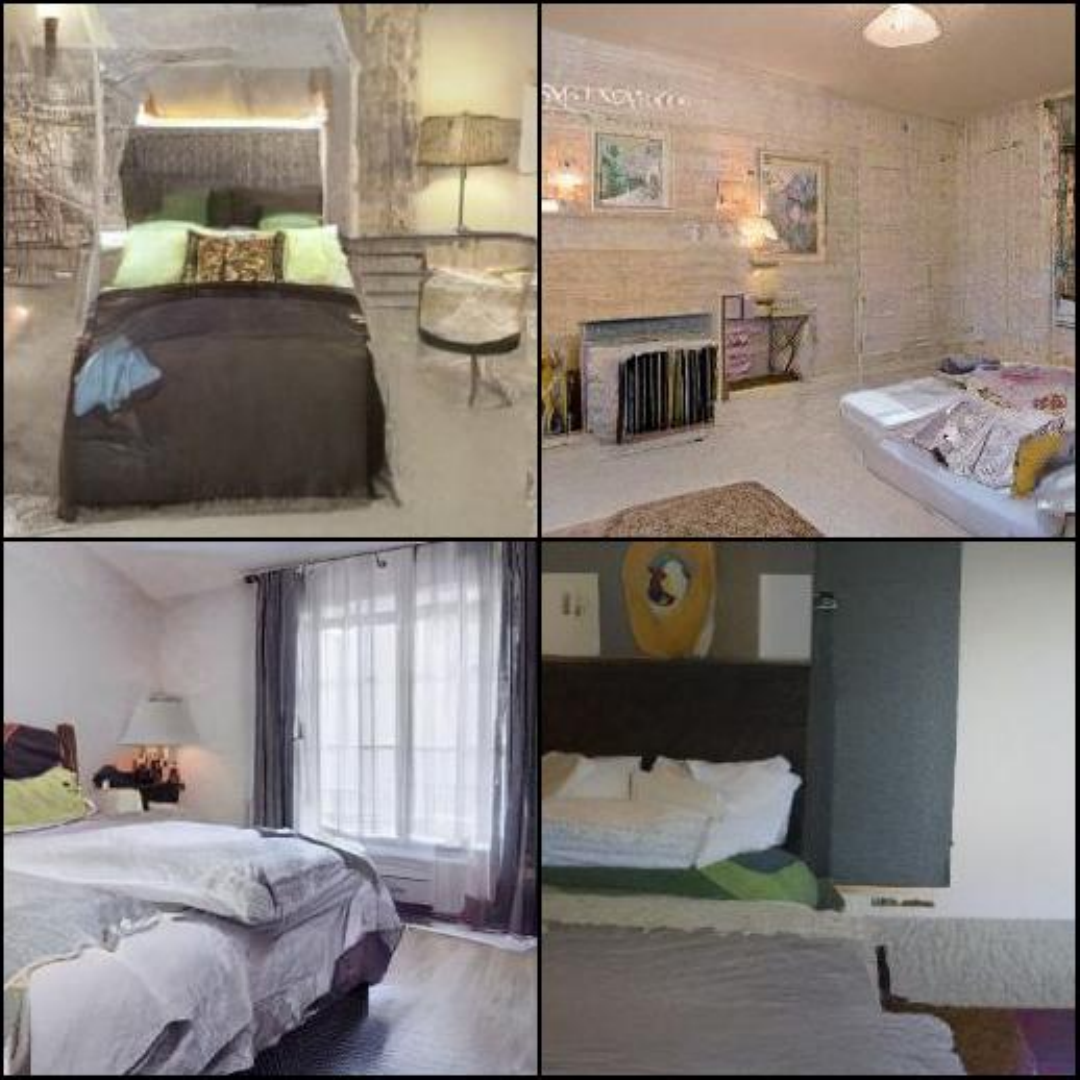}
        \end{minipage}
    \end{minipage}
    
     \caption{Unconditional generation on LSUN-Bedroom~\citep{yu2016lsunconstructionlargescaleimage}. Generated using DDIM model.
    }
    \label{fig:appendix_bedroom}
    
\end{figure}
\newpage


\begin{figure}[hb!]
\vspace{-0.5em}
\centering
\begin{tabularx}{0.245\textwidth}{c}
\quad\quad Flow-Euler
\end{tabularx}
\begin{tabularx}{0.245\textwidth}{c}
Flow-DPM-Solver++
\end{tabularx}
\begin{tabularx}{0.245\textwidth}{c}
\quad\, Flow-UniPC
\end{tabularx}
\begin{tabularx}{0.245\textwidth}{c}
\quad \textbf{STORK (Ours)}
\end{tabularx}
\newline
\vspace{-0.1in}

\begin{subfigure}{1\linewidth}
\captionsetup{justification=centering}
\includegraphics[width=0.24\textwidth]{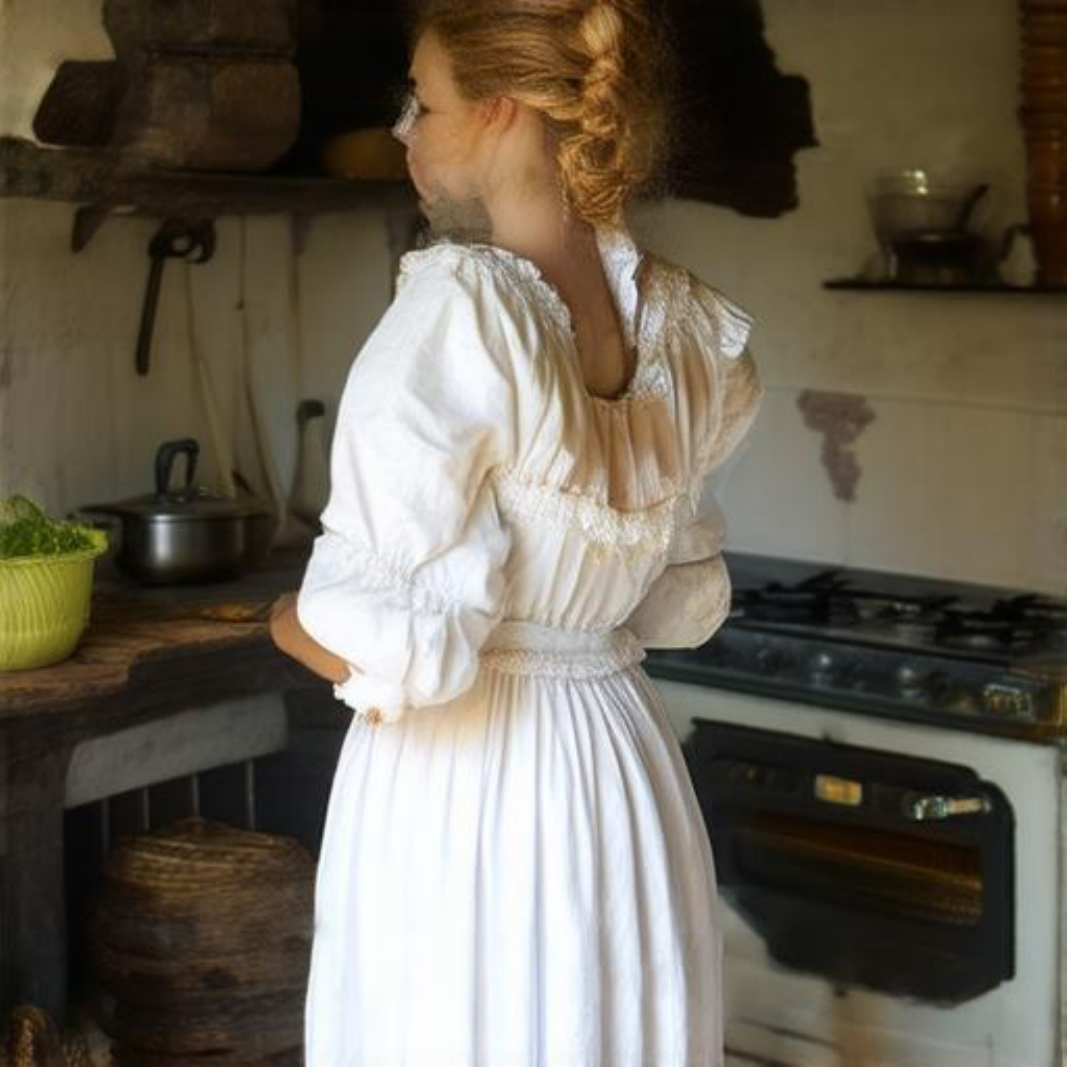}
\includegraphics[width=0.24\textwidth]{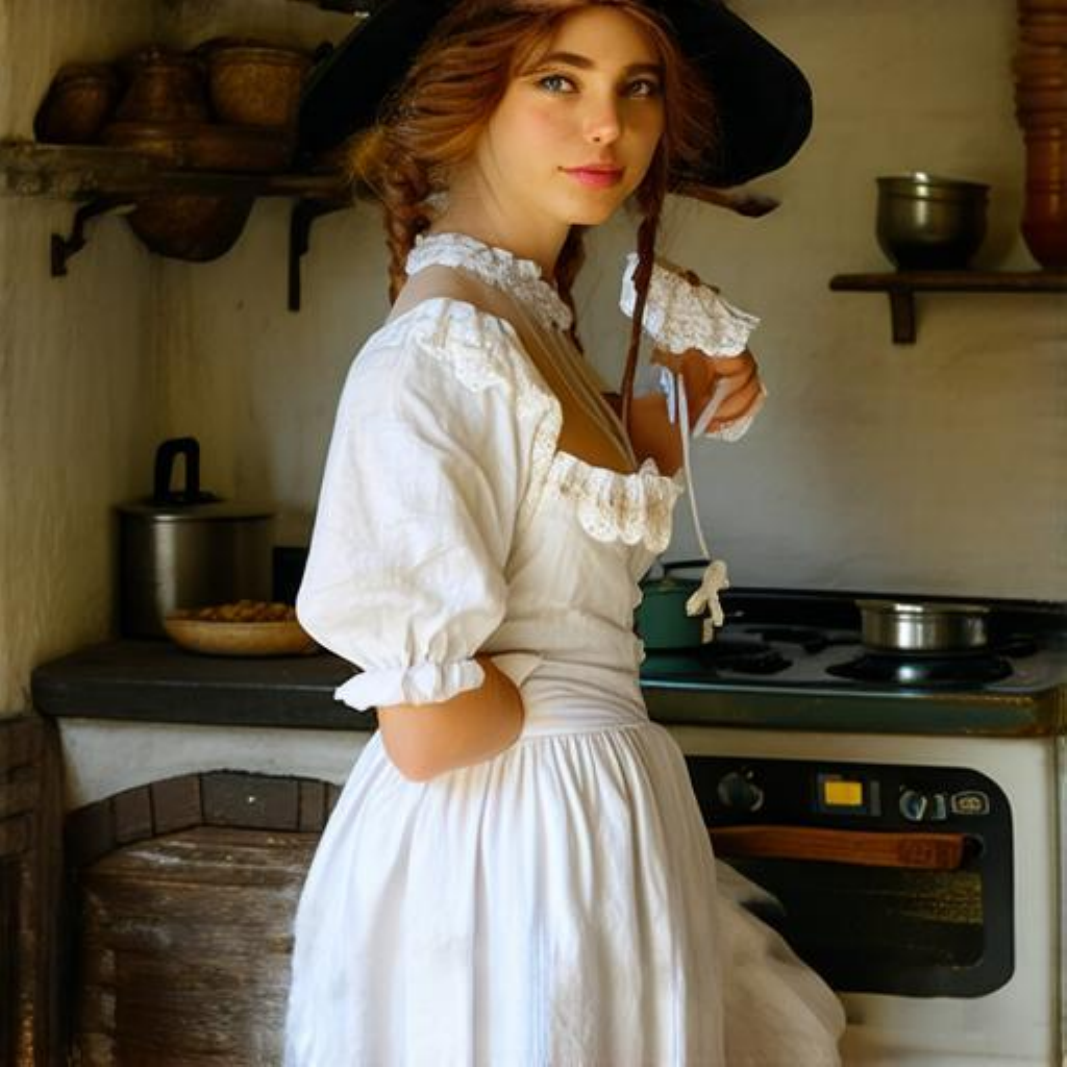}
\includegraphics[width=0.24\textwidth]{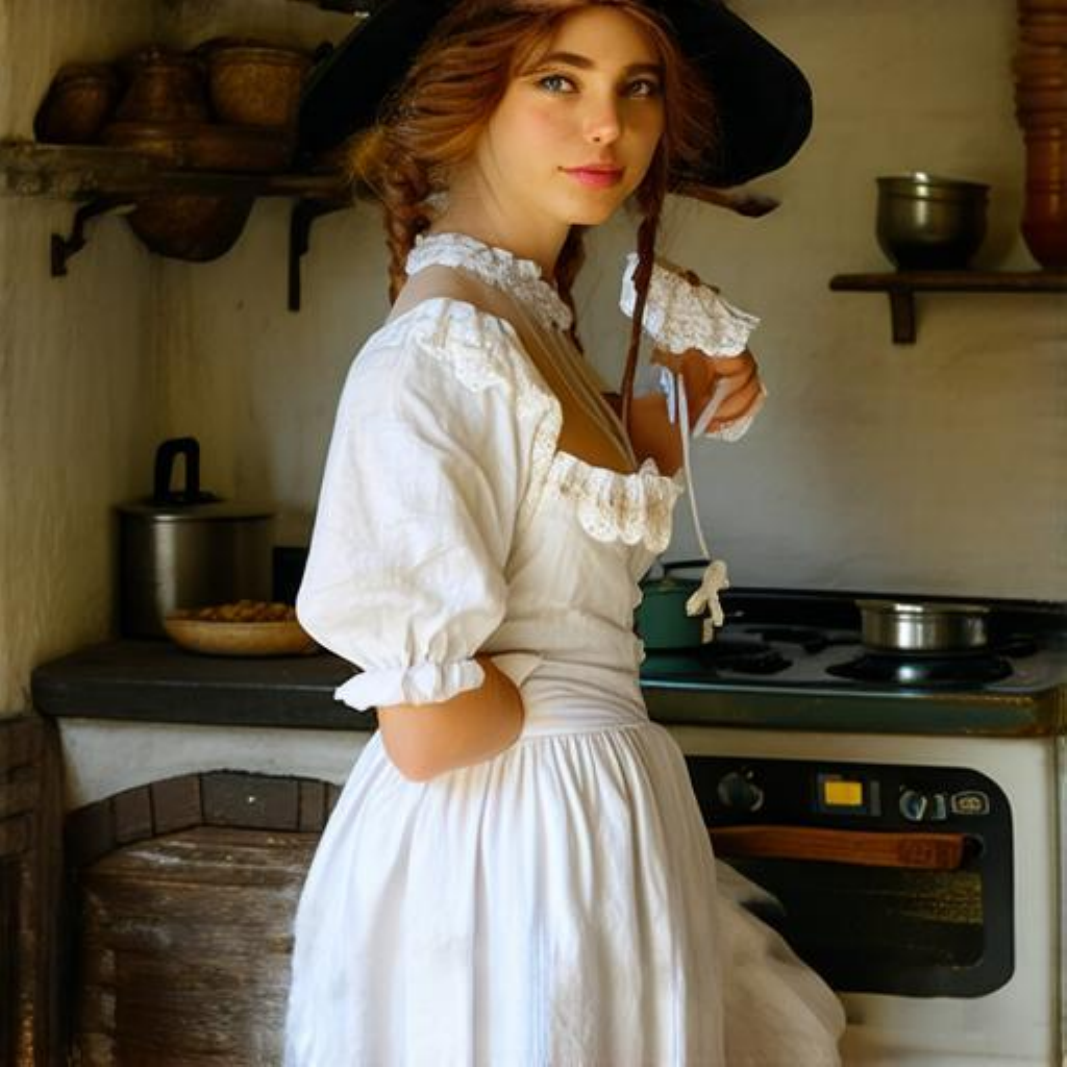}
\includegraphics[width=0.24\textwidth]{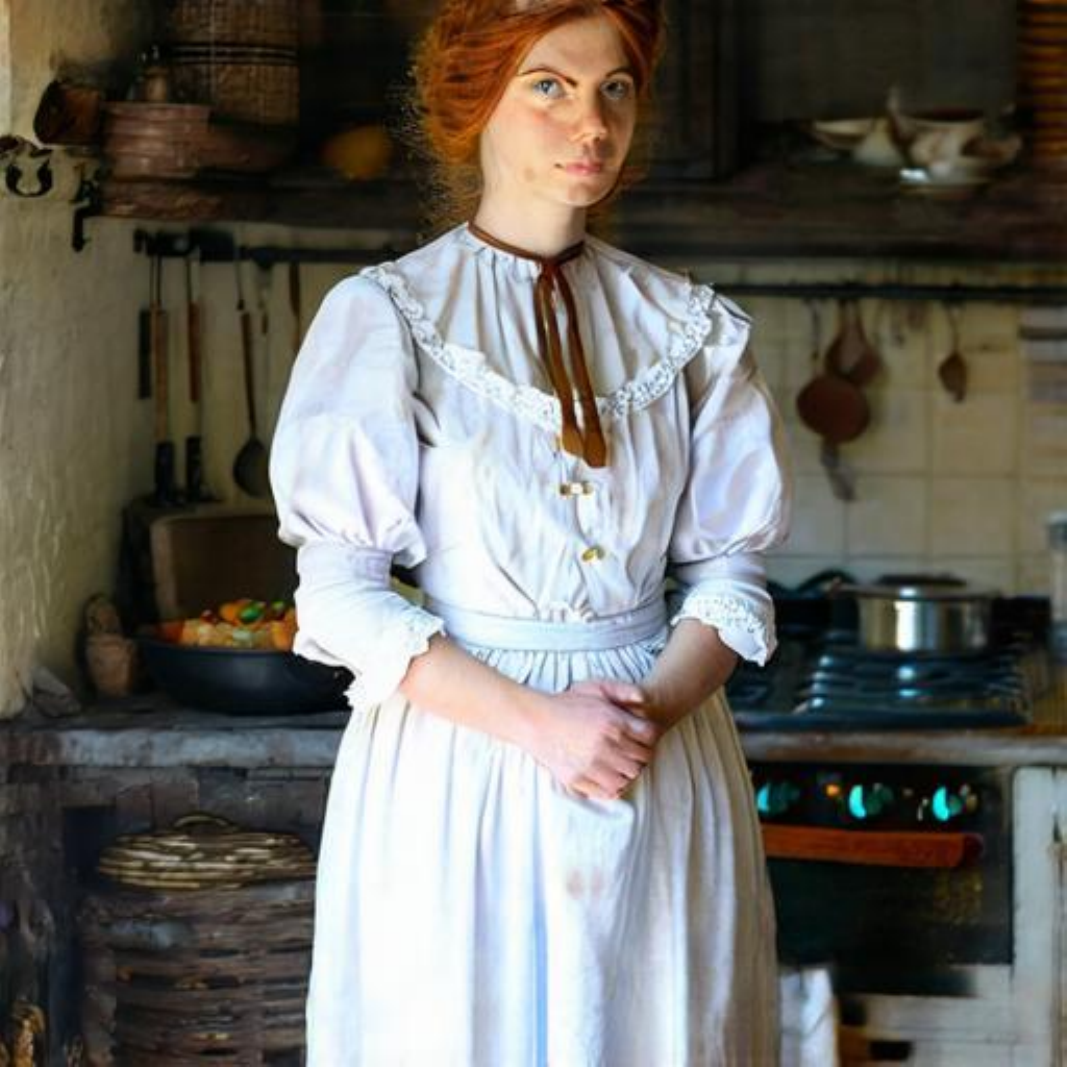}
\caption*{\texttt{``A woman dressed as a pilgrim in an old kitchen.''}}
\end{subfigure}

\begin{subfigure}{1\linewidth}
\captionsetup{justification=centering}
\includegraphics[width=0.24\textwidth]{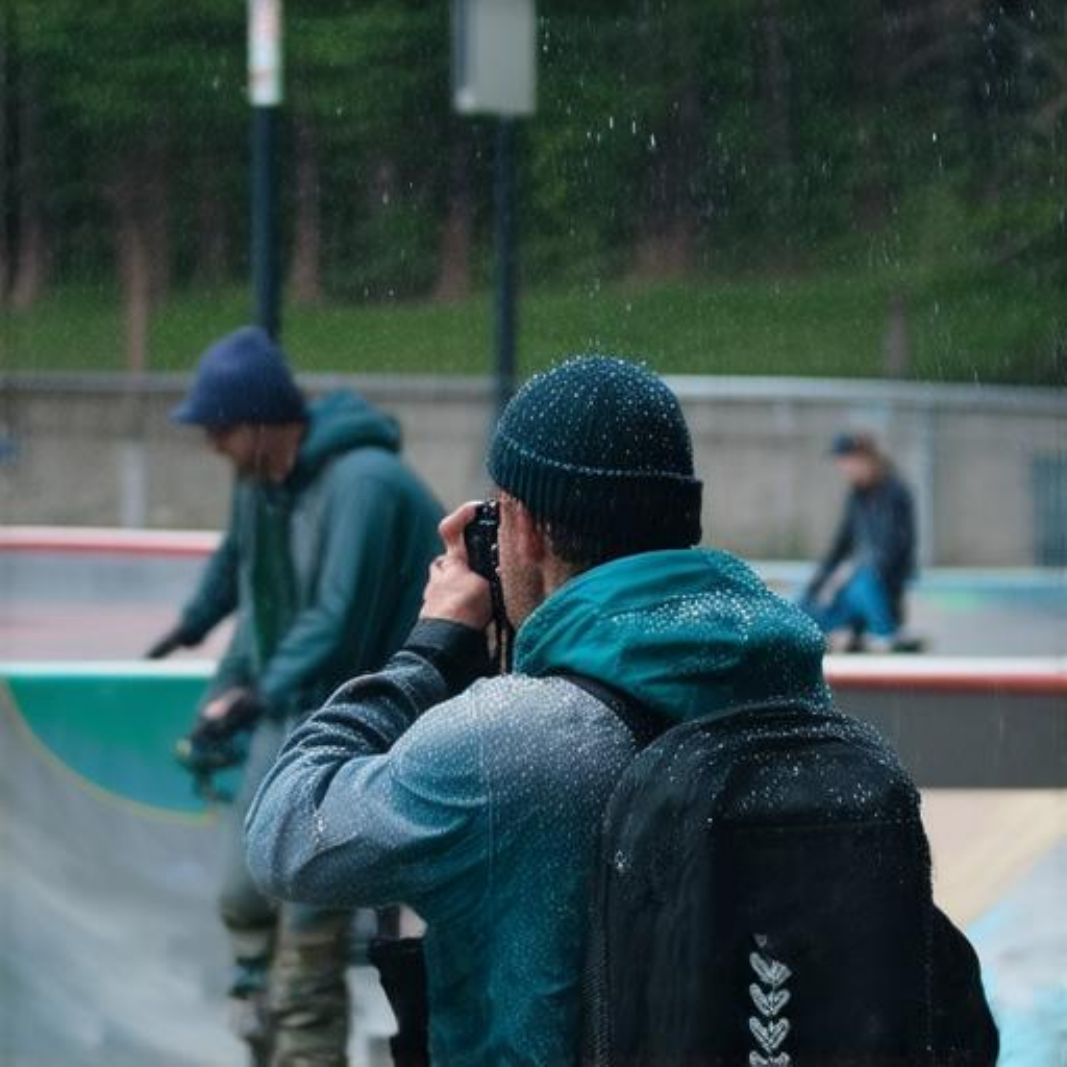}
  \includegraphics[width=0.24\textwidth]{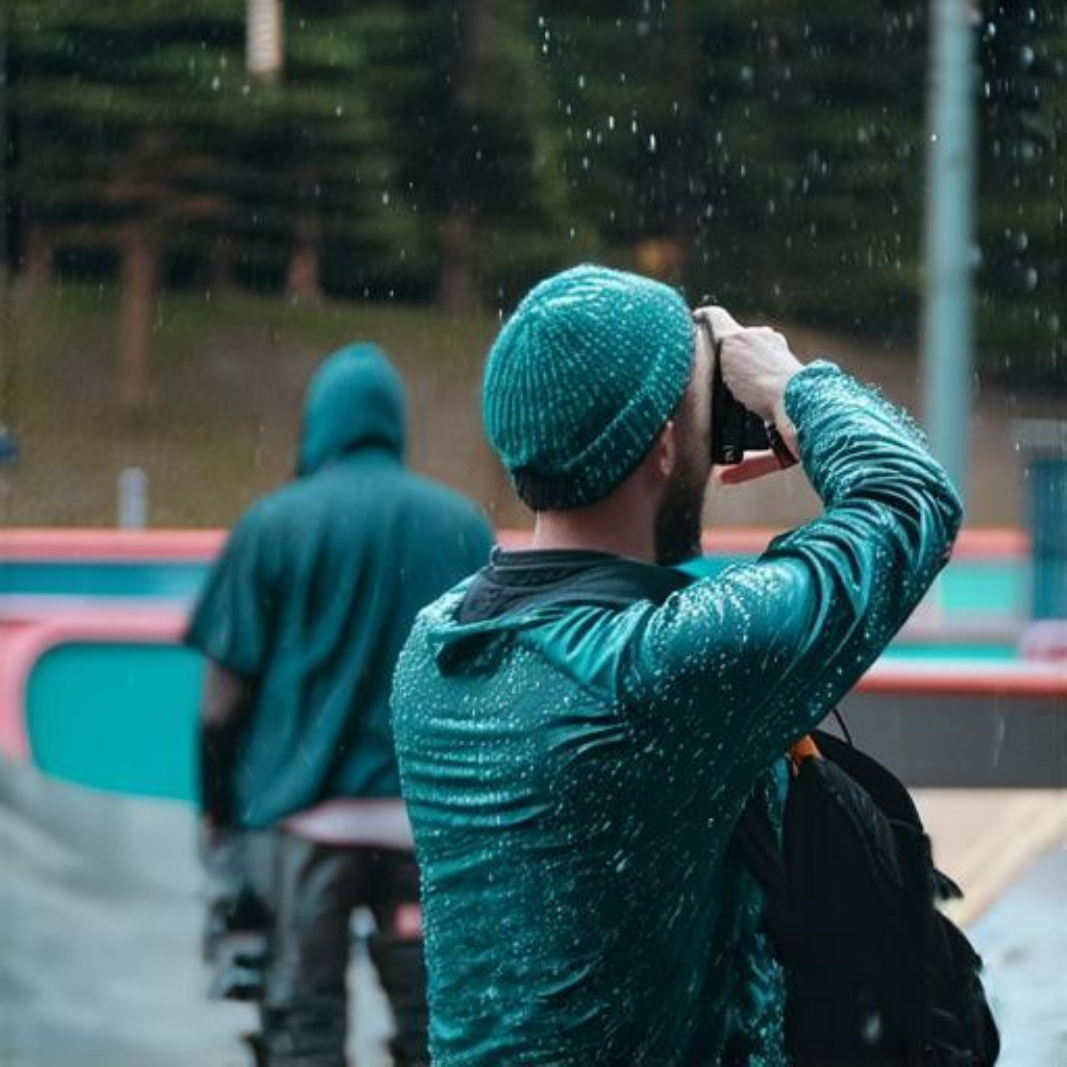}
\includegraphics[width=0.24\textwidth]{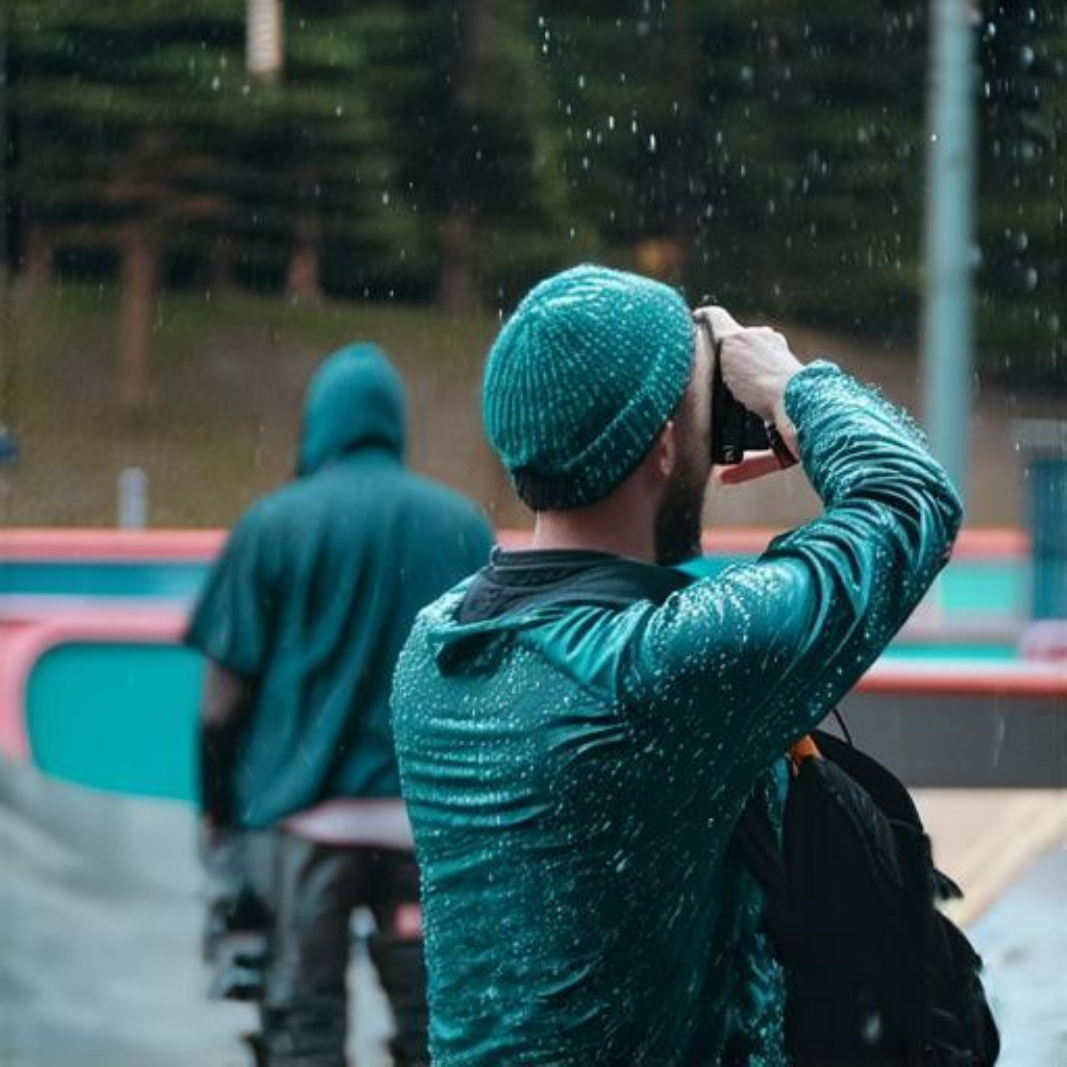}
\includegraphics[width=0.24\textwidth]{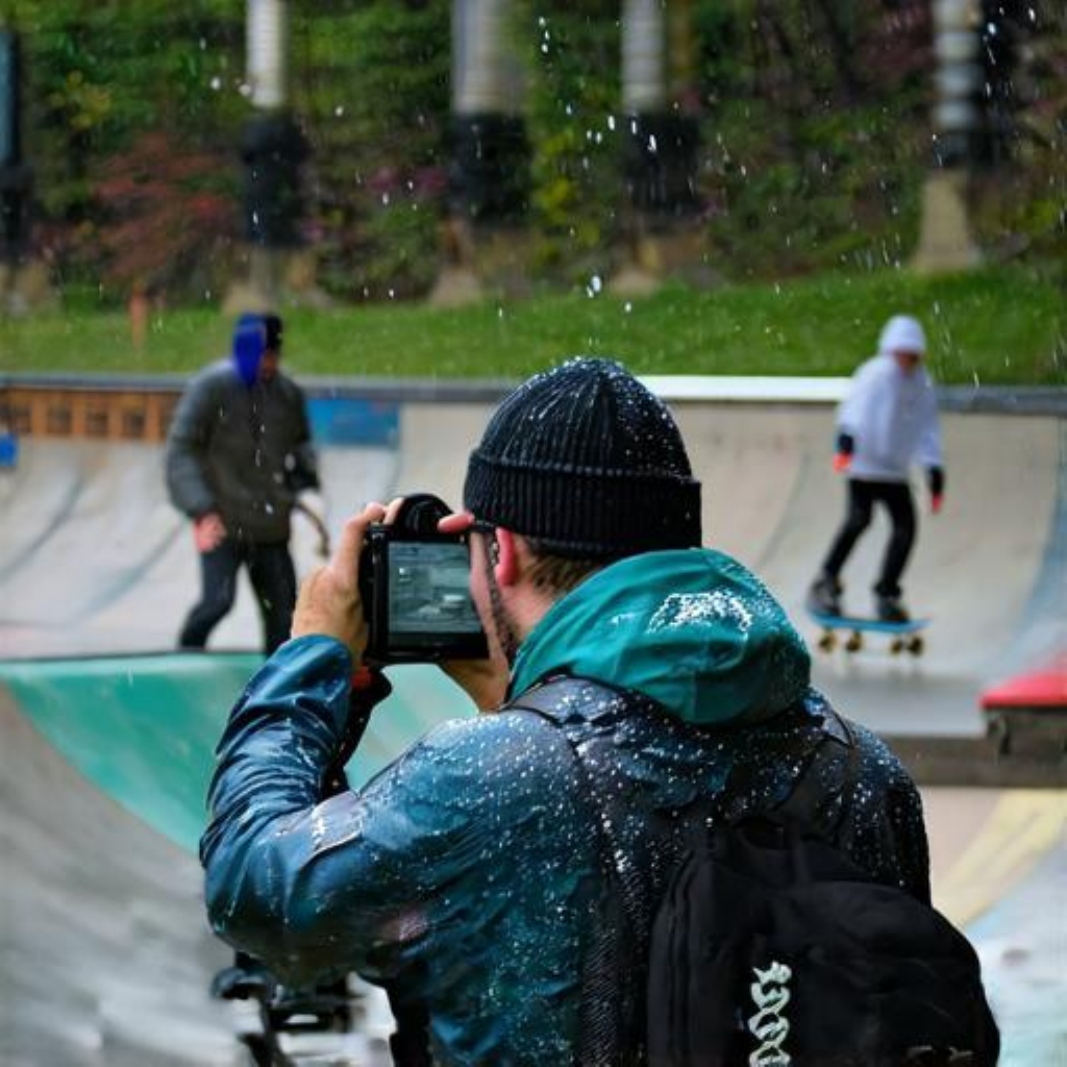}
\caption*{\texttt{``A man taking a photograph of skate boarders in the rain.''}}
\end{subfigure}

\begin{subfigure}{1\linewidth}
\captionsetup{justification=centering}
\includegraphics[width=0.24\textwidth]{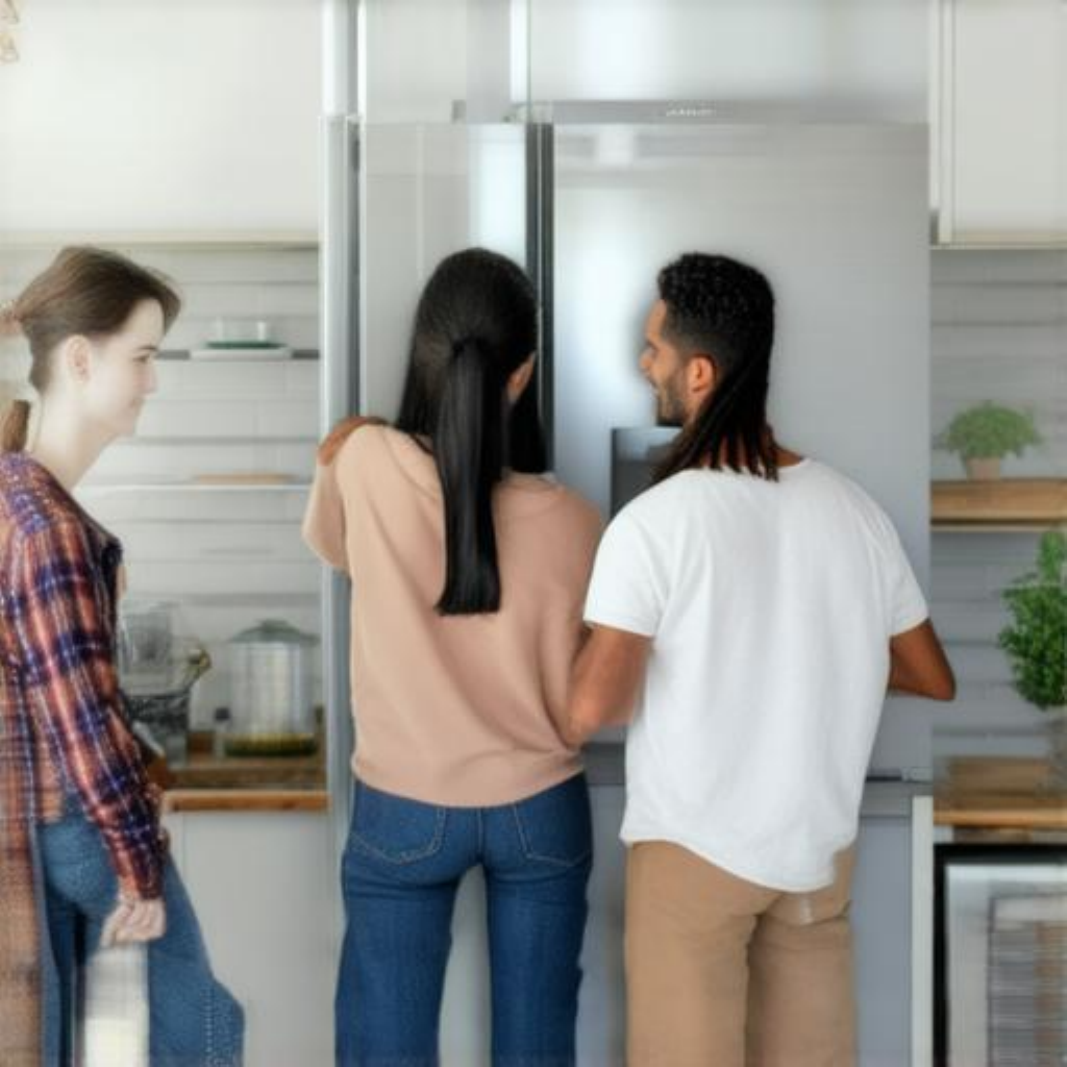}
  \includegraphics[width=0.24\textwidth]{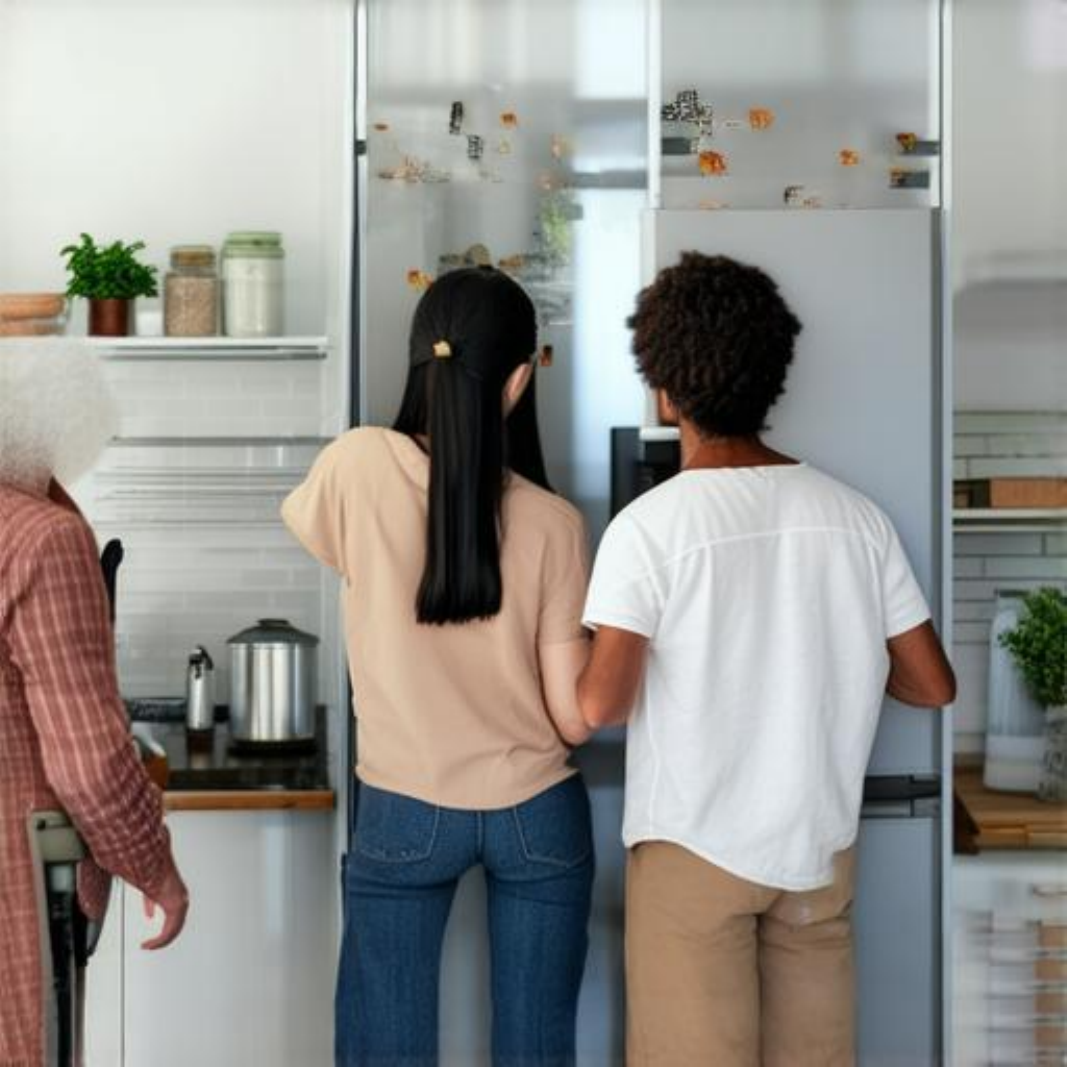}
\includegraphics[width=0.24\textwidth]{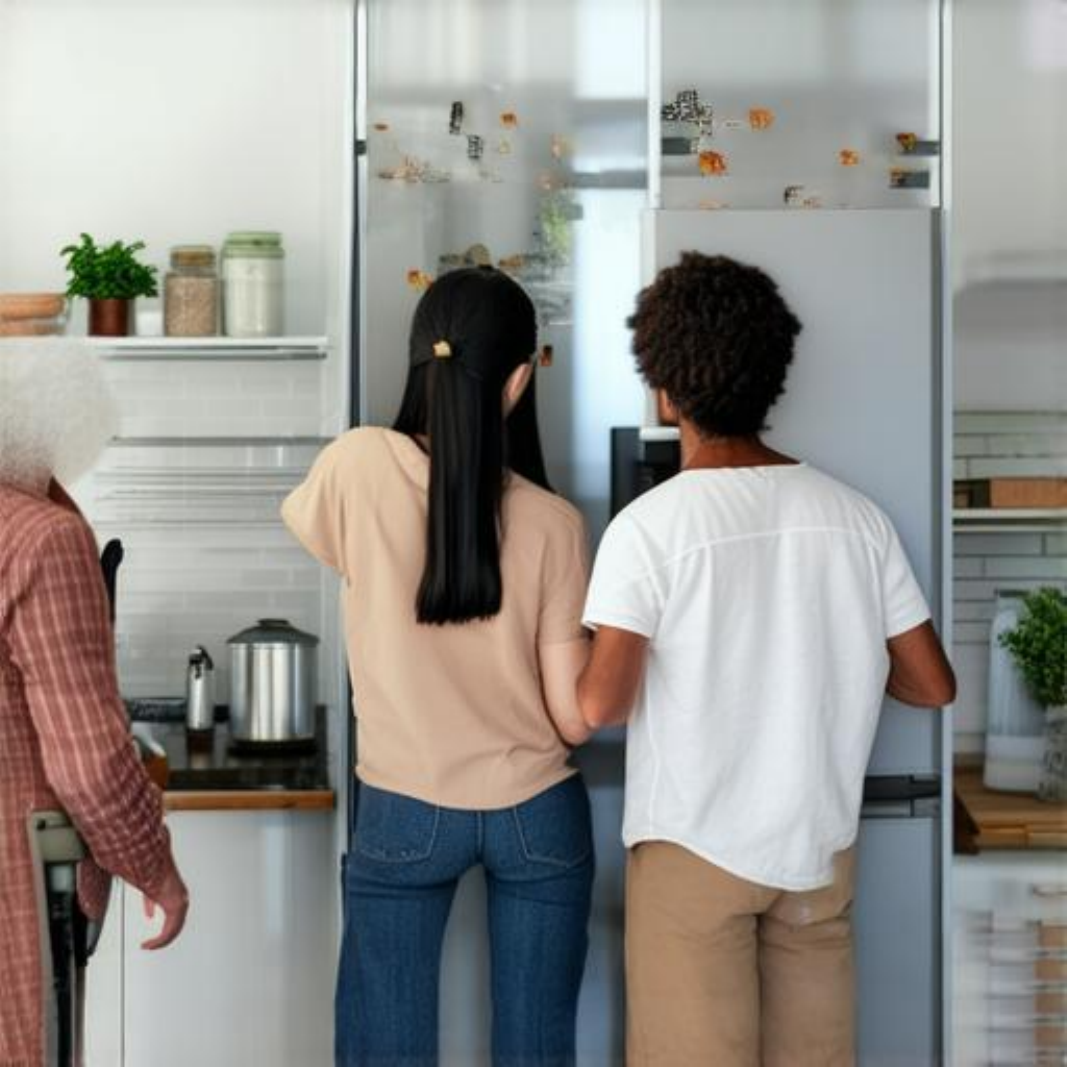}
\includegraphics[width=0.24\textwidth]{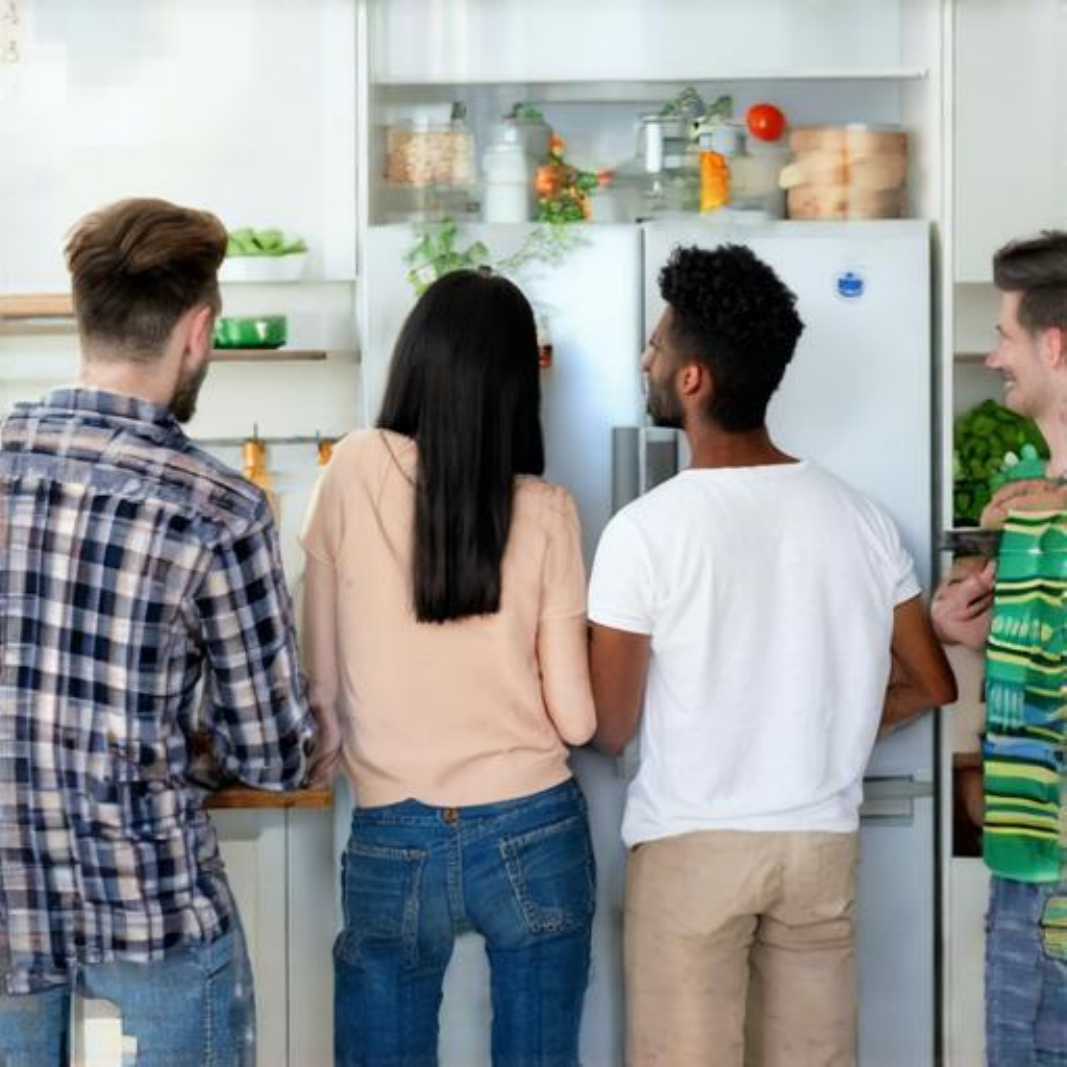}
\caption*{\texttt{``Four people gathered in the kitchen looking at a refrigerator.''}}
\end{subfigure}


\begin{subfigure}{1\linewidth}
\captionsetup{justification=centering}
\includegraphics[width=0.24\textwidth]{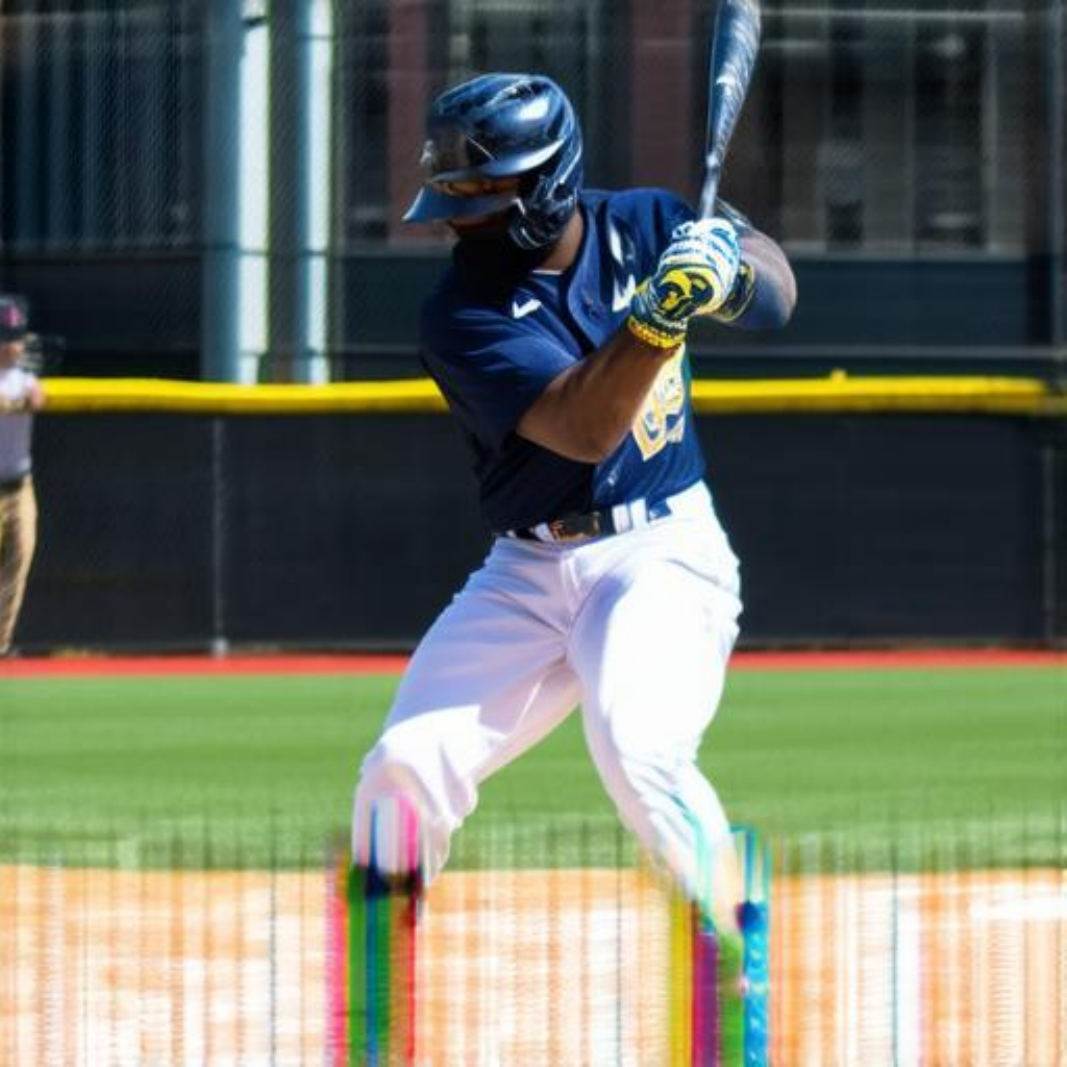}
  \includegraphics[width=0.24\textwidth]{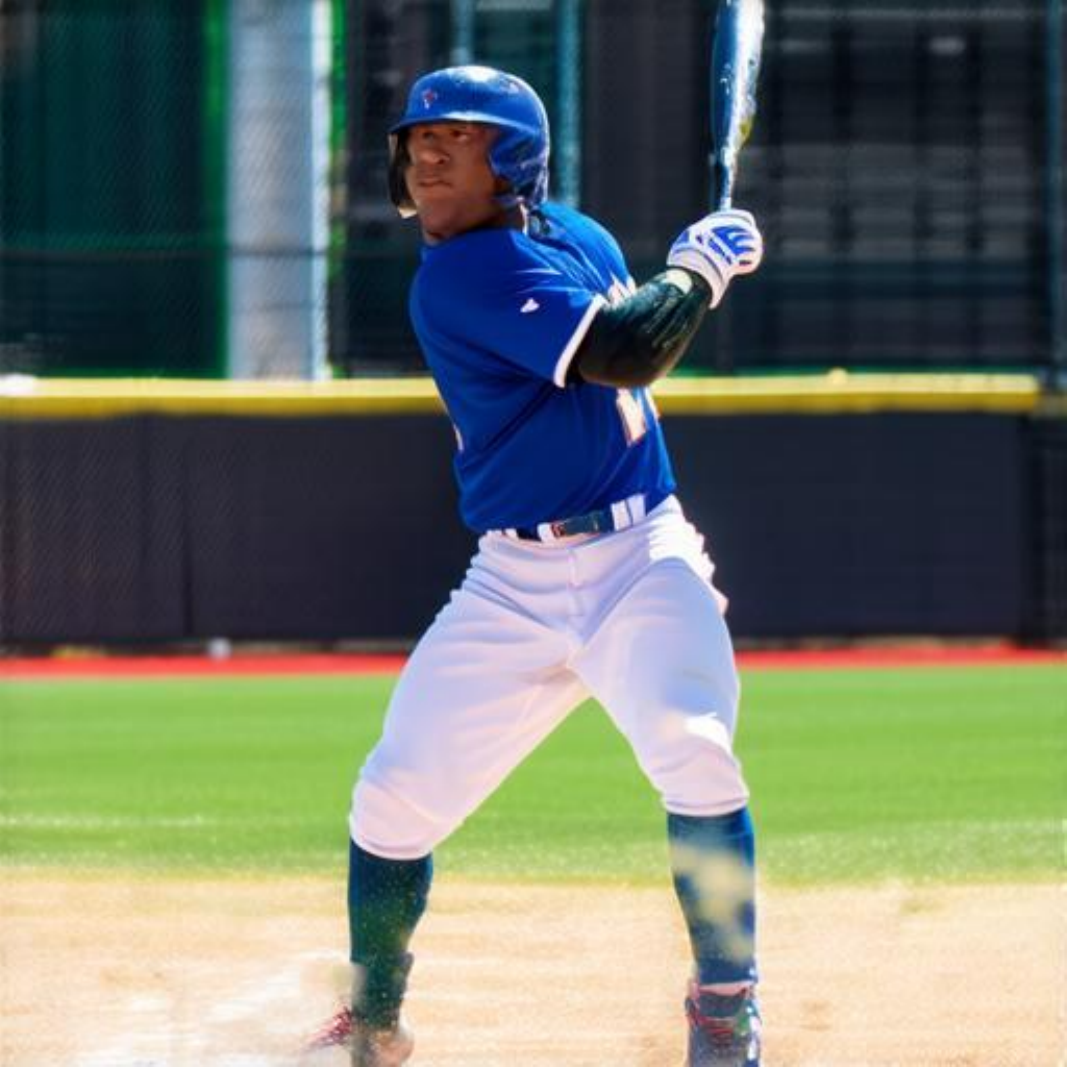}
\includegraphics[width=0.24\textwidth]{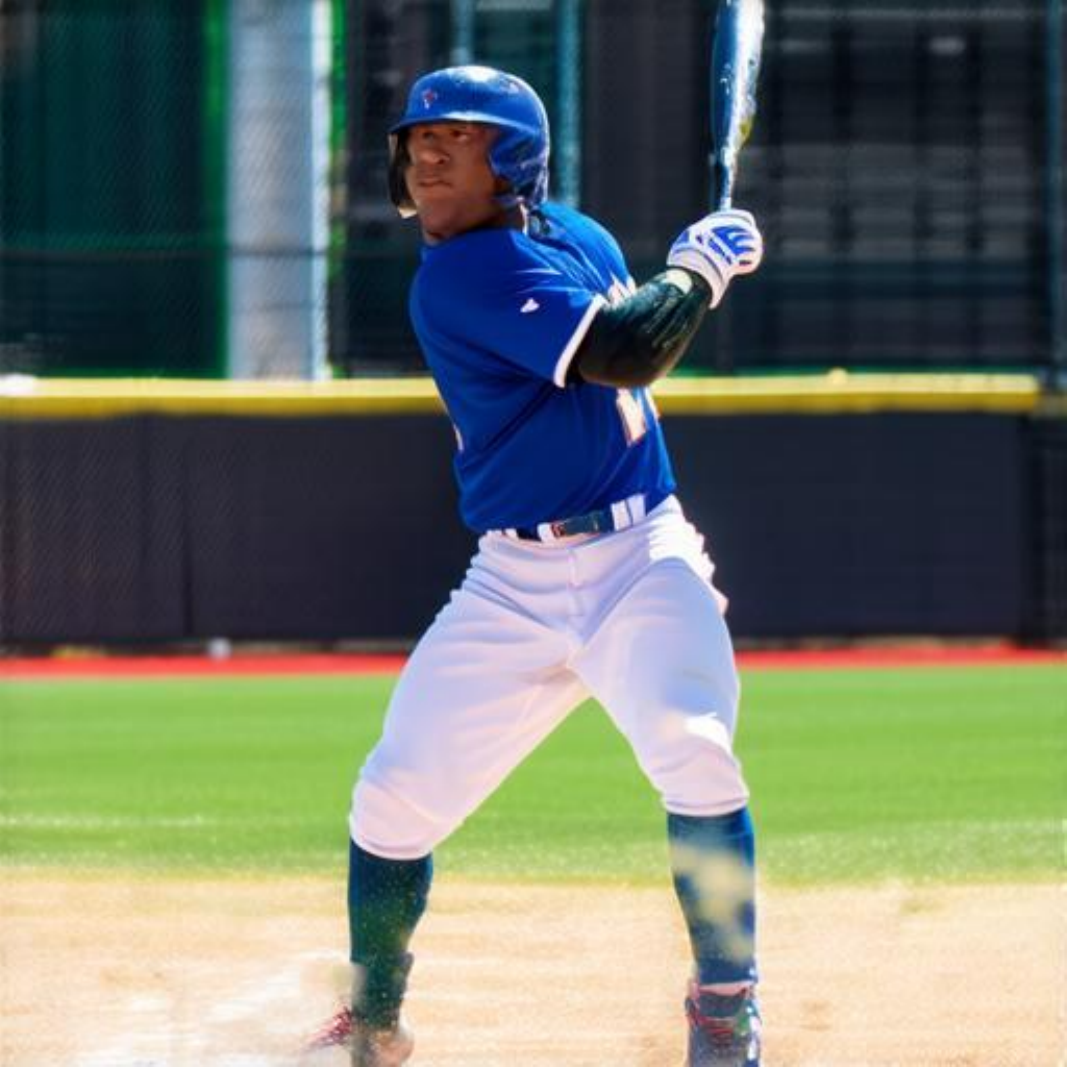}
\includegraphics[width=0.24\textwidth]{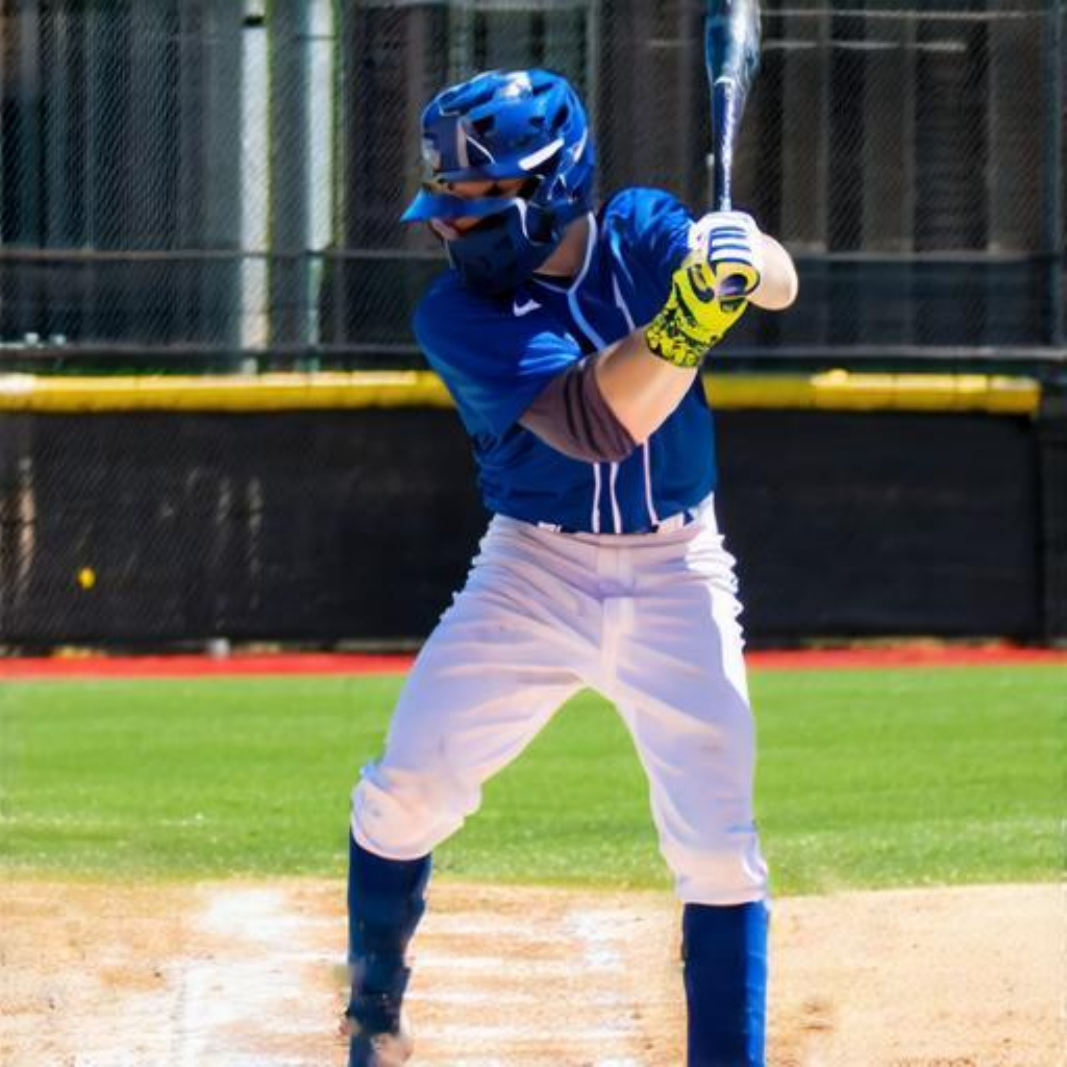}
\caption*{\texttt{``A baseball player prepares to swing at a pitch at home plate.''}}
\end{subfigure}

\begin{subfigure}{1\linewidth}
\captionsetup{justification=centering}
\includegraphics[width=0.24\textwidth]{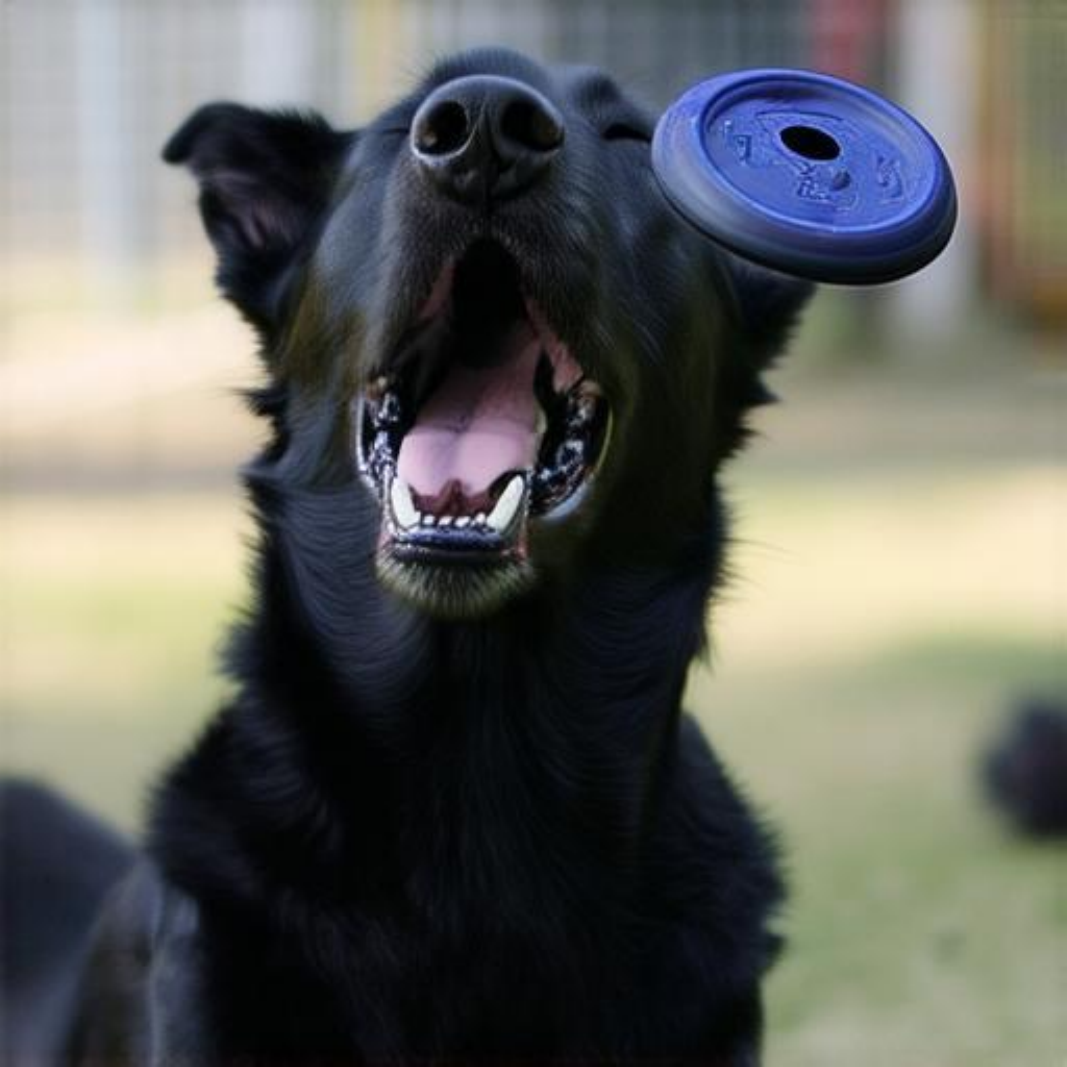}
  \includegraphics[width=0.24\textwidth]{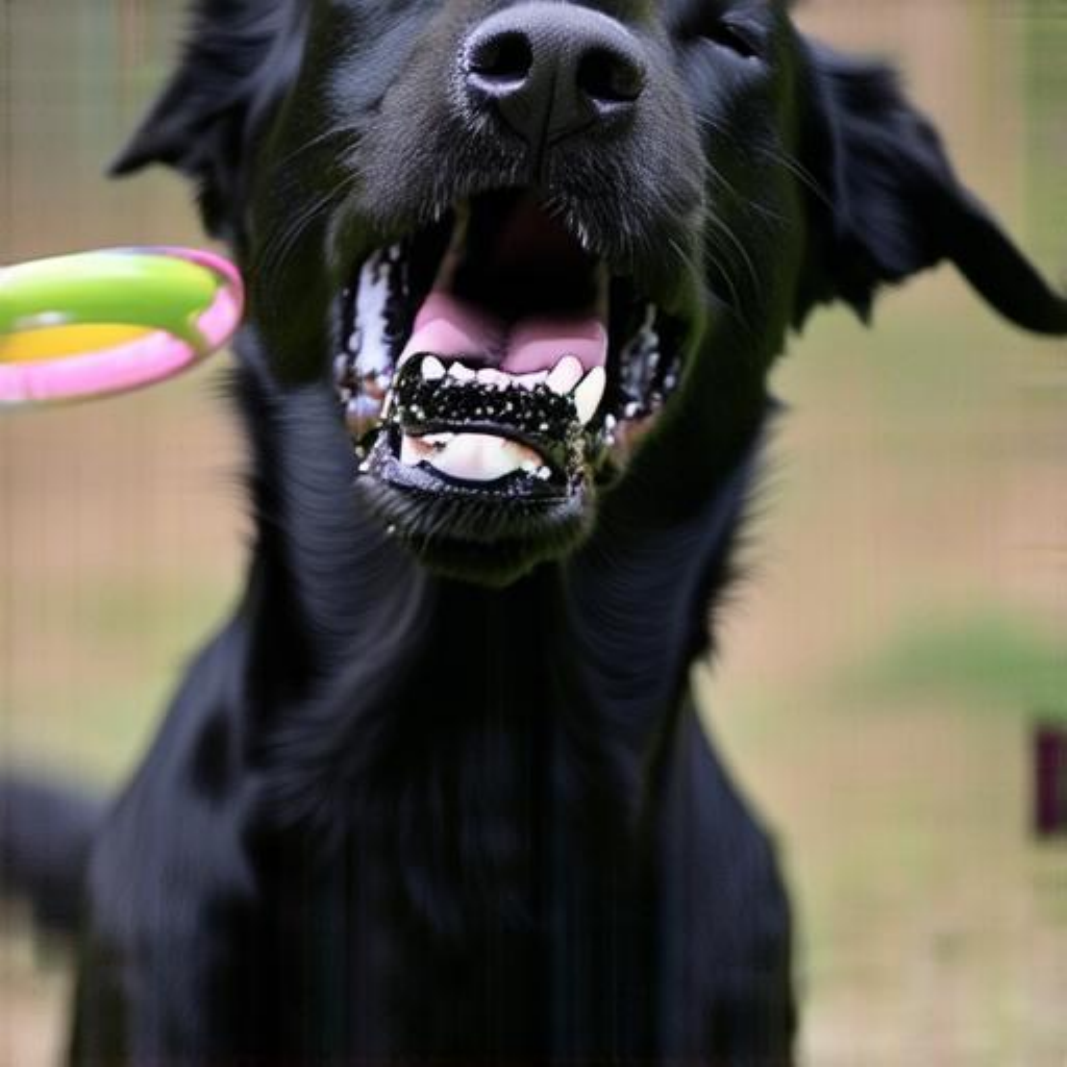}
\includegraphics[width=0.24\textwidth]{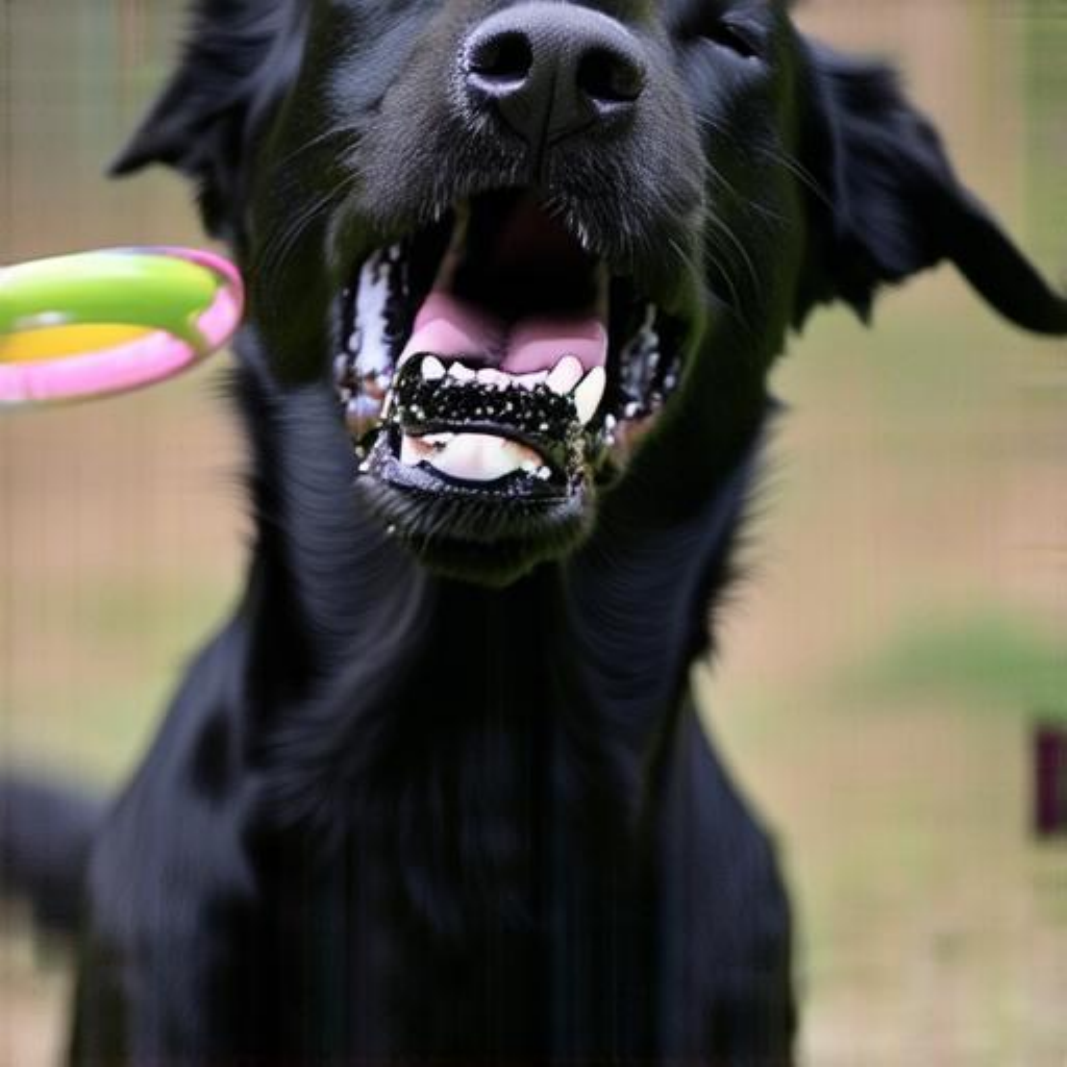}
\includegraphics[width=0.24\textwidth]{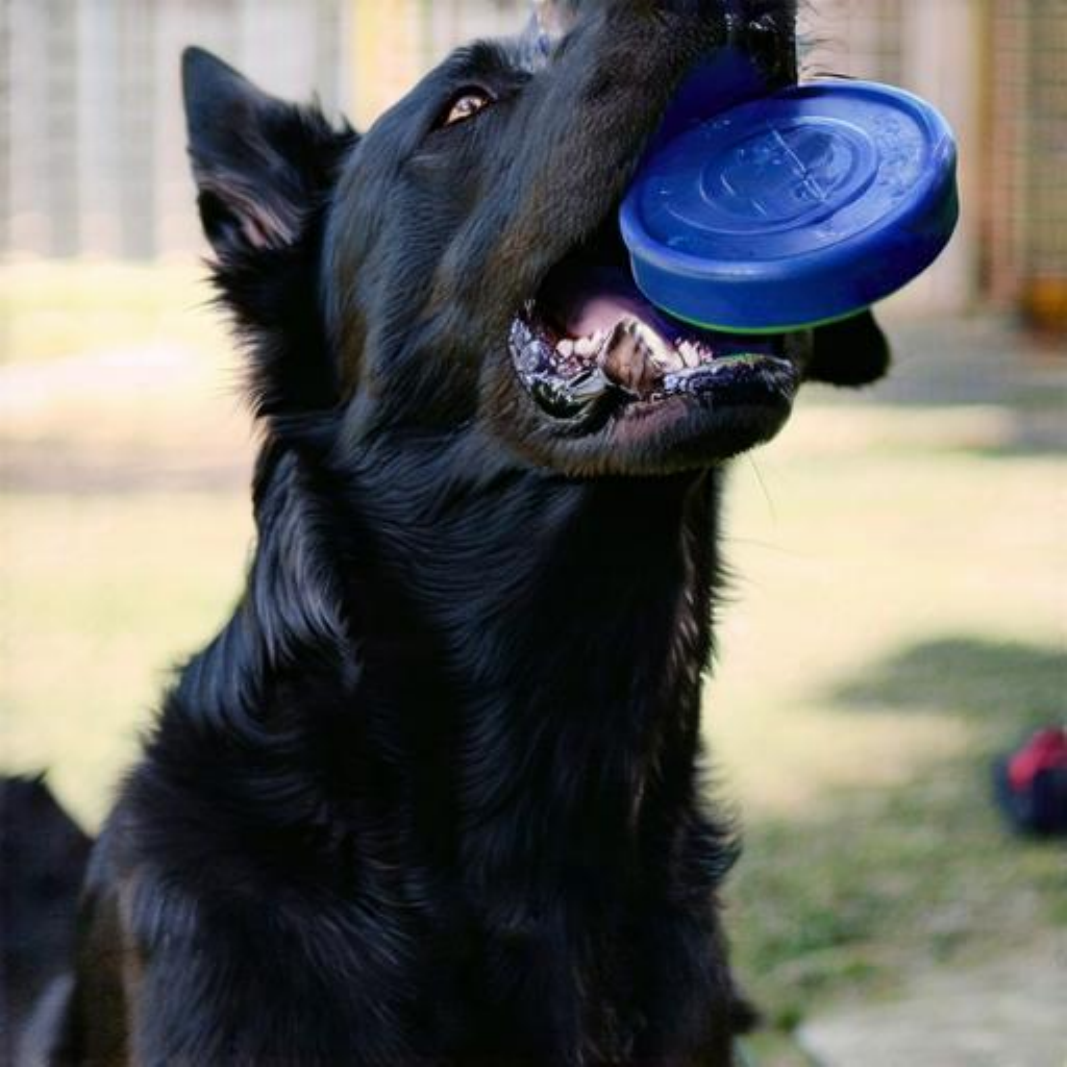}
\caption*{\texttt{``A black dog has just captured a Frisbee using its mouth.''}}
\end{subfigure}


\caption{Comparison between the Flow-Euler, Flow-DPM-Solver++~\citep{DPM-solver++,Sana}, Flow-UniPC~\citep{UniPC}, and STORK. Generated using Stable-Diffusion-3.5-Large~\citep{StableDiffusion3} with 15 NFEs. }
\vspace{-2em}
\label{fig:appendix_coco}
\end{figure}

\begin{figure}[hb!]
\vspace{-0.5em}
\centering
\begin{tabularx}{0.245\textwidth}{c}
\quad\quad Flow-Euler
\end{tabularx}
\begin{tabularx}{0.245\textwidth}{c}
Flow-DPM-Solver++
\end{tabularx}
\begin{tabularx}{0.245\textwidth}{c}
\quad\, Flow-UniPC
\end{tabularx}
\begin{tabularx}{0.245\textwidth}{c}
\quad \textbf{STORK (Ours)}
\end{tabularx}
\newline
\vspace{-0.1in}

\begin{subfigure}{1\linewidth}
\captionsetup{justification=centering}
\includegraphics[width=0.24\textwidth]{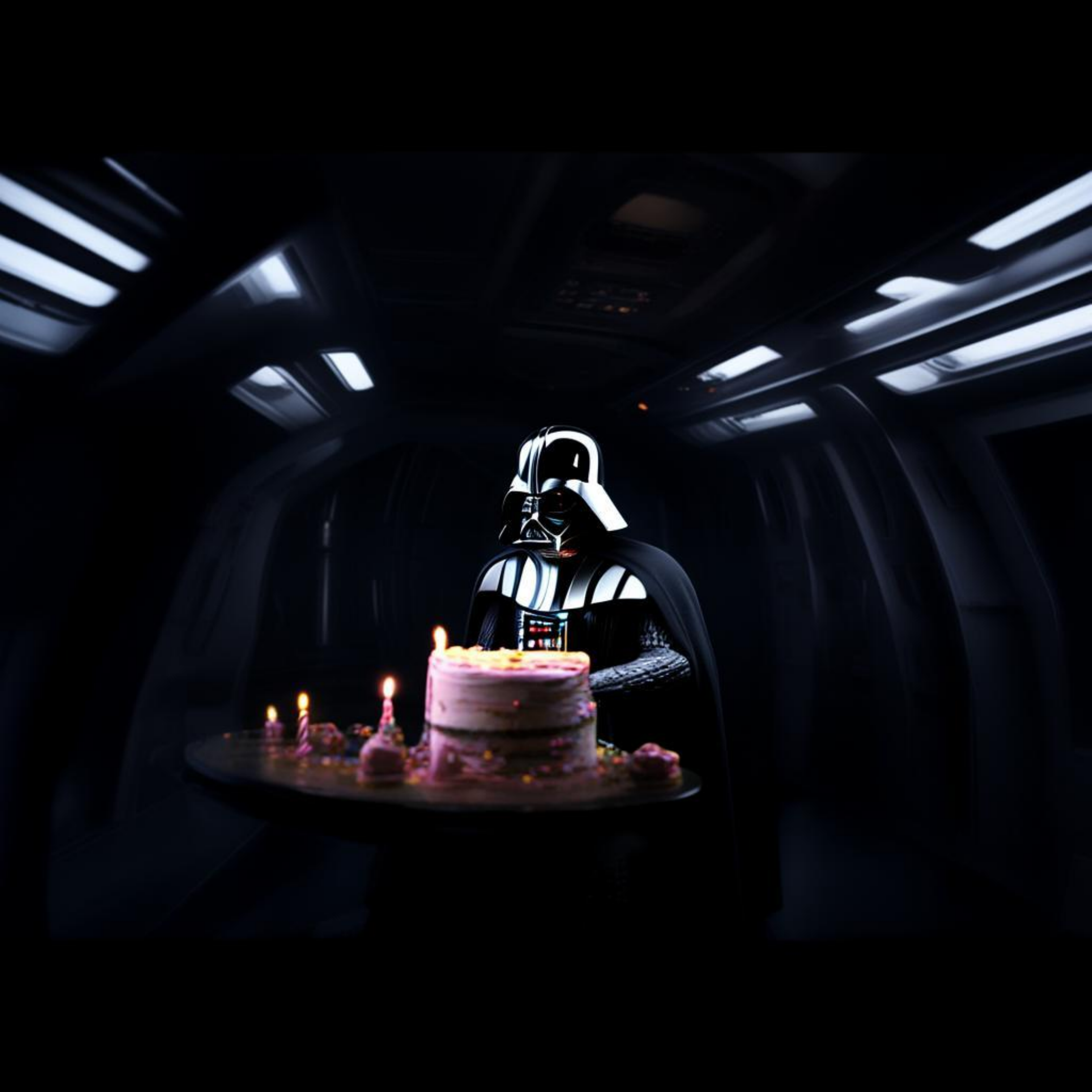}
\includegraphics[width=0.24\textwidth]{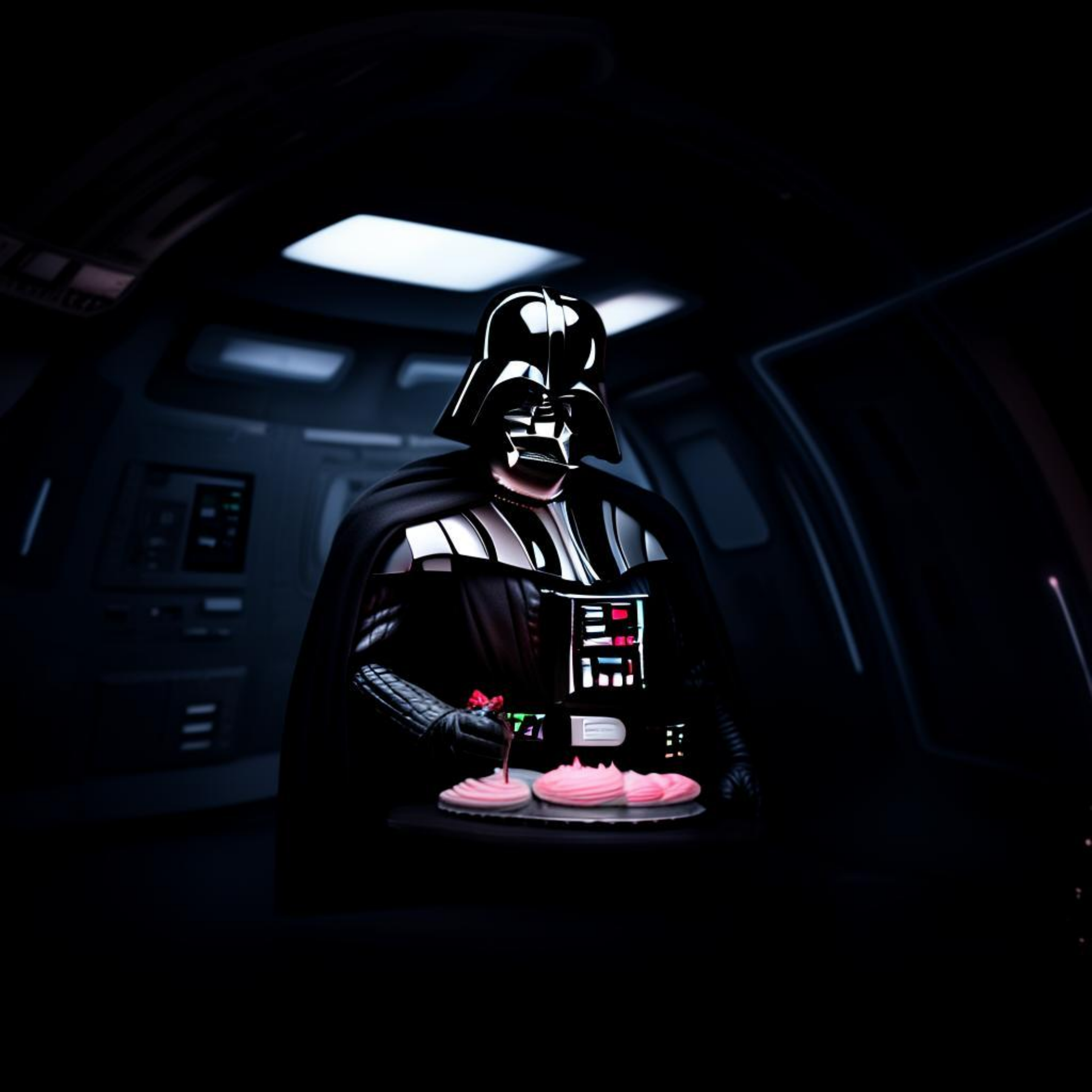}
\includegraphics[width=0.24\textwidth]{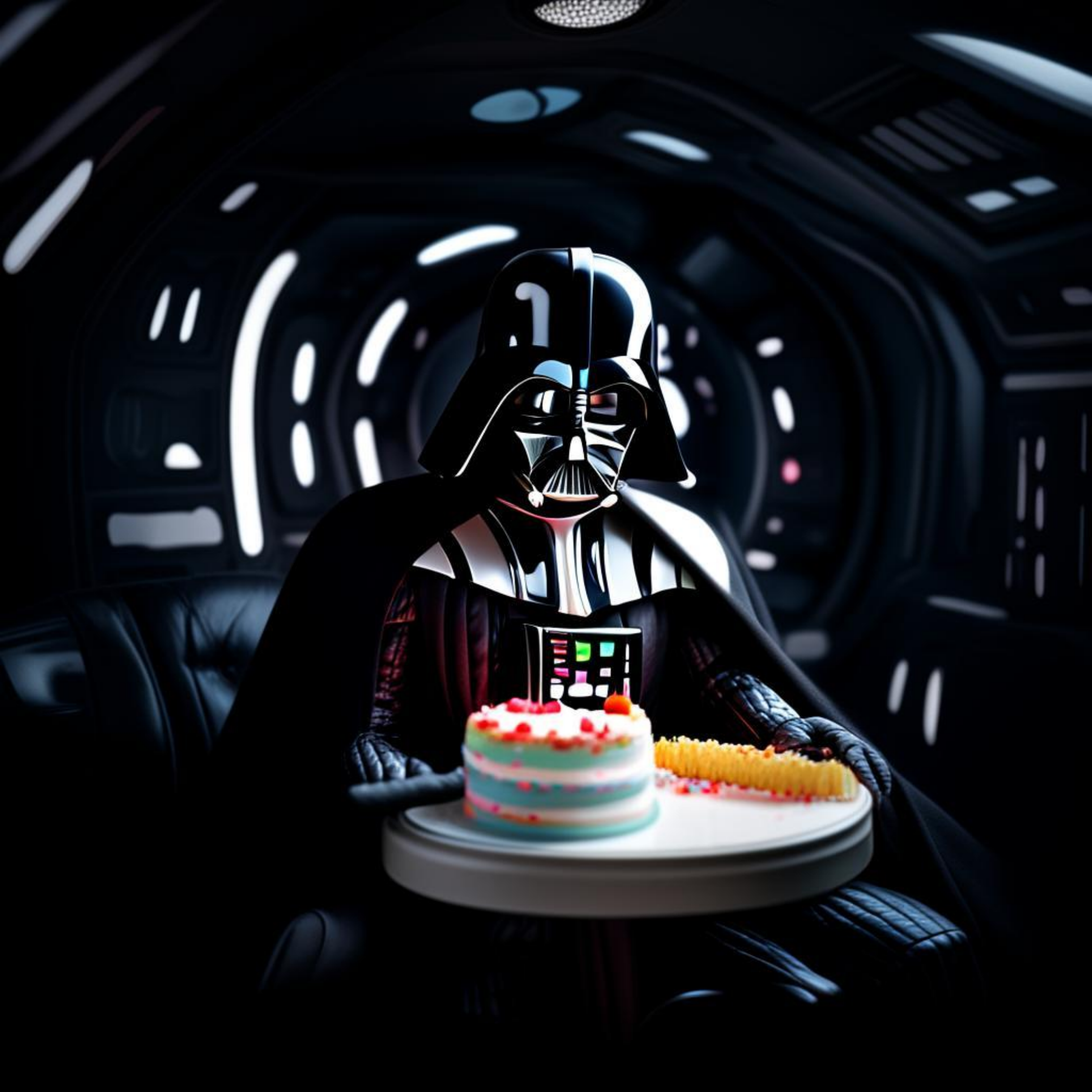}
\includegraphics[width=0.24\textwidth]{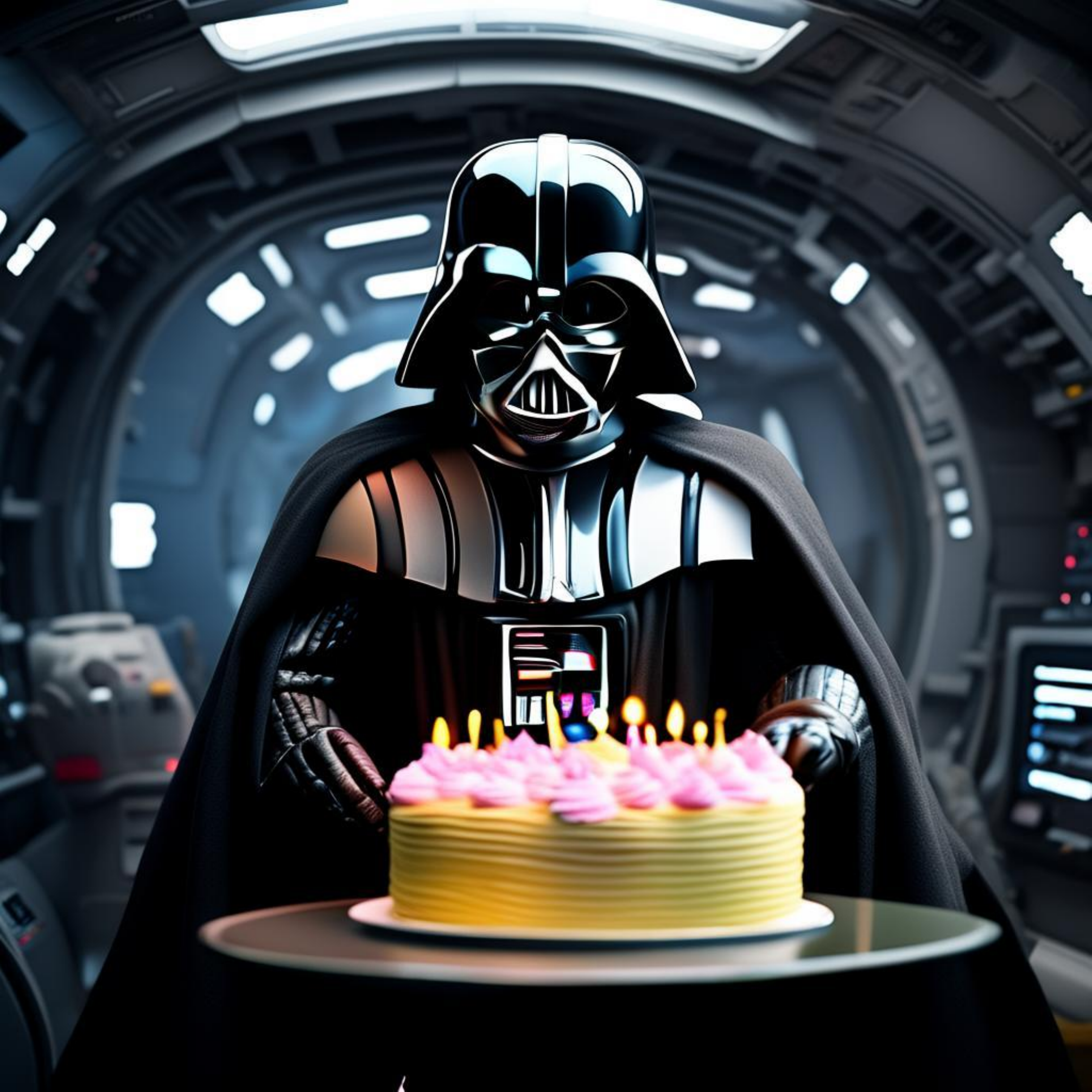}
\caption*{\texttt{``Darth Vader eating birthday cake in star destroyer interior, cinematic, 50mm''}. Our image has clearer and lighter background, while other methods have darker backgrounds.}
\label{subfig:darth_vader}
\end{subfigure}

\begin{subfigure}{1\linewidth}
\captionsetup{justification=centering}
\includegraphics[width=0.24\textwidth]{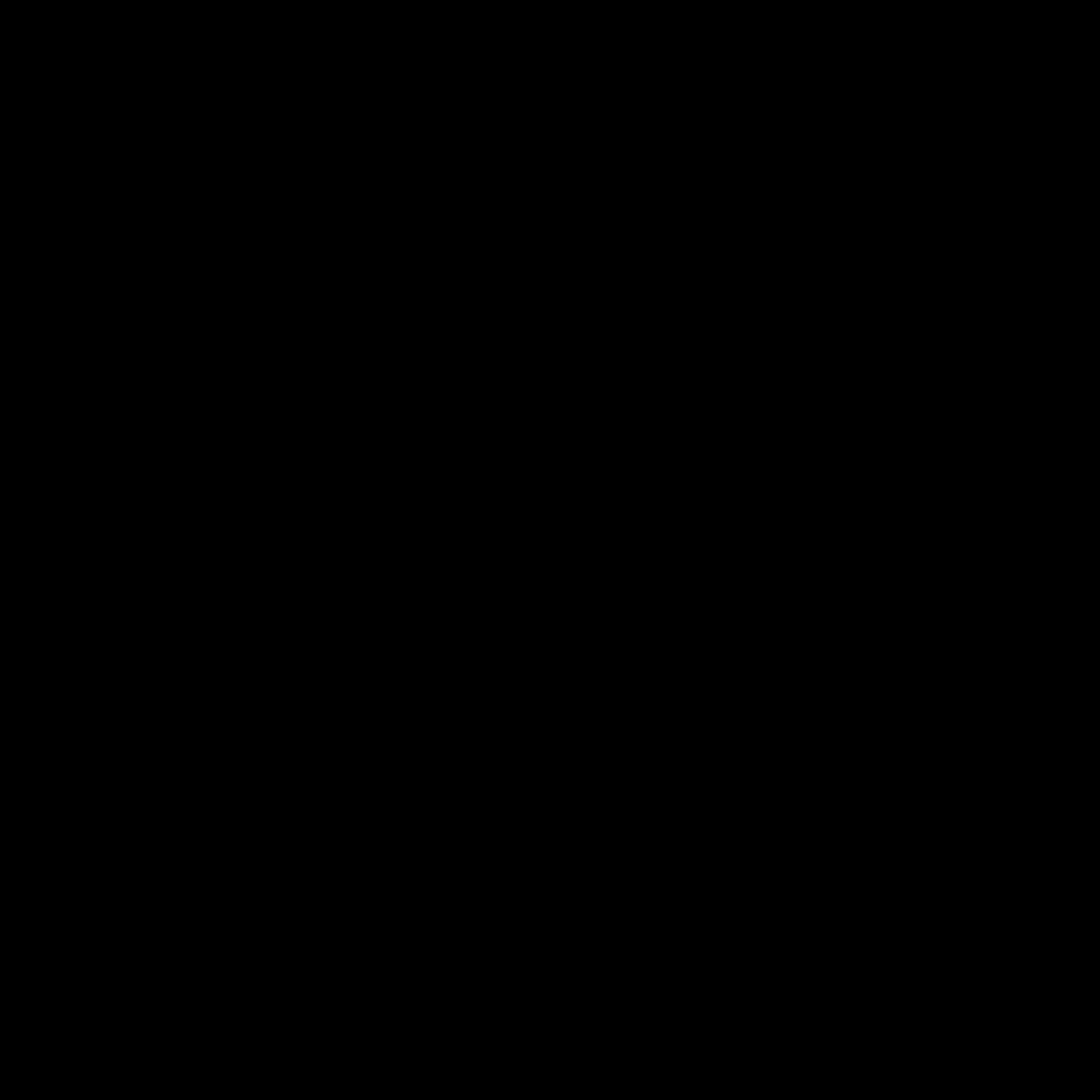}
\includegraphics[width=0.24\textwidth]{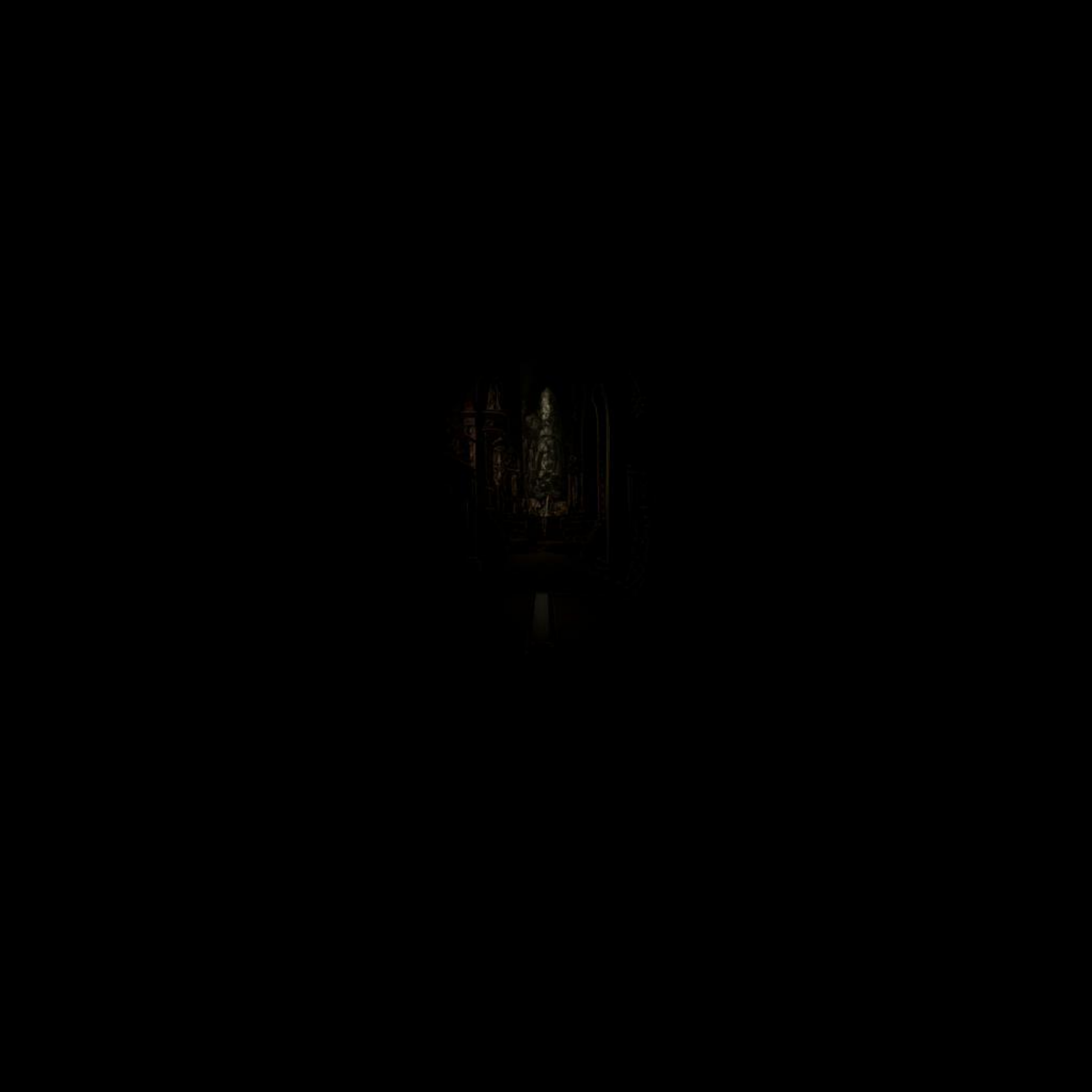}
\includegraphics[width=0.24\textwidth]{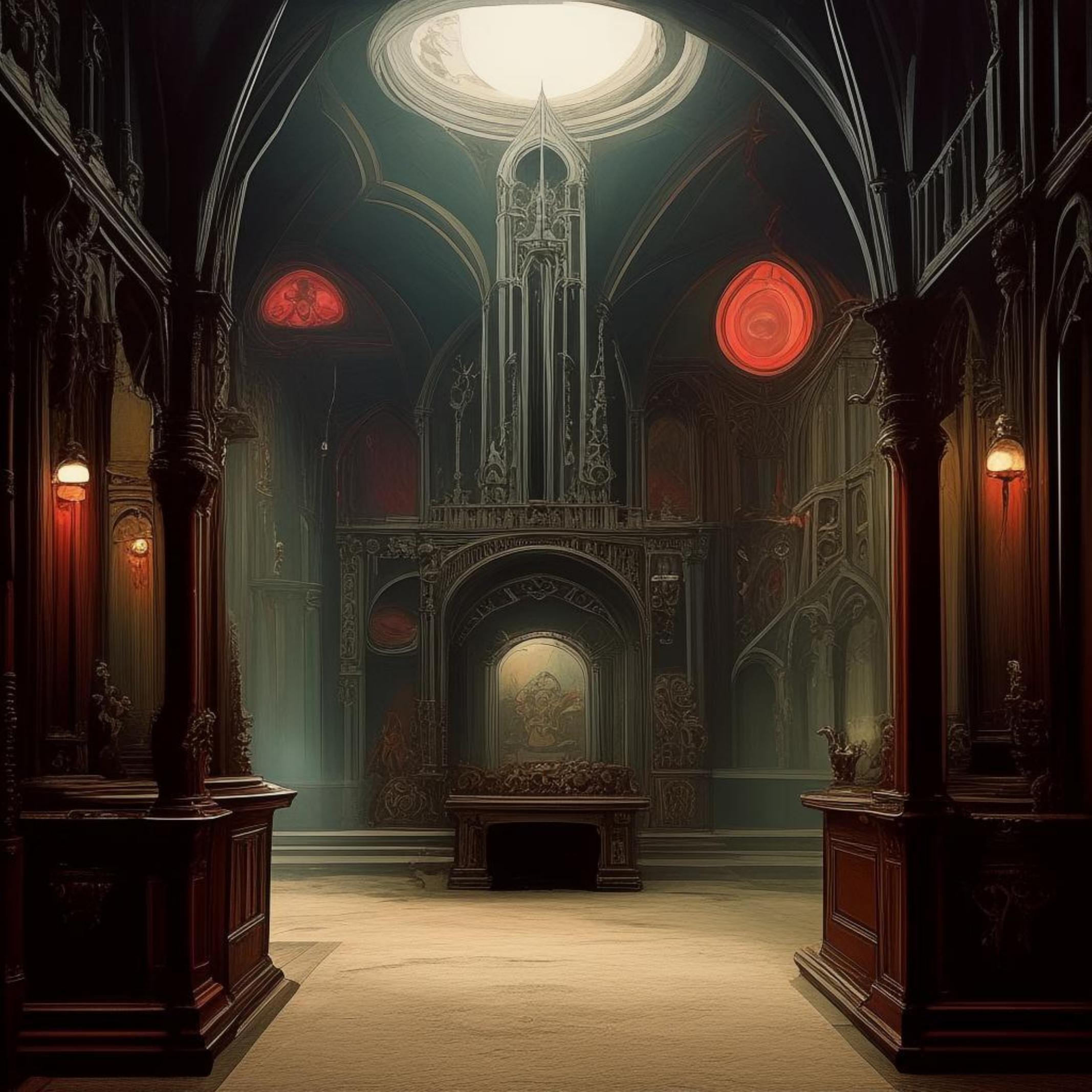}
\includegraphics[width=0.24\textwidth]{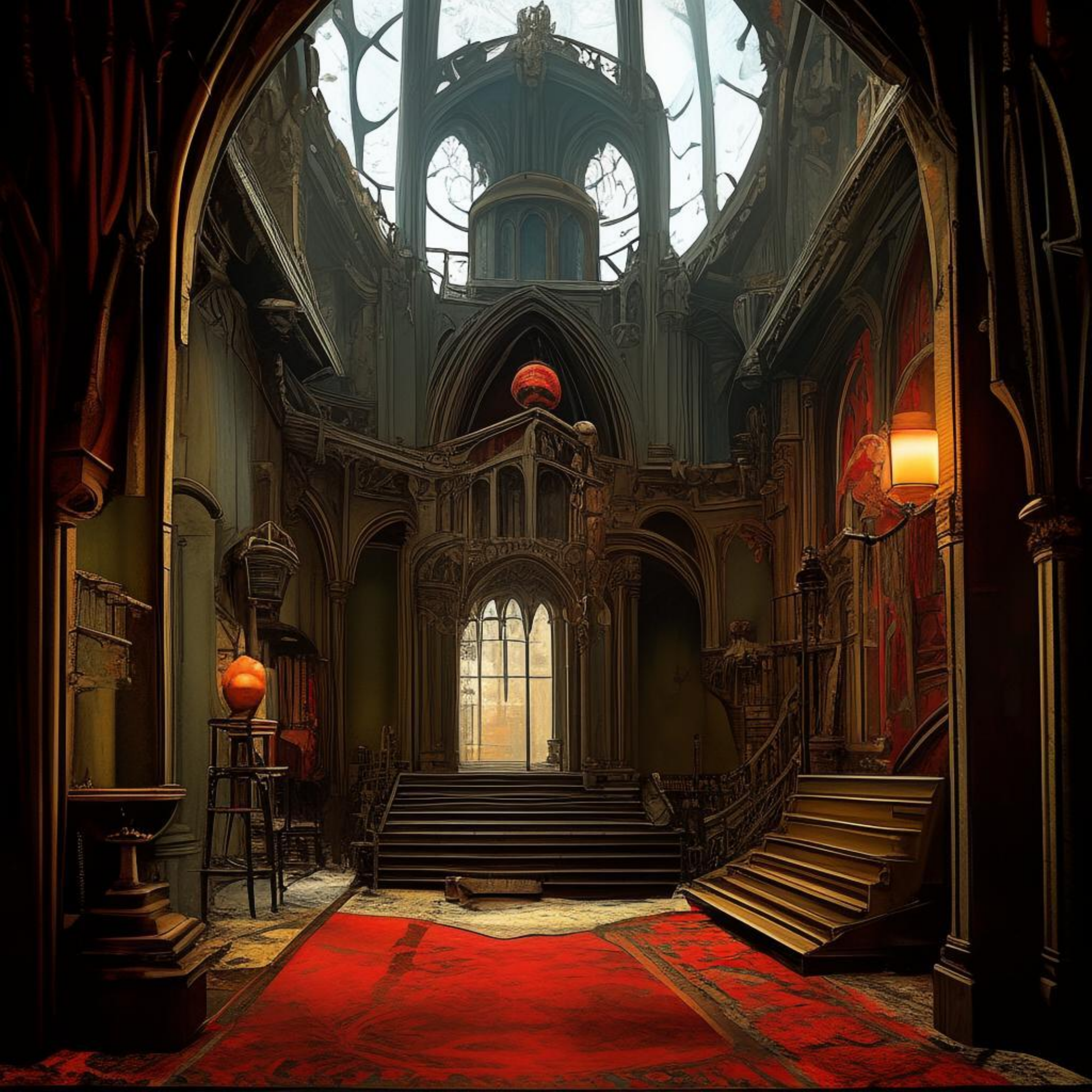}
\caption*{\texttt{``1920s fantasy interior architecture rebel art, hammer 40K universe, gothic painting''}. The flow-Euler and flow-DPM-Solver++ fail to generate a decent image, while the flow-UniPC method has worse visualization quality than the STORK method.}
\label{subfig:church}
\end{subfigure}

\begin{subfigure}{1\linewidth}
\captionsetup{justification=centering}
\includegraphics[width=0.24\textwidth]{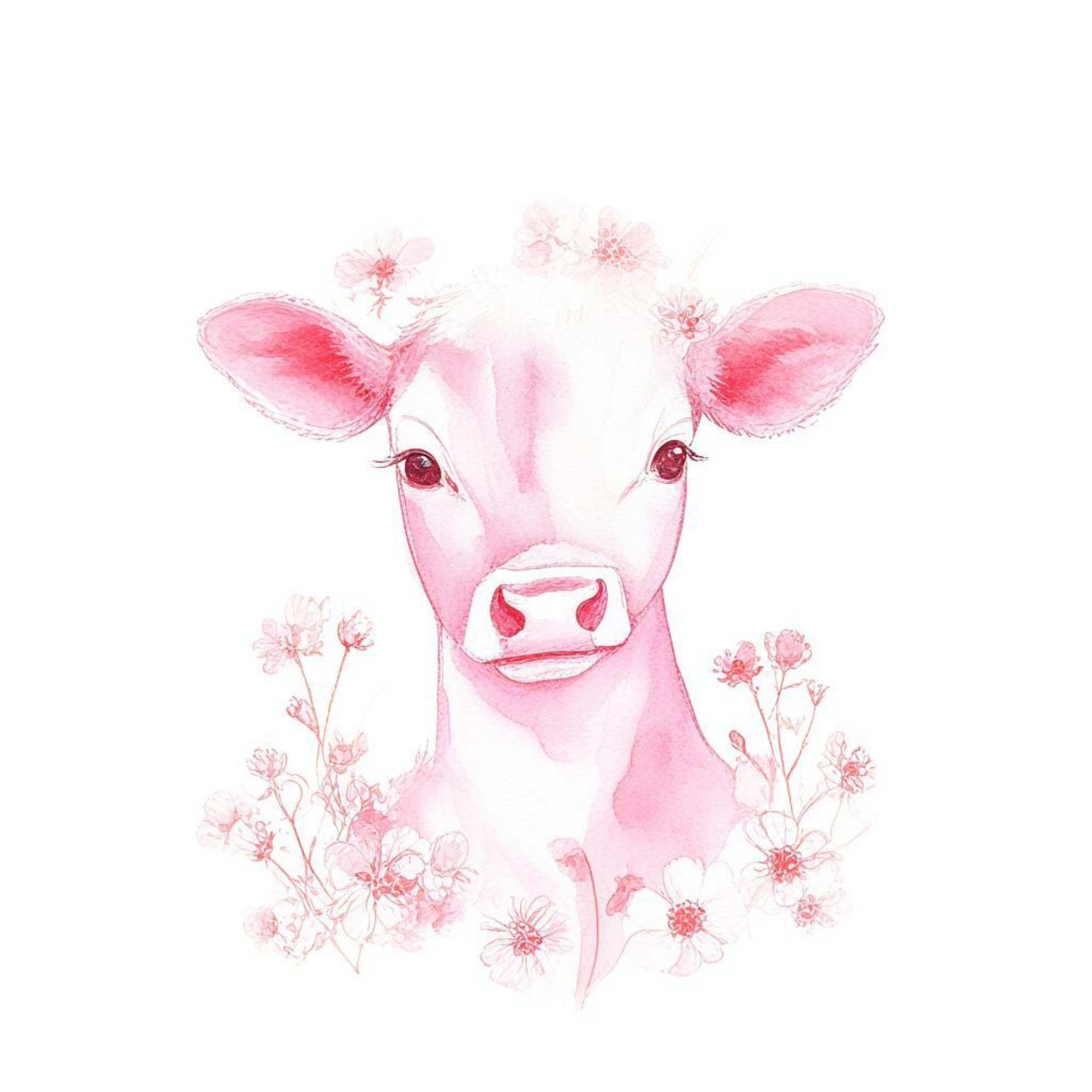}
\includegraphics[width=0.24\textwidth]{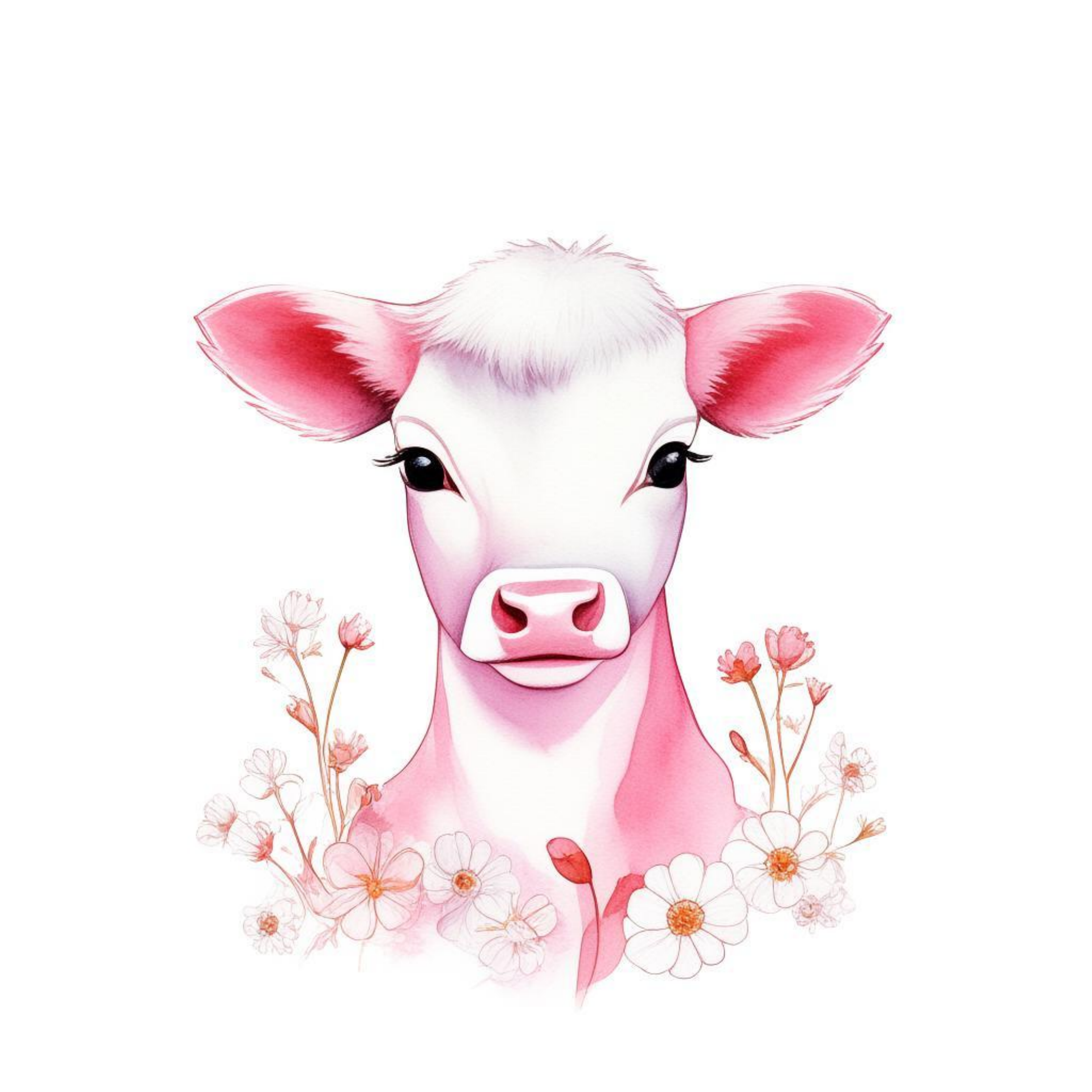}
\includegraphics[width=0.24\textwidth]{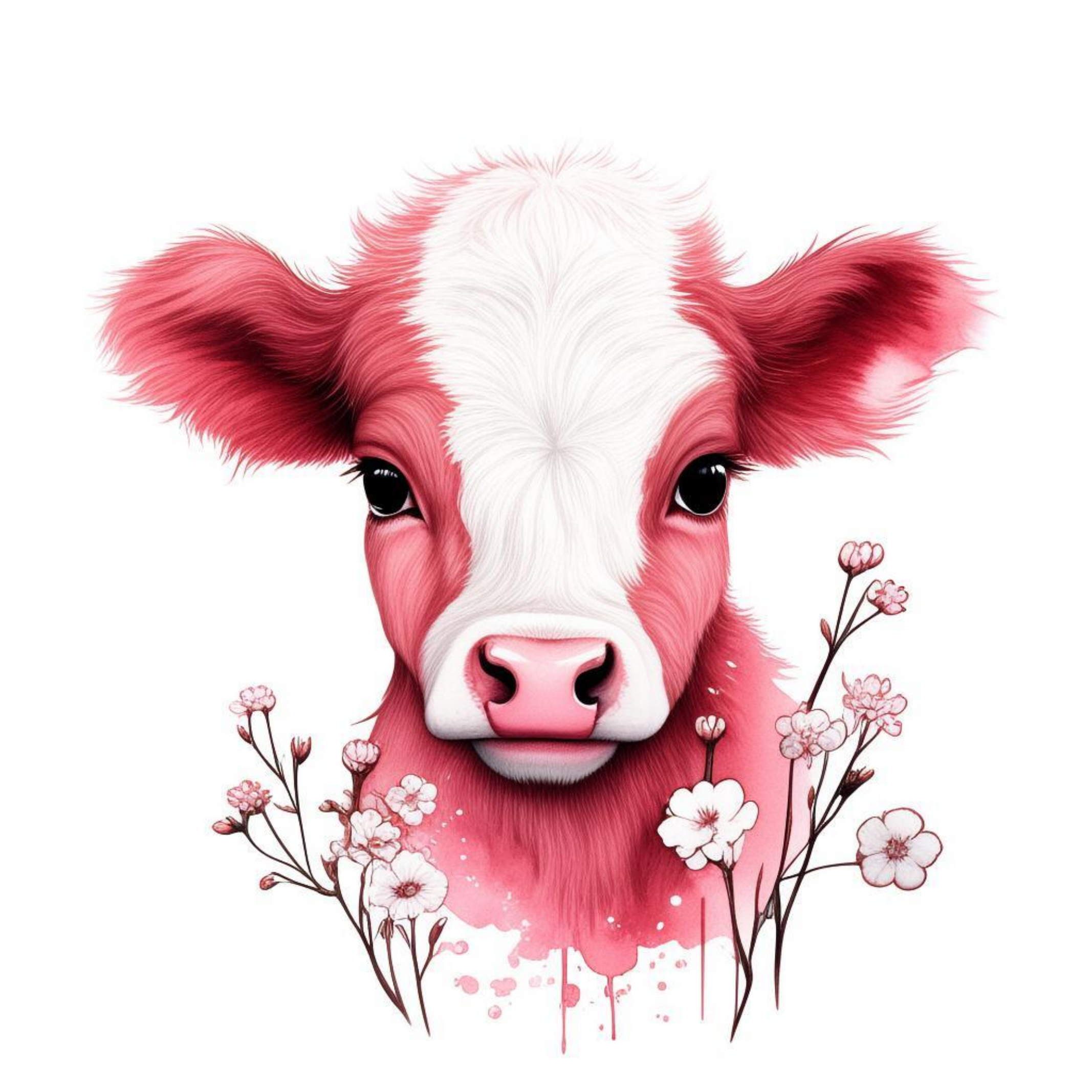}
\includegraphics[width=0.24\textwidth]{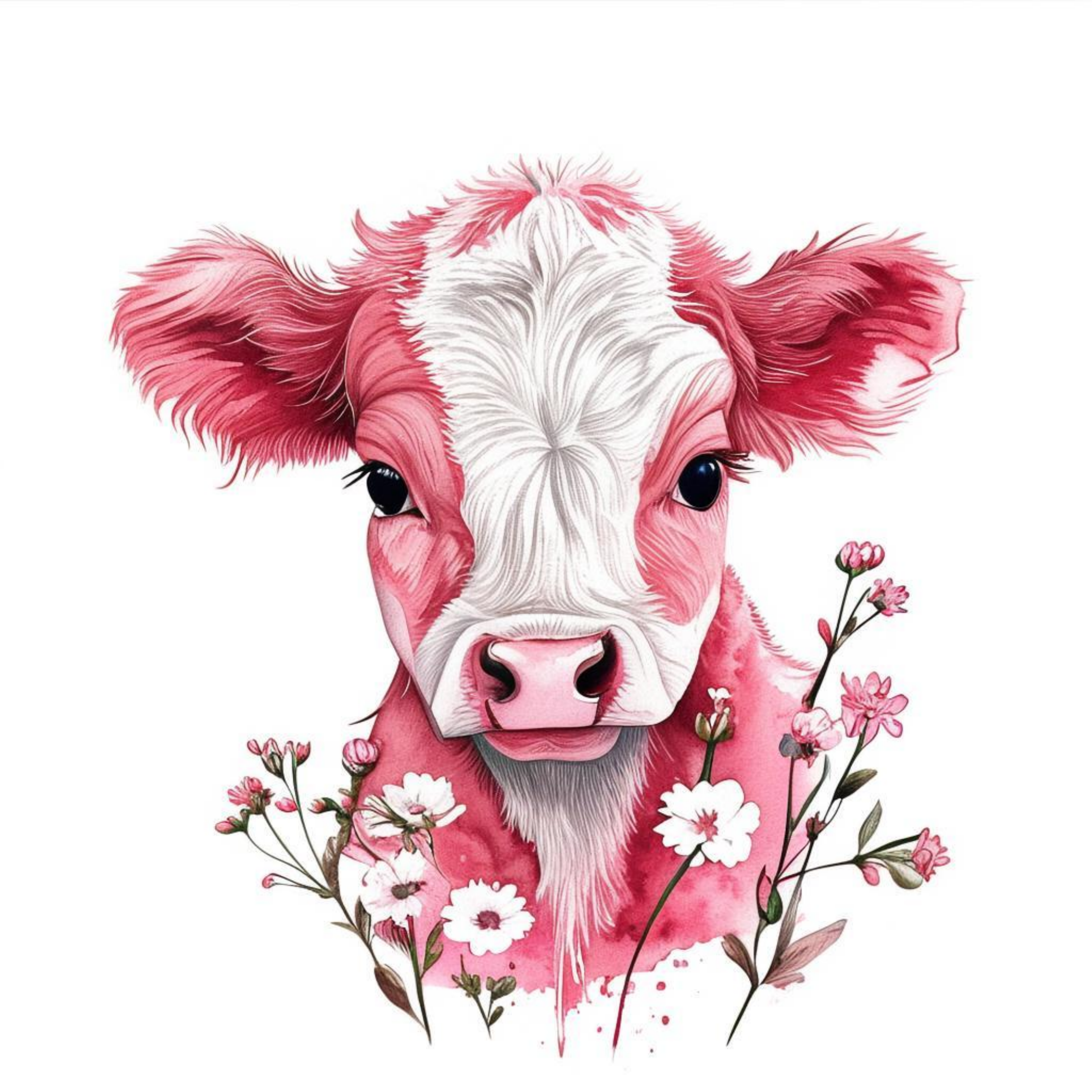}
\caption*{\texttt{``Watercolor illustration of cute pink and while calf portrait surrounded by small while flowers''}. Our method has a better image quality and more details than other methods.}
\label{subfig:calf}
\end{subfigure}

\begin{subfigure}{1\linewidth}
\captionsetup{justification=centering}
\includegraphics[width=0.24\textwidth]{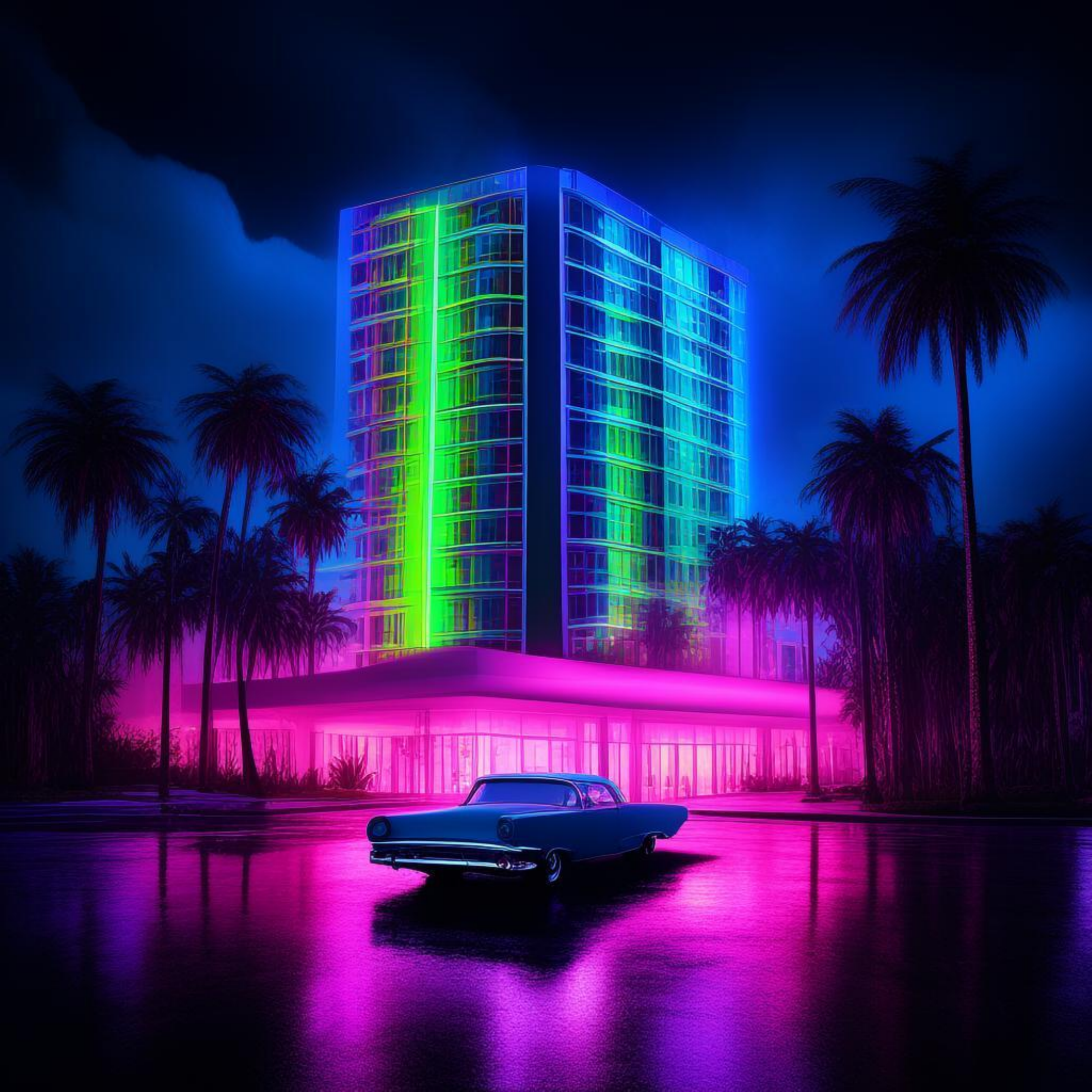}
\includegraphics[width=0.24\textwidth]{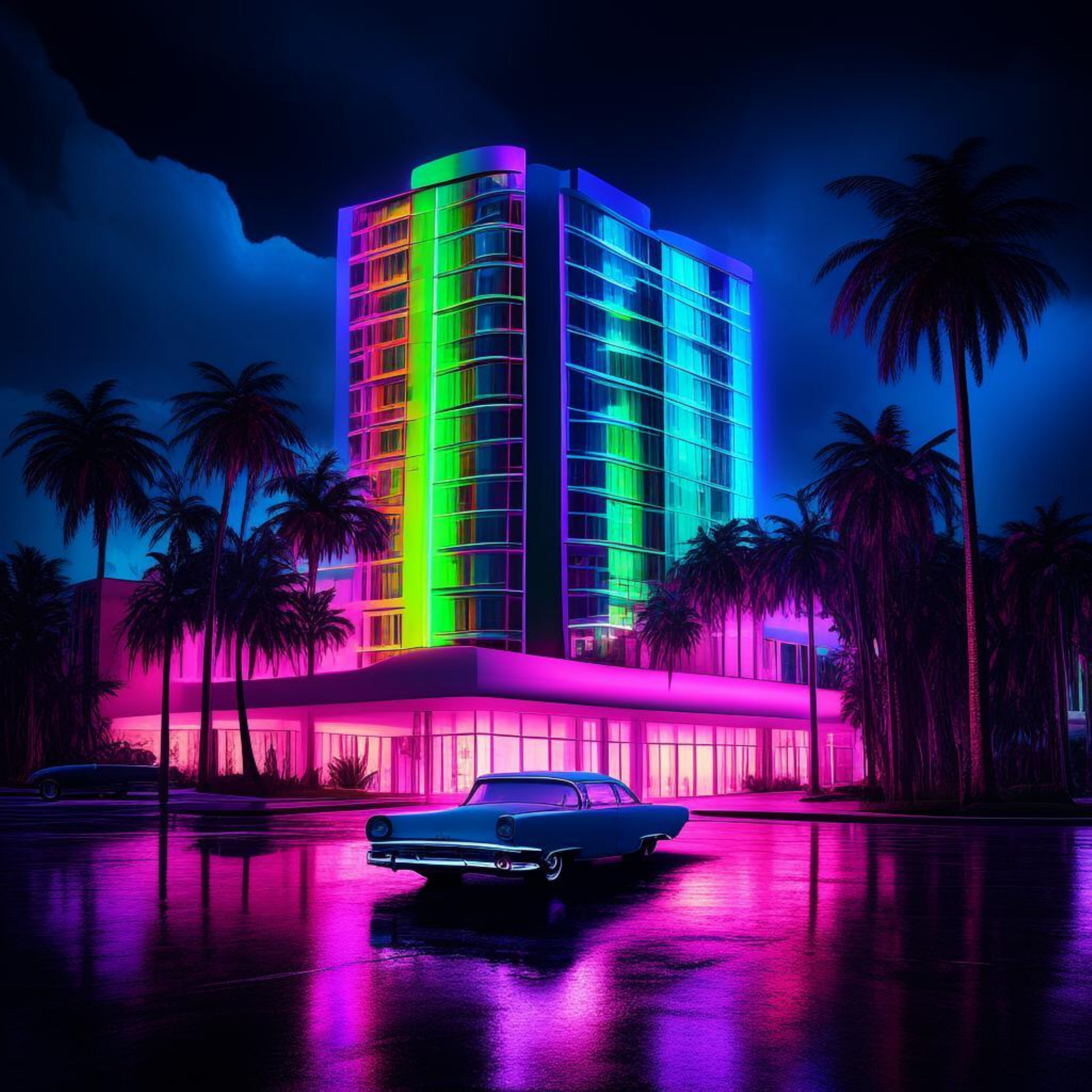}
\includegraphics[width=0.24\textwidth]{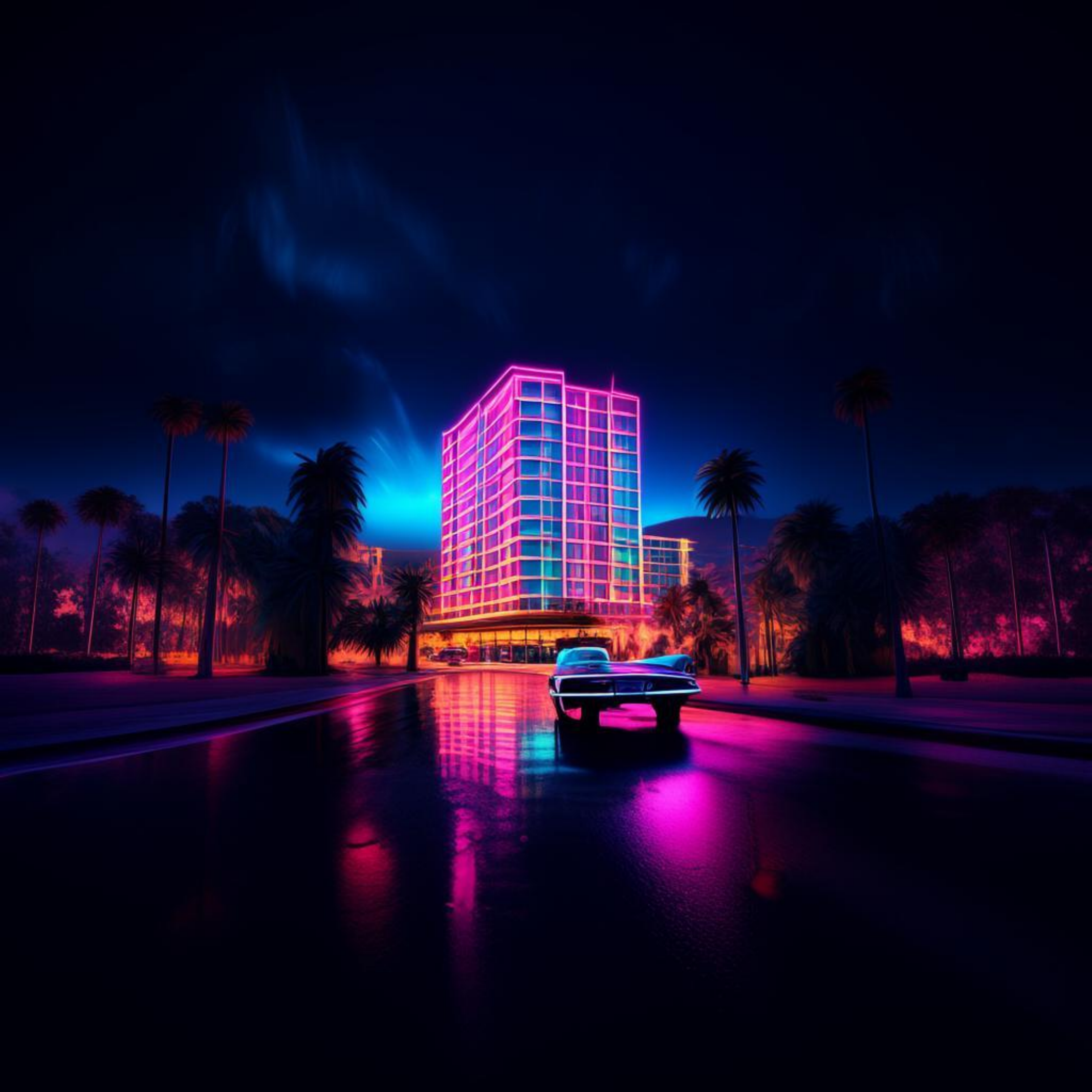}
\includegraphics[width=0.24\textwidth]{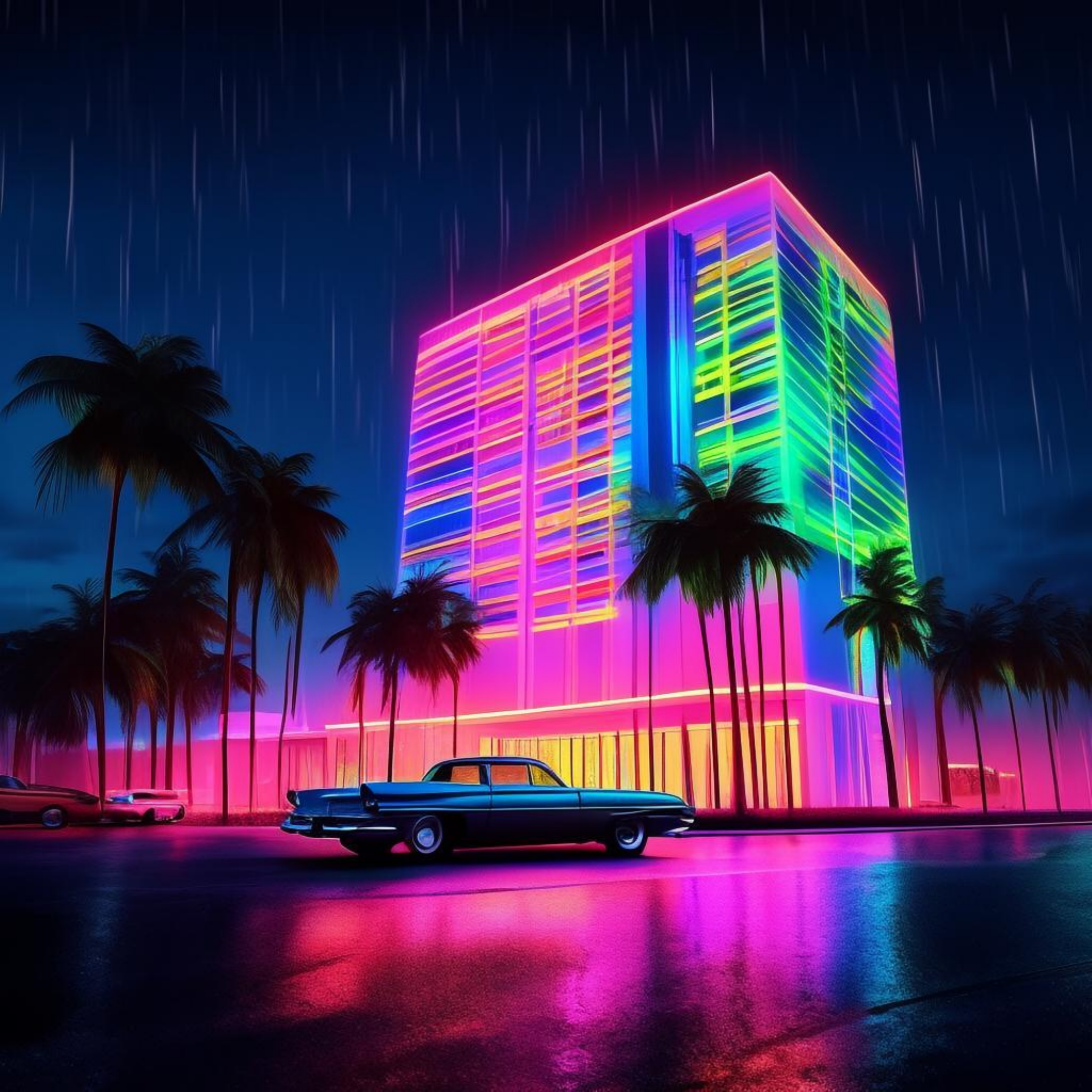}
\caption*{\texttt{``3D, photorealistic, photo shot, psychedelic, neon colors, neon lights, palm trees, tall, rectangular hotel building, 1960s cars out front, dramatic, dark sky, rain, cinematic, dramatic lighting, hyperrealistic, 8k''}. Our method clearly shows the rain in the background.}
\label{subfig:building}
\end{subfigure}


\caption{More comparisons between the Flow-Euler, Flow-DPM-Solver++~\citep{DPM-solver++,Sana}, Flow-UniPC~\citep{UniPC}, and STORK. All images are generated using the SANA 1.6B model~\citep{Sana} at $1024\times 1024$ resolution with only 8 NFEs, using prompts from MJHQ-30K. Prompts are displayed beneath each image pair, accompanied by our commentary explaining why STORK's generations are superior.}
\vspace{-2em}
\label{fig:appendix_mjhq_1}
\end{figure}
\clearpage

\begin{figure}[hb!]
\vspace{-0.5em}
\centering
\begin{tabularx}{0.245\textwidth}{c}
\quad\quad Flow-Euler
\end{tabularx}
\begin{tabularx}{0.245\textwidth}{c}
Flow-DPM-Solver++
\end{tabularx}
\begin{tabularx}{0.245\textwidth}{c}
\quad\, Flow-UniPC
\end{tabularx}
\begin{tabularx}{0.245\textwidth}{c}
\quad \textbf{STORK (Ours)}
\end{tabularx}
\newline
\vspace{-0.1in}

\begin{subfigure}{1\linewidth}
\captionsetup{justification=centering}
\includegraphics[width=0.24\textwidth]{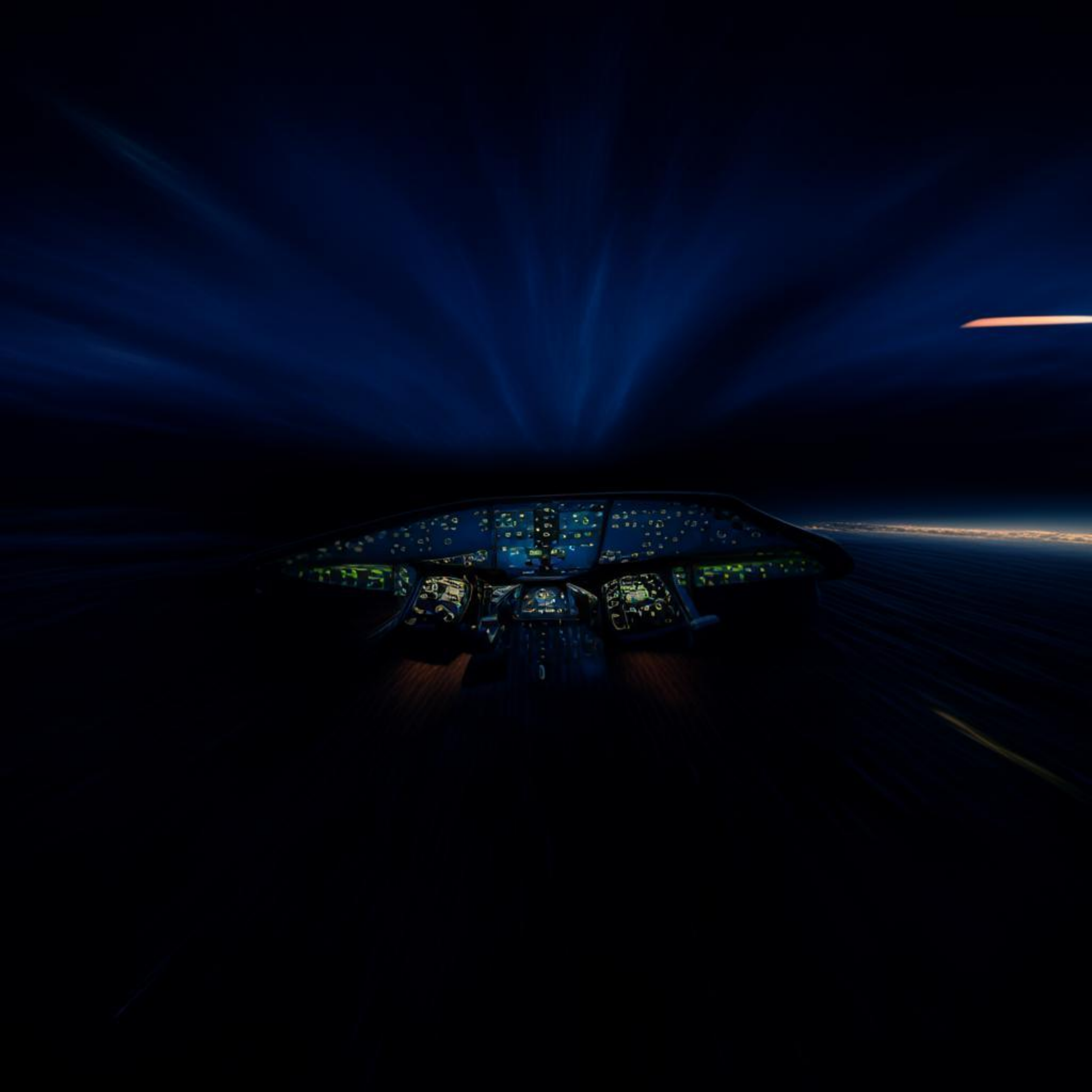}
\includegraphics[width=0.24\textwidth]{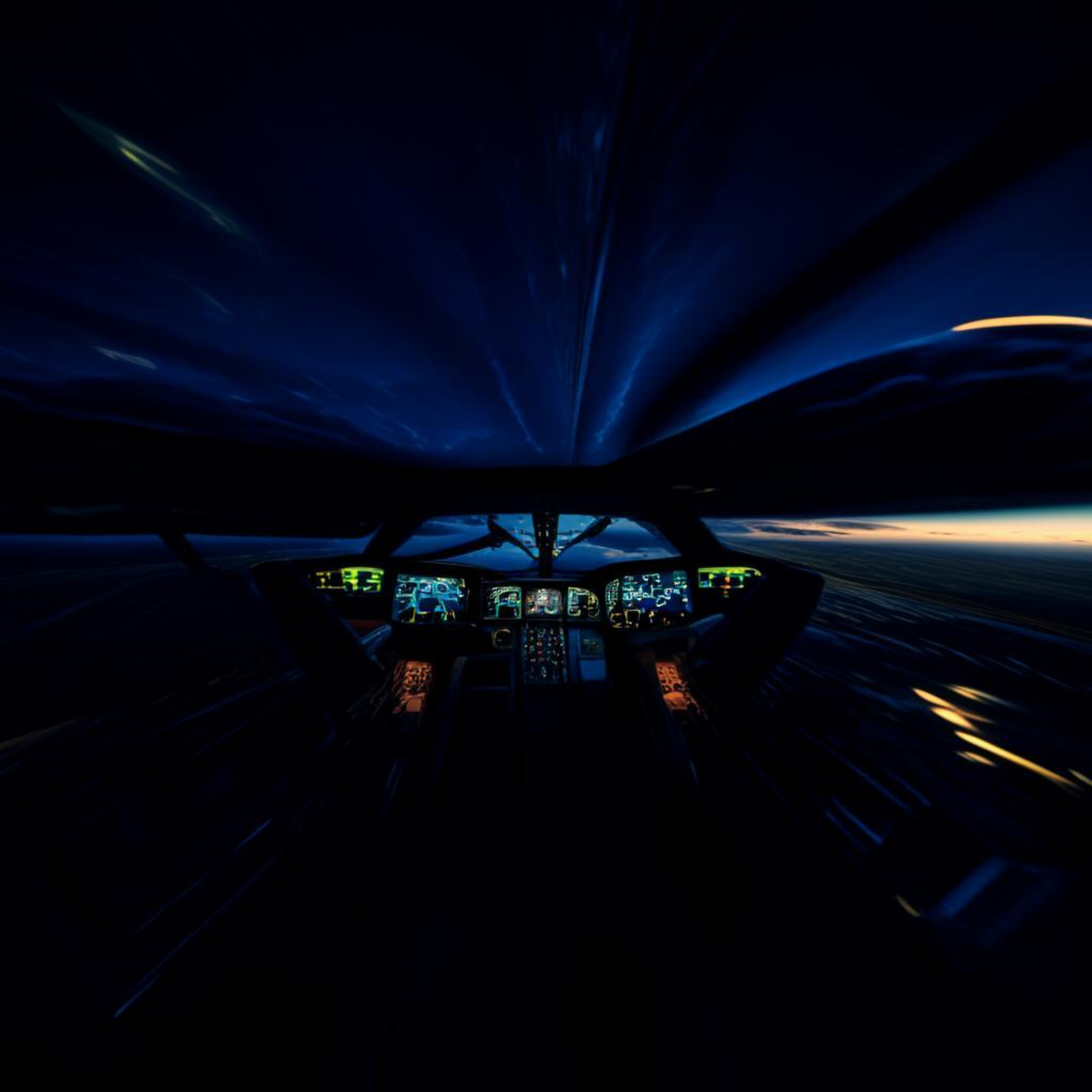}
\includegraphics[width=0.24\textwidth]{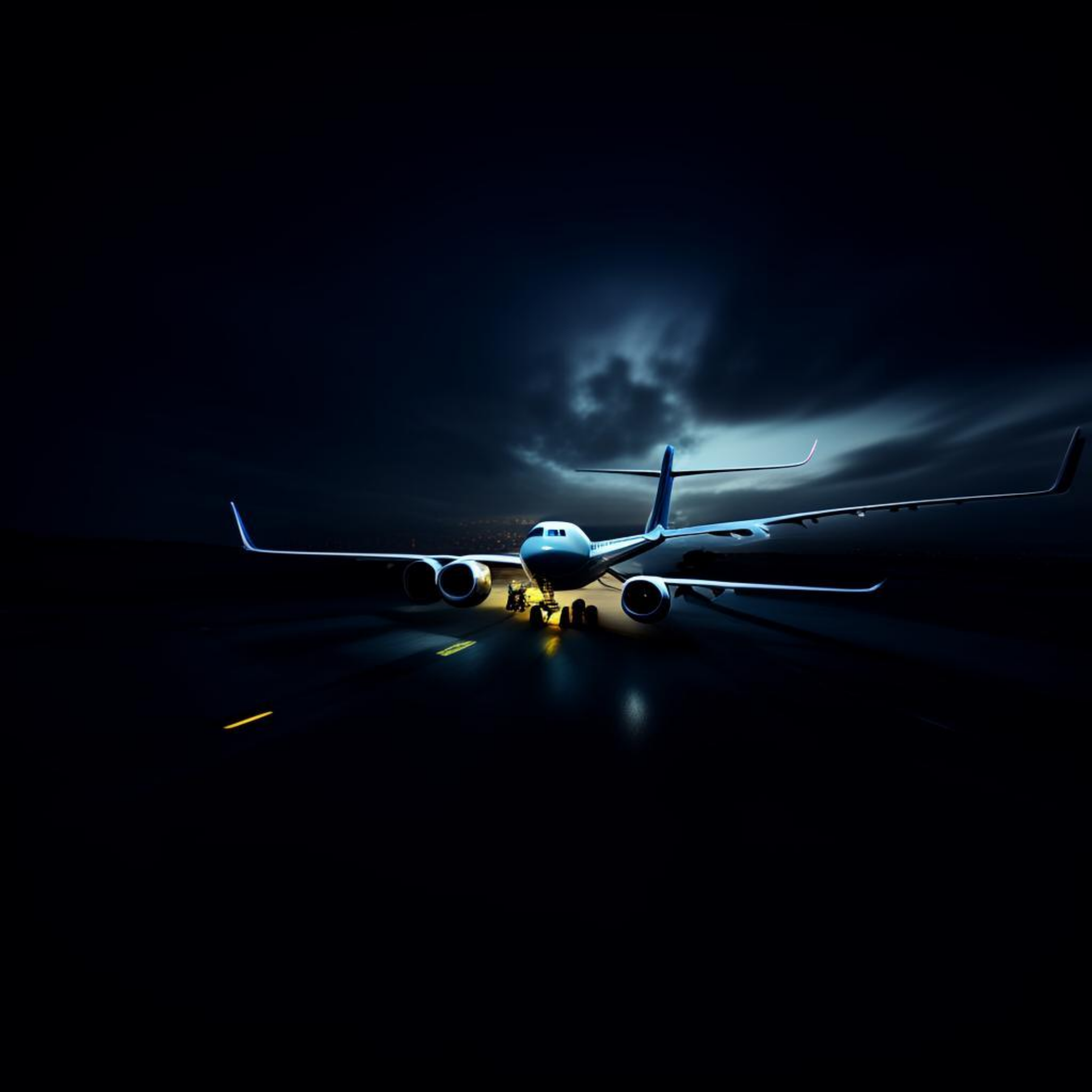}
\includegraphics[width=0.24\textwidth]{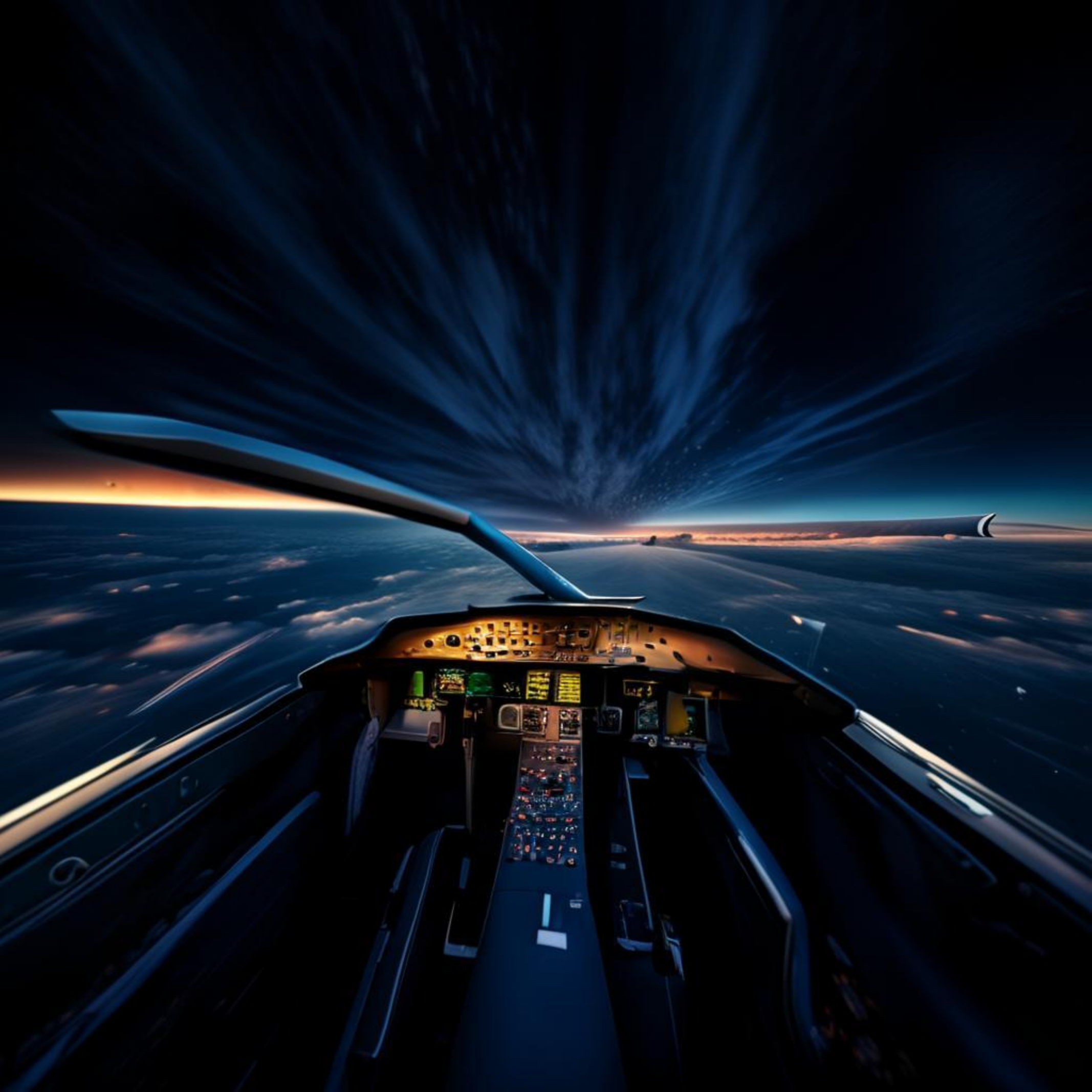}
\caption*{\texttt{``the cockpit of the Boeing 787 Dreamliner, on a night flight, the night sky, clouded, dynamic style, depth of Field, F2.8, high Contrast, 8K, Cinematic Lighting, ethereal light, intricate details, extremely detailed, incredible details, full colored, complex details, by Weta Digital, Photography, Photoshoot, Shot on 70mm, real photo, cinematic, high detailed, HDR, hyper realistic, 
''}. }
\label{subfig:boeing}
\end{subfigure}

\begin{subfigure}{1\linewidth}
\captionsetup{justification=centering}
\includegraphics[width=0.24\textwidth]{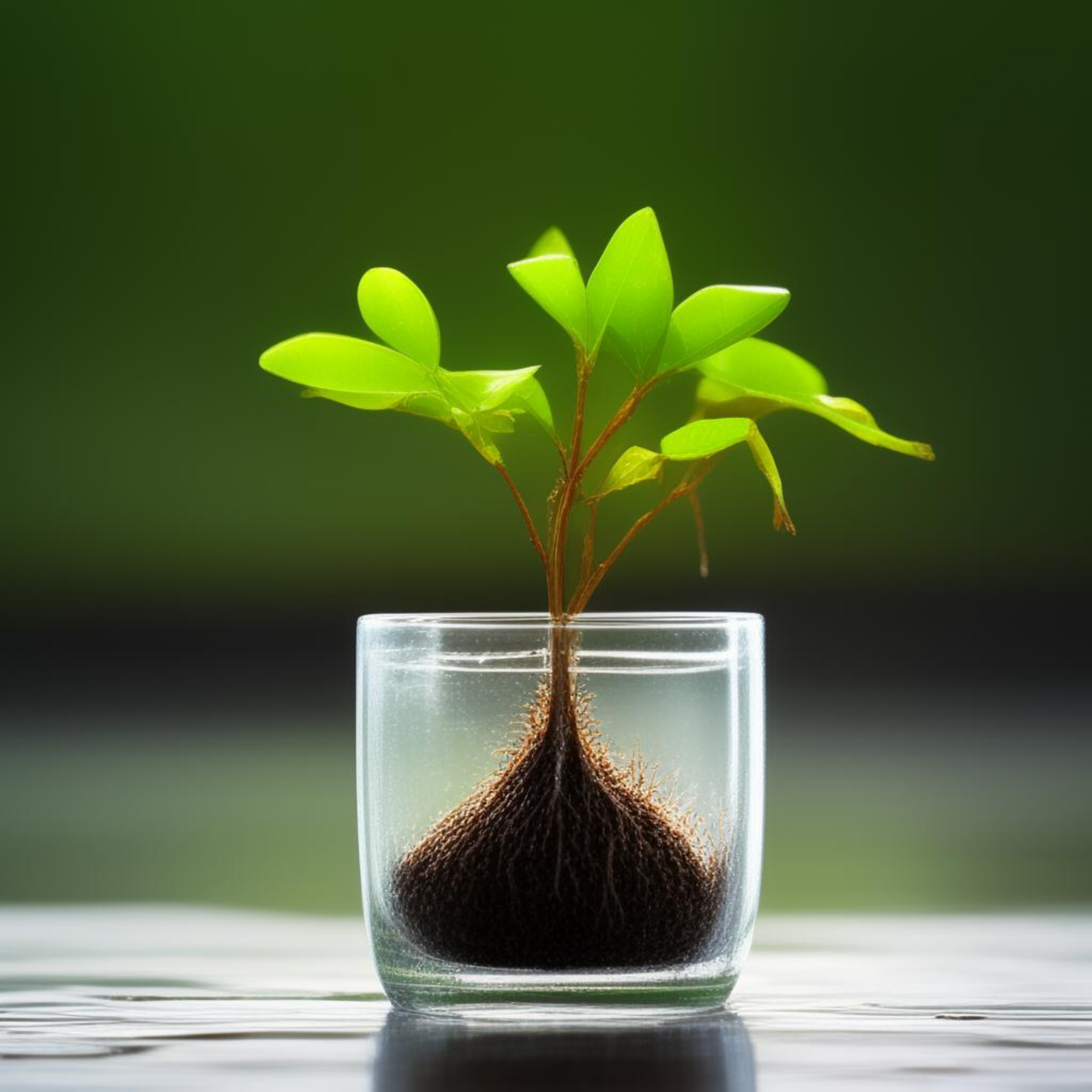}
\includegraphics[width=0.24\textwidth]{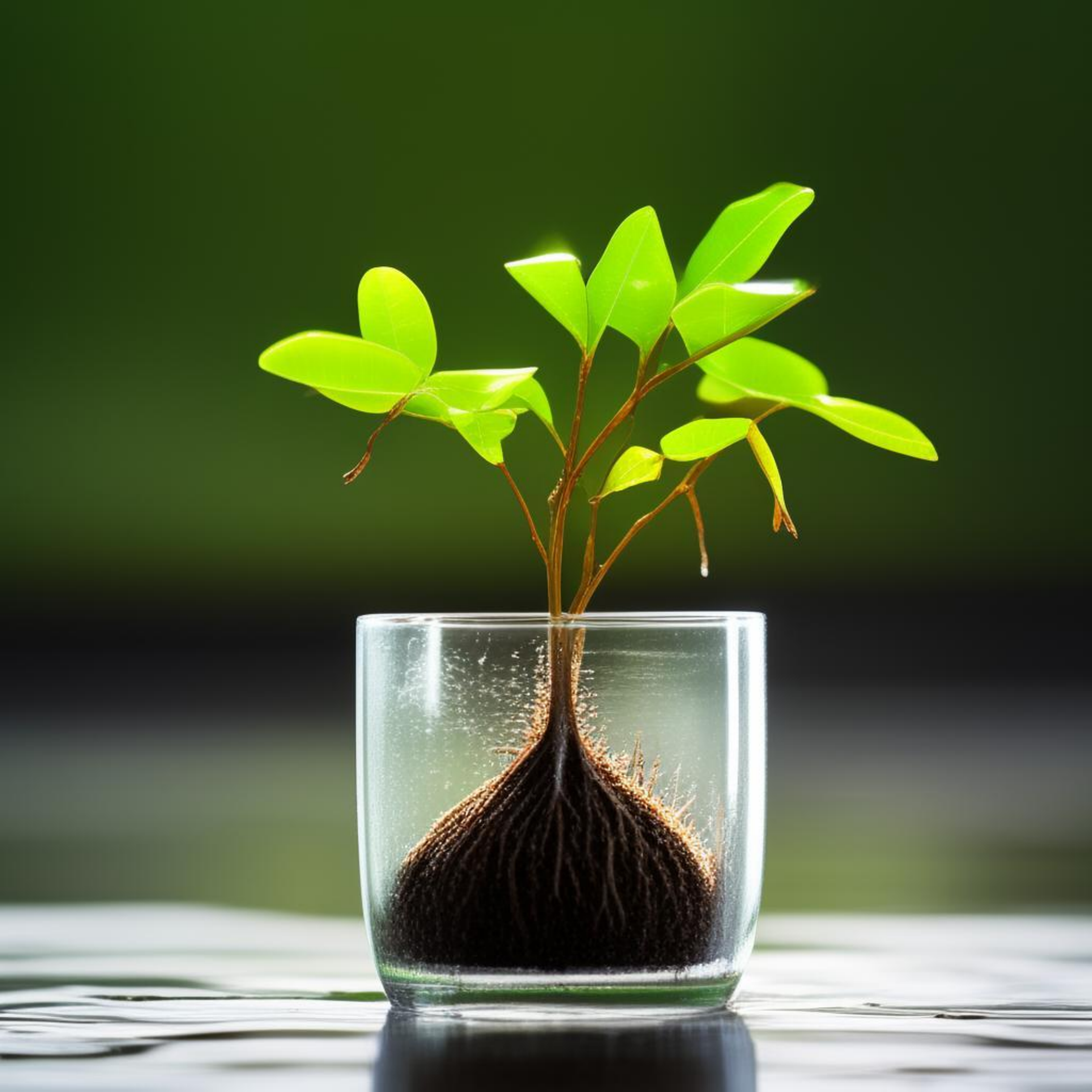}
\includegraphics[width=0.24\textwidth]{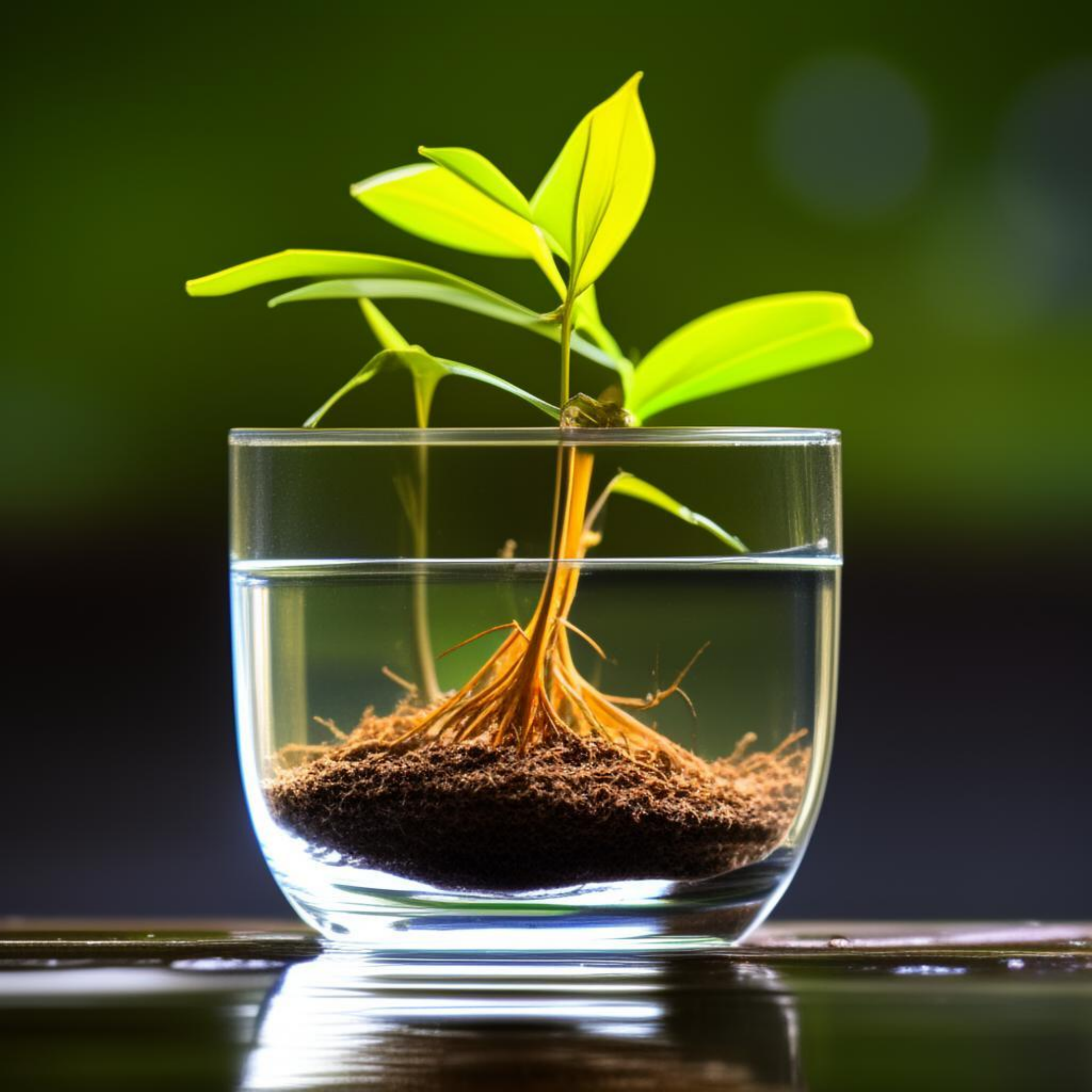}
\includegraphics[width=0.24\textwidth]{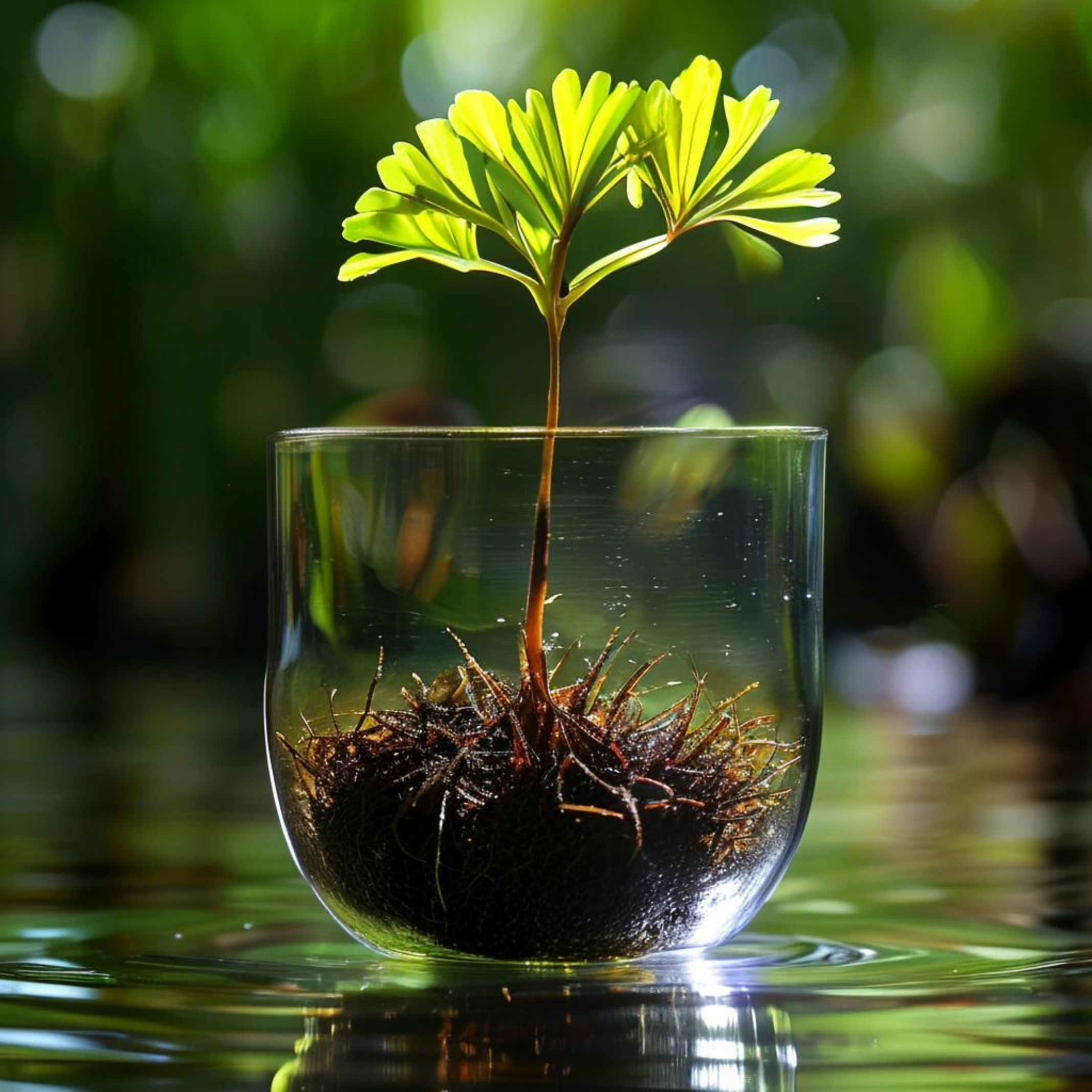}
\caption*{\texttt{``Mangrove emerging from the seed in glass cup.''}. }
\label{subfig:mangrove}
\end{subfigure}

\begin{subfigure}{1\linewidth}
\captionsetup{justification=centering}
\includegraphics[width=0.24\textwidth]{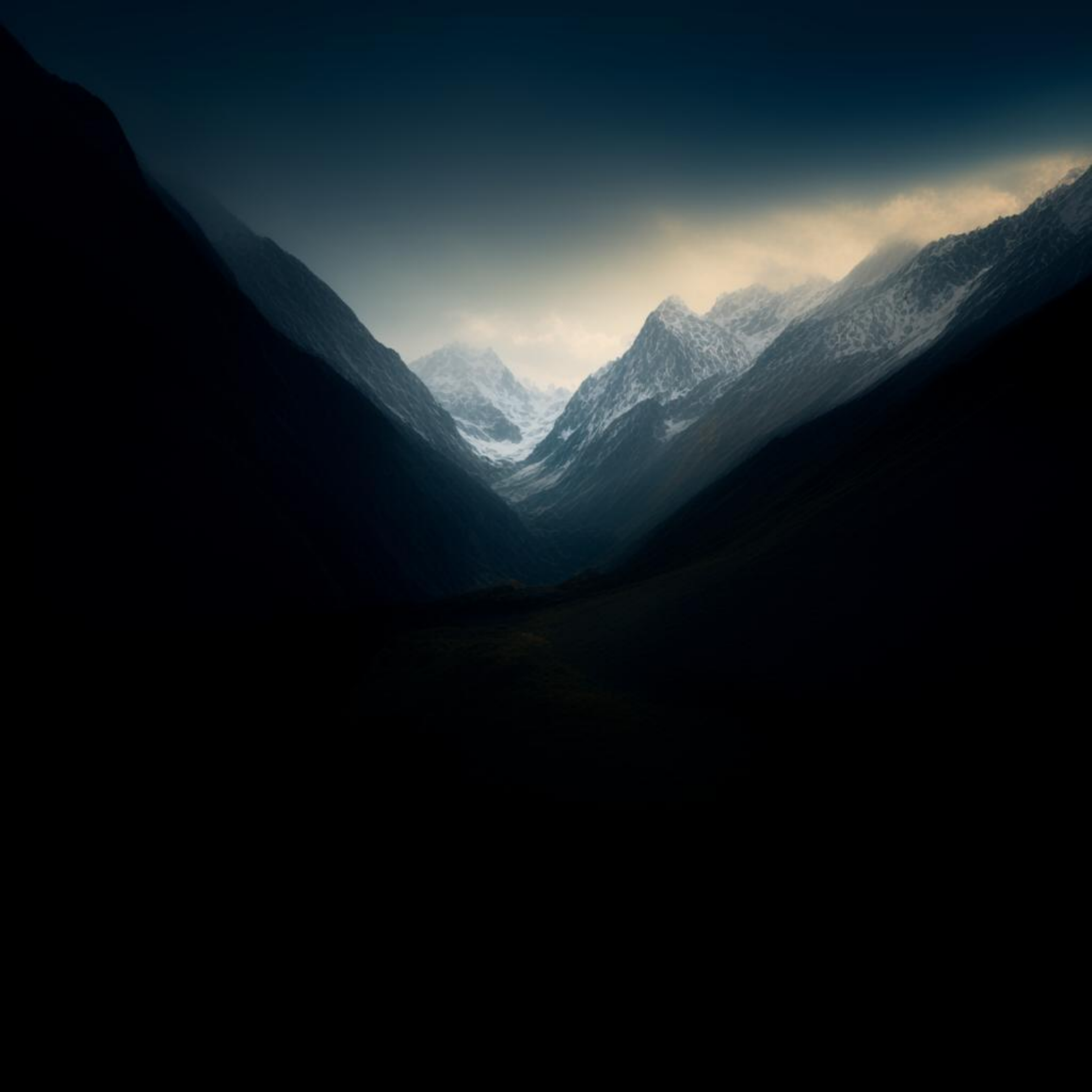}
\includegraphics[width=0.24\textwidth]{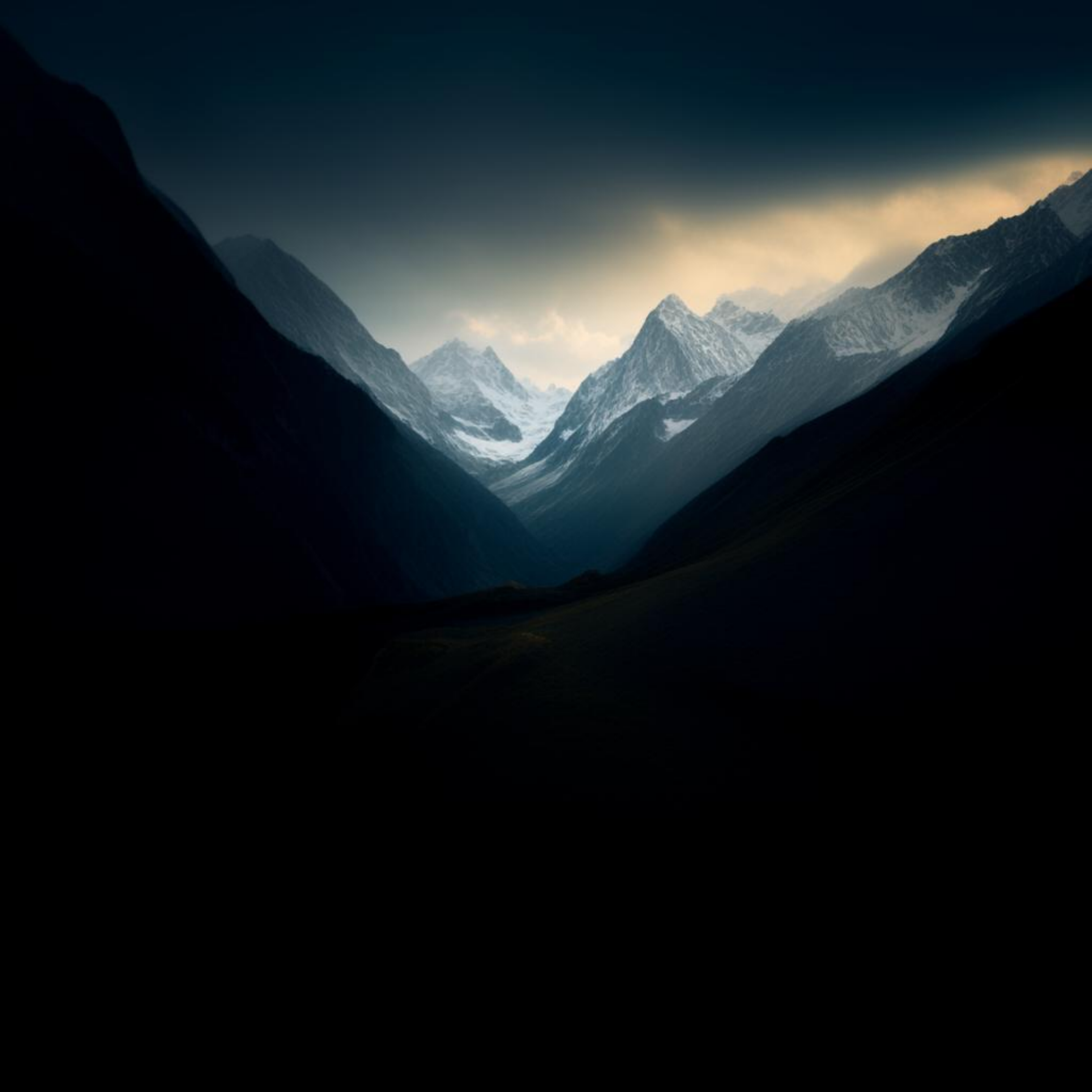}
\includegraphics[width=0.24\textwidth]{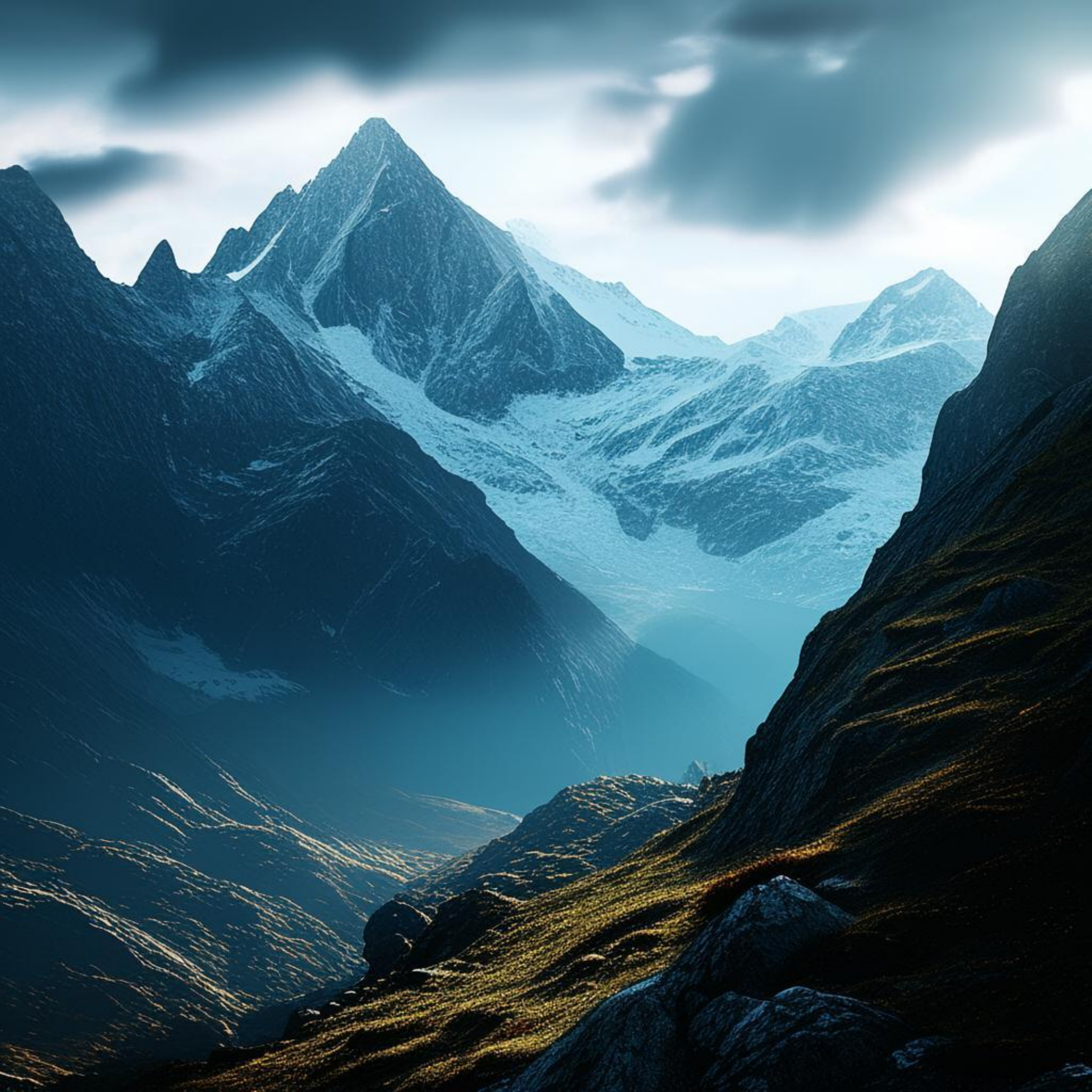}
\includegraphics[width=0.24\textwidth]{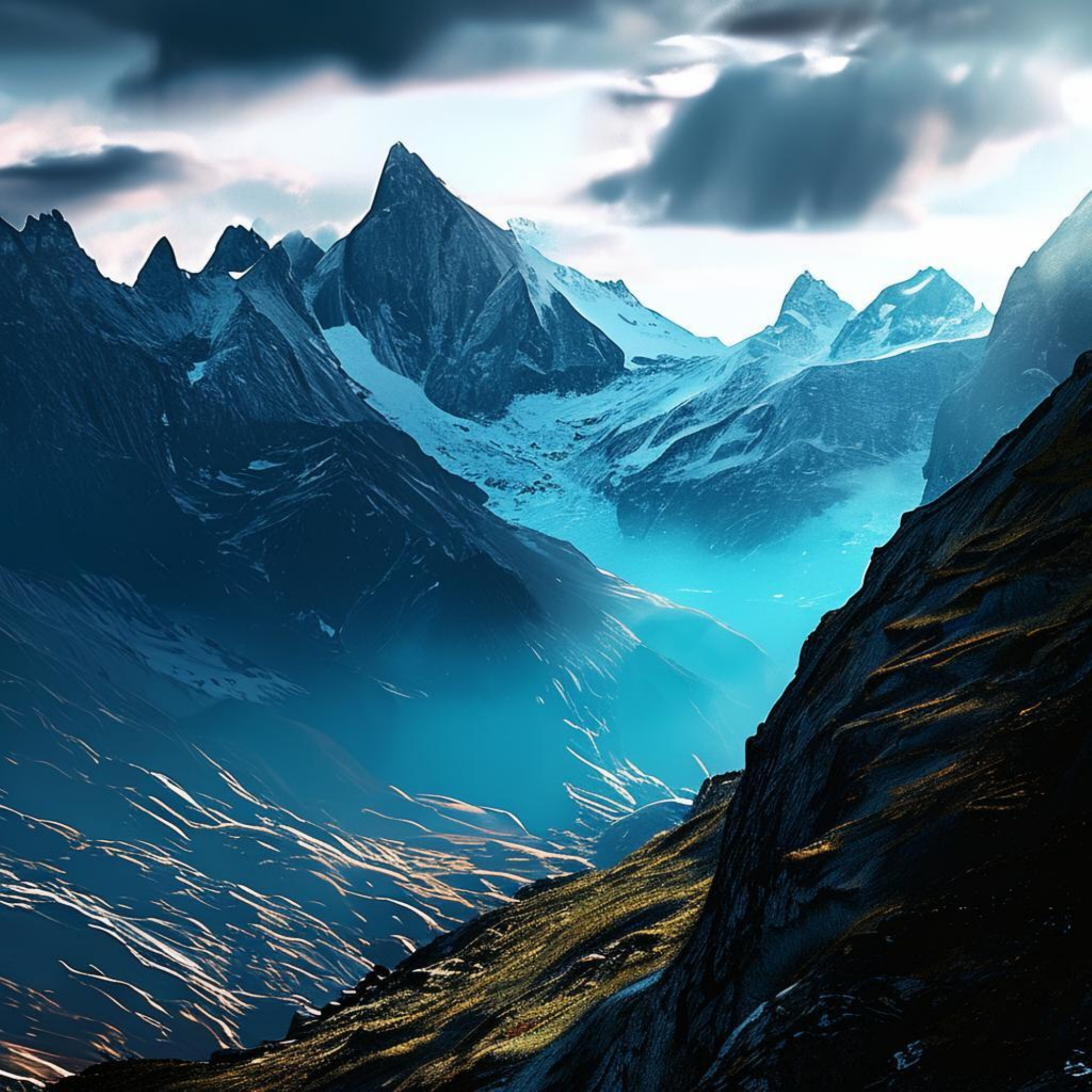}
\caption*{\texttt{``French Alps, photorealistic, dramatic lighting, mystique scene, ethereal, majestic, aesthetics, cinematic lighting, deep focus, super adobe, detailed texture, neuro cognitive art, photoshop, octane render, Pinterest art, awardwinning landscape photography, world renowned, high resolution, color grading, high art, no blurs, ultra wideangle lenses, photo realism, 300 dpi, Ultra Quality, 32k 
''}. }
\label{subfig:mountain}
\end{subfigure}

\begin{subfigure}{1\linewidth}
\captionsetup{justification=centering}
\includegraphics[width=0.24\textwidth]{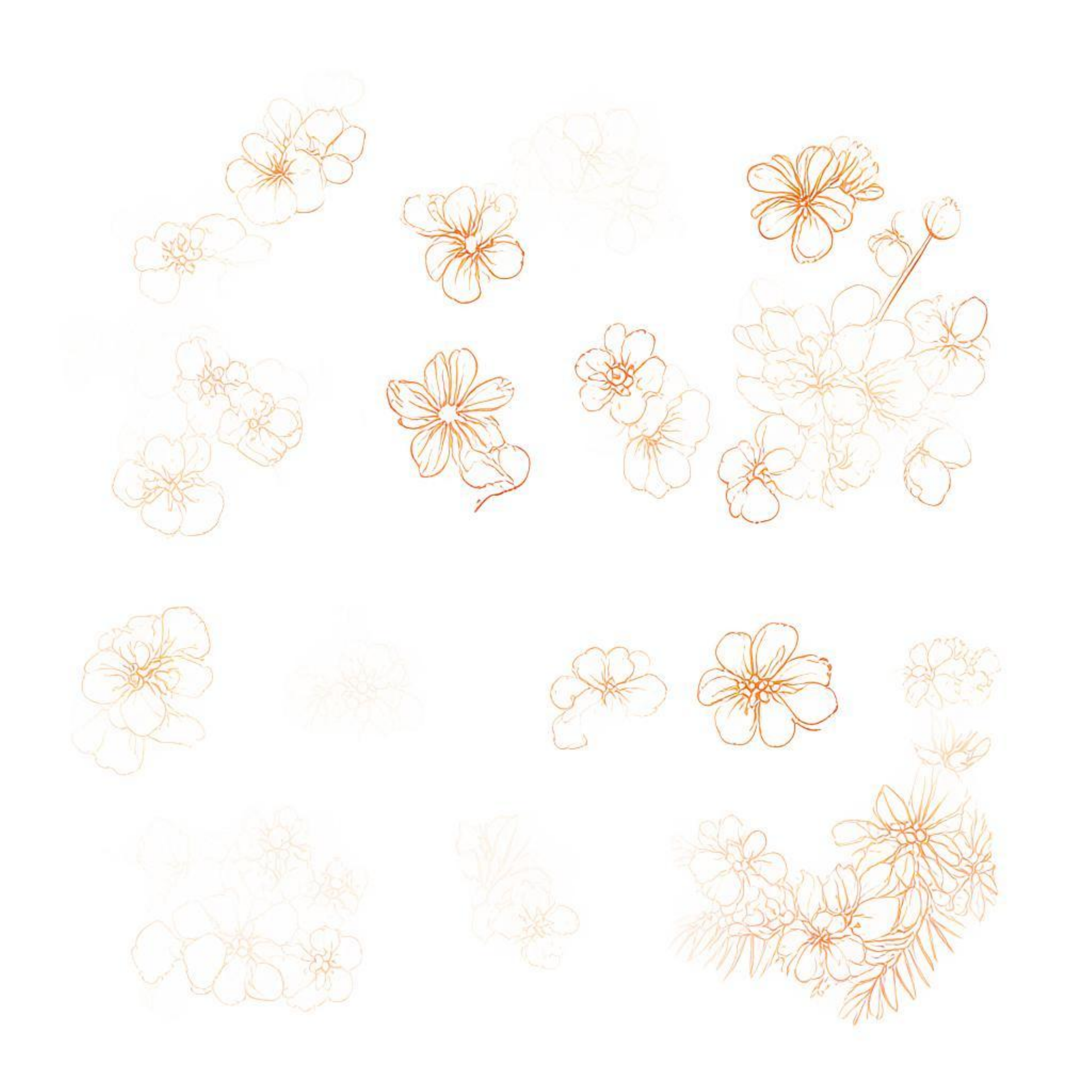}
\includegraphics[width=0.24\textwidth]{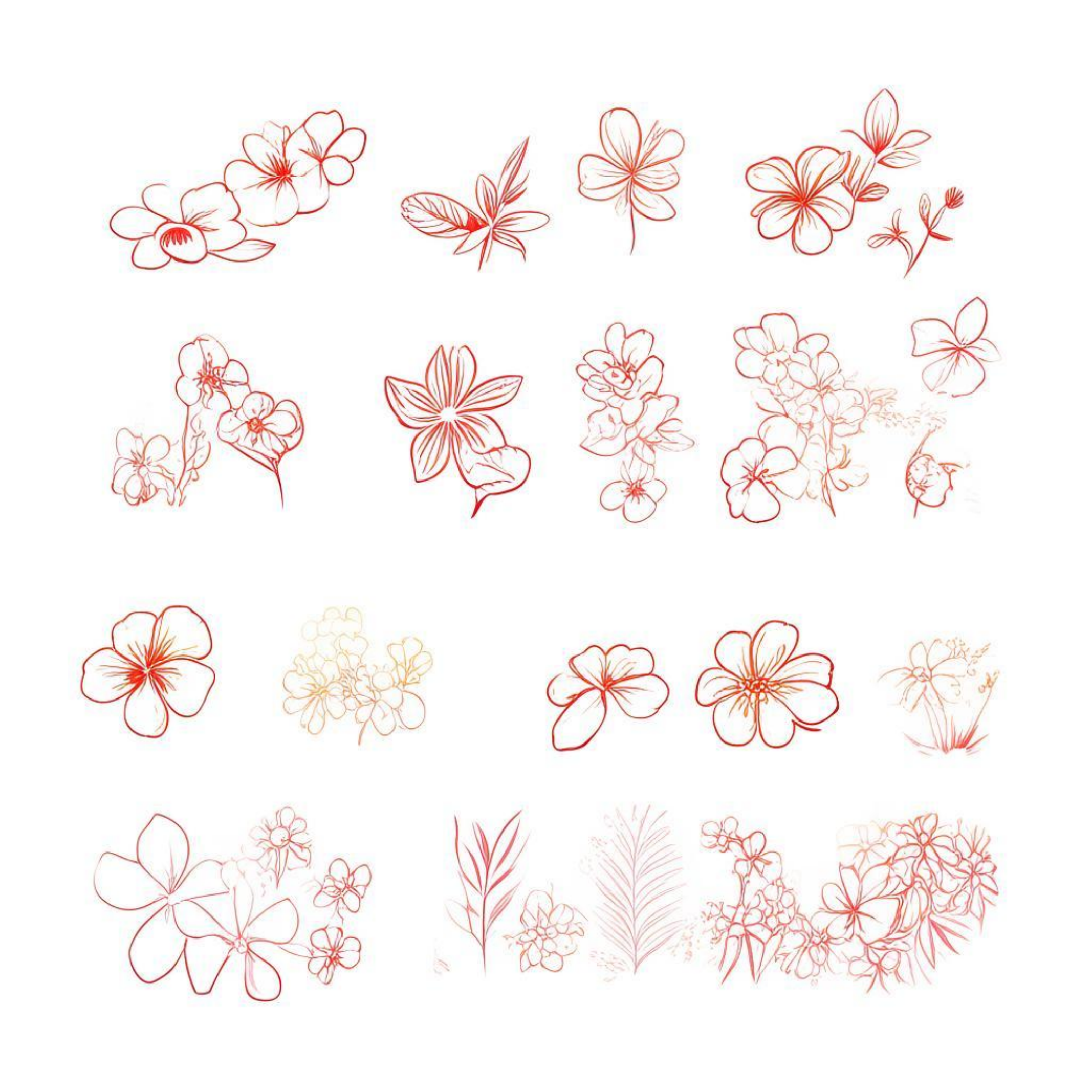}
\includegraphics[width=0.24\textwidth]{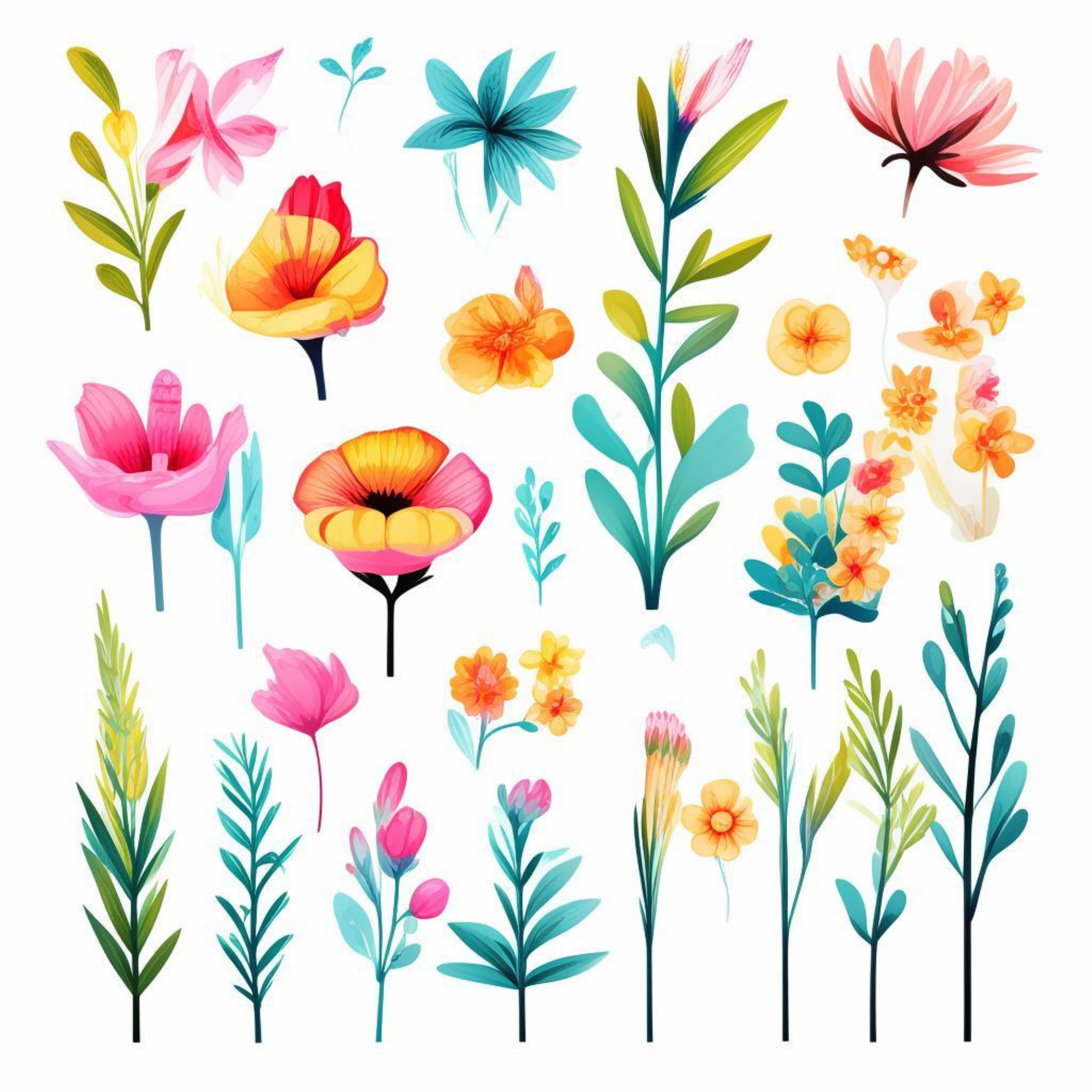}
\includegraphics[width=0.24\textwidth]{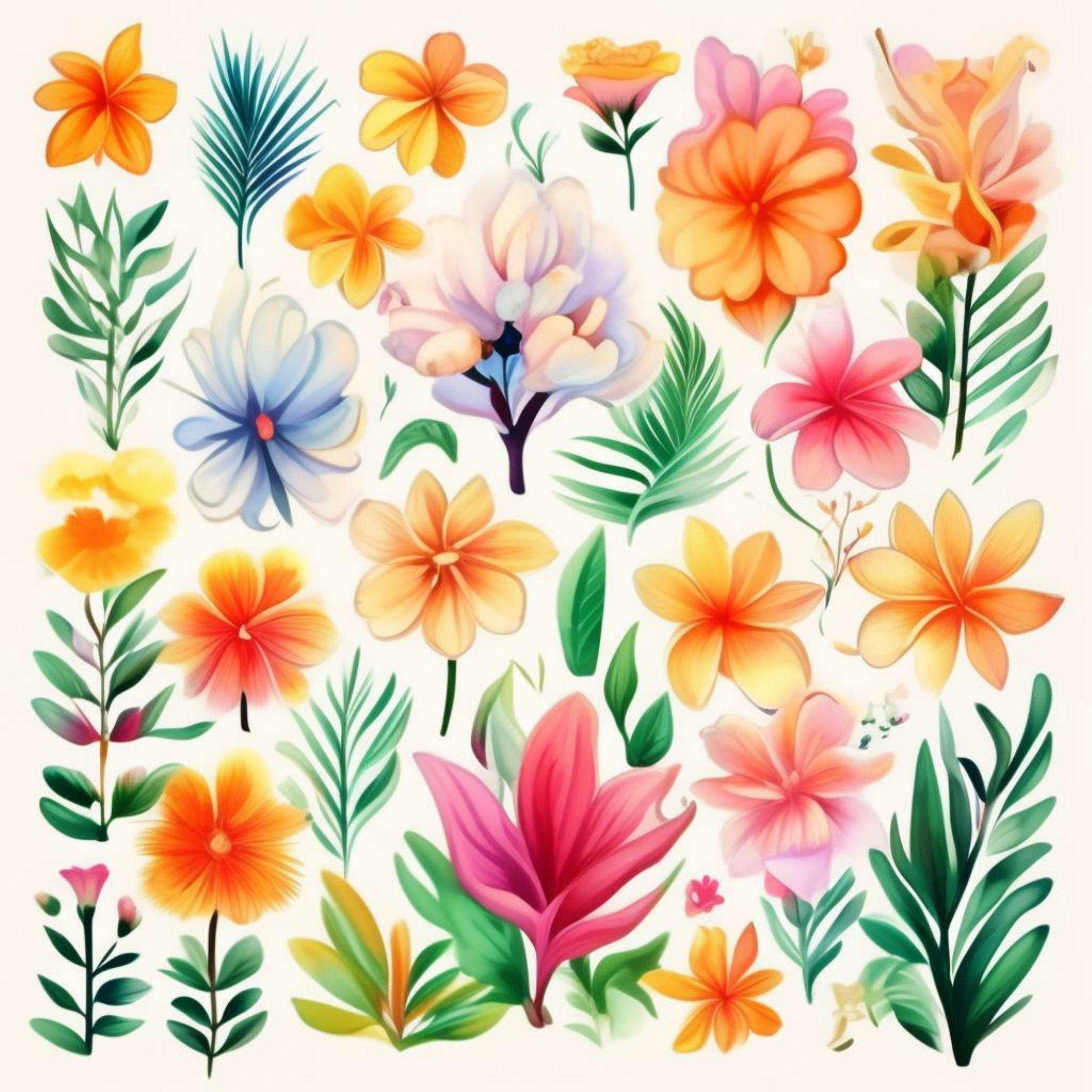}
\caption*{\texttt{``Paradise summer flowers, clipart, sticker, in the style of aquarellist, made of flowers, fernando amorsolo, graphic design elements, vector illustration, in the style of dreamy watercolor florals, floral explosions, thomas w schaller, sticker art, detailed foliage, cute and tropical set of a floral pattern, in the style of luminous watercolors and ink, art deco sensibilities, barbiecore, timeless elegance, floralpunk, colorful arrangements 
''}.}
\label{subfig:flower}
\end{subfigure}


\caption{More comparisons between the Flow-Euler, Flow-DPM-Solver++~\citep{DPM-solver++,Sana}), Flow-UniPC~\citep{UniPC}), and STORK on MJHQ-30K, using 8 NFEs.}
\vspace{-2em}
\label{fig:appendix_mjhq_2}
\end{figure}
\clearpage

\begin{figure}[hb!]
\centering

\begin{subfigure}{1\linewidth}
\captionsetup{justification=centering}
\includegraphics[width=0.24\textwidth]{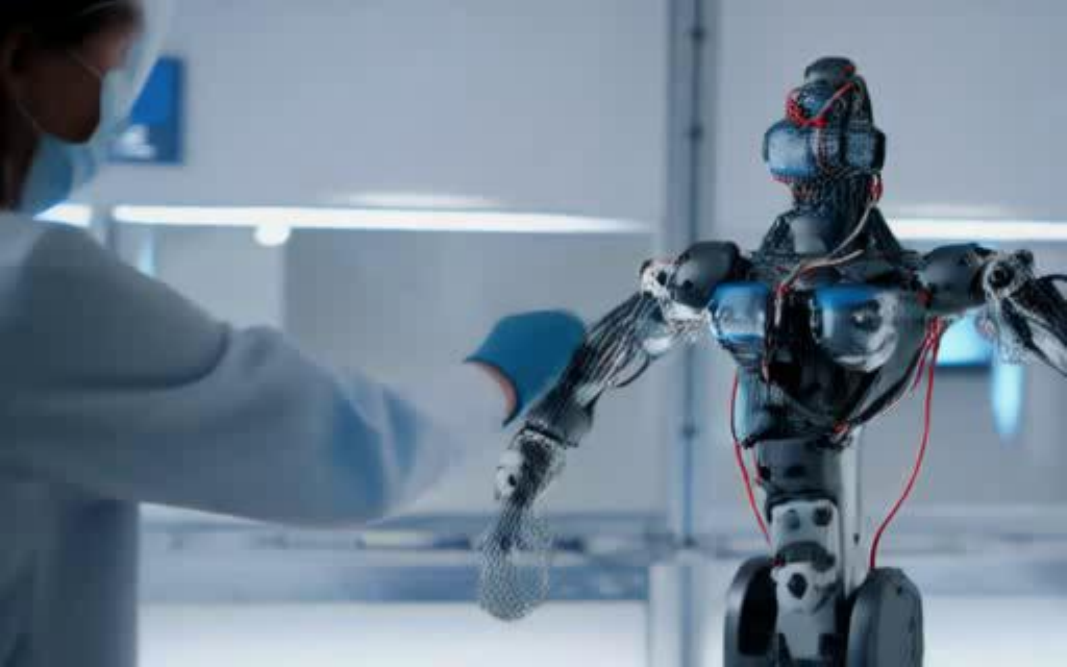}
\includegraphics[width=0.24\textwidth]{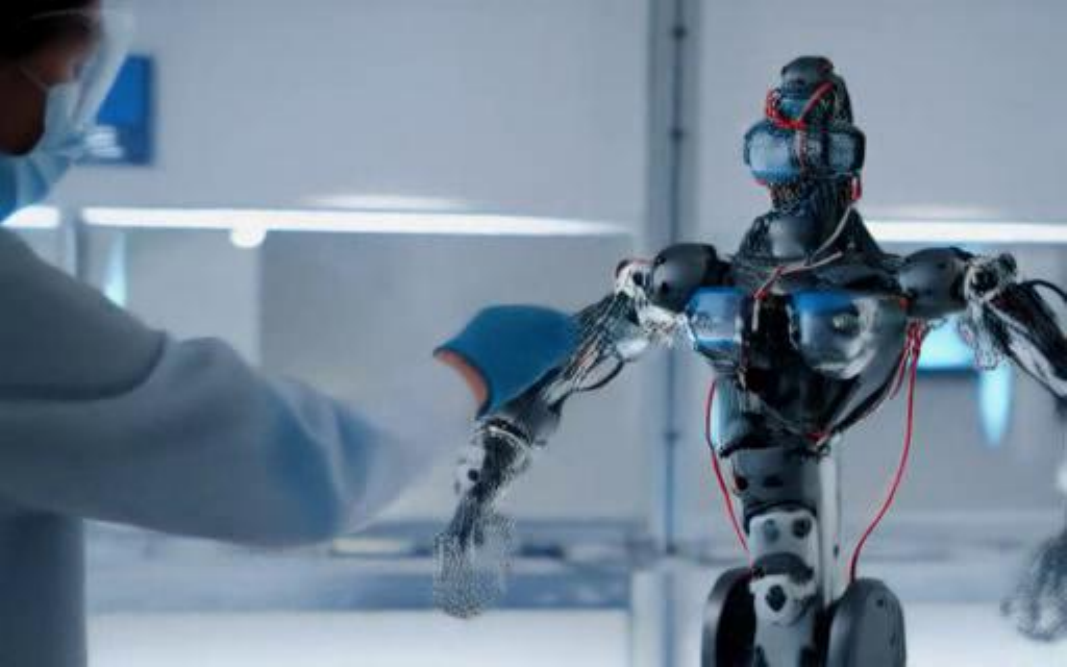}
\includegraphics[width=0.24\textwidth]{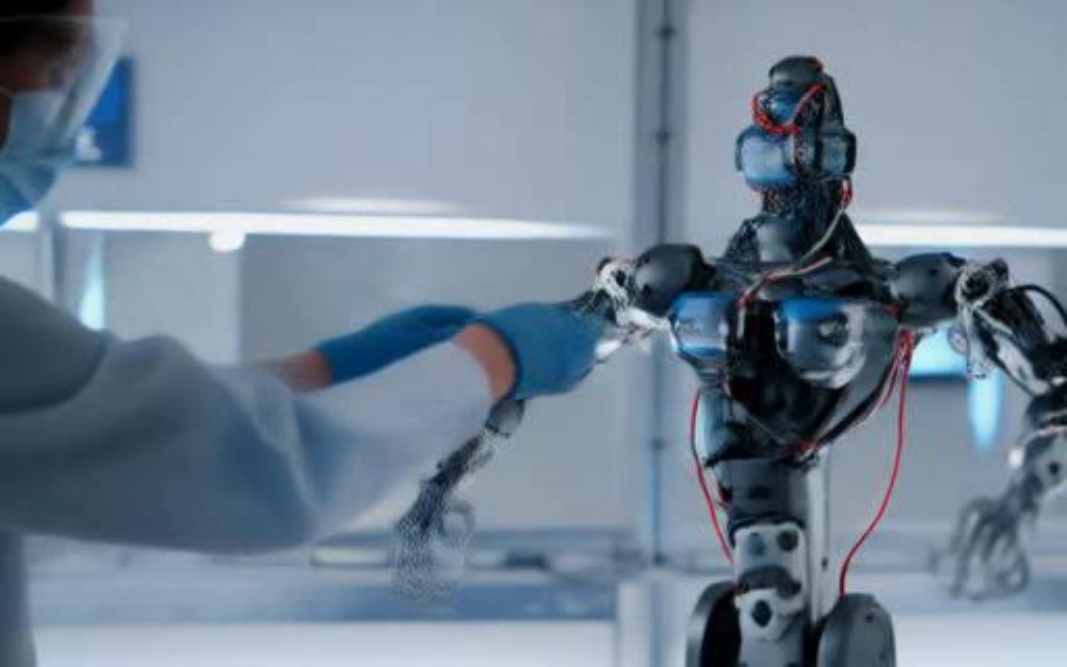}
\includegraphics[width=0.24\textwidth]{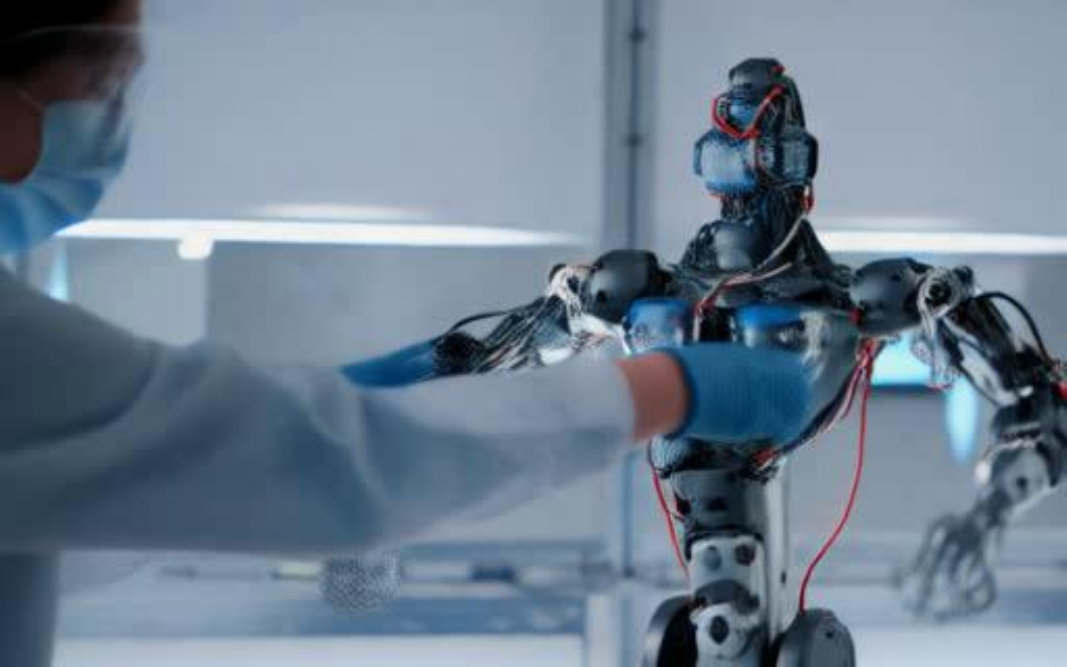}
\caption{Flow-UniPC video generation on Hunyuan model~\citep{kong2024hunyuanvideo}, 8 NFE}
\end{subfigure}

\begin{subfigure}{1\linewidth}
\includegraphics[width=0.24\textwidth]{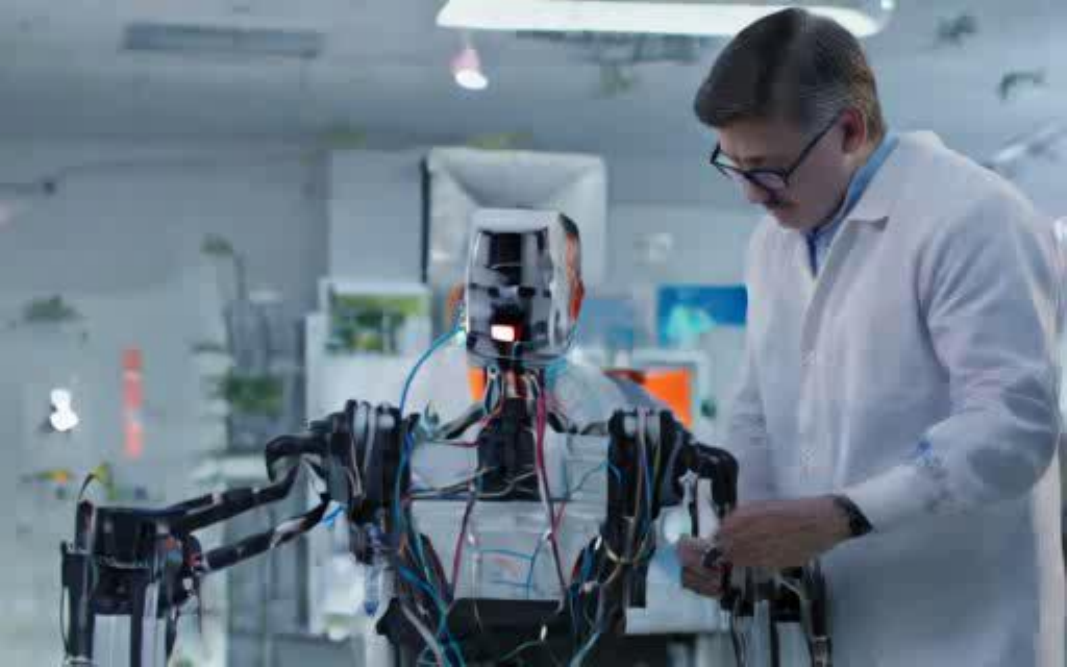}
\includegraphics[width=0.24\textwidth]{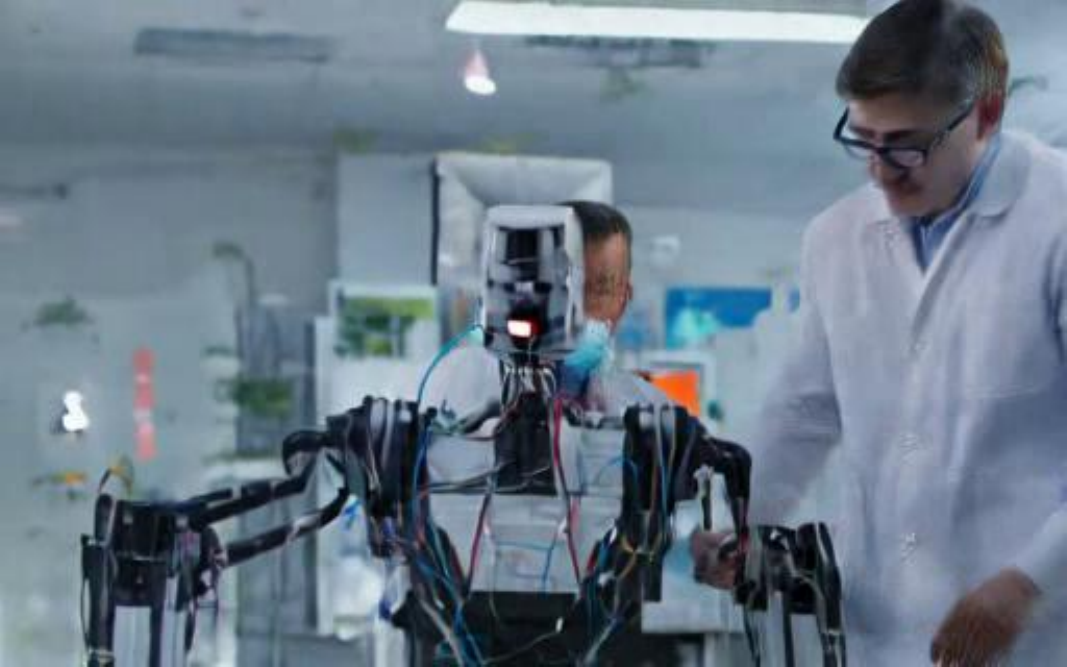}
\includegraphics[width=0.24\textwidth]{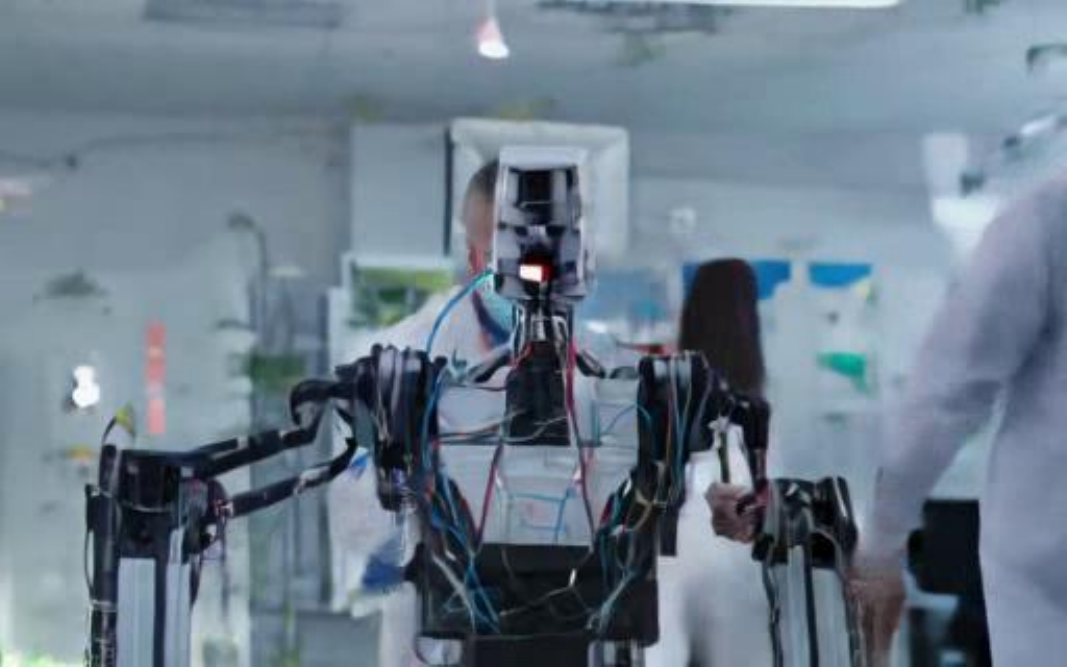}
\includegraphics[width=0.24\textwidth]{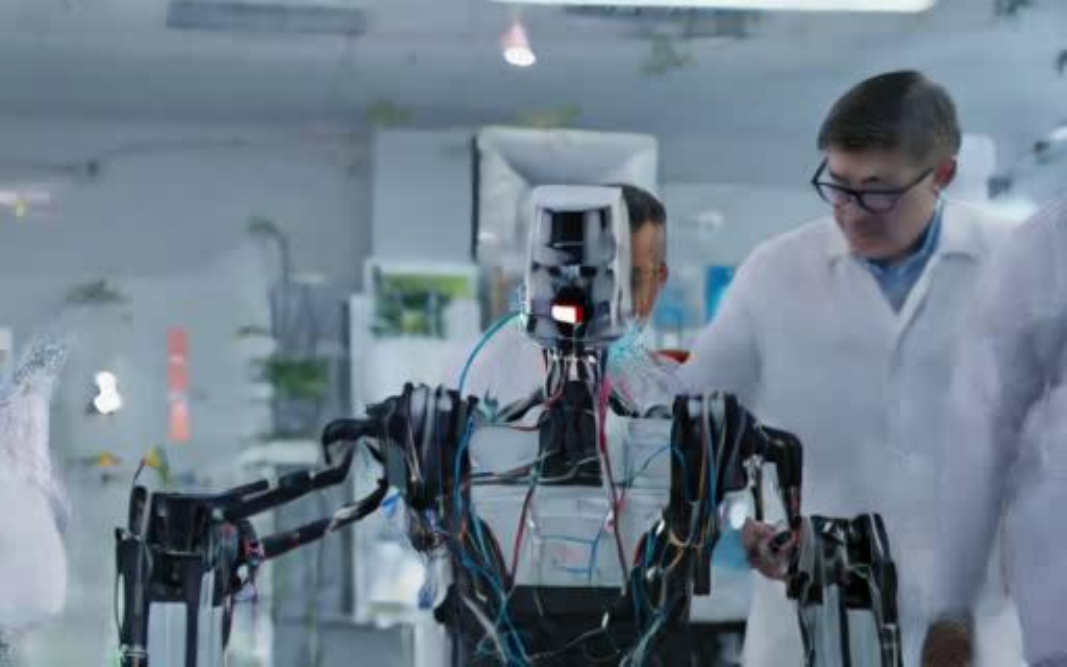}
\centering
\caption{STORK video generation on Hunyuan model~\citep{kong2024hunyuanvideo}, 8 NFE}
\end{subfigure}

\captionsetup{justification=centering}
\caption{Video generation comparison with prompt \texttt{``doctors are constructing a robot''}. \\
Our video has more than one doctor, and has more realistic scenario.}
\label{fig:appendix_video1}
\end{figure}

\begin{figure}[hb!]
\centering

\begin{subfigure}{1\linewidth}
\captionsetup{justification=centering}
\includegraphics[width=0.24\textwidth]{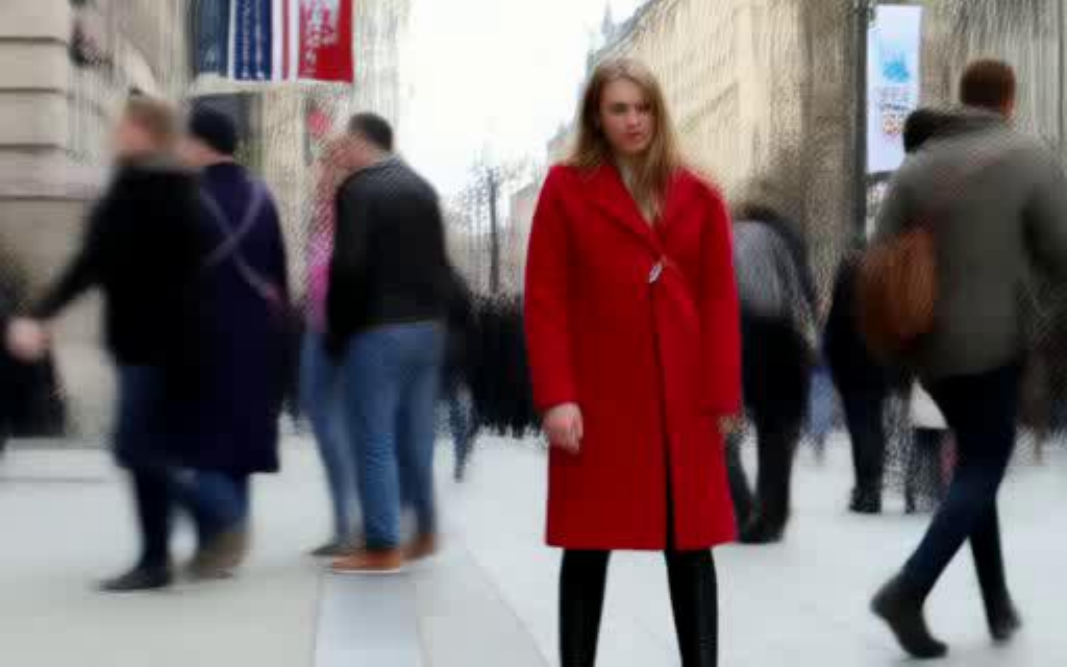}
\includegraphics[width=0.24\textwidth]{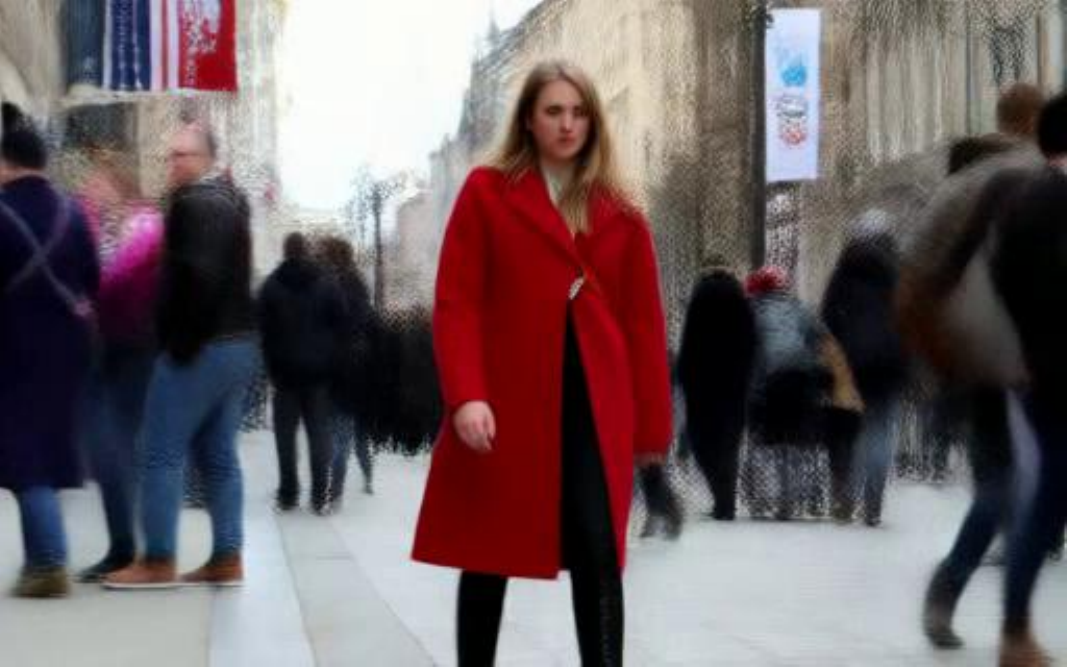}
\includegraphics[width=0.24\textwidth]{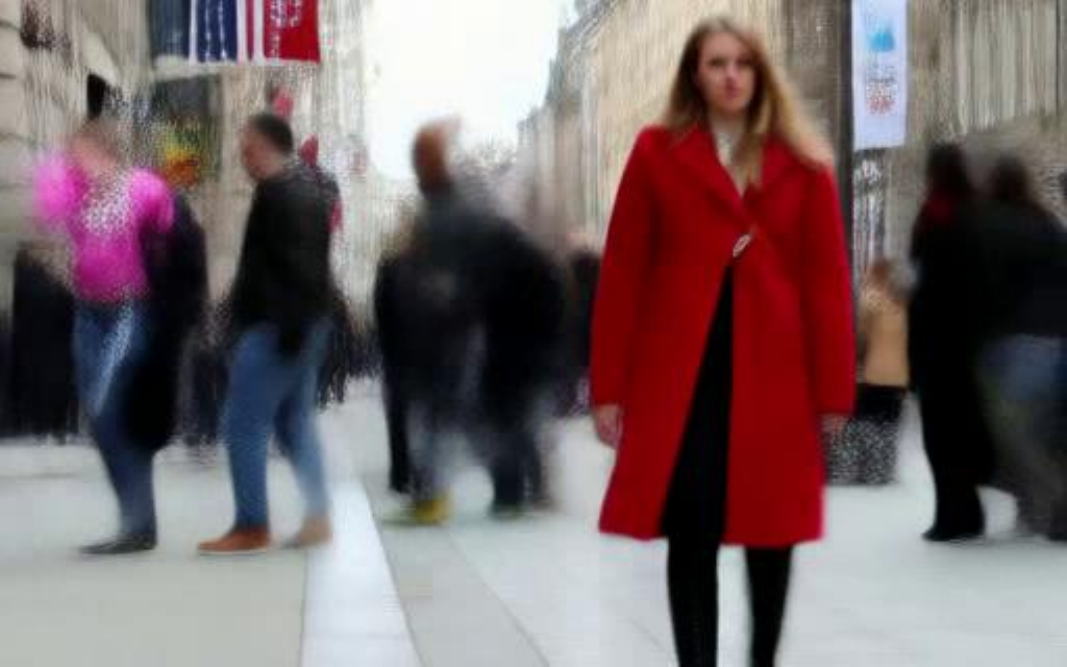}
\includegraphics[width=0.24\textwidth]{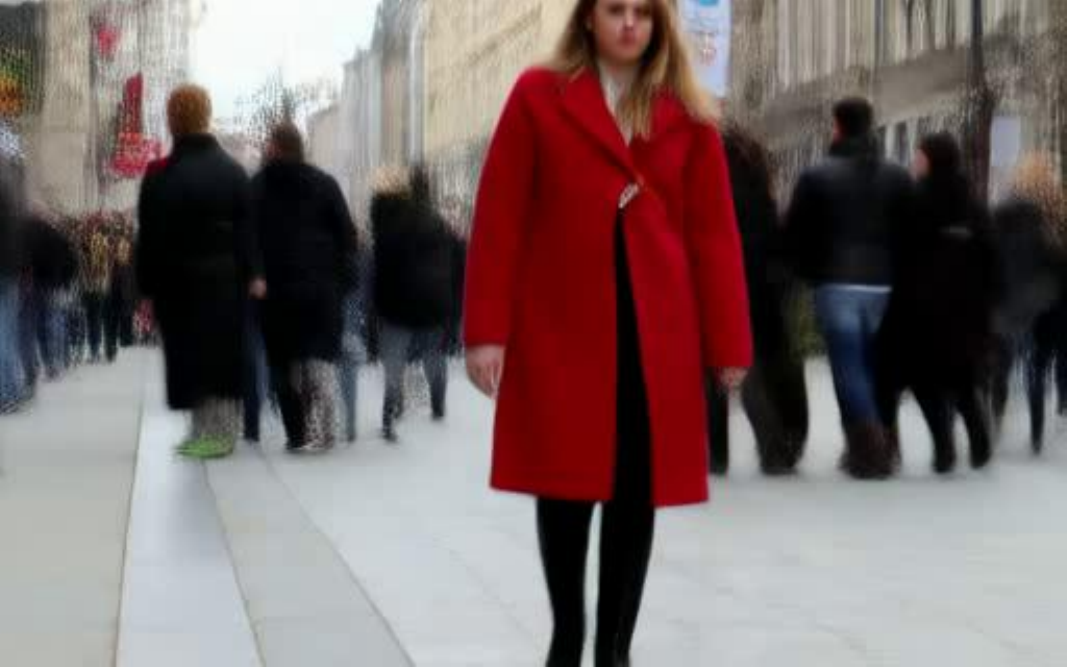}
\caption{Flow-UniPC video generation on Hunyuan model~\citep{kong2024hunyuanvideo}, 8 NFE}
\end{subfigure}

\begin{subfigure}{1\linewidth}
\includegraphics[width=0.24\textwidth]{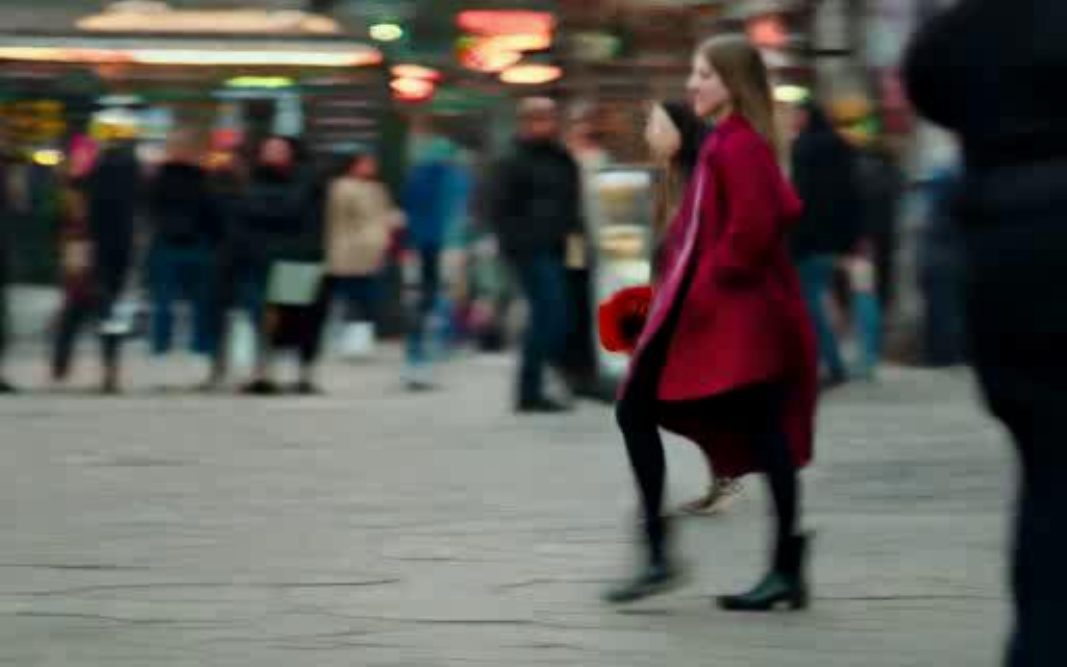}
\includegraphics[width=0.24\textwidth]{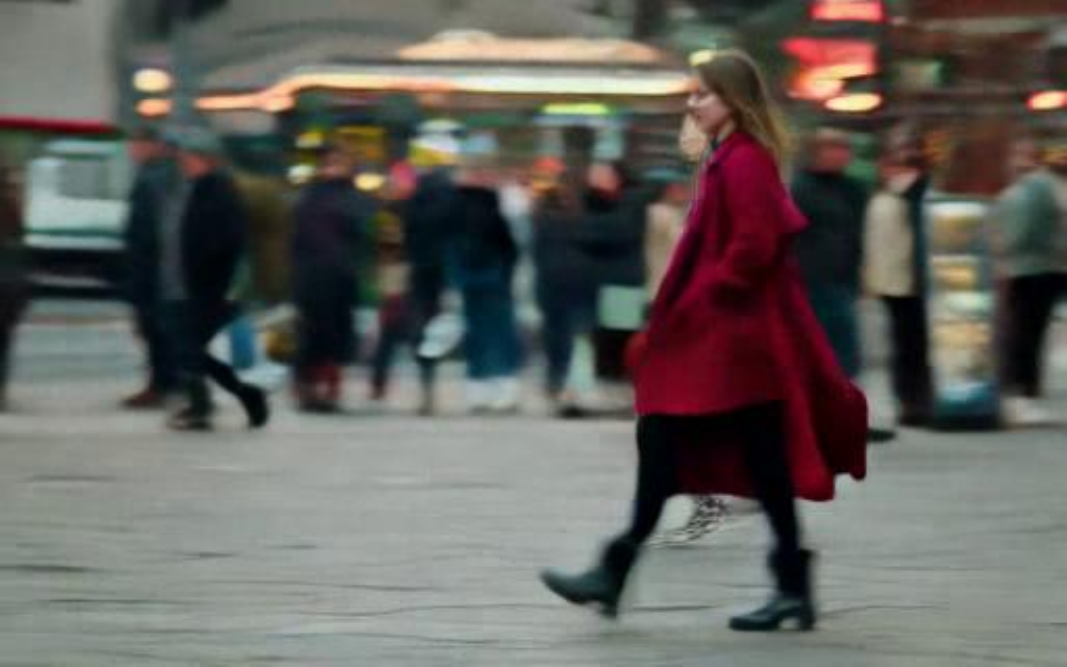}
\includegraphics[width=0.24\textwidth]{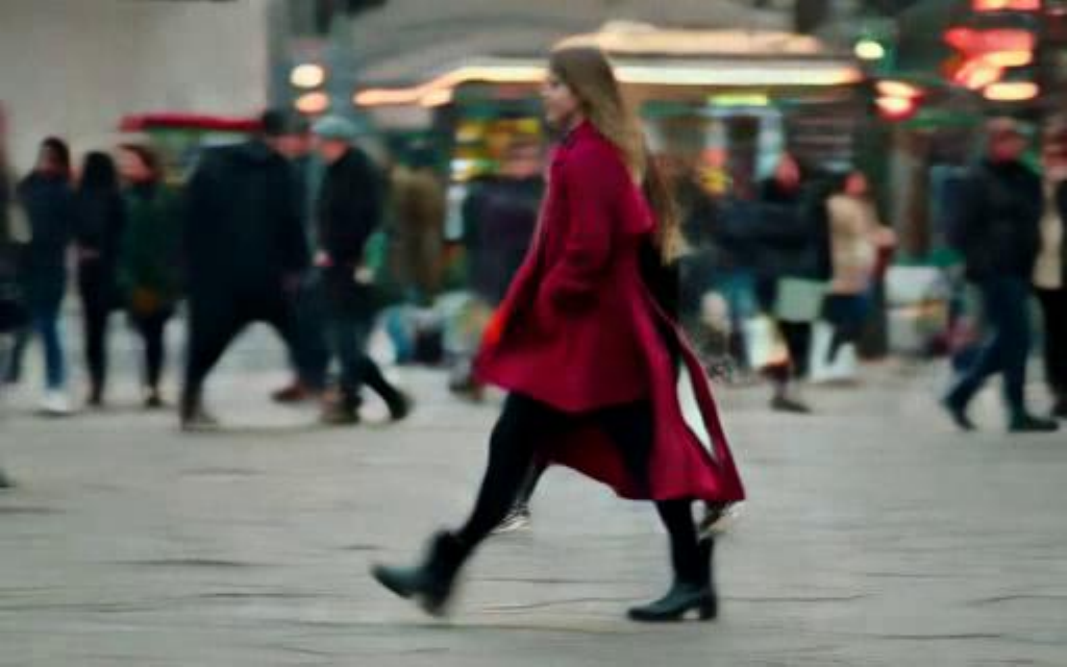}
\includegraphics[width=0.24\textwidth]{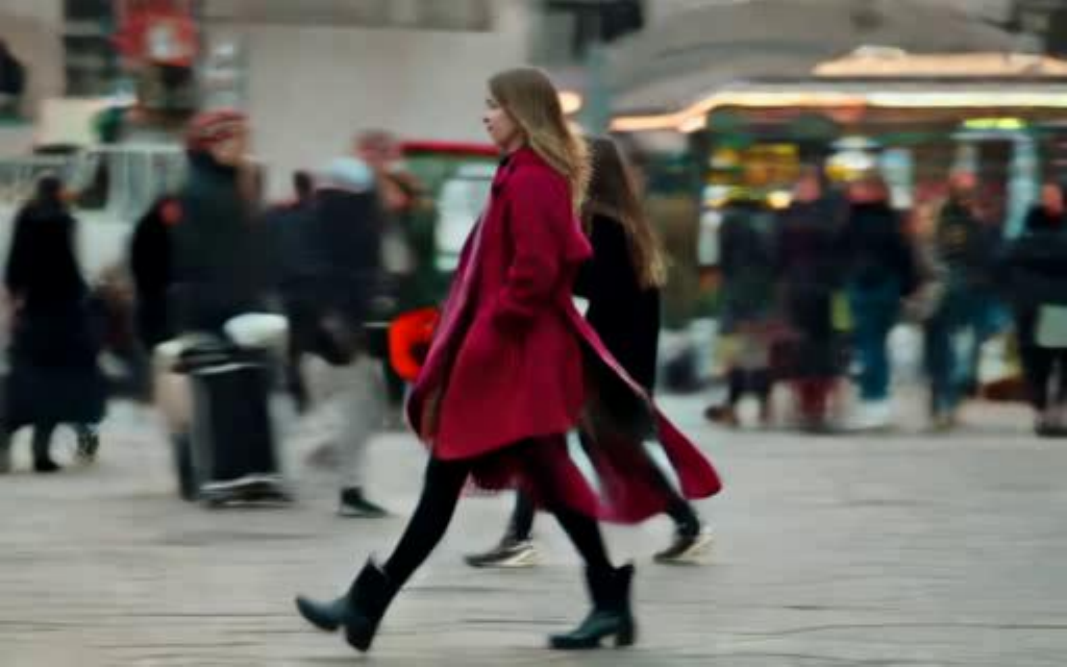}
\centering
\caption{STORK video generation on Hunyuan model~\citep{kong2024hunyuanvideo}, 8 NFE}
\end{subfigure}

\captionsetup{justification=centering}
\caption{Video generation comparison with prompt \texttt{``A young woman in her mid-twenties with long blond hair walked briskly down the busy city street, wearing a red coat and black boots''}. \\
Our video has higher quality with a better background.}
\vspace{-2em}
\label{fig:appendix_video2}
\end{figure}

\end{document}